\Crefname{algocf}{Algorithm}{Algorithms}
\theoremstyle{plain}
\newtheorem{theorem}{Theorem}[section]
\newtheorem{lemma}[theorem]{Lemma}
\newtheorem{remark}[theorem]{Remark}
\newtheorem{corollary}[theorem]{Corollary}
\theoremstyle{definition}
\newtheorem{definition}[theorem]{Definition}
\newtheorem{assumption}[theorem]{Assumption}
\newtheorem{conjecture}[theorem]{Conjecture}
\newtheorem{example}[theorem]{Example}
\newtheorem{propsition}[theorem]{Propsition}
\def\1{\bm{1}}
\def\eps{{\epsilon}}
\def\vzero{{\bm{0}}}
\def\vtheta{{\bm{\theta}}}
\def\vdelta{{\bm{\delta}}}
\def\valpha{{\bm{\alpha}}}
\def\vbeta{{\bm{\beta}}}
\def\vphi{{\bm{\phi}}}
\newcommand{\ve}{\@ifnextchar\bgroup{\velong}{{\bm{e}}}}
\newcommand{\velong}[1]{{\bm{#1}}}
\def\vf{{\bm{f}}}
\def\vg{{\bm{g}}}
\def\vh{{\bm{h}}}
\def\vq{{\bm{q}}}
\def\vu{{\bm{u}}}
\def\vv{{\bm{v}}}
\def\vx{{\bm{x}}}
\def\vz{{\bm{z}}}
\def\mA{{\bm{A}}}
\def\mB{{\bm{B}}}
\def\mC{{\bm{C}}}
\def\mD{{\bm{D}}}
\def\mE{{\bm{E}}}
\def\mF{{\bm{F}}}
\def\mG{{\bm{G}}}
\def\mH{{\bm{H}}}
\def\mI{{\bm{I}}}
\def\mJ{{\bm{J}}}
\def\mK{{\bm{K}}}
\def\mM{{\bm{M}}}
\def\mN{{\bm{N}}}
\def\mP{{\bm{P}}}
\def\mQ{{\bm{Q}}}
\def\mR{{\bm{R}}}
\def\mS{{\bm{S}}}
\def\mU{{\bm{U}}}
\def\mV{{\bm{V}}}
\def\mW{{\bm{W}}}
\def\mX{{\bm{X}}}
\def\mZ{{\bm{Z}}}
\def\mDelta{{\bm{\Delta}}}
\def\mSigma{{\bm{\Sigma}}}
\DeclareMathAlphabet{\mathsfit}{\encodingdefault}{\sfdefault}{m}{sl}
\SetMathAlphabet{\mathsfit}{bold}{\encodingdefault}{\sfdefault}{bx}{n}
\def\gA{{\mathcal{A}}}
\def\gC{{\mathcal{C}}}
\def\gK{{\mathcal{K}}}
\def\gT{{\mathcal{T}}}
\def\gU{{\mathcal{U}}}
\def\gV{{\mathcal{V}}}
\def\gX{{\mathcal{X}}}
\def\sA{{\mathbb{A}}}
\def\sR{{\mathbb{R}}}
\def\sS{{\mathbb{S}}}
\newcommand{\R}{\mathbb{R}}
\newcommand{\T}{\top}
\newcommand{\abs}[1]{\left\lvert #1\right\rvert}
\DeclareMathOperator*{\argmin}{arg\,min}
\DeclareMathOperator{\Tr}{Tr}
\newcommand{\diag}{\mathrm{diag}}
\newcommand{\rank}{\mathrm{rank}}
\newcommand{\Q}{\mathbb{Q}}
\newcommand{\dotp}[2]{\left<#1, #2\right>}
\newcommand{\dotpsm}[2]{\langle #1, #2 \rangle}
\newcommand{\acom}[2]{\mathrm{ac}\{#1, #2\}}
\newcommand{\symz}[1]{\mathrm{sz}(#1)}
\newcommand{\Loss}{\mathcal{L}}
\newcommand{\norm}[1]{\left\| #1 \right\|}
\newcommand{\normtwo}[1]{\left\| #1 \right\|_2}
\newcommand{\normtwosm}[1]{\| #1 \|_2}
\newcommand{\normF}[1]{\left\| #1 \right\|_{\mathrm{F}}}
\newcommand{\normFsm}[1]{\| #1 \|_{\mathrm{F}}}
\newcommand{\normast}[1]{\left\| #1 \right\|_{\ast}}
\newcommand{\normastsm}[1]{\| #1 \|_{\ast}}
\newcommand{\contC}{\mathcal{C}}
\newcommand{\cpartial}{\partial^{\circ}}
\newcommand{\dist}{\mathrm{dist}}
\newcommand{\C}{\mathbb{C}}
\newcommand{\dd}{\textup{\textrm{d}}}
\newcommand{\myparagraph}[1]{\paragraph{#1}}
\newcommand{\oW}{\overline{{}\mW}}
\newcommand{\oM}{\overline{\mM}}
\newcommand{\oU}{\overline{\mU}}
\newcommand{\mWG}{\mW^{\mathrm{G}}}
\newcommand{\mWA}{\widehat{\mW}}
\newcommand{\mMG}{\mM^{\mathrm{G}}}
\newcommand{\oWG}{\oW}
\newcommand{\oMG}{\oM}
\newcommand{\zU}{\mU_0}
\newcommand{\zW}{\mW_0}
\newcommand{\gfw}{\nabla f(\mW)}
\newcommand{\gfz}{\nabla f(\vzero)}
\newcommand{\htmU}{\hat\mU}
\newcommand{\tdmU}{\tilde\mU}
\newcommand{\tdmV}{\tilde\mV}
\newcommand{\tdmS}{\tilde\mSigma}
\newcommand{\eefro}{\tilde{\epsilon}}
\newcommand{\Wref}{\mW_\text{ref}}
\newcommand{\tdmu}{\tilde{\mu}}
\newcommand{\tdga}{\tilde{\gamma}}
\newcommand{\inpinvv}[2]{\left\langle #1,#2\right\rangle_{\gV^{-1}}}
\newcommand{\Pia}[1]{\Pi^{d^2}_1\left(#1\right)}
\newcommand{\Piasm}[1]{\Pi^{d^2}_1(#1)}
\newcommand{\Pib}[1]{\Pi^{d^2}_2\left(#1\right)}
\newcommand{\Pibsm}[1]{\Pi^{d^2}_2(#1)}
\newcommand{\psd}{\sS^+_d}
\newcommand{\psdr}{\sS^+_{d,r}}
\newcommand{\psdlr}{\sS^+_{d,\le r}}
\newcommand{\psdx}[1]{\sS^+_{d,#1}}
\newcommand{\psdlx}[1]{\sS^+_{d,\le #1}}
\newcommand{\symm}{\sS_d}
\newcommand{\asymm}{\sA_d}
\newcommand{\co}{\mathrm{co}}
\newcommand{\lotri}{\mathrm{vec}_{\mathrm{LT}}}
\newcommand{\ffun}{f(\,\cdot\,)}
\newcommand{\Lossfun}{\Loss(\,\cdot\,)}
\newcommand{\dgwt}{\mJ(\zW)\lvert_\gT}
\newcommand{\dgw}{\mJ(\zW)}
\newcommand{\normv}[1]{\norm{#1}_\gV}
\newcommand{\norma}[1]{\norm{#1}_{\mathrm{F},1}}
\newcommand{\normb}[1]{\norm{#1}_{\mathrm{F},2}}
\newcommand{\opvi}[1]{\gV^{-1}\left( #1\right)}
\newcommand{\opv}[1]{\gV\left( #1\right)}
\newcommand{\lmw}{{\lambda_{\min}}(\zW)}
\newcommand{\mwa}{\mW_1}
\newcommand{\mwat}{\mW_1(t)}
\newcommand{\mwbt}{\mW_2(t)}
\title{Towards Resolving the Implicit Bias of \\ Gradient Descent for Matrix Factorization: Greedy Low-Rank Learning}
\author{%
  Zhiyuan Li, Yuping Luo\thanks{Alphabet ordering.} \\
  {\small Department of Computer Science}\\
  {\small Princeton University}\\
  {\small Princeton, NJ 08544} \\
  {\small \texttt{\{zhiyuanli,yupingl\}@cs.princeton.edu}} \\
  \And
  Kaifeng Lyu\footnotemark[1] \\
  {\small Institute for Interdisciplinary Information Sciences} \\
  {\small Tsinghua University}\\
  {\small Beijing, China} \\
  {\small \texttt{vfleaking@gmail.com}}
}
\begin{document}

\maketitle

\begin{abstract}
	Matrix factorization is a simple and natural test-bed to investigate the implicit regularization of gradient descent. \citet{gunasekar2017implicit} conjectured that Gradient Flow  with infinitesimal initialization converges to the solution that minimizes the nuclear norm, but a series of recent papers argued that the language of norm minimization is not sufficient to give a full characterization for the implicit regularization. In this work, we provide theoretical and empirical evidence that for depth-2 matrix factorization, gradient flow with infinitesimal initialization is mathematically equivalent to a simple heuristic rank minimization algorithm, Greedy Low-Rank Learning, under some reasonable assumptions. This generalizes the rank minimization view from previous works to a much broader setting and enables us to construct counter-examples to refute the conjecture from \citet{gunasekar2017implicit}. We also extend the results to the case where depth $\ge 3$, and we show that the benefit of being deeper is that the above convergence has a much weaker dependence over initialization magnitude so that this rank minimization is more likely to take effect for initialization with practical scale.
\end{abstract}

\section{Introduction}
There are usually far more learnable parameters in deep neural nets than the number of training data, but still deep learning works well on real-world tasks. Even with explicit regularization, the model complexity of state-of-the-art neural nets is so large that they can fit randomly labeled data easily~\citep{zhang2017rethinking}. Towards explaining the mystery of generalization, we must understand what kind of implicit regularization does Gradient Descent (GD) impose during training. Ideally, we are hoping for a nice mathematical characterization of how GD constrains the set of functions that can be expressed by a trained neural net.

As a direct analysis for deep neural nets could be quite hard, a line of works turned to study the implicit regularization on simpler problems to get inspirations, for example, low-rank matrix factorization, a fundamental problem in machine learning and information process. Given a set of observations about an unknown matrix $\mW^* \in \R^{d \times d}$ of rank $r^* \ll d$, one needs to find a low-rank solution $\mW$ that is compatible with the given observations. Examples include matrix sensing, matrix completion, phase retrieval, robust principal component analysis, just to name a few (see \citealt{chi2019nonconvex} for a survey). When $\mW^*$ is symmetric and positive semidefinite, one way to solve all these problems is to parameterize $\mW$ as $\mW = \mU\mU^{\top}$ for $\mU \in \R^{d \times r}$ and optimize $\Loss(\mU) := \frac{1}{2}f(\mU\mU^{\top})$, where $\ffun$ is some empirical risk function depending on the observations, and $r$ is the rank constraint. In theory, if the rank constraint is too loose, the solutions do not have to be low-rank and we may fail to recover $\mW^*$. However, even in the case where the rank is unconstrained (i.e., $r = d$), GD with small initialization can still get good performance in practice. This empirical observation reveals that the implicit regularization of GD exists even in this simple matrix factorization problem, but its mechanism is still on debate. \citet{gunasekar2017implicit} proved that Gradient Flow (GD with infinitesimal step size, a.k.a., GF) with infinitesimal initialization finds the minimum nuclear norm solution in a special case of matrix sensing, and further conjectured this holds in general.
\begin{conjecture}[\citealt{gunasekar2017implicit}, informal] \label{conj:nuclear}
	With sufficiently small initialization, GF converges to the minimum nuclear norm solution of matrix sensing.
\end{conjecture}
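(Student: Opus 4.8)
The conjecture is a claim about the endpoint of an infinite-time limit of a dynamical system, so the natural plan is to analyze the gradient-flow trajectory $\mW(t) = \mU(t)\mU(t)^\top$ directly and identify $\mW_\infty := \lim_{t\to\infty}\mW(t)$ in the regime where the initialization scale is sent to zero first. Writing the sensing loss as $f(\mW) = \tfrac12\sum_i(\langle \mA_i,\mW\rangle - y_i)^2$ with symmetric $\mA_i$, the induced flow on the product is $\dot\mW = -(\nabla f(\mW)\mW + \mW\nabla f(\mW))$, a balanced-parametrization flow that preserves positive semidefiniteness. First I would try to show that from infinitesimal initialization $\mW$ escapes the origin along the top eigenspace of $-\nabla f(\vzero)$, stays essentially rank one for a long phase, and then --- once that direction saturates --- activates the next direction, and so on. If this ``warm up, then branch'' picture holds, $\mW_\infty$ is produced by an incremental, rank-by-rank search: precisely the Greedy Low-Rank Learning procedure this paper studies.

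The second step would be to characterize $\mW_\infty$ through first-order optimality. In the commuting case of \citet{gunasekar2017implicit} the $\mA_i$ and all $\mW(t)$ are simultaneously diagonalizable, the flow decouples into scalar ODEs for the eigenvalues, and at $t=\infty$ one reads off stationarity conditions that are exactly the KKT conditions of $\min\|\mW\|_{\ast}$ subject to $\langle \mA_i,\mW\rangle = y_i$ and $\mW\succeq\vzero$. I would try to push the same computation through without commutativity: show that $-\nabla f(\mW_\infty)$ lies in the normal cone of the feasible set at $\mW_\infty$ and that its spectral structure matches an element of $\partial\|\cdot\|_{\ast}$ at $\mW_\infty$, i.e.\ the eigenvectors of $\mW_\infty$ with positive eigenvalue are top eigenvectors of $-\nabla f(\mW_\infty)$ sharing a common eigenvalue $\lambda>0$, and $-\nabla f(\mW_\infty)\preceq\lambda\mI$.

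The hard part --- and the point at which I expect the plan to break rather than merely stall --- is reconciling the greedy rank-minimization behavior of the first step with the nuclear-norm KKT conditions of the second. Greedy Low-Rank Learning commits to a low-rank subspace and enlarges it only when forced to, so its limit satisfies rank-constrained stationarity, not the global subdifferential condition for $\|\cdot\|_{\ast}$; there is no reason the inequality $-\nabla f(\mW_\infty)\preceq\lambda\mI$ should survive in the directions the greedy search never activated. I would therefore hunt for a sensing instance in which the minimum-rank feasible PSD matrix is not the minimum-nuclear-norm one --- for instance one whose rank-one feasible solution is isolated and strictly suboptimal in nuclear norm among higher-rank feasible solutions --- and show the flow converges to the former. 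That would simultaneously confirm the Greedy Low-Rank Learning equivalence and refute Conjecture~\ref{conj:nuclear}, which is the route I expect the paper to take.
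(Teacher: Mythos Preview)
Your proposal is correct and takes essentially the same route as the paper: the paper rigorously establishes the GLRL picture (your first step), does not attempt the nuclear-norm KKT verification in the non-commuting case (your second step, which you correctly flag as the point where the argument should break), and then refutes the conjecture via an explicit $4\times 4$ matrix-completion instance whose unique rank-one feasible solution has strictly larger nuclear norm than a rank-two feasible solution, verifying that GF with infinitesimal initialization converges to the former (your third step). The only thing missing from your outline is the concrete construction itself, which the paper supplies as Example~\ref{example:4x4} and analyzes in Proposition~\ref{prop:4x4}.
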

Subsequently, \citet{arora2019implicit} challenged this view by arguing that a simple mathematical norm may not be a sufficient language for characterizing implicit regularization. One example illustrated in \citet{arora2019implicit} is regarding matrix sensing with a single observation. They showed that GD with small initialization enhances the growth of large singular values of the solution and attenuates that of smaller ones. This enhancement/attenuation effect encourages low-rank, and it is further intensified with depth in deep matrix factorization (i.e., GD optimizes $f(\mU_1 \cdots \mU_L)$ for $L \ge 2$). However, these are not captured by the nuclear norm alone. \citet{gidel2019implicit,gissin2020the} further exploited this idea and showed in the special case of full-observation matrix sensing that GF learns solutions with gradually increasing rank. \citet{razin2020implicit} showed in a simple class of matrix completion problems that GF decreases the rank along the trajectory while any norm grows towards infinity. More aggressively, they conjectured that the implicit regularization can be explained by rank minimization rather than norm minimization.

\myparagraph{Our Contributions.} In this paper, we move one further step towards resolving the implicit regularization in the matrix factorization problem. Our theoretical results show that GD performs rank minimization via a greedy process in a broader setting. Specifically, we provide theoretical evidence that GF with infinitesimal initialization is in general mathematically equivalent to another algorithm called {\em Greedy Low-Rank Learning} (GLRL). At a high level, GLRL is a greedy algorithm that performs rank-constrained optimization and relaxes the rank constraint by $1$ whenever it fails to reach a global minimizer of $\ffun$ with the current rank constraint. As a by-product, we refute \Cref{conj:nuclear} by demonstrating an counterexample (\Cref{example:4x4}).

We also extend our results to deep matrix factorization \Cref{sec:deeper}, where we prove that the trajectory of GF with infinitesimal identity initialization converges to a deep version of GLRL, at least in the early stage of the optimization. We also use this result to confirm the intuition achieved on toy models \citep{gissin2020the}, that benefits of depth in matrix factorization is to encourage rank minimization even for initialization with a relatively larger scale, and thus it is more likely to happen in practice. This shows that describing the implicit regularization using GLRL is more expressive than using the language of norm minimization. We validate all our results with experiments in~\Cref{sec:app-exp}.
\section{Related Works}

\myparagraph{Norm Minimization.} The view of norm minimization, or the closely related view of margin maximization, has been explored in different settings. Besides the nuclear norm minimization for matrix factorization~\citep{gunasekar2017implicit} discussed in the introduction, previous works have also studied the norm minimization/margin maximization for linear regression \citep{wilson2017marginal,soudry2018implicit,soudry2018iclrImplicit,nacson2019convergence,nacson2018stochastic,ji2019refined}, deep linear neural nets \citep{ji2018gradient,gunasekar2018implicit}, homogeneous neural nets \citep{nacson2019lexicographic,lyu2020Gradient}, ultra-wide neural nets \citep{jacot2018ntk,arora2019exact,chizat20logistic}.

\myparagraph{Small Initialization and Rank Minimization.} The initialization scale can greatly influence the implicit regularization. A sufficiently large initialization can make the training dynamics fall into the lazy training regime defined by \citet{chizat2019lazy} and diminish test accuracy. Using small initialization is particularly important to bias gradient descent to low-rank solutions for matrix factorization, as empirically observed by \citet{gunasekar2017implicit}. \citet{arora2019implicit,gidel2019implicit,gissin2020the,razin2020implicit} studied how gradient flow with small initialization encourages low-rank in simple settings, as discussed in the introduction. \citet{li2018algorithmic} proved recovery guarantees for gradient flow solving matrix sensing under Restricted Isometry Property (RIP), but the proof cannot be generalized easily to the case without RIP.
\citet{belabbas2020implicit} made attempts to prove that gradient flow is approximately rank-1 in the very early phase of training, but it does not exclude the possibility that the approximation error explodes later and gradient flow is not converging to low-rank solutions. Compared to these works, the current paper studies how GF encourages low-rank in a much broader setting.
\section{Background}
\myparagraph{Notations.} For two matrices $\mA, \mB$, we define $\dotp{\mA}{\mB} := \Tr(\mA\mB^{\top})$ as their inner product. We use $\normF{\mA},\normast{\mA}$ and $\norm{\mA}_2$ to denote the Frobenius norm, nuclear norm and the largest singular value of $\mA$ respectively. For a matrix $\mA \in \R^{d \times d}$, we use $\lambda_1(\mA), \dots, \lambda_d(\mA)$ to denote the eigenvalues of $\mA$ in decreasing order (if they are all reals). We define $\symm$ as the set of symmetric $d\times d$ matrices and $\psd \subseteq \symm$ as the set of positive semidefinite (PSD) matrices. We write $\mA \succeq \mB$ or $\mB \preceq \mA$ if $\mA - \mB$ is PSD. We use $\psdr$, $\psdlr$ to denote the set of $d \times d$ PSD matrices with rank $= r, \le r$ respectively.

\myparagraph{Matrix Factorization.} Matrix factorization problem asks one to optimize $\Loss(\mU, \mV) := \frac{1}{2}f(\mU\mV^{\top})$ among $\mU, \mV \in \R^{d \times r}$, where $f: \R^{d \times d} \to \R$ is a convex function. A notable example is matrix sensing. There is an unknown rank-$r^*$ matrix $\mW^* \in \R^{d \times d}$ with $r^* \ll d$. Given $m$ measurements $\mX_1, \dots, \mX_m \in \R^{d \times d}$, one can observe $y_i := \dotp{\mX_i}{\mW^*}$ through each measurement. The goal of matrix sensing is to reconstruct $\mW^*$ via minimizing $f(\mW) := \frac{1}{2}\sum_{i=1}^{m} \left(\dotp{\mW}{\mX_i} - y_i\right)^2$. Matrix completion is a notable special case of matrix sensing in which every measurement has the form $\mX_i = \ve_{p_i} \ve_{q_i}^{\top}$, where $\{\ve_1, \cdots, \ve_d\}$ stands for the standard basis (i.e., exactly one entry is observed through each measurement).

For technical simplicity, in this paper we focus on the symmetric case as in previous works~\citep{gunasekar2017implicit}. Given a $\contC^3$-smooth convex function $f: \R^{d \times d} \to \R$, we aim to find a low-rank solution for the convex optimization problem \eqref{eq:opt-P}:
\begin{equation} \label{eq:opt-P}
	\min f(\mW) \quad\text{s.t.}\quad \mW \succeq \vzero \tag{P}
\end{equation}
For this, we parameterize $\mW$ as $\mW = \mU\mU^{\top}$ for $\mU \in \R^{d \times r}$ and optimize $\Loss(\mU) := \frac{1}{2}f(\mU\mU^{\top})$. We assume WLOG throughout this paper that $f(\mW) = f(\mW^{\top})$; otherwise, we can set $f'(\mW) = \frac{1}{2}\left(f(\mW) + f(\mW^{\top})\right)$ so that $f'(\mW) = f'(\mW^{\top})$ while $\Loss(\mU) = \frac{1}{2}f'(\mU\mU^{\top})$ is unaffected. This assumption makes $\nabla f(\mW)$ symmetric for every symmetric $\mW$.

Note that matrix factorization in the general case can be reduced to this symmetric case: let $\mU'= \left[\begin{smallmatrix}  \mU \\ \mV \end{smallmatrix}\right] \in \R^{2d\times r}$, $f'\left(\left[\begin{smallmatrix} \mA & \mB \\ \mC & \mD\end{smallmatrix}\right]\right) = \frac{1}{2}f(\mB) + \frac{1}{2} f(\mC)$, then $f(\mU\mV^\top)  = f'(\mU'\mU'^\top)$. So focusing on the symmetric case does not lose generality.

\myparagraph{Gradient Flow.} In this paper, we analyze Gradient Flow (GF) on symmetric matrix factorization, which is defined by the following ODE for $\mU(t) \in \R^{d \times r}$:
\begin{equation} \label{eq:ode-U}
	\frac{\dd \mU}{\dd t} = -\nabla \Loss(\mU) = -\nabla f(\mU\mU^{\top}) \mU.
\end{equation}
Let $\mW(t) = \mU(t)\mU(t)^\top \in \R^{d \times d}$. Then the following end-to-end dynamics holds for $\mW(t)$:
\begin{equation} \label{eq:ode-W}
	\frac{\dd\mW}{\dd t} = -\mW\nabla f(\mW) -\nabla f(\mW)\mW =: \vg(\mW).
\end{equation}
We use $\phi(\mW_0, t)$ to denote the matrix $\mW(t)$ in~\eqref{eq:ode-W} when $\mW(0) = \mW_0 \succeq \vzero$. Throughout this paper, we assume $\phi(\mW_0, t)$ exists for all $t\in\sR,\mW_0 \succeq \vzero$. It is easy to prove that $\mU$ is a stationary point of $\Loss(\,\cdot\,)$ (i.e., $\nabla \Loss(\mU) = \vzero$) iff $\mW = \mU\mU^{\top}$ is a critical point of \eqref{eq:ode-W} (i.e., $\vg(\mW) = \vzero$); see \Cref{lm:stat-and-criti} for a proof. If $\mW$ is a minimizer of $\ffun$ in $\psd$ (i.e., $\mW$ is a minimizer of \eqref{eq:opt-P}), then $\mW$ is a critical point of \eqref{eq:ode-W}, but the reverse may not be true, e.g., $\vg(\vzero)=\vzero$, but $\vzero$ is not necessarily a minimizer.

In this paper, we particularly focus on the overparameterized case, where $r = d$, to understand the implicit regularization of GF when there is no rank constraint for the matrix $\mW$.

\section{Warmup Examples}\label{sec:warmup}
Before introducing our main results, we illustrate how GD performs greedy learning using two warmup examples.

\myparagraph{Linearization Around the Origin.} In general, for a loss function $\Loss(\mU) = \frac{1}{2}f(\mU\mU^{\top})$, we can always apply Taylor expansion $f(\mW) \approx f(\vzero) + \dotp{\mW}{\nabla f(\vzero)}$ around the origin to approximate it with a linear function. This motivates us to study the linear case: $f(\mW) := f_0 - \dotp{\mW}{\mQ}$ for some symmetric matrix $\mQ$. In this case, the matrix $\mU$ follows the ODE, $\frac{\dd\mU}{\dd t} = \mQ\mU$, which can be understood as a continuous version of the classical power iteration method for solving the top eigenvector. Let $\mQ := \sum_{i=1}^{d} \mu_i \vv_i \vv_i^{\top}$ be the eigendecomposition of $\mQ$, where $\mu_1 \ge \mu_2 \ge \cdots \ge \mu_d$ and $\vv_1, \dots, \vv_d$ are orthogonal to each other. Then we can write the solution as:
\begin{equation}
	\mU(t) = e^{t\mQ} \mU(0) = \left( \sum\nolimits_{i=1}^{d} e^{\mu_it} \vv_i \vv_i^{\top} \right) \mU(0).
\end{equation}
When $\mu_1>\mu_2$, the ratio between $e^{\mu_1t}$ and $e^{\mu_i t}$ for $i \ne 1$ increases exponentially fast. As $t \to +\infty$, $\mU(t)$ and $\mW(t)$ become approximately rank-1  as long as $\vv_i^{\top} \mU(0) \ne \vzero$, i.e.,
\begin{equation}
	\lim_{t\to\infty}e^{-\mu_1 t} \mU(t) = \vv_1 \vv_1^{\top} \mU(0), \qquad \lim_{t\to\infty}e^{-2\mu_1 t} \mW(t) = (\vv_1^{\top} \mW(0) \vv_1)\vv_1\vv_1^{\top}.
\end{equation}
The analysis for the simple linear case reveals that GD encourages low-rank through a process similar to power iteration. However, $f(\mW)$ is non-linear in general, and the linear approximation is close to $f(\mW)$ only if $\mW$ is very small.
With sufficiently small initialization, we can imagine that GD still resembles the above power iteration in the early phase of the optimization. But what if $\mW(t)$ grows to be so large that the linear approximation is far from the actual $f(\mW)$?

\myparagraph{Full-observation Matrix Sensing.} To understand the dynamics of GD when the linearization fails, we now consider a well-studied special case \citep{gissin2020the}: $\Loss(\mU) = \frac{1}{2} f(\mU\mU^{\top}), f(\mW) = \frac{1}{2} \normFsm{\mW - \mW^*}^2$ for some unknown PSD matrix $\mW^*$. GF in this case can be written as:
\begin{equation}
\frac{\dd\mU}{\dd t} = (\mW^{*} - \mU\mU^{\top}) \mU, \qquad \frac{\dd\mW}{\dd t} = (\mW^{*} - \mW) \mW + \mW (\mW^{*} - \mW).
\end{equation}
Let $\mW^* := \sum_{i=1}^{d} \mu_i \vv_i \vv_i^{\top}$ be the eigendecomposition of $\mW^*$.
Our previous analysis  shows that the dynamics is approximately $\frac{\dd\mU}{\dd t} = \mW^{*} \mU$ in the early phase and thus encourages low-rank.

To get a sense for the later phases, we simplify the setting by specifying $\mU(0) = \sqrt{\alpha} \mI$ for a small number $\alpha$. We can write $\mW(0)$ and $\mW^*$ as diagonal matrices $\mW(0) = \diag(\alpha, \alpha, \cdots, \alpha), \mW^* = \diag(\mu_1, \mu_2, \cdots, \mu_d)$ with respect to the basis $\vv_1, \dots, \vv_d$.
It is easy to see that $\mW(t)$ is always a diagonal matrix, since the time derivatives of non-diagonal coordinates stay $0$ during training. Let $\mW(t) = \diag(\sigma_1(t), \sigma_2(t), \cdots, \sigma_d(t))$, then $\sigma_i(t)$ satisfies the dynamical equation $\frac{\dd}{\dd t} \sigma_i(t) = 2\sigma_i(t) (\mu_i - \sigma_i(t))$, and thus $\sigma_i(t) = \frac{\alpha \mu_i}{\alpha + (\mu_i - \alpha)e^{-2\mu_i t}}$.
This shows that every $\sigma_i(t)$ increases from $\alpha$ to $\mu_i$ over time. As $\alpha \to 0$, every $\sigma_i(t)$ has a sharp transition from near $0$ to near $\mu_i$ at time roughly $(\frac{1}{2\mu_i} + o(1)) \log \frac{1}{\alpha}$, which can be seen from the following limit:
\[
	\lim_{\alpha \to 0} \sigma_i\left( (\tfrac{1}{2\mu_i} + c) \log(1/\alpha) \right) = \lim_{\alpha \to 0}\frac{\alpha \mu_i}{\alpha + (\mu_i - \alpha) \alpha^{1 + 2c \mu_i}} = \begin{cases}
		0 & \quad c \in (-\frac{1}{2\mu_i}, 0), \\
		\mu_i & \quad c \in (0, +\infty).
	\end{cases}
\]
This means for every $q \in (\frac{1}{2\mu_i}, \frac{1}{2\mu_{i+1}})$ for $i = 1, \dots, d-1$ (or $q \in (\frac{1}{2\mu_i}, +\infty)$ for $i = d$), $\lim_{\alpha \to 0} \mW(q\log(1/\alpha)) = \diag(\mu_1, \mu_2, \dots, \mu_i, 0, 0, \cdots, 0)$.
Therefore, when the initialization is sufficiently small, GF learns each component of $\mW^{*}$ one by one, according to the relative order of eigenvalues. At a high level, this shows a greedy nature of GD: GD starts learning with simple models; whenever it underfits, it increases the model complexity (which is rank in our case). This is also called \textit{sequential learning} or \textit{incremental learning} in the literature \citep{gidel2019implicit,gissin2020the}.

However, it is unclear how and why this sequential learning/incremental learning can occur in general. Through the first warmup example, we may understand why GD  learns a rank-1 matrix in the early phase, but does GD always learn solutions with rank $2, 3, 4, \dots$ sequentially? If true, what is the mechanism behind this?
The current paper answers the questions by providing both theoretical and empirical evidence that the greedy learning behavior does occur in general with a similar reason as for the first warmup example.

\section{Greedy Low-Rank Learning (GLRL)}
\label{sec:main}

In this section, we present a trajectory-based analysis for the implicit bias of GF on matrix factorization. Our main result is that GF with infinitesimal initialization is generically the same as that of a simple greedy algorithm, \emph{Greedy Low-Rank Learning} (GLRL, Algorithm~\ref{alg:glrl}).

The GLRL algorithm consists of several phases, numbered from $1$. In phase $r$, GLRL increases the rank constraint to $r$ and optimizes $\Loss(\mU_r) := \frac{1}{2}f(\mU_r\mU_r^{\top})$ among $\mU_r \in \R^{d \times r}$ via GD until it reaches a stationary point $\mU_{r}(\infty)$, i.e., $\nabla \Loss(\mU_r(\infty)) = \vzero$. At convergence, $\mW_r := \mU_r(\infty)\mU^{\top}_r(\infty)$ is a critical point of \eqref{eq:ode-W}, and we call it the \textit{$r$-th critical point} of GLRL. If $\mW_r$ is further a minimizer of $\ffun$ in $\psd$, or equivalently, $\lambda_{1} (-\nabla f(\mW_{r}))\le 0$ (see \Cref{lm:f-global-min-psd}), then GLRL returns $\mW_r$; otherwise GLRL enters phase $r+1$.

\begin{figure}[t]
	\vspace{-1cm}
	\begin{algorithm}[H] \label{alg:glrl}
		\caption{Greedy Low-Rank Learning (GLRL)}
		\Parameter{step size $\eta > 0$; small $\epsilon > 0$}
		$r \gets 0, \mW_0 \gets \vzero \in \R^{d \times d}$, and $\mU_0(\infty) \in \R^{d\times 0}$ is an empty matrix\\
		\While {$\lambda_{1} (-\nabla f(\mW_{r})) > 0 $} {
			$r \gets r + 1$ \\
			$\vu_r \gets$ unit top eigenvector of $-\nabla f(\mW_{r-1})$ \\
			$\mU_r(0) \gets [ \mU_{r-1}(\infty) \ \  \sqrt{\epsilon} \vu_r ] \in \R^{d \times r}$ \\
			\For{$t= 0,1,\ldots $}{
				$\mU_r (t+1) \gets\mU_r(t) - \eta \nabla \Loss(\mU_r(t))$\\
			}
			$\mW_r \gets \mU_r(\infty) \mU^\top_r(\infty)$ \footnote{In practice, we approximate the infinite time limit by running sufficiently many steps.}
		}
		\Return $\mW_{r}$
	\end{algorithm}
\end{figure}

To set the initial point of GD in phase $r$, GLRL appends a small column vector $\vdelta_r \in \R^d$ to the resulting stationary point $\mU_{r-1}(\infty)$ from the last phase, i.e., $\mU_r(0)\gets \left[ \mU_{r-1}(\infty)~~\vdelta_r \right] \in \R^{d \times r}$ (in the case of $r = 1$, $\mU_1(0) \gets \left[ \vdelta_1 \right] \in \R^{d \times 1}$). In this way, $\mU_r(0)\mU^{\top}_r(0) = \mW_{r-1} + \vdelta_r \vdelta_r^{\top}$ is perturbed away from the $(r-1)$-th critical point. In GLRL, we set $\vdelta_r = \sqrt{\epsilon} \vu_r$, where $\vu_r$ is the top eigenvector of $-\nabla f(\mW_r)$ with unit norm $\normtwosm{\vu_r} = 1$, and $\epsilon > 0$ is a parameter controlling the magnitude of perturbation (preferably very small). Note that it is guaranteed that $\lambda_1(-\nabla f(\mW_{r-1})) > 0$; otherwise $\mW_{r-1}$ is a minimizer of the convex function $\ffun$ in $\psd$ and GLRL exits before phase $r$. Expanding $\ffun$ around $\mW_{r-1}$ shows that the loss is decreasing in this choice of $\vdelta_r$.
\begin{align*}
	\Loss(\mU_r(0)) = \frac{1}{2}f(\mW_{r-1} + \vdelta_r \vdelta_r^{\top}) &= \Loss(\mU_{r-1}(\infty)) + \frac{1}{2}\vdelta_r^{\top} \nabla f(\mW_{r-1}) \vdelta_r + O(\normtwosm{\vdelta_r}^4) \\
	&= \Loss(\mU_{r-1}(\infty)) - \frac{\epsilon}{2} \lambda_1(-\nabla f(\mW_{r-1})) + O(\epsilon^2).
\end{align*}

\paragraph{Trajectory of GLRL.} We define the (limiting) trajectory of GLRL by taking the learning rate $\eta \to 0$. The goal is to show that the trajectory of GLRL is close to that of GF with infinitesimal initialization. Recall that $\phi(\mW_0, t)$ stands for the solution $\mW(t)$ in~\eqref{eq:ode-W} when $\mW(0) = \mW_0$.
\begin{definition}[Trajectory of GLRL] \label{def:glrl-wg}
	Let $\oWG_{0,\epsilon} := \vzero$ be the 0th critical point of GLRL. For every $r \ge 1$, if the $(r-1)$-th critical point $\oWG_{r-1,\epsilon}$ exists and is not a minimizer of $\ffun$ in $\psd$, we define $\mWG_{r,\epsilon}(t) := \phi(\oWG_{r-1,\epsilon} + \epsilon \vu_{r,\epsilon}\vu_{r,\epsilon}^{\top}, t)$, where $\vu_{r,\epsilon}$ is a top eigenvector of $\nabla f(\oWG_{r-1,\epsilon})$ with unit norm, $\normtwosm{\vu_{r,\epsilon}} = 1$. We define $\oWG_{r,\epsilon} := \lim_{t \to +\infty} \mWG_{r,\epsilon}(t)$ to be the $r$-th critical point of GLRL if the limit exists.
\end{definition}
Throughout this paper, we always focus on the case where the top eigenvalue of every $\nabla f(\oWG_{r-1,\epsilon})$ is unique. In this case, the trajectory of GLRL is unique for every $\epsilon > 0$, since the normalized top eigenvectors can only be $\pm \vu_{r,\epsilon}$, and both of them lead to the same $\mWG_{r,\epsilon}(t)$.

\myparagraph{Comparison to existing greedy algorithms for rank-constrained optimization.} The most related one to GLRL (Algorithm~\ref{alg:glrl}) is probably \emph{Rank-1 Matrix Pursuit} (R1MP) proposed by \citet{wang2014rank} for matrix completion, which was later generalized to general convex loss in \citep{yao2016greedy}. R1MP maintains a set of rank-1 matrices as the basis, and in phase $r$, R1MP adds the same $\vu_r\vu_r^\top$ as defined in Algorithm~\ref{alg:glrl} into its basis and solve $\min_{\valpha}f(\sum_{i=1}^r \alpha_i \vu_i\vu_i^\top)$ for rank-$r$ estimation.
The main difference between R1MP and GLRL is that the optimization in each phase of R1MP is performed on the coefficients $\valpha$, while the entire $\mU_r$ evolves with GD in each phase of GLRL. In \Cref{fig:group-fight}, we provide empirical evidence that GLRL generalizes better than R1MP when ground truth is low-rank,
although GLRL may have a higher computational cost depending on $\eta,\epsilon$.

Similar to R1MP, Greedy Efficient Component Optimization (GECO, \citealt{shalev2010equivalence}) also chooses the $r$-th component of its basis as the top eigenvector of $-\nabla f(\mW_r)$, while it solves $\min_{\vbeta}f(\sum_{1\le i,j \le r} \beta_{ij} \vu_i\vu_j^\top)$ for the rank-$r$ estimation. \citet{khanna2017approximation} provided convergence guarantee for GECO assuming strong convexity.  \citet{Haeffele2019} proposed a local-descent meta algorithm, of which GLRL can be viewed as a specific realization.

\subsection{The Limiting Trajectory: A General Theorem for Dynamical System} \label{sec:dynamical}
To prove the equivalence between GF and GLRL, we first introduce our high-level idea by analyzing the behavior of a more general dynamical system around its critical point, say $\vzero$. A specific example is \eqref{eq:ode-W} if we set $\vtheta$ to be the vectorization of $\mW$.
\begin{equation} \label{eq:dynamical}
	\frac{\dd\vtheta}{\dd t} = \vg(\vtheta), \quad \text{where} \quad \vg(\vzero) = \vzero.
\end{equation}
We use $\phi(\vtheta_0, t)$ to denote the value of $\vtheta(t)$ in the case of $\vtheta(0) = \vtheta_0$. We assume that $\vg(\vtheta)$ is $\contC^2$-smooth with $\mJ(\vtheta)$ being the Jacobian matrix and $\phi(\vtheta_0, t)$ exists for all $\vtheta_0$ and $t$. For ease of presentation, in the main text we assume $\mJ(\vzero)$ is diagonalizable over $\R$ and defer the same result for the general case into \Cref{sec:non-diagonalizable}. Let $\mJ(\vzero) = \tilde{\mV} \tilde{\mD} \tilde{\mV}^{-1}$ be the eigendecomposition,
where $\tilde{\mV}$ is an invertible matrix and
$\tilde{\mD} = \diag(\tilde{\mu}_1, \dots, \tilde{\mu}_d)$ is the diagonal matrix consisting of the eigenvalues $\tilde{\mu}_1 \ge \tilde{\mu}_2 \ge \cdots \ge \tilde{\mu}_d$. Let $\tilde{\mV} = (\tilde{\vv}_1, \dots, \tilde{\vv}_d)$ and $\tilde{\mV}^{-1} = (\tilde{\vu}_1, \dots, \tilde{\vu}_d)^{\top}$, then $\tilde{\vu}_i$, $\tilde{\vv}_i$ are left and right eigenvectors associated with $\tilde{\mu}_i$ and $\tilde{\vu}_i^{\top} \tilde{\vv}_j = \delta_{ij}$. We can rewrite the eigendecomposition as $\mJ(\vzero) = \sum_{i=1}^{d} \tilde{\mu}_i \tilde{\vv}_i\tilde{\vu}_i^{\top}$.

We also assume the top eigenvalue $\tilde{\mu}_1$ is positive and unique. Note $\tilde{\mu}_1>0$ means the critical point $\vtheta = \vzero$ is unstable, and in matrix factorization it means $\vzero$ is a strict saddle point of $\Lossfun$.

The key observation is that if the initialization is infinitesimal, the trajectory is almost uniquely determined. To be more precise, we need the following definition:
\begin{definition} \label{def:con-align}
	For any $\vtheta_0 \in \R^d$ and $\vu \in \R^d$, we say that $\{\vtheta_\alpha\}_{\alpha \in (0, 1)}$ converges to $\vtheta_0$ with positive alignment with $\vu$ if $\lim\limits_{\alpha \to 0} \vtheta_{\alpha} = \vtheta_0$ and $\liminf\limits_{\alpha \to 0} \dotp{\frac{\vtheta_{\alpha} - \vtheta_0}{\normtwosm{\vtheta_{\alpha} - \vtheta_0}}}{\vu} > 0$.
\end{definition}
A special case is that the direction of $\vtheta_\alpha - \vtheta_0$ converges, i.e., $\bar{\vtheta} := \lim_{\alpha \to 0} \frac{\vtheta_{\alpha} - \vtheta_0}{\normtwosm{\vtheta_{\alpha} - \vtheta_0}}$ exists.
In this case, $\{\vtheta_\alpha\}$ has positive alignment with either $\vu$ or $-\vu$ except for a zero-measure subset of $\bar{\vtheta}$. This means any convergent sequence generically falls into either of these two categories.

The following theorem shows that if the initial point $\vtheta_{\alpha}$ converges to $\vzero$ with positive alignment with $\tilde{\vu}_1$ as $\alpha \to 0$, the trajectory starting with $\vtheta_{\alpha}$ converges to a unique trajectory $\vz(t) := \phi(\alpha\tilde{\vv}_1, t + \frac{1}{\tilde{\mu}_1} \log \frac{1}{\alpha})$. By symmetry, there is another unique trajectory for sequences $\{\vtheta_{\alpha}\}$ with positive alignment to $-\tilde{\vu}_1$, which is $\vz'(t) := \phi(-\alpha\tilde{\vv}_1, t + \frac{1}{\tilde{\mu}_1} \log \frac{1}{\alpha})$. This is somewhat surprising: different initial points should lead to very different trajectories, but our analysis shows that generically there are only two limiting trajectories for infinitesimal initialization. We will soon see how this theorem helps in our analysis for matrix factorization in \Cref{subsec:glrl_rank_1,subsec:glrl_rank_r}.

\begin{theorem} \label{thm:general-escape}
	Let $\vz_{\alpha}(t) := \phi(\alpha\tilde{\vv}_1, t + \frac{1}{\tilde{\mu}_1} \log \frac{1}{\alpha})$ for every $\alpha > 0$, then $\vz(t) := \lim_{\alpha \to 0} \vz_{\alpha}(t)$ exists and is also a solution of $\eqref{eq:dynamical}$, i.e., $\vz(t) = \phi(\vz(0),t)$. If $\vdelta_\alpha$ converges to $\vzero$ with positive alignment with $\tilde{\vu}_1$ as $\alpha \to 0$, then  $\forall t\in\R$, there is a constant $C > 0$ such that
	\begin{equation} \label{eq:general-escape-err}
		\normtwo{\phi\left(\vdelta_\alpha, t + \tfrac{1}{\tilde{\mu}_1} \log \tfrac{1}{\dotp{\vdelta_\alpha}{\tilde{\vu}_1}}\right) - \vz(t)} \le  C \cdot \normtwosm{\vdelta_\alpha}^{\frac{\tilde{\gamma}}{\tilde{\mu}_1 + \tilde{\gamma}}},
	\end{equation}
 for every sufficiently small $\alpha$, where $\tilde{\gamma} := \tilde{\mu}_1 - \tilde{\mu}_2 > 0$ is the eigenvalue gap.
\end{theorem}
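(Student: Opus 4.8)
The plan is to localise the analysis near the critical point $\vzero$. Writing $\vg(\vtheta)=\mJ(\vzero)\vtheta+\vh(\vtheta)$ with $\vh(\vtheta)=O(\normtwosm{\vtheta}^2)$ (using $\contC^2$-smoothness) and passing to the eigencoordinates $\vy=(y_1,\vy_\perp)$ determined by $\mJ(\vzero)=\sum_i\tilde{\mu}_i\tilde{\vv}_i\tilde{\vu}_i^\top$ — so that $y_1=\dotp{\vtheta}{\tilde{\vu}_1}$ and $\vy_\perp$ collects the coordinates along $\tilde{\vv}_2,\dots,\tilde{\vv}_d$ — the linear part becomes block-diagonal, with scalar $y_1$-block equal to $\tilde{\mu}_1$ and $\vy_\perp$-block having largest eigenvalue $\tilde{\mu}_2=\tilde{\mu}_1-\tilde{\gamma}$. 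The technical engine is an \emph{attraction-and-escape lemma}: fix a small radius $\rho$; then for a solution with $y_1(0)>0$ and bounded misalignment ratio $R(0):=\normtwosm{\vy_\perp(0)}/y_1(0)$, as long as it stays in the ball of radius $\rho$ one has (i) $\dot R\le-\tilde{\gamma}R+Ly_1(1+R^2)(1+R)$, which after a bootstrap yields $R(t)\lesssim R(0)e^{-\tilde{\gamma}t}+y_1(t)$ — the misalignment contracts at rate $\tilde{\gamma}$ and is otherwise of the order of the current scale — and (ii) $\tfrac{\dd}{\dd t}\log y_1=\tilde{\mu}_1+O(y_1)$, hence $y_1(t)=y_1(0)e^{\tilde{\mu}_1 t}(1+O(\rho))$ and the exit time $\tau$ from the ball equals $\tfrac{1}{\tilde{\mu}_1}\log\tfrac{\rho}{y_1(0)}+O(1)$, where the $O(1)$ admits a limit as $y_1(0)\to 0$. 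The cleanest way to see this last assertion, and the analogous ones below, is to reparametrise time by the monotone variable $y_1$, which turns these $O(1)$ corrections into convergent one-dimensional integrals in $y_1$; the same reparametrisation expresses $\vy_\perp$ at exit as an integral of the schematic form $\int_{y_1(0)}^{\rho}(\rho/u)^{\tilde{\mu}_i/\tilde{\mu}_1}\,O(u^2)\,\tfrac{\dd u}{u}$ (plus a decayed initial term), which converges absolutely as $y_1(0)\to 0$ because $\tilde{\mu}_i<2\tilde{\mu}_1$.

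Granting this lemma, the first claim follows: applying it to $\phi(\alpha\tilde{\vv}_1,\cdot)$ (for which $R(0)=0$), the exit point $P_\alpha:=\phi(\alpha\tilde{\vv}_1,\tau_\alpha)$ on the $\rho$-sphere converges to some $\vp_\rho$, and $\tau_\alpha-\tfrac{1}{\tilde{\mu}_1}\log\tfrac1\alpha$ converges to some $-c_\rho$, both at rate $O(\alpha)$. Since $\vz_\alpha(t)=\phi\bigl(P_\alpha,\,t+\tfrac{1}{\tilde{\mu}_1}\log\tfrac1\alpha-\tau_\alpha\bigr)$ and the time argument stays bounded, continuous dependence on initial data and on time over a bounded interval gives $\vz_\alpha(t)\to\vz(t):=\phi(\vp_\rho,t+c_\rho)$, together with the rate $\normtwo{\vz_\alpha(t)-\vz(t)}=O(\alpha)$ that will be needed below; $\vz$ is manifestly a solution and $\vz(t)=\phi(\vz(0),t)$ by the flow property. (More conceptually, since $\tilde{\mu}_1>\tilde{\mu}_2$ the origin carries a one-dimensional invariant manifold tangent to $\tilde{\vv}_1$, and $\vz$ is the unique trajectory in it with the stated normalisation; but the hands-on estimates above are needed for the second claim anyway.)

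For the error bound, set $a:=\dotp{\vdelta_\alpha}{\tilde{\vu}_1}$; positive alignment gives $a\ge c_0\normtwosm{\vdelta_\alpha}$ for all small $\alpha$ and bounds $R(0)$. It suffices to bound $\normtwo{\phi(\vdelta_\alpha,T^*)-\phi(a\tilde{\vv}_1,T^*)}$ with $T^*:=t+\tfrac{1}{\tilde{\mu}_1}\log\tfrac1a$, because $\phi(a\tilde{\vv}_1,T^*)=\vz_a(t)$ lies within $O(a)$ of $\vz(t)$ by the previous paragraph, which is absorbed since $a\lesssim\normtwosm{\vdelta_\alpha}^{\tilde{\gamma}/(\tilde{\mu}_1+\tilde{\gamma})}$ for small $\vdelta_\alpha$. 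Introduce a free threshold $\beta\in(a,\rho)$ and compare the two solutions in three stages. Up to level $y_1=\beta$: the attraction lemma controls both, and the transverse part of $\vdelta_\alpha$ (of size $\lesssim\normtwosm{\vdelta_\alpha}\asymp a$) is damped by the factor $e^{\tilde{\mu}_2 s}=(\beta/a)^{\tilde{\mu}_2/\tilde{\mu}_1}$ by the time $y_1$ reaches $\beta$, so the two solutions differ there by $\lesssim a^{\tilde{\gamma}/\tilde{\mu}_1}\beta^{\tilde{\mu}_2/\tilde{\mu}_1}+\beta^2$, plus a comparably small mismatch in the times at which they reach that level. From level $\beta$ through the exit sphere and then over the bounded remaining time, this discrepancy is amplified, but — crucially — the amplification is governed by the leading rate $\tilde{\mu}_1$ rather than by the full local Lipschitz constant of $\vg$, so it is at most a fixed power of $\rho/\beta$; carrying this through produces a bound of the schematic form $(\text{decreasing in }\beta)+(\text{increasing in }\beta)$, whose minimum over $\beta$ is attained at $\beta\asymp a^{\tilde{\gamma}/(\tilde{\mu}_1+\tilde{\gamma})}$ with value $\asymp a^{\tilde{\gamma}/(\tilde{\mu}_1+\tilde{\gamma})}\asymp\normtwosm{\vdelta_\alpha}^{\tilde{\gamma}/(\tilde{\mu}_1+\tilde{\gamma})}$, the claimed exponent.

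The main obstacle is exactly this amplification estimate. A trajectory started at scale $a$ lingers near the saddle for time of order $\log(1/a)$ before escaping, so controlling the growth of the deviation between two nearby trajectories by the full local Lipschitz constant $L$ of $\vg$ would yield an amplification factor of order $a^{-L/\tilde{\mu}_1}$, which wrecks the bound. One must instead establish that deviations grow at the sharp rate $\tilde{\mu}_1$ (up to an $O(\rho)$ slack sent to zero with $\rho$), which forces one to track the decomposition of the deviation along $\tilde{\vv}_1$ versus the transverse subspace and to use that the quadratic nonlinearity couples these components only weakly. The accompanying bookkeeping — verifying which of the many error contributions increase and which decrease in the threshold $\beta$, so that the optimisation lands on precisely $\tilde{\gamma}/(\tilde{\mu}_1+\tilde{\gamma})$ and not a slightly worse exponent — is where most of the care goes.
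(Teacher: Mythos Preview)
Your proposal is correct and follows essentially the same strategy as the paper: pass to eigencoordinates, linearise near the origin, show that the deviation between two nearby trajectories is amplified at the sharp rate $\tilde{\mu}_1$ rather than at the full Lipschitz rate, and balance errors by choosing an intermediate scale $\beta\asymp\normtwosm{\vdelta_\alpha}^{\tilde{\gamma}/(\tilde{\mu}_1+\tilde{\gamma})}$ (the paper makes this choice directly rather than by optimisation, but the outcome is identical). One organisational difference is worth flagging. For what you identify as the ``main obstacle''---the amplification estimate---you propose to track the deviation's components along $\tilde{\vv}_1$ and in the transverse subspace separately and exploit weak coupling through the nonlinearity. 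The paper's route here is simpler: after reducing to $\mJ(\vzero)=\diag(\tilde{\mu}_1,\dots,\tilde{\mu}_d)$, one has $\vh^\top\mJ(\vtheta_\xi)\vh\le(\tilde{\mu}_1+\beta\normtwosm{\vtheta_\xi})\normtwosm{\vh}^2$ for every $\vh$ (because in these coordinates $\mJ(\vzero)$ is already symmetric with top eigenvalue $\tilde{\mu}_1$), and a direct Gr\"onwall on $\normtwosm{\vtheta(t)-\hat{\vtheta}(t)}^2$ yields $\normtwosm{\vtheta(t)-\hat{\vtheta}(t)}\le e^{\tilde{\mu}_1 t+\kappa r}\normtwosm{\vtheta(0)-\hat{\vtheta}(0)}$ with no need to decompose the deviation at all. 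This one-line inner-product bound absorbs most of the ``bookkeeping'' you anticipate.
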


\begin{proof}[Proof sketch]
	The main idea is to linearize the dynamics near origin as we have done for the first warmup example. For sufficiently small $\vtheta$, by Taylor expansion of $\vg(\vtheta)$, the dynamics is approximately $\frac{\dd\vtheta}{\dd t} \approx \mJ(\vzero) \vtheta$, which can be understood as a continuous version of power iteration. If the linear approximation is exact, then $\vtheta(t) = e^{t\mJ(0)} \vtheta(0)$. For large enough $t_0$, $e^{t_0\mJ(0)} = \sum_{i=1}^{d} e^{\tilde{\mu}_it_0} \tilde{\vv}_i\tilde{\vu}_i^{\top} = e^{\tilde{\mu}_1 t_0} \tilde{\vv}_1 \tilde{\vu}_1^{\top} + O(e^{\tilde{\mu}_2 t_0})$.
	Therefore, as long as the initial point $\vtheta(0)$ has a positive inner product with $\tilde{\vu}_1$, $\vtheta(t_0)$ should be very close to $\epsilon\tilde{\vv}_1$ for some $\epsilon > 0$, and the rest of the trajectory after $t_0$ should be close to the trajectory starting from $\epsilon \tilde{\vv}_1$. However, here is a tradeoff: we should choose $t_0$ to be large enough so that the power iteration takes effect; but if $t_0$ is so large that the norm of $\vtheta(t_0)$ reaches a constant scale, then the linearization fails unavoidably. Nevertheless, if the initialization scale is sufficiently small, we show via a careful error analysis that there is always a suitable choice of $t_0$ such that $\vtheta(t_0)$ is well approximated by $\epsilon \tilde{\vv}_1$ and the difference between $\vtheta(t_0 + t)$ and $\phi(\epsilon \tilde{\vv}_1, t)$ is bounded as well. We defer the details to~\Cref{app:dynamical}.
\end{proof}

\subsection{Equivalence Between GD and GLRL: Rank-One Case}\label{subsec:glrl_rank_1}

Now we establish the equivalence between GF and GLRL in the first phase. The main idea is to apply \Cref{thm:general-escape} on \eqref{eq:ode-W}. For this, we need the following lemma on the eigenvalues and eigenvectors.
\begin{lemma} \label{lm:eigen-rank-1}
	Let $\vg(\mW) := -\mW\nabla f(\mW) -\nabla f(\mW)\mW$ and $\mJ(\mW)$ be its Jacobian. Then $\mJ(\vzero)$ is symmetric and thus diagonalizable. Let $-\nabla f(\vzero) = \sum_{i=1}^{d} \mu_i \vu_{1[i]} \vu_{1[i]}^{\top}$ be the eigendecomposition of the symmetric matrix $-\nabla f(\vzero)$, where $\mu_1 \ge \mu_2 \ge \cdots \ge \mu_d$. Then $\mJ(\vzero)$ has the form:
	\begin{equation} \label{eq:mJ0-formula}
		\mJ(\vzero)[\mDelta] = \sum_{i=1}^{d} \sum_{j=1}^{d} (\mu_i + \mu_j) \dotp{\mDelta}{\vu_{1[i]}\vu_{1[j]}^{\top}} \vu_{1[i]}\vu_{1[j]}^{\top},
	\end{equation}
	where $\mJ(\vzero)[\mDelta]$ stands for the resulting matrix produced by left-multiplying $\mJ(\vzero)$ to the vectorization of $\mDelta$.
	For every pair of $1 \le i \le j \le d$, $\mu_i + \mu_j$ is an eigenvalue of $\mJ(\vzero)$ and $\vu_{1[i]} \vu_{1[j]}^{\top} + \vu_{1[j]} \vu_{1[i]}^{\top}$ is a corresponding eigenvector. All the other eigenvalues are $0$.
\end{lemma}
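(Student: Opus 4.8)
The plan is to compute the Jacobian $\mJ(\vzero)$ of $\vg$ directly from a first-order Taylor expansion, and then to diagonalize the resulting linear operator in the orthonormal basis of $\R^{d\times d}$ built from the eigenvectors of $-\nabla f(\vzero)$. The key observation is that, since $\vg(\mW)=-\mW\nabla f(\mW)-\nabla f(\mW)\mW$ carries an explicit factor $\mW$ that vanishes to first order at $\vzero$, the perturbation of the remaining factor $\nabla f(\mW)$ enters only at second order. Concretely, for any fixed $\mDelta\in\R^{d\times d}$ and small $s$, using $\nabla f(s\mDelta)=\nabla f(\vzero)+O(s)$ gives $\vg(s\mDelta)=-s\bigl(\mDelta\,\nabla f(\vzero)+\nabla f(\vzero)\,\mDelta\bigr)+O(s^2)$, so that $\mJ(\vzero)[\mDelta]=\mDelta\mQ+\mQ\mDelta$ with $\mQ:=-\nabla f(\vzero)$; in particular the Hessian of $f$ never appears and $\mJ(\vzero)$ depends on $f$ only through $\nabla f(\vzero)$. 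Here I invoke the standing WLOG assumption $f(\mW)=f(\mW^{\top})$, which makes $\mQ$ symmetric and supplies the eigendecomposition $\mQ=\sum_{i=1}^{d}\mu_i\,\vu_{1[i]}\vu_{1[i]}^{\top}$ with $\{\vu_{1[i]}\}$ an orthonormal basis of $\R^d$.

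Next I would expand a general $\mDelta$ in the family $\{\vu_{1[i]}\vu_{1[j]}^{\top}\}_{1\le i,j\le d}$, which is an orthonormal basis of $\R^{d\times d}$ under $\dotp{\mA}{\mB}=\Tr(\mA\mB^{\top})$ precisely because the $\vu_{1[i]}$ are orthonormal in $\R^d$. Writing $\mDelta=\sum_{i,j}\dotp{\mDelta}{\vu_{1[i]}\vu_{1[j]}^{\top}}\vu_{1[i]}\vu_{1[j]}^{\top}$ and using the identities $\mQ\,\vu_{1[i]}\vu_{1[j]}^{\top}=\mu_i\,\vu_{1[i]}\vu_{1[j]}^{\top}$ and $\vu_{1[i]}\vu_{1[j]}^{\top}\mQ=\mu_j\,\vu_{1[i]}\vu_{1[j]}^{\top}$, adding the two contributions and summing over $i,j$ yields exactly \eqref{eq:mJ0-formula}. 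This identity shows that, in the orthonormal basis $\{\vu_{1[i]}\vu_{1[j]}^{\top}\}$, $\mJ(\vzero)$ acts diagonally with entries $\mu_i+\mu_j$; since an operator diagonal in an orthonormal basis is self-adjoint, $\mJ(\vzero)$ is symmetric and diagonalizable over $\R$, which is the first assertion. From the same decomposition the eigenstructure is immediate: for each ordered pair $(i,j)$, $\mu_i+\mu_j$ is an eigenvalue with eigenvector $\vu_{1[i]}\vu_{1[j]}^{\top}$, so symmetrizing shows that $\vu_{1[i]}\vu_{1[j]}^{\top}+\vu_{1[j]}\vu_{1[i]}^{\top}$ is an eigenvector for the eigenvalue $\mu_i+\mu_j$ for every $1\le i\le j\le d$ (one can equally verify this by substituting directly into \eqref{eq:mJ0-formula}), and the remaining eigenvectors — the antisymmetric combinations $\vu_{1[i]}\vu_{1[j]}^{\top}-\vu_{1[j]}\vu_{1[i]}^{\top}$ with $i<j$ — account for the rest of the spectrum, so the complete list of eigenvalues can be read off from \eqref{eq:mJ0-formula}.

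I do not anticipate a genuine obstacle here: the whole proof reduces to understanding the anticommutator map $\mDelta\mapsto\mQ\mDelta+\mDelta\mQ$ on $\R^{d\times d}$ together with the orthonormal basis $\{\vu_{1[i]}\vu_{1[j]}^{\top}\}$. The one step that deserves a moment of care is the initial reduction — recognizing that the first-order term of $\vg$ at $\vzero$ only sees $\nabla f(\vzero)$ and not $\nabla^2 f(\vzero)$, because of the leading factor $\mW$ — after which everything is linear algebra; one must just keep straight that left multiplication by $\mQ$ contributes $\mu_i$ and right multiplication contributes $\mu_j$ when assembling \eqref{eq:mJ0-formula}.
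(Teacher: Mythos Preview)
Your approach is essentially the paper's: compute the linearization $\mJ(\vzero)[\mDelta]=\mQ\mDelta+\mDelta\mQ$ with $\mQ=-\nabla f(\vzero)$, expand in the orthonormal basis $\{\vu_{1[i]}\vu_{1[j]}^{\top}\}$ of $\R^{d\times d}$, and read off symmetry and the eigenstructure; the paper's derivation of \eqref{eq:mJ0-formula} and of the symmetric eigenvectors $\vu_{1[i]}\vu_{1[j]}^{\top}+\vu_{1[j]}\vu_{1[i]}^{\top}$ matches yours almost line by line.

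One loose end concerns the final clause ``all the other eigenvalues are $0$.'' The paper argues that $\mJ(\vzero)[\mDelta]=\mJ(\vzero)[\mDelta^{\top}]$ and hence vanishes on antisymmetric $\mDelta$, but the map $\mDelta\mapsto\mQ\mDelta+\mDelta\mQ$ does not satisfy that identity; indeed your own formula \eqref{eq:mJ0-formula} shows $\vu_{1[i]}\vu_{1[j]}^{\top}-\vu_{1[j]}\vu_{1[i]}^{\top}$ is an eigenvector with eigenvalue $\mu_i+\mu_j$, not $0$. You stop short of spelling those eigenvalues out, so your write-up neither proves nor contradicts that clause explicitly, but if you follow your diagonalization through you will not recover $0$. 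This is harmless for the paper's applications (the dynamics lives on symmetric $\mW$, and the proof of \Cref{thm:glrl-1st-phase} immediately restricts to $\symm$), but be aware that this last sentence of the lemma, as literally stated, is not what either argument actually establishes.
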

We simplify the notation by letting $\vu_1 := \vu_{1[1]}$. A direct corollary of~\Cref{lm:eigen-rank-1} is that $\vu_1\vu_1^{\top}$ is the top eigenvector of $\mJ(\vzero)$. According to \Cref{thm:general-escape}, now there are only two types of trajectories, which correspond to infinitesimal initialization $\mW_{\alpha} \to \vzero$ with positive alignment with $\vu_1\vu_1^{\top}$ or $-\vu_1\vu_1^{\top}$. As the initialization must be PSD, $\mW_{\alpha} \to \vzero$ cannot have positive alignment with $-\vu_1\vu_1^{\top}$. For the former case, \Cref{thm:glrl-1st-phase} below states that, for every fixed time $t$, the GF solution $\phi(\mW_{\alpha}, T(\mW_{\alpha}) + t)$ after shifting by a time offset $T(\mW_{\alpha}) := \frac{1}{2\mu_1} \log(\dotpsm{\mW_{\alpha}}{\vu_1\vu_1^{\top}}^{-1})$ converges to the GLRL solution $\mWG_1(t)$ as $\mW_\alpha\to \vzero$. The only assumption for this result is that $\vzero$ is not a minimizer of $\ffun$ in $\psd$ (which is equivalent to $\lambda_1(-\nabla f(\vzero)) > 0$) and $-\nabla f(\vzero)$ has an eigenvalue gap. In the full observation case, this assumption is satisfied easily if the ground-truth matrix has a unique top eigenvalue. The proof for \Cref{thm:glrl-1st-phase} is deferred to \Cref{sec:pf-glrl-1st-phase}.
\begin{assumption} \label{ass:eigengap-at-origin}
	$\mu_1 > \max\{\mu_2, 0\}$, where $\mu_1 := \lambda_1(-\nabla f(\vzero)), \mu_2 := \lambda_2(-\nabla f(\vzero))$.
\end{assumption}
\begin{theorem}\label{thm:glrl-1st-phase}
	Under~\Cref{ass:eigengap-at-origin}, the following limit $\mWG_1(t)$ exists and is a solution of \eqref{eq:ode-W}.
	\begin{equation}\label{eq:defi_mwg}
		\mWG_1(t) := \lim_{\epsilon \to 0} \mWG_{1,\epsilon}\left(\tfrac{1}{2\mu_1} \log \tfrac{1}{\epsilon}+t\right) = \lim_{\epsilon \to 0} \phi\left(\epsilon \vu_1\vu_1^{\top}, \tfrac{1}{2\mu_1} \log \tfrac{1}{\epsilon}+t \right).
	\end{equation}
	
	Let $\{\mW_{\alpha}\} \subseteq \psd$ be PSD matrices converging to $\vzero$ with positive alignment with $\vu_1\vu_1^{\top}$ as $\alpha \to 0$, that is, $\lim_{\alpha \to 0} \mW_{\alpha} = \vzero$ and  $\exists \alpha_0, q > 0$ such that $\dotp{\mW_{\alpha}}{\vu_1\vu_1^{\top}} \ge q \normF{\mW_{\alpha}}$ for all $ \alpha < \alpha_0$. Then $\forall t \in \R$, there is a constant $C > 0$ such that
	\begin{equation}
	\normF{\phi\left(\mW_{\alpha}, \tfrac{1}{2\mu_1} \log \tfrac{1}{\dotp{\mW_{\alpha}}{\vu_1\vu_1^{\top}}} + t\right) - \mWG_1(t)}  \le  C \normF{\mW_{\alpha}} ^{\frac{\tilde{\gamma}}{2\mu_1 + \tilde{\gamma}}}
	\end{equation}
	for every sufficiently small $\alpha$, where $\tilde{\gamma} := 2\mu_1 - (\mu_1+\mu_2) = \mu_1-\mu_2$.
\end{theorem}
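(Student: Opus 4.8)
The plan is to obtain \Cref{thm:glrl-1st-phase} as a corollary of the abstract escape result \Cref{thm:general-escape}, feeding it the spectral data supplied by \Cref{lm:eigen-rank-1}. First I would cast the end-to-end dynamics \eqref{eq:ode-W} as an instance of the system \eqref{eq:dynamical}: identify the symmetric matrix $\mW$ with a vector (work on the subspace of symmetric matrices, which the flow preserves because $\nabla f(\mW)$ is symmetric whenever $\mW$ is, and on which PSD-ness is also preserved since $\mW=\mU\mU^\top$ along GF), take $\vg(\mW)=-\mW\nabla f(\mW)-\nabla f(\mW)\mW$, and let $\vzero$ be the critical point. Since $f$ is $\contC^3$, $\vg$ is $\contC^2$, and all trajectories we need start at small PSD points where global existence is assumed; so the hypotheses of \Cref{thm:general-escape} reduce to the spectral ones.

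For the spectrum, \Cref{lm:eigen-rank-1} gives that $\mJ(\vzero)$ is symmetric (hence $\R$-diagonalizable) with eigenvalues $\{\mu_i+\mu_j:1\le i\le j\le d\}$ and the orthonormal family $\{\vu_{1[i]}\vu_{1[j]}^\top\}$ (suitably symmetrized) as eigenvectors. Under \Cref{ass:eigengap-at-origin} we have $\mu_1>\mu_2$ and $\mu_1>0$, so $2\mu_1>\mu_1+\mu_2\ge\mu_i+\mu_j$ for every other pair while $2\mu_1>0$; hence the top eigenvalue of $\mJ(\vzero)$ is $\tilde\mu_1=2\mu_1$, it is positive and simple, its unit (Frobenius-norm) eigenvector is $\tilde\vv_1=\tilde\vu_1=\vu_1\vu_1^\top$ with $\vu_1:=\vu_{1[1]}$, and the gap is $\tilde\gamma=\tilde\mu_1-\tilde\mu_2=2\mu_1-(\mu_1+\mu_2)=\mu_1-\mu_2>0$. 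Substituting these into \Cref{thm:general-escape}, its limiting trajectory $\vz(t)=\lim_{\epsilon\to0}\phi(\epsilon\vu_1\vu_1^\top,\,t+\tfrac{1}{2\mu_1}\log\tfrac{1}{\epsilon})$ is exactly $\mWG_1(t)$ of \eqref{eq:defi_mwg}, which shows the limit exists and solves \eqref{eq:ode-W}. The error estimate then follows by applying \eqref{eq:general-escape-err} with $\vdelta_\alpha:=\mW_\alpha$: the positive-alignment hypothesis of \Cref{def:con-align} with $\vu=\vu_1\vu_1^\top$ is precisely the stated condition $\dotp{\mW_\alpha}{\vu_1\vu_1^\top}\ge q\normF{\mW_\alpha}$ for small $\alpha$ (for PSD $\mW_\alpha$, $\dotp{\mW_\alpha}{\vu_1\vu_1^\top}=\vu_1^\top\mW_\alpha\vu_1\ge0$ automatically, so the hypothesis only forbids this being $o(\normF{\mW_\alpha})$); with $\dotp{\vdelta_\alpha}{\tilde\vu_1}=\dotp{\mW_\alpha}{\vu_1\vu_1^\top}$, $\normtwosm{\vdelta_\alpha}=\normF{\mW_\alpha}$, $\tilde\mu_1=2\mu_1$, $\tilde\gamma=\mu_1-\mu_2$, inequality \eqref{eq:general-escape-err} turns into the claimed bound, and the time offset coincides with $T(\mW_\alpha)$ from the discussion preceding the theorem.

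Since all the substance sits in \Cref{thm:general-escape} and \Cref{lm:eigen-rank-1}, the only places to be careful are the reduction bookkeeping: (i) checking that \eqref{eq:ode-W} is a legitimate instance of \eqref{eq:dynamical} on the flow-invariant PSD cone with $\vzero$ a strict (unstable) saddle; (ii) getting the dictionary right, in particular that the top eigenvalue of $\mJ(\vzero)$ is $2\mu_1$ rather than $\mu_1$, and that its simplicity is exactly where $\mu_1>\mu_2$ is used (beyond $\mu_1>0$, which merely gives instability); and (iii) matching the Frobenius inner product with the $\dotp{\cdot}{\cdot}$ appearing in \Cref{def:con-align} and \eqref{eq:general-escape-err}. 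I do not expect any genuinely hard step beyond these identifications.
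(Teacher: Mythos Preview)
Your proposal is correct and follows essentially the same route as the paper: restrict \eqref{eq:ode-W} to the invariant subspace of symmetric matrices (the paper does this concretely via the lower-triangular vectorization $\lotri(\mW)\in\R^{d(d+1)/2}$), read off from \Cref{lm:eigen-rank-1} that on this subspace the Jacobian at $\vzero$ has simple top eigenvalue $\tilde\mu_1=2\mu_1$ with eigenvector $\vu_1\vu_1^\top$ and gap $\tilde\gamma=\mu_1-\mu_2$, and then invoke \Cref{thm:general-escape}. The dictionary you set up (Frobenius inner product, $\dotp{\vdelta_\alpha}{\tilde\vu_1}=\vu_1^\top\mW_\alpha\vu_1$, time offset) matches the paper's exactly.
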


It is worth to note that $\mWG_1(t)$ has rank $\le 1$ for any $t \in \R$, since every $\mWG_{1,\epsilon}(t)$ has rank $\le 1$ and the set $\psdlx{1}$ is closed. This matches with the first warmup example: GD does start learning with rank-1 solutions. Interestingly, in the case where the limit $\oW_1 := \lim_{t \to +\infty} \mWG_1(t)$ happens to be a minimizer of $\ffun$ in $\psd$, GLRL should exit with the rank-1 solution $\oW_1$ after the first phase, and the following theorem shows that this is also the solution found by GF.
\begin{assumption} \label{ass:analytic}
	$f(\mW)$ is locally analytic at each point.
\end{assumption}
\begin{theorem} \label{thm:glrl-1st-phase-converge}
	Under Assumptions~\ref{ass:eigengap-at-origin} and \ref{ass:analytic}, if $\normFsm{\mWG_1(t)}$ is bounded for all $t\ge 0$,
	 then the limit $\oWG_1 := \lim_{t \to +\infty} \mWG_1(t)$ exists. 
	 Further, if $\oWG_1$ is a minimizer of $\ffun$ in $\psd$, then for PSD matrices $\{\mW_{\alpha}\} \subseteq \psd$ converging to $\vzero$ with positive alignment with $\vu_1\vu_1^{\top}$ as $\alpha \to 0$, it holds that $\lim_{\alpha \to 0} \lim_{t \to +\infty} \phi(\mW_{\alpha}, t) = \oWG_1$.
\end{theorem}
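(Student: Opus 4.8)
The plan is to reduce both assertions to the {\L}ojasiewicz gradient inequality for the real-analytic function $\Loss(\mU)=\tfrac12 f(\mU\mU^\top)$ (analytic by \Cref{ass:analytic}, being a convex-analytic function composed with a polynomial map), after lifting the $\mW$-dynamics \eqref{eq:ode-W} back to the $\mU$-dynamics \eqref{eq:ode-U}.

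\emph{Existence of $\oWG_1$.} First I would observe that $\mWG_1(t)$ is rank $\le 1$, PSD, and not identically $\vzero$: if $\mWG_1(t_0)=\vzero$ then $\mWG_1\equiv\vzero$ by uniqueness of ODE solutions through the critical point $\vzero$, contradicting that $\mWG_1$ is the limiting escaping trajectory from the unstable critical point $\vzero$ (in the linearization underlying \Cref{thm:general-escape}, $\vu_1\vu_1^\top$ is an exact eigenvector of $\mJ(\vzero)$, which forces $\mWG_1(0)=\vu_1\vu_1^\top$). Hence $\mWG_1(t)\ne\vzero$ for all $t$, and I can write $\mWG_1(t)=\vw(t)\vw(t)^\top$ for a $\contC^1$ curve $\vw\colon\R\to\R^d\setminus\{\vzero\}$ (lift the path of nonzero rank-one PSD matrices through the $2$-to-$1$ map $\vw\mapsto\vw\vw^\top$). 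Differentiating and matching with \eqref{eq:ode-W}, the symmetric rank-$\le 1$ matrix $\vp\vw^\top+\vw\vp^\top$ with $\vp:=\dot\vw+\nabla f(\vw\vw^\top)\vw$ is forced to be antisymmetric, hence zero (a nonzero antisymmetric matrix has even rank $\ge 2$), so $\vp=\vzero$: $\vw$ is a genuine GF trajectory of the analytic $\Loss_1(\vw):=\tfrac12 f(\vw\vw^\top)$ on $\R^d$. Since $\mWG_1$ is rank-one PSD, $\normFsm{\mWG_1(t)}=\normtwosm{\vw(t)}^2$, so the hypothesis makes $\vw(\cdot)$ a \emph{bounded} GF trajectory of an analytic function; by {\L}ojasiewicz it has finite length and converges to some $\vw^*$, whence $\oWG_1=\vw^*(\vw^*)^\top$ exists.

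\emph{Convergence to $\oWG_1$.} Here I combine three ingredients: (a) \Cref{thm:glrl-1st-phase} (applicable since \Cref{ass:eigengap-at-origin} is in force) gives, for every fixed $T$, $\normFsm{\phi(\mW_\alpha,T(\mW_\alpha)+T)-\mWG_1(T)}\to 0$ as $\alpha\to 0$, where $T(\mW_\alpha):=\tfrac{1}{2\mu_1}\log\dotpsm{\mW_\alpha}{\vu_1\vu_1^\top}^{-1}$; (b) $\mWG_1(T)\to\oWG_1$ as $T\to\infty$ by the first part; (c) $\oWG_1$ is a global minimizer of $f$ over $\psd$ while $\tfrac{\dd}{\dd t}f(\mW)=-2\Tr(\nabla f(\mW)\mW\nabla f(\mW))\le 0$ along \eqref{eq:ode-W}, so $f$ is non-increasing. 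Set $\tau_\alpha:=T(\mW_\alpha)+T$ and $\mW^{(\alpha)}(s):=\phi(\mW_\alpha,\tau_\alpha+s)$. Choosing $T$ large and then $\alpha$ small, $\mW^{(\alpha)}(0)$ lies in a prescribed small ball around $\oWG_1$ and, by monotonicity of $f$ together with $f(\mW^{(\alpha)}(0))\to f(\mWG_1(T))\to\min_{\psd}f$, has arbitrarily small excess loss $f(\mW^{(\alpha)}(0))-\min_{\psd}f$. It then remains to prove the \textbf{trapping} statement: a trajectory of \eqref{eq:ode-W} that is close to $\oWG_1$ with small excess loss stays near $\oWG_1$ and converges to it. For this I lift $\mW^{(\alpha)}(\cdot)$ to a bounded $\mU^{(\alpha)}(\cdot)$ solving \eqref{eq:ode-U}; the fiber $\{\mU:\mU\mU^\top=\oWG_1\}$ is compact and $\Loss$ is analytic, so the {\L}ojasiewicz inequality $\normFsm{\nabla\Loss(\mU)}\ge c(\Loss(\mU)-\Loss^*)^{1-\theta}$ holds uniformly on a neighborhood of this fiber (with $\Loss^*=\tfrac12\min_{\psd}f$, $\theta\in(0,\tfrac12]$). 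A bounded lift starting close to the fiber with small excess loss cannot leave that neighborhood, has finite length $\le\tfrac{c'}{\theta}(\Loss(\mU^{(\alpha)}(0))-\Loss^*)^{\theta}$, and converges; pushing down through $\mU\mapsto\mU\mU^\top$ (Lipschitz on bounded sets), $\mW^{(\alpha)}(\cdot)$ converges and $\normFsm{\lim_{s\to\infty}\mW^{(\alpha)}(s)-\oWG_1}\lesssim\normFsm{\mW^{(\alpha)}(0)-\oWG_1}+(f(\mW^{(\alpha)}(0))-\min_{\psd}f)^{\theta}$. Letting $T\to\infty$ then $\alpha\to 0$ sends the right-hand side to $0$, yielding $\lim_{\alpha\to 0}\lim_{t\to\infty}\phi(\mW_\alpha,t)=\oWG_1$. (For $t\le\tau_\alpha$ the trajectory is bounded uniformly in small $\alpha$ because it tracks the bounded curve $\mWG_1$, as in the proof of \Cref{thm:glrl-1st-phase}; after $\tau_\alpha$ the finite-length bound certifies it stays bounded, so the lift $\mU^{(\alpha)}$ is well defined and bounded.)

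\emph{Main obstacle.} The delicate point is the trapping step. The difficulty is that $\oWG_1$ need not be an isolated critical point of \eqref{eq:ode-W} nor asymptotically stable: the minimizer set of $f$ in $\psd$ can be a positive-dimensional convex set through $\oWG_1$, so a soft Lyapunov argument using only monotonicity of $f$ would pin the limit to that minimizer set but not to $\oWG_1$ itself. What rescues the statement is the \emph{quantitative} finite-length estimate from {\L}ojasiewicz: once the trajectory gets $\rho$-close to $\oWG_1$ its remaining total variation is controlled by the excess loss there, and that excess loss tends to $0$ as $\alpha\to 0$ via (a)–(b), which forces the limit back onto $\oWG_1$. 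Secondary care is needed for the non-uniqueness of the $\mW\mapsto\mU$ lift near the fiber (handled by compactness of the fiber and uniformity of {\L}ojasiewicz on its neighborhood) and for verifying $\mWG_1\not\equiv\vzero$ in the first part (handled via the linearization underlying \Cref{thm:general-escape}).
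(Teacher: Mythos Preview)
Your proposal is correct and follows essentially the same route as the paper: both parts lift the end-to-end dynamics \eqref{eq:ode-W} to the factor space and invoke the {\L}ojasiewicz inequality for the analytic loss $\Loss$. For existence of $\oWG_1$, the paper (like you) writes $\mWG_1(t)=\vu(t)\vu(t)^\top$ with $\vu$ a bounded gradient flow of $\Loss:\R^d\to\R$, then appeals to {\L}ojasiewicz to get finite length and hence convergence. For the second part, the paper (like you) uses \Cref{thm:glrl-1st-phase} to land $\phi(\mW_\alpha,\cdot)$ arbitrarily close to $\oWG_1$, lifts to $\R^{d\times d}$, and applies a {\L}ojasiewicz trapping lemma (\Cref{lm:local-min-attract}) around a minimizer of $\Loss$ to bound the remaining arc length.

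Two small remarks. First, your parenthetical justification ``which forces $\mWG_1(0)=\vu_1\vu_1^\top$'' is not literally correct: the nonlinearity moves the escaping trajectory off the eigenline by the time $t=0$. What you actually need, and what holds, is merely $\mWG_1\not\equiv\vzero$; the paper sidesteps this by directly lifting each approximant $\mWG_{1,\epsilon}(t)=\phi(\epsilon\vu_1\vu_1^\top,t)$ via its obvious factor $\vu_\epsilon(t)$ (which solves \eqref{eq:ode-U} from $\sqrt{\epsilon}\,\vu_1$) and passing to the limit. Second, the only stylistic difference in Part~2 is that you phrase trapping uniformly over the compact fiber $\{\mU:\mU\mU^\top=\oWG_1\}$ with a bound in terms of excess loss, whereas the paper picks a single representative $\oU=(\bar\vu,\vzero,\dots,\vzero)$, constructs an explicit factorization $\mU_{\alpha,\epsilon}$ with $\normtwosm{\mU_{\alpha,\epsilon}-\oU}=O(\epsilon)$, and applies \Cref{lm:local-min-attract} to get $\normtwosm{\lim_t\phi-\oWG_1}=O(\epsilon^{2(1-\mu)})$. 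These are equivalent packagings of the same estimate.
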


\Cref{ass:analytic} is a natural assumption, since $\ffun$ in most cases of matrix factorization is a quadratic or polynomial function (e.g., matrix sensing, matrix completion). In general, it is unlikely for a gradient-based optimization process to get stuck at saddle points~\citep{lee2017first,panageas19firstorder}. Thus, we should expect to see in general that GLRL finds the rank-1 solution if the problem is feasible with rank-1 matrices. This means at least for this subclass of problems, the implicit regularization of GD is rather unrelated to norm minimization. Below is a concrete example:
\begin{example}[Counter-example of \Cref{conj:nuclear}, \citealt{gunasekar2017implicit}] \label{example:4x4}
	\Cref{thm:glrl-1st-phase-converge} enables us to construct counterexamples of the \emph{implicit nuclear norm regularization} conjecture in \citep{gunasekar2017implicit}. The idea is to construct a problem where every rank-1 stationary point of $\Loss(\mU)$ (i.e., $\nabla \Loss(\mU) = \vzero$ and $\mU \in \R^{d \times d}$ is rank-1) attains the global minimum but none of them is minimizing the nuclear norm. Below we give a concrete matrix completion problem that meets the above requirement. Let $\mM$ be a partially observed matrix to be recovered, where the entries in $\Omega=\{(1,3),(1,4),(2,3),(3,1),(3,2),(4,1)\}$ are observed and the others (marked with ``?'') are unobserved. The optimization problem is defined formally by $\Loss(\mU) = \frac{1}{2} f(\mU\mU^{\top}), f(\mW) = \frac{1}{2}\sum_{(i,j)\in\Omega} (W_{ij} - M_{ij})^2$.
	\[
		\mM =  \begin{bmatrix}
		?  & ? & 1 & R\\
		?  & ? & R & ?\\
		1  & R & ? & ?\\
		R & ? & ? & ?
		\end{bmatrix},
		\mM_{\mathrm{norm}} =  \begin{bmatrix}
		R  & 1 & 1 & R\\
		1  & R & R & 1\\
		1  & R & R & 1\\
		R & 1 & 1 & R
		\end{bmatrix},
		\mM_{\mathrm{rank}} =  \begin{bmatrix}
		1   & R & 1   & R   \\
		R & R^2 & R   & R^2 \\
		1   & R   & 1 & R   \\
		R   & R^2 & R   & R^2
		\end{bmatrix}.
	\]
	Here $R > 1$ is a large constant, e.g., $R = 100$. The minimum nuclear norm solution is the rank-2 matrix $\mM_{\mathrm{norm}}$, which has $\normastsm{\mM_{\mathrm{norm}}} = 4R$ (which is $400$ when $R = 100$).  $\mM_{\mathrm{rank}}$ is a rank-1 solution with much larger nuclear norm, $\normastsm{\mM_{\mathrm{norm}}}= 2R^2 + 2$ (which is $20002$ when $R = 100$). We can verify that $f(\,\cdot\,)$ satisfies Assumptions~\ref{ass:eigengap-at-origin} and \ref{ass:analytic} and $\mWG_1(t)$ converges to the rank-1 solution $\mM_{\mathrm{rank}}$. Therefore, GF with infinitesimal initialization converges to $\mM_{\mathrm{rank}}$ rather than $\mM_{\mathrm{norm}}$, which refutes the conjecture in \citep{gunasekar2017implicit}. See \Cref{app:example} for a formal statement.
\end{example}

\subsection{Equivalence between GD and GLRL: General Case}\label{subsec:glrl_rank_r}

\Cref{thm:glrl-1st-phase} shows that for any \emph{fixed} time $t$, the trajectory of GLRL in the first phase approximates GF with infinitesimal initialization, i.e., $\mWG_1(t)= \lim_{\alpha\to 0} \mWA_\alpha(t)$, where $\mWA_\alpha(t):= \phi(\mW_{\alpha}, \frac{1}{2\mu_1} \log(\dotpsm{\mW_{\alpha}}{\vu_1\vu_1^{\top}}^{-1}) + t)$. However, $\mWG_1(\infty) \neq \lim_{\alpha\to 0} \mWA_\alpha(\infty)$ does not hold in general, unless the prerequisite in \Cref{thm:glrl-1st-phase-converge} is satisfied, i.e., unless $\oWG_1 = \mWG_1(\infty)$ is a minimizer of $\ffun$ in $\psd$. This is because of the well-known result that GD converges to local minimizers~\citep{lee2016gradient,lee2017first}. We adapt Theorem 2 of \citet{lee2017first} to the setting of GF (\Cref{thm:gf_zero_measure}) and obtain the following result (\Cref{thm:GD_converge_to_min}); see \Cref{appsec:proof_of_GD_converge_to_min} for the proof.
\begin{restatable}[]{theorem}{GDconvergetomin}
\label{thm:GD_converge_to_min}
	Let $f:\R^{d\times d}\to \R$ be a convex $\mathcal{C}^2$-smooth function. (1).~All stationary points of $\Loss: \R^{d \times d} \to \R, \Loss(\mU)=\frac{1}{2}f(\mU\mU^\top)$ are either strict saddles or global minimizers; (2).~For any random initialization, GF \eqref{eq:ode-U} converges to strict saddles of $\Loss(\mU)$ with probability $0$.
\end{restatable}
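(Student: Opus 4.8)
The two claims are essentially independent and call for different tools. Part~(1) is a statement about the \emph{landscape} of $\Loss$: I will show by an explicit second-order perturbation that every stationary point of $\Loss$ that is not a global minimizer has a strictly negative Hessian direction. Part~(2) is a statement about the \emph{dynamics} and follows by instantiating the gradient-flow analogue of the strict-saddle-avoidance theorem of \citet{lee2017first}, namely \Cref{thm:gf_zero_measure}, at the objective $\Loss$. Together the two parts say that GF reaches a global minimizer from almost every initialization whenever it converges.

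\textbf{Proving Part~(1).} Fix a stationary point $\mU$, so $\nabla\Loss(\mU)=\nabla f(\mW)\mU=\vzero$ with $\mW:=\mU\mU^\top\succeq\vzero$; right-multiplying by $\mU^\top$ also gives $\nabla f(\mW)\mW=\vzero$, so $\mW$ is a critical point of \eqref{eq:ode-W}. Recall $\nabla f(\mW)$ is symmetric, and split on the sign of $\mu_1:=\lambda_1(-\nabla f(\mW))$. If $\mu_1\le 0$, then by \Cref{lm:f-global-min-psd} $\mW$ minimizes $f$ over $\psd$, i.e.\ $\mU$ is a global minimizer of $\Loss$, and we are done. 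Otherwise $\mu_1>0$; let $\vp$ be a unit top eigenvector, $\nabla f(\mW)\vp=-\mu_1\vp$. I first observe that $\mU$ must be rank-deficient: if $\mU$ were invertible then $\nabla f(\mW)\mU=\vzero$ would force $\nabla f(\mW)=\vzero$, contradicting $\mu_1>0$. So choose a unit vector $\vq$ with $\mU\vq=\vzero$ and set $\mDelta:=\vp\vq^\top\in\R^{d\times d}$, with $\normFsm{\mDelta}=1$. Because $\mU\vq=\vzero$, the first-order change of $\mW$ along $\mDelta$ vanishes, $\mDelta\mU^\top+\mU\mDelta^\top=\vp(\mU\vq)^\top+(\mU\vq)\vp^\top=\vzero$. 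Expanding $\Loss(\mU+s\mDelta)=\tfrac12 f\big(\mW+s(\mDelta\mU^\top+\mU\mDelta^\top)+s^2\mDelta\mDelta^\top\big)$ to second order in $s$, the $\nabla^2 f$ contribution --- which is the only term that convexity could force to be nonnegative --- carries the factor $\mDelta\mU^\top+\mU\mDelta^\top=\vzero$ and drops out, leaving (using $\mDelta\mDelta^\top=\vp\vp^\top$)
\[
	\nabla^2\Loss(\mU)[\mDelta,\mDelta]=\dotp{\nabla f(\mW)}{\mDelta\mDelta^\top}=\vp^\top\nabla f(\mW)\vp=-\mu_1<0 .
\]
Hence $\mU$ is a strict saddle, which completes the dichotomy.

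\textbf{Proving Part~(2).} Since $\mU\mapsto\mU\mU^\top$ is polynomial and $f$ is $\mathcal{C}^2$, $\Loss$ is $\mathcal{C}^2$, and by the paper's standing assumption the flow $\phi$ exists for all time; thus $\Loss$ satisfies the hypotheses of \Cref{thm:gf_zero_measure}, the GF analogue of Theorem~2 of \citet{lee2017first}. That theorem says that for each strict saddle $\mU^\star$ the set of initializations whose GF trajectory converges to $\mU^\star$ lies on a center-stable manifold of dimension strictly less than $d^2$, hence is Lebesgue-null, and a second-countability covering argument promotes this to the union over \emph{all} strict saddles. Applying it to $\Loss$ gives that the set of initializations from which GF converges to a strict saddle has measure zero, which is exactly Part~(2); combined with Part~(1), GF converges to a global minimizer of $\Loss$ from almost every initialization whenever it converges at all.

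\textbf{Main obstacle.} The real content is Part~(1): one has to notice that a non-global stationary point is forced to be rank-deficient, and then produce exactly the right ``grow-a-new-direction'' perturbation $\vp\vq^\top$ --- the choice $\mU\vq=\vzero$ annihilates the (potentially curvature-increasing) $\nabla^2 f$ term, while $\vp$ being the bad eigenvector harvests the negative curvature $-\mu_1$. For Part~(2) the only subtlety is that strict saddles of $\Loss$ need not be isolated --- they may form positive-dimensional families --- so a bare stable-manifold statement does not suffice and the covering/Lindel\"of step packaged inside \Cref{thm:gf_zero_measure} is genuinely needed; checking that $\Loss$ meets the smoothness and global-existence hypotheses of that result is routine given the paper's assumptions.
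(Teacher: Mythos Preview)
Your proof is correct and follows essentially the same approach as the paper. For Part~(2) you both invoke \Cref{thm:gf_zero_measure}; for Part~(1) the paper simply cites \citet{du2018power} (\Cref{thm:du_lee_strict_saddle}), whereas you spell out the argument explicitly --- but your construction (take $\vq$ in the kernel of the rank-deficient $\mU$ and $\vp$ the top eigenvector of $-\nabla f(\mW)$, then use $\mDelta=\vp\vq^\top$ so that the $D^2 f$ term vanishes) is exactly the one the paper records elsewhere as \Cref{lm:L-sosp-full-rank}, so there is no substantive difference.
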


Therefore, for convex $\ffun$ such as matrix sensing and completion, suppose $\ffun$ has no rank-$1$ PSD minimizer, then no matter how small $\alpha$ is, $\mWA_\alpha(\infty)$ (if exists) is a minimizer of $\ffun$ with a higher rank and thus away from the rank-1 matrix $\oWG_1$. In other words, $\mWG_1(t)$ only describes the limiting trajectory of GF in the first phase, i.e., when GF goes from near $\vzero$ to near $\oWG_1$. After a sufficiently long time (which depends on $\alpha$), GF escapes the critical point $\oWG_1$, but this part is not described by $\mWG_1(t)$.

To understand how GF escapes $\oWG_1$, a priori, we need to know how GF approaches $\oWG_1$.
Using a similar argument for \Cref{thm:general-escape},  \Cref{thm:general-escape-glrl} shows that generically GF only escapes in the direction of $\vv_1\vv_1^\top$, where $\vv_1$ is the (unique) top eigenvector of $-\nabla f(\oWG_1)$,
and thus the limiting trajectory exactly matches with that of GLRL in the second phase until GF gets close to another critical point $\oWG_2 \in \psdlx{2}$. If $\oWG_2$ is still not a minimizer of $\ffun$ in $\psd$ (but it is a local minimizer in $\psdlx{2}$ generically), then GF escapes $\oWG_2$ and the above process repeats until $\oWG_K$ is a minimizer in $\psd$ for some $K$. Here by ``generically'' we hide some technical assumptions and we elaborate on them in \Cref{app:eq-gf-glrl}.
See \Cref{fig:frechet} and \Cref{fig:equivalence} for experimental verification of the equivalence between GD and GLRL. We end this section with the following characterization of GF:

\begin{theorem}[\Cref{thm:general-escape-glrl-formal}, informal] \label{thm:general-escape-glrl}

	Let $\oW$ be a critical point of \eqref{eq:ode-W} satisfying that $\oW$ is a local minimizer of $\ffun$ in $\psdlr$ for some $r \ge 1$ but not a minimizer in $\psd$. Let $-\nabla f(\oW) = \sum_{i=1}^{d} \mu_i \vv_i \vv_i^{\top}$ be the eigendecomposition of $-\nabla f(\oW)$. If $\mu_1 > \mu_2$ and if there exists time $T_{\alpha} \in \R$ for every $\alpha$ so that $\phi(\mW_\alpha, T_{\alpha})$ converges to $\oW$ with positive alignment with the top principal component $\vv_1\vv_1^{\top}$ as $\alpha \to 0$, then for every fixed $t$, $\lim\limits_{\alpha \to 0}\phi(\mW_\alpha, T_\alpha + \frac{1}{2\mu_1} \log \frac{1}{\dotp{\phi(\mW_{\alpha}, T_{\alpha})}{\vv_1\vv_1^{\top}}} + t)$ exists and is equal to $\mWG(t) := \lim_{\epsilon \to 0} \phi(\oWG + \epsilon \vv_1\vv_1^{\top}, \frac{1}{2\mu_1} \log \frac{1}{\epsilon} + t)$.
\end{theorem}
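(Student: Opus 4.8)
The plan is to deduce this from \Cref{thm:general-escape} (in its general, possibly non-diagonalizable, form from \Cref{sec:non-diagonalizable}) after translating the critical point $\oW$ to the origin. Concretely, I would study the dynamical system $\dd\vtheta/\dd t=\hat\vg(\vtheta):=\vecz(\vg(\oW+\vtheta))$, which has $\hat\vg(\vzero)=\vzero$, is $\contC^2$ (since $f$ is $\contC^3$), and whose Jacobian at $\vzero$ is $\mJ(\oW)$; by the flow property its solution through $\vtheta_0$ is $\phi(\oW+\vtheta_0,t)-\oW$, and all initial conditions used below ($\oW+\epsilon\vv_1\vv_1^\top$ and $\phi(\mW_\alpha,T_\alpha)$) lie in $\psd$, where $\phi$ is globally defined. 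Everything then reduces to identifying the relevant spectral data of $\mJ(\oW)$.

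The main work is a structural lemma for $\mJ(\oW)$. Since $\oW$ is a critical point of \eqref{eq:ode-W}, we have $\oW=\oU\oU^\top$ with $\nabla f(\oW)\oU=\vzero$, hence $\nabla f(\oW)\oW=\oW\nabla f(\oW)=\vzero$, and a direct differentiation gives, for symmetric $\mDelta$,
\[
\mJ(\oW)[\mDelta]=-\mDelta\nabla f(\oW)-\nabla f(\oW)\mDelta-\oW\nabla^2 f(\oW)[\mDelta]-\nabla^2 f(\oW)[\mDelta]\oW .
\]
Let $r_0:=\rank(\oW)$. Since $\oW$ is not a minimizer of $f$ in $\psd$ we have $\mu_1>0$, so the top eigenvector $\vv_1$ of $-\nabla f(\oW)$ is orthogonal to $\mathrm{colspace}(\oW)$. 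Let $P$ project onto $\mathrm{colspace}(\oW)$ and split $\symm$ orthogonally into $\gV_2:=T_{\oW}\psdlx{r_0}$ (the tangent space of the rank-$r_0$ PSD manifold, smooth near $\oW$, which is exactly $\{P\mX+\mX^\top P\}$) and $\gV_1:=\{(\mI-P)\mX(\mI-P):\mX\in\symm\}$. From the formula, using $\nabla f(\oW)P=\vzero$ and $(\mI-P)\oW=\vzero$, one checks that $\mJ(\oW)$ maps $\gV_2$ into $\gV_2$ (also clear because $\psdlx{r_0}$ is flow-invariant for \eqref{eq:ode-W}) and that the $\gV_1$-component of $\mJ(\oW)[\mDelta_1]$ for $\mDelta_1\in\gV_1$ equals $-\mDelta_1\nabla f(\oW)-\nabla f(\oW)\mDelta_1$; i.e. $\mJ(\oW)$ is block upper-triangular for $\symm=\gV_2\oplus\gV_1$. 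On $\gV_1$ the diagonal block is symmetric with eigenvalues $\mu_i^\perp+\mu_j^\perp$, where the $\mu_i^\perp$ are the eigenvalues of $-\nabla f(\oW)$ restricted to $\mathrm{colspace}(\oW)^\perp$; Cauchy interlacing gives $\mu_1^\perp=\mu_1$ and $\mu_2^\perp\le\mu_2<\mu_1$, so this block has unique top eigenvalue $2\mu_1$ with eigenvector $\vv_1\vv_1^\top$ and gap $\ge\mu_1-\mu_2$. On $\gV_2$, the differential of $\mU\mapsto\mU\mU^\top$ at $\oU$ conjugates the diagonal block to the Hessian of $\tfrac12 f(\mU\mU^\top)$ over $\mU\in\R^{d\times r_0}$ at $\oU$, negated and restricted to the complement of the rotational-gauge directions; this is negative semidefinite because $\oW$ is a local minimizer of $f$ in $\psdlx{r_0}$ (a consequence of the hypothesis that it is a local minimizer in $\psdlr$, as $r_0\le r$), so all eigenvalues on $\gV_2$ are $\le0$. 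Hence $2\mu_1$ is the unique eigenvalue of $\mJ(\oW)$ of largest real part, it is algebraically simple, the gap $\tilde\gamma>0$, and — reading off the block structure — its left eigenvector is $\tilde\vu_1=\vecz(\vv_1\vv_1^\top)$ (which is biorthonormal to the right eigenvector, since the latter equals $\vv_1\vv_1^\top$ plus a $\gV_2$-term and $\normFsm{\vv_1\vv_1^\top}=1$).

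With $\tilde\mu_1=2\mu_1$ and $\tilde\vu_1=\vv_1\vv_1^\top$ in hand, I would finish by applying \Cref{thm:general-escape} to $\hat\vg$ twice. First, to the sequence $\vdelta_\epsilon=\epsilon\vv_1\vv_1^\top$ (which has positive alignment with $\tilde\vu_1$, with $\dotpsm{\vdelta_\epsilon}{\tilde\vu_1}=\epsilon$): this shows the limiting trajectory $\vz(t)$ exists and that $\lim_{\epsilon\to0}\phi(\oW+\epsilon\vv_1\vv_1^\top,\tfrac{1}{2\mu_1}\log\tfrac1\epsilon+t)=\oW+\vz(t)=:\mWG(t)$ exists. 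Second, to $\vdelta_\alpha:=\phi(\mW_\alpha,T_\alpha)-\oW$: the hypothesis that $\phi(\mW_\alpha,T_\alpha)\to\oW$ with positive alignment with $\vv_1\vv_1^\top$ is exactly positive alignment of $\vdelta_\alpha$ with $\tilde\vu_1$, so the theorem yields $\hat\phi(\vdelta_\alpha,t+\tfrac{1}{2\mu_1}\log\tfrac{1}{\dotpsm{\vdelta_\alpha}{\vv_1\vv_1^\top}})\to\vz(t)$ with the stated polynomial rate. Since $\dotpsm{\oW}{\vv_1\vv_1^\top}=\vv_1^\top\oW\vv_1=0$ and $\hat\phi(\vdelta_\alpha,s)=\phi(\mW_\alpha,T_\alpha+s)-\oW$ by the flow property, this is precisely $\phi(\mW_\alpha,T_\alpha+\tfrac{1}{2\mu_1}\log\tfrac{1}{\dotpsm{\phi(\mW_\alpha,T_\alpha)}{\vv_1\vv_1^\top}}+t)\to\mWG(t)$, as claimed.

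I expect the structural lemma to be the main obstacle, specifically the assertion that the $\gV_2$-block of $\mJ(\oW)$ is stable (eigenvalues $\le 0$): this is what guarantees a single escape direction, and proving it cleanly requires transferring the second-order optimality of $\oW$ on the rank-$r_0$ manifold through the factorization $\mU\mapsto\mU\mU^\top$ and quotienting out the rotational symmetry, together with the bookkeeping that $\gV_1,\gV_2$ are orthogonal, that $\gV_2$ is $\mJ(\oW)$-invariant, and that the block structure pins down the left eigenvector. A secondary point is that $\mJ(\oW)$ is only block-triangular — not symmetric or obviously diagonalizable — so one must invoke the general version of \Cref{thm:general-escape}; the right eigenvector then carries a $\gV_2$-correction, but this is harmless because \Cref{thm:general-escape} handles arbitrary initialization directions with positive alignment with $\tilde\vu_1$. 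The ``generic'' hypotheses hidden in the informal statement are just $\mu_1>\mu_2$ and local minimality of $\oW$ in $\psdlx{\rank\oW}$, both already granted.
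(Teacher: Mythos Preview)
Your proposal is correct and follows essentially the same route as the paper: analyze the spectrum of $\mJ(\oW)$ to show $2\mu_1$ is the unique top eigenvalue with left eigenvector $\vv_1\vv_1^\top$, then invoke \Cref{thm:general-escape}. The paper packages the spectral step as an explicit eigenvalue classification (\Cref{thm:eigen-local-min}), where your $\gV_1$- and $\gV_2$-blocks appear as the Type-1 (left) and Type-2 (right) eigenvector families, and the block upper-triangular structure you use is precisely \Cref{lm:left-right-diag}; the paper also records (via \Cref{lm:L-sosp-full-rank}) that the hypotheses force $r_0=r$, a fact your argument does not need but which slightly simplifies the picture.
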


\myparagraph{Characterization of the trajectory of GF.} Generically, the trajectory of GF with small initialization can be split into $K$ phases by $K+1$ critical points of \eqref{eq:ode-W}, $\{\oWG_{r}\}_{r=0}^{K}$ ($\oWG_0 = \vzero$), where in  phase $r$ GF escapes from $\oWG_{r-1}$ in the direction of the top principal component of $-\nabla f(\oWG_{r-1})$ and gets close to $\oWG_{r}$. Each $\oWG_r$ is a local minimizer of $\ffun$ in $\psdlr$, but none of them is a minimizer of $\ffun$ in $\psd$ except $\oWG_K$. The smaller the initialization is, the longer GF stays around each $\oWG_{r}$. Moreover, $\{\oWG_r\}_{r=0}^{K}$ corresponds to $\{\oWG_{r,\eps}\}_{r=0}^{K}$ in \Cref{def:glrl-wg} with infinitesimal $\eps > 0$.

\begin{figure}[htbp]
		\centering
		\includegraphics[width = \textwidth]{./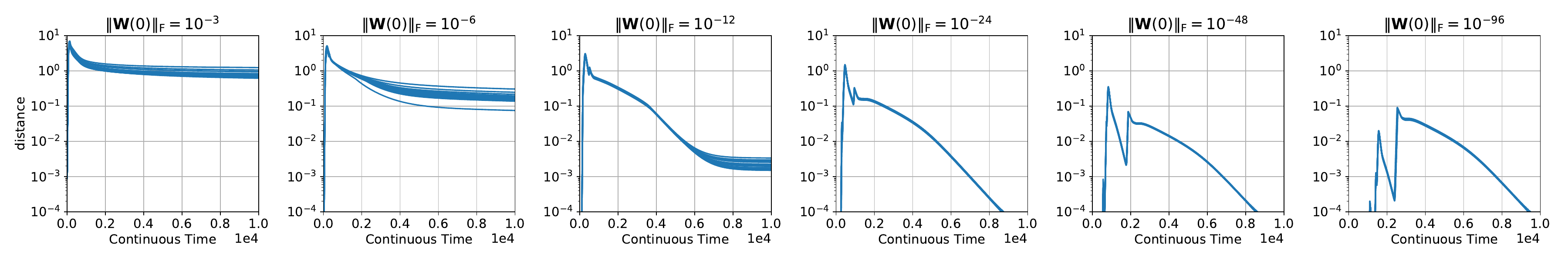}
		\caption{The trajectory of depth-$2$ GD, $\mW_\text{GD}(t)$, converges to the trajectory of GLRL, $\mW_\text{GLRL}(t)$, as the initialization scale goes to $0$. We plot $\dist(t) = \min_{t' \in \mathcal{T}} \normFsm{\mW_\text{GD}(t) - \mW_\text{GLRL}(t')}$ for different initialization scale $\normFsm{\mW(0)}$, where $\mathcal{T}$ is a discrete subset of $\R$ that $\delta$-covers the entire trajectory of GLRL: $\max_t\min_{t'\in \mathcal{T}}\normF{\mW_\text{GLRL}(t) - \mW_\text{GLRL}(t')} \leq \delta$ for $\delta \approx 0.00042$. For each $\normFsm{\mW(0)}$, we run $20$ random seeds and plot them separately. The ground truth $\mW^* \in \R^{20 \times 20}$ is a randomly generated rank-$3$ matrix with $\normFsm{\mW^*} = 20$. $30\%$ entries are observed. See more in \Cref{sec:exp-setup}.
		}
		\label{fig:frechet}
\end{figure}

\section{Benefits of Depth: A View from GLRL}
\label{sec:deeper}
In this section, we consider  matrix factorization problems with depth $L \ge 3$. Our goal is to understand the effect of the depth-$L$ parametrization $\mW = \mU_1\mU_2\cdots\mU_L$ on the implicit bias --- how does depth encourage GF to find low rank solutions? We take the standard assumption in existing analysis for the end-to-end dynamics that the weight matrices have a balanced initialization, i.e. $\mU_i^\top(0) \mU_i(0)  = \mU_{i+1}(0) \mU_{i+1}^\top(0),\ \forall 1\le i\le L-1$.  \citet{arora2018optimization} showed that if $\{\mU_i\}_{i=1}^L$ is balanced at initialization, then we have the following end-to-end dynamics. Similar to the depth-2 case, we use $\phi(\mW(0),t)$ to denote $\mW(t)$, where
\begin{equation}\label{eq:imply_end_to_end}
\frac{\dd \mW}{\dd t} = - \sum_{i=0}^{L-1} (\mW\mW^\top)^{\frac{i}{L}}\nabla f(\mW) (\mW^\top\mW)^{1-\frac{i+1}{L}}.
\end{equation}
The lemma below is the foundation of our analysis for the deep case, which greatly simplifies \eqref{eq:imply_end_to_end}. Due to the space limit, we defer its derivations and applications into \Cref{app:deep}.
\begin{lemma}\label{lem:SR_formula}
	If $\mW(t)$ is a symmetric solution of \eqref{eq:imply_end_to_end}, then for $\mM(t) := \mW(t)^{2/L}$, we have
\begin{equation}\label{eq:S_formular}
	\frac{\dd \mM}{\dd t}= -\nabla f (\mM^{L/2})\mM^{L/2} - \mM^{L/2}\nabla f (\mM^{L/2}).
\end{equation}
\end{lemma}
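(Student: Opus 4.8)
The plan is to verify the matrix ODE \eqref{eq:S_formular} entrywise in a common orthonormal eigenbasis of $\mW(t)$, exploiting that $\mM(t)$, $\mW(t) = \mM(t)^{L/2}$, and every power $\mW(t)^{2i/L}$ appearing on the right-hand side of \eqref{eq:imply_end_to_end} are simultaneously diagonalizable. First I would use symmetry to simplify: since $\mW(t)$ is a symmetric solution and $\mM(t) = \mW(t)^{2/L}$ is only meaningful when $\mW(t) \succeq \vzero$, I take $\mW(t) \succ \vzero$ along the trajectory (the case of interest, e.g.\ identity initialization) and recover the boundary case by continuity; then $\mW\mW^\top = \mW^\top\mW = \mW^2$, so \eqref{eq:imply_end_to_end} reads
\[
	\frac{\dd\mW}{\dd t} = -\sum\nolimits_{i=0}^{L-1} \mW^{2i/L}\,\nabla f(\mW)\,\mW^{2(L-1-i)/L}.
\]

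Next, fix $t$, let $\sigma_1,\dots,\sigma_d > 0$ be the eigenvalues of $\mW(t)$, and work in coordinates with respect to a corresponding orthonormal eigenbasis, so $\mW$ and all its powers are diagonal. Then the displayed dynamics gives, for all $k,l$,
\[
	[\dot{\mW}]_{kl} = -[\nabla f(\mW)]_{kl}\,\sum\nolimits_{i=0}^{L-1} \sigma_k^{2i/L}\,\sigma_l^{2(L-1-i)/L},
\]
and the factorization $a^L - b^L = (a-b)\sum_{i=0}^{L-1} a^{i}b^{L-1-i}$ with $a = \sigma_k^{2/L}$, $b = \sigma_l^{2/L}$ collapses the sum to $(\sigma_k^2 - \sigma_l^2)/(\sigma_k^{2/L} - \sigma_l^{2/L})$ when $\sigma_k \neq \sigma_l$ (and to $L\,\sigma_k^{2(L-1)/L}$ when $\sigma_k = \sigma_l$). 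On the other hand, differentiating the primary matrix function $\mW \mapsto \mW^{2/L}$ along the trajectory via the Daleckii--Krein (Fr\'echet-derivative) formula yields, in the same eigenbasis,
\[
	[\dot{\mM}]_{kl} = \frac{\sigma_k^{2/L} - \sigma_l^{2/L}}{\sigma_k - \sigma_l}\,[\dot{\mW}]_{kl} \qquad (\sigma_k \neq \sigma_l),
\]
with diagonal entry $\frac{2}{L}\sigma_k^{2/L - 1}\,[\dot{\mW}]_{kk}$. Plugging in the previous display, the factor $\sigma_k^{2/L} - \sigma_l^{2/L}$ cancels and $[\dot{\mM}]_{kl} = -(\sigma_k + \sigma_l)\,[\nabla f(\mW)]_{kl}$ (the $\sigma_k = \sigma_l$ case agreeing), which is precisely the $(k,l)$ entry of $-\mW\nabla f(\mW) - \nabla f(\mW)\mW$ in this basis. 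Since the basis is orthonormal, this entrywise agreement is a basis-independent matrix identity, and substituting $\mW = \mM^{L/2}$ turns it into \eqref{eq:S_formular}.

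The main obstacle is making the differentiation step rigorous, i.e.\ that $t \mapsto \mW(t)^{2/L}$ is differentiable with derivative given by the divided-difference formula, together with the degenerate cases. Repeated eigenvalues of $\mW(t)$ cause no trouble once divided differences are read as derivatives on the diagonal, but eigenvalues approaching $0$ are genuinely delicate because $x \mapsto x^{2/L}$ is not $C^1$ at $0$ when $L \ge 3$; I would handle this by keeping $\mW(t) \succ \vzero$ on the open trajectory and passing to the boundary by a limiting argument, using that $[\dot{\mW}]_{kl}$ itself vanishes whenever $\sigma_k = \sigma_l = 0$, so that $-(\sigma_k + \sigma_l)[\nabla f(\mW)]_{kl}$ remains the correct limiting value. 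As a consistency check for even $L$, one can instead differentiate $\mM^{L/2}$ directly by the product rule, $\frac{\dd}{\dd t}\mM^{L/2} = \sum_{j=0}^{L/2-1}\mM^{j}\,\dot{\mM}\,\mM^{L/2-1-j}$, substitute \eqref{eq:S_formular}, and re-index the resulting $L$ terms to recover exactly the right-hand side of the simplified \eqref{eq:imply_end_to_end}; the eigenbasis argument is what handles odd $L$ uniformly.
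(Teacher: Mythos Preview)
Your argument is correct but follows a genuinely different route from the paper. You compute $\dot{\mM}$ directly by the Daleckii--Krein formula for the Fr\'echet derivative of $\mW\mapsto\mW^{2/L}$ in an instantaneous eigenbasis, then match it entrywise against $-(\sigma_k+\sigma_l)[\nabla f(\mW)]_{kl}$; the algebra is clean and the treatment of repeated eigenvalues and the $\sigma\to 0$ boundary is handled honestly. The paper instead avoids differentiating fractional matrix powers altogether: it introduces an auxiliary $\mR(t)$ defined not as $\mW(t)^{1/L}$ but as the solution of an alternating-sign ODE $\dot{\mR}=-\sum_{i=0}^{L-1}(-1)^i\mR^i\,\nabla f(\mR^L)\,\mR^{L-1-i}$ with $\mR(0)=\mW(0)^{1/L}$, verifies via a telescoping double sum that $\mR^L$ satisfies \eqref{eq:symmetric_e2e_repeat}, concludes $\mR^L=\mW$ by uniqueness, and then obtains $\dot{\mM}$ from $\mM=\mR^2$ via the ordinary product rule. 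The tradeoff: your approach is the ``obvious'' direct computation and makes the cancellation transparent, at the cost of invoking Daleckii--Krein and needing the positivity/limiting argument for rank-deficient $\mW$; the paper's trick is self-contained (only integer-power product rules) and sidesteps the non-smoothness of $x\mapsto x^{2/L}$ at $0$ entirely because $\mR$ is defined by an ODE rather than as a fractional power. Your consistency check for even $L$---differentiating $\mM^{L/2}$ by the product rule and re-indexing---is in fact close in spirit to the paper's method, except the paper pushes one level deeper to $\mR=\mW^{1/L}$ so that the integer-power product rule works uniformly for all $L\ge 3$.
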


\begin{figure}[htbp]
	\vspace{-1cm}
	\begin{algorithm}[H]
		\caption{Deep Greedy Low-Rank Learning (Deep GLRL)}\label{alg:deep_glrl}
		\Parameter{step size $\eta > 0$; small $\epsilon > 0$}
		$\epsilon' \gets \epsilon^{1/L}$, $\Loss\left(\mU_1,\cdots,\mU_L\right):= f(\mW_1\cdots \mW_L)$. \\
		$\mW_0 \gets \vzero \in \R^{d \times d}$, and $\mU_{0,1}(\infty), \dots, \mU_{0,L}(\infty) \in \R^{d\times 0}$ are empty matrices \\
		\While{$\lambda_{1} (-\nabla f\left(\mW_{r}\right))> 0$}{
			$r\gets r+1$\\
			let $\vu_r$ be a top (unit) eigenvector of $-\nabla f\left(\mW_{r-1}\right)$ \\
			$\mU_{r,1}(0) \gets \begin{bmatrix} \mU_{r-1,1}(\infty) & \epsilon' \vu_r \end{bmatrix} \in \R^{d \times r}$ \\
			$\mU_{r,k}(0) \gets \begin{bmatrix}
			\mU_{r-1,k}(\infty) & \vzero \\
			\vzero      & \epsilon'
			\end{bmatrix} \in \R^{r \times r}$ for all $2 \le k \le L - 1$\\
			$\mU_{r,L}(0) \gets \begin{bmatrix}
			\mU_{r-1,L}(\infty) \\
			\epsilon' \vu_r^{\top}
			\end{bmatrix} \in \R^{r \times d}$

			\For{$t= 0,1,\ldots $}{
				$\mU_{r,i}(t+1) \gets \mU_{r,i}(t) - \eta \nabla_{\mU_i} \Loss\left(\mU_{r,1}(t),\cdots,\mU_{r,L}(t) \right)$, $\forall 1\le i\le L$. \\
			}
			$\mW_r \gets \mU_{r,1}(\infty) \cdots \mU_{r,L}(\infty)$ \\
		}
		\Return $\mW_r$
	\end{algorithm}
\end{figure}

Our main result, \Cref{thm:main_deep_w}, gives a characterization of the limiting trajectory for deep matrix factorization with infinitesimal identity initialization. Here  $\oWG(t) := \lim_{\alpha \to 0} \mWG_{\alpha}(t)$ is the trajectory of deep GLRL, where $\mWG_{\alpha}(t) := \phi(\alpha \ve_1\ve_1^{\top}, \frac{{\alpha}^{-(1-1/P)}}{2\mu_1(P-1)} + t)$ (see Algorithm \ref{alg:deep_glrl}). The dynamics for general initialization is more complicated. Please see discussions in \Cref{appsec:deep_escape_general}.
\begin{theorem}\label{thm:main_deep_w}
	Let $P=\frac{L}{2}$, $L\ge 3$. Suppose $\normtwo{\nabla f(\vzero)}= \lambda_1(-\nabla f(\vzero))>\max\{\lambda_2(-\nabla f(\vzero)),0\}$,\footnote{$\normtwo{\nabla f(\vzero)}= \lambda_1(-\nabla f(\vzero))$ is a technical assumption which we believe could be removed with a more refined analysis.}
	\begin{equation}\label{eq:deep_glrl_main}
		\text{for every fixed }t \in \R,\quad  \normF{\phi\left({\alpha} \mI, \tfrac{{\alpha}^{-(1-1/P)}}{2\mu_1(P-1)} + t\right) - \oWG(t)} = O(\alpha^{\frac{1}{P(P+1)}}),
	\end{equation}
	and for any $2\le k\le d$,
	\begin{equation}\label{eq:deep_glrl_low_rank}
		\text{for every fixed }t \in \R,\quad \lambda_k\left(\phi\left({\alpha} \mI, \tfrac{{\alpha}^{-(1-1/P)}}{2\mu_1(P-1)} + t\right) \right) = O(\alpha).
	\end{equation}
\end{theorem}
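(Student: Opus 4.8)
The plan is to reduce the depth-$L$ end-to-end flow \eqref{eq:imply_end_to_end} to the power dynamics \eqref{eq:S_formular} via \Cref{lem:SR_formula}, and then to mimic the two-phase analysis behind \Cref{thm:general-escape} and \Cref{thm:glrl-1st-phase}, with the linear power iteration near the origin replaced by a genuinely nonlinear, ``power-law'' one --- this is where $L\ge 3$, i.e.\ $P=L/2>1$, is essential. Since $\mW(0)=\alpha\mI$ is symmetric and PSD, and the right-hand side of \eqref{eq:imply_end_to_end} is symmetric whenever $\mW$ is (the $i$-th and $(L-1-i)$-th summands are transposes of each other), $\mW(t)$ stays symmetric, so \Cref{lem:SR_formula} applies to $\mM(t):=\mW(t)^{2/L}$, with $\mM(0)=\alpha^{1/P}\mI$ and $\mM^{L/2}=\mW$ throughout. (The technical hypothesis $\normtwo{\nabla f(\vzero)}=\lambda_1(-\nabla f(\vzero))$ is used here, together with a short argument, to keep $\mW(t)$ PSD so that $\mM^{L/2}$ is the honest matrix power.)

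\textbf{Early phase.} For small $\mM$, a Taylor expansion of $\nabla f$ gives $-\nabla f(\mM^{P})\mM^{P}-\mM^{P}\nabla f(\mM^{P})=-\nabla f(\vzero)\mM^{P}-\mM^{P}\nabla f(\vzero)+O(\normF{\mM}^{2P})$, so near $\vzero$ the flow is shadowed by the \emph{pure power dynamics} $\dot{\widehat\mM}=-\nabla f(\vzero)\widehat\mM^{P}-\widehat\mM^{P}\nabla f(\vzero)$. In the eigenbasis of $-\nabla f(\vzero)=\sum_i\mu_i\vu_i\vu_i^{\top}$ this keeps $\widehat\mM=\sum_i s_i\vu_i\vu_i^{\top}$ diagonal, with each $s_i$ solving the separable ODE $\dot s_i=2\mu_i s_i^{P}$; integrating from $s_i(0)=\alpha^{1/P}$ gives $s_i(t)^{1-P}=\alpha^{-(1-1/P)}-2\mu_i(P-1)t$, which for the (unique, positive) top eigenvalue $\mu_1$ reaches $0$ exactly at $t=T_\alpha:=\frac{\alpha^{-(1-1/P)}}{2\mu_1(P-1)}$, the time shift in the statement. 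Stopping this at the time $t_\rho<T_\alpha$ at which $s_1=\rho$, for a threshold $\rho=\rho(\alpha)$ chosen below, one computes $s_1(t_\rho)=\rho$ while $s_i(t_\rho)=\Theta(\alpha^{1/P})$ for $i\ge 2$; since $\mW=\mM^{P}$, this reads $\lambda_1(\widehat\mW(t_\rho))=\rho^{P}$ and $\lambda_k(\widehat\mW(t_\rho))=\Theta(\alpha)$ for $k\ge2$. Once $\normF{\mM(t)-\widehat\mM(t)}$ is controlled on $[0,t_\rho]$, this yields \eqref{eq:deep_glrl_low_rank} (and, as the $O(\alpha)$ perpendicular eigenvalues grow by at most a bounded factor over the subsequent fixed-length window, for all fixed $t$), and it shows that $\mW(t_\rho)$ is within $o_\alpha(1)$ of the rank-one matrix $\rho^{P}\vu_1\vu_1^{\top}$.

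\textbf{Late phase and matching GLRL.} The deep-GLRL trajectory $\oWG(t)=\lim_{\alpha\to0}\mWG_\alpha(t)=\lim_{\alpha\to0}\phi(\alpha\vu_1\vu_1^{\top},T_\alpha+t)$ is governed, in its own early phase, by the \emph{same} power dynamics started from the rank-one $\alpha^{1/P}\vu_1\vu_1^{\top}$ --- on which the pure dynamics remains exactly $s_1(t)\vu_1\vu_1^{\top}$ --- so after the corresponding time shift it too lies within $o_\alpha(1)$ of $\rho^{P}\vu_1\vu_1^{\top}$. One first checks that the limit defining $\oWG(t)$ exists and solves \eqref{eq:imply_end_to_end}: this is the exact analogue of \Cref{thm:general-escape}/\Cref{thm:glrl-1st-phase}, with $e^{\mu_1 t}$ replaced by the explicit solution $s_1(t)^{1-P}=\epsilon^{-(1-1/P)}-2\mu_1(P-1)t$ of the power ODE. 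Granting this, \eqref{eq:deep_glrl_main} follows from continuous dependence of the flow $\phi$ on its initial data over a \emph{fixed, finite} time window: both $\phi(\alpha\mI,T_\alpha+\cdot)$ and $\phi(\alpha\vu_1\vu_1^{\top},T_\alpha+\cdot)$ enter that window from data that converge, as $\alpha\to0$, to the same near-rank-one limit, so they stay uniformly close for every later fixed $t$.

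\textbf{Main obstacle.} The crux is the early-phase error bound, because that phase has length $T_\alpha=\Theta(\alpha^{-(1-1/P)})\to\infty$, so a naive Gronwall estimate on $\normF{\mM(t)-\widehat\mM(t)}$ over $[0,t_\rho]$ blows up. The fix is to reparametrize time by $\tau:=T_\alpha-t$ (equivalently, to use $s_1$ as the clock): the local Lipschitz constant of the power field near $\mM(t)$ is $\Theta(\normF{\mM}^{P-1})=\Theta(\tau^{-1})$, so the amplification factor $\exp(\int\normF{\mM}^{P-1}\,\dd t)$ is only mildly polynomial in $1/\alpha$, and it is traded against the source term $O(\normF{\mM}^{2P})$, whose time integral is controlled by $\rho$; moreover the perturbation $\nabla f(\mM^{P})-\nabla f(\vzero)=O(\normF{\mM}^{P})=O(\rho^{P})$ makes the instantaneous top eigenvector wobble by $O(\rho^{P})$ over the phase, injecting $O(\rho^{P+1})$ of perpendicular mass in $\mM$. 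Balancing the threshold $\rho$ against these competing contributions (roughly $\rho\sim\alpha^{1/(P(P+1))}$) and propagating the resulting bound through the finite-time continuity step is exactly what produces the exponent $\frac{1}{P(P+1)}$ in \eqref{eq:deep_glrl_main}. Two lesser points need care: when $L$ is odd the power $\mM^{L/2}$ is only finitely smooth at $\vzero$, so one cannot invoke the $\contC^2$ linearisation of \Cref{thm:general-escape} verbatim and must argue directly with the explicit power-law solution; and one must separately bound the off-diagonal entries of $\mM$ in the $\{\vu_i\}$ basis throughout the early phase, which again follows by feeding $\nabla f(\mM^{P})-\nabla f(\vzero)=O(\normF{\mM}^{P})$ into the same Gronwall-in-$\tau$ estimate.
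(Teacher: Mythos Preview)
Your proposal is essentially the paper's proof: reduce to the $\mM$-dynamics via \Cref{lem:SR_formula}, shadow the early phase by the pure power system $\dot{\widehat\mM}=-\nabla f(\vzero)\widehat\mM^{P}-\widehat\mM^{P}\nabla f(\vzero)$ (diagonal in the eigenbasis, with the explicit solution $s_i^{1-P}=\hat\alpha^{-(P-1)}-2\mu_i(P-1)t$), stop at an intermediate scale $\hat\alpha_1=\hat\alpha^{1/(P+1)}$ (your $\rho$), and finish by finite-time continuous dependence. The paper packages your ``Gronwall-in-$\tau$'' idea into two lemmas: one bounds $\normFsm{\mM-\widehat\mM}$ by $O(r^{P+1})$ at the stopping time (source term $\normtwosm{\mM}^{2P}$ integrated against the amplification $\exp(\int P\mu_1 g(\tau)\,\dd\tau)$), and a second shows that two $\mM$-trajectories started within $\hat\alpha$ of each other diverge by at most a factor $(r/\hat\alpha)^{P}$; combining these with the choice $r=\hat\alpha^{1/(P+1)}$ gives $O(\hat\alpha^{1/(P+1)})$ in $\mM$, which is then transferred to $\mW$ via the elementary bound $\normFsm{\mA^{P}-\mB^{P}}\le P\max(\normtwosm{\mA},\normtwosm{\mB})^{P-1}\normFsm{\mA-\mB}$. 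Your ``off-diagonal'' worry is absorbed into the single Frobenius bound $\normFsm{\mM-\widehat\mM}$; no separate argument is needed.

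One correction: the technical hypothesis $\normtwo{\nabla f(\vzero)}=\lambda_1(-\nabla f(\vzero))$ is \emph{not} what keeps $\mW(t)$ PSD --- positivity and rank are preserved along \eqref{eq:imply_end_to_end} for any balanced initialization, by writing $\mW$ back as a product of factors. The hypothesis is used in the growth estimate: one needs $\normtwo{\nabla f(\mM^{P})}\le\normtwo{\nabla f(\vzero)}+\beta\normtwo{\mM}^{P}=\mu_1+\beta\lambda_1(\mM)^{P}$ so that the scalar comparison ODE $\dot x=2\mu_1(x^{P}+\kappa x^{2P})$ (with $\kappa=\beta/\mu_1$) dominates $\lambda_1(\mM(t))$, giving the clean bound $\lambda_1(\mM(t))^{-(P-1)}\ge\hat\alpha^{-(P-1)}-\kappa(P-1)r-2\mu_1(P-1)t$ that drives the rest of the argument.
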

\myparagraph{So how does depth encourage GF to find low-rank solutions?} When the ground truth is low-rank, say rank-$k$, our experiments (\Cref{fig:equivalence}) suggest that GF with small initialization deep matrix sensing finds solutions with smaller $k$-low-rankness compared to the depth-2 case, thus achieving better generalization. At first glance, this is contradictory to what \Cref{thm:main_deep_w} suggests, i.e., the convergence rate of deep GLRL at a constant time gets slower as the depth increases. However, it turns out the uniform upper bound for the distance between GF and GLRL is not the ideal metric for the eventual $k$-low-rankness of learned solution. Below we will illustrate why \emph{the $r$-low-rankness of GF within each phase $r$} is a better metric and how they are different.

\begin{definition}[$r$-low-rankness]
	For matrix $\mM\in \R^{d\times d}$, we define the \emph{$r$-low-rankness} of $\mM$ as $\sqrt{\sum_{i=r+1}^d \sigma^2_i(\mM)}$, where $\sigma_i(\mM)$ is the $i$-th largest singular value of $\mM$.
\end{definition}

Suppose $\ffun$ admits a unique minimizer $\zW$ in $\psdx{1}$, and we run GF from $\alpha \mI$ for both depth-$2$ and depth-$L$ cases. Intuitively, the $1$-low-rankness of the depth-2 solution is $\Omega(\alpha^{1-\mu_2/\mu_1})$, which can be seen from the second warmup example in \Cref{sec:warmup}. For the depth-$L$ solution, though it may diverge from the trajectory of deep GLRL more than the depth-2 solution does, its $1$-low-rankness is only $O(\alpha)$, as shown in \Cref{thm:deep_w_close}. The key idea is to show that there is a basin in the manifold of rank-$1$ matrices around $\zW$ such that any GF starting within the basin converges to $\zW$. Based on this, we can prove that starting from any matrix $O(\alpha)$-close to the basin, GF converges to a solution $O(\alpha)$-close to $\zW$. See \Cref{app:linearconv} for more details.
\begin{theorem}\label{thm:deep_w_close}
	In the same settings as \Cref{thm:main_deep_w}, if $\oWG(\infty)$ exists and is a minimizer of $\ffun$ in $\psdlx{1}$, under the additional regularity assumption \ref{ass:local_min_convergence}, we have
	\begin{equation}\label{eq:deep_w_close}
		\inf_{t\in\R}\normF{\phi\left({\alpha} \mI, t\right) - \oWG(\infty)} =O(\alpha).
	\end{equation}
\end{theorem}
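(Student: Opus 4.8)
The plan is to run the flow \eqref{eq:imply_end_to_end} for a long time on the rescaled clock of \Cref{thm:main_deep_w} and then switch to a purely local analysis around $\zW := \oWG(\infty)$; write $T_\alpha := \frac{\alpha^{-(1-1/P)}}{2\mu_1(P-1)}$. First note that the velocity field of \eqref{eq:imply_end_to_end} is tangent to the set of PSD matrices of rank $\le 1$ at every such point (a short computation, as in the depth-$2$ case), so $\psdlx1$ is invariant under $\phi$; hence $\oWG(\cdot)$, a limit of flows started at the rank-$1$ matrices $\alpha\ve_1\ve_1^{\top}$, is a genuine GF trajectory inside $\psdlx1$ with $\oWG(t)\to\zW$, and since $\lambda_1(-\nabla f(\vzero))>0$ rules out $\vzero$ being a minimizer over $\psdlx1$, $\zW$ has rank exactly $1$. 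The first ingredient I would prove (this is \Cref{app:linearconv}) is a \emph{local stability lemma}: because $\zW$ minimizes $\ffun$ over $\psdlx1$, under the regularity \Cref{ass:local_min_convergence} the point $\zW$ is an exponentially stable equilibrium of GF confined to $\psdlx1$, i.e.\ there are $\rho',c>0$ with $c$ uniform over the $\rho'$-ball such that $\normFsm{\phi(\mS,s)-\zW}\le e^{-cs}\normFsm{\mS-\zW}$ for every PSD $\mS$ of rank $\le1$ with $\normFsm{\mS-\zW}\le\rho'$ (so that ball is forward invariant). I then fix $t_0$, independent of $\alpha$, with $\normFsm{\oWG(t_0)-\zW}\le\rho'/10$.

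The second ingredient is a \emph{slowness of the tail} estimate, which is where $L\ge3$ enters: if a PSD matrix has $\sigma_k=O(\alpha)$ for all $k\ge2$, then along \eqref{eq:imply_end_to_end} it keeps $\sigma_k=O(\alpha)$ for all $k\ge2$ throughout a window of length $\Omega\big(\alpha^{-(L-2)/L}\big)$. Indeed, \Cref{lem:SR_formula} gives for $\mM=\mW^{2/L}$ a differential inequality $\frac{\dd}{\dd t}\lambda_k(\mM)\le C\lambda_k(\mM)^{L/2}$ (using $\mM^{L/2}\vw_k=\lambda_k(\mM)^{L/2}\vw_k$ for the $k$-th eigenvector $\vw_k$); since $L/2>1$ this separable bound integrates to show $\lambda_k(\mM)$ changes by at most a constant factor over time $\lesssim\lambda_k(\mM(0))^{-(L/2-1)}$, and translating through $\lambda_k(\mM)=\lambda_k(\mW)^{2/L}$ and $\lambda_k(\mW(0))=O(\alpha)$ gives the stated window, which dwarfs the $O(\log\tfrac1\alpha)$ horizon below. (For $L=2$ the exponent $L/2-1$ is $0$, the tail grows exponentially, and the argument breaks --- the analytic reason depth helps.)

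Combining the two: applying \Cref{thm:main_deep_w} at the fixed time $t_0$, the point $\phi(\alpha\mI,T_\alpha+t_0)$ lies within $O(\alpha^{1/(P(P+1))})$ of the rank-$1$ matrix $\oWG(t_0)$ and has $\sigma_k=O(\alpha)$ for $k\ge2$, so for small $\alpha$ its rank-$1$ truncation $\mS_\alpha$ satisfies $\normFsm{\mS_\alpha-\zW}\le\rho'/5$ and $\normFsm{\phi(\alpha\mI,T_\alpha+t_0)-\mS_\alpha}=O(\alpha)$. Decompose $\phi(\alpha\mI,T_\alpha+t_0+s)=\mS(s)+\mN(s)$ into its top rank-$1$ component and the remainder; by the slowness estimate $\normFsm{\mN(s)}=O(\alpha)$ on a window much longer than $S_\alpha:=\tfrac1c\log\tfrac{\rho'}{5\alpha}=O(\log\tfrac1\alpha)$. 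Plugging $\mW=\mS(s)+\mN(s)$ into \eqref{eq:imply_end_to_end}, using $\normFsm{\mN}=O(\alpha)$ and tangency of the velocity field to $\psdlx1$ at the rank-$1$ point $\mS(s)$, one obtains $\frac{\dd}{\dd s}\mS(s)=\big(\text{velocity of \eqref{eq:imply_end_to_end} at }\mS(s)\big)+O(\alpha)$. Since $\mS(0)=\mS_\alpha$ is in the $\rho'/5$-ball, a Gr\"onwall/Duhamel estimate against the contraction rate $-c$ of the local stability lemma yields, self-consistently (the bound keeps $\mS(s)$ in the $\rho'$-ball for small $\alpha$), $\normFsm{\mS(s)-\zW}\le e^{-cs}\normFsm{\mS_\alpha-\zW}+O(\alpha/c)$ for $s\in[0,S_\alpha]$. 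Taking $s=S_\alpha$ makes the first term $\le\alpha$, so $\normFsm{\phi(\alpha\mI,T_\alpha+t_0+S_\alpha)-\zW}\le\normFsm{\mS(S_\alpha)-\zW}+\normFsm{\mN(S_\alpha)}=O(\alpha)$, which is $\inf_{t\in\R}\normFsm{\phi(\alpha\mI,t)-\zW}=O(\alpha)$.

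The main obstacle is making the $\mS/\mN$ split and its two controls genuinely compatible over the full $O(\log\tfrac1\alpha)$ horizon: keeping the spectral gap between top component and tail from closing (so $\mS(s),\mN(s)$ stay well defined and the $O(\alpha)$-coupling identity holds), making the tail-slowness bound robust to the eigenvector rotations and cross-terms appearing near $\zW$, and --- most delicately --- preventing any hidden $\polylog(1/\alpha)$ or $\alpha^{-\epsilon}$ factor from entering when the $O(\alpha)$ forcing is accumulated against the contraction. Getting a clean $O(\alpha)$ rather than $\widetilde O(\alpha)$ is exactly what forces the contraction rate $c$ in \Cref{app:linearconv} to be bounded below uniformly on the $\rho'$-ball, which is why the nondegeneracy built into \Cref{ass:local_min_convergence} is essential.
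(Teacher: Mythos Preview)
Your plan matches the paper's strategy: apply \Cref{thm:main_deep_w} at a fixed time to land near $\oWG(t_0)$ with tail eigenvalues $O(\alpha)$; use the depth-$\ge 3$ tail-slowness bound (the paper's \Cref{lem:eigenvalue_growth}, your second ingredient) to keep the tail $O(\alpha)$ over a window far exceeding $\log(1/\alpha)$; and invoke local contraction at $\zW$ from \Cref{ass:local_min_convergence} to shrink the dominant part down to $O(\alpha)$. The paper packages the last two steps as \Cref{thm:tail_almost_linear_converge} and then the proof of \Cref{thm:deep_w_close} is the two-line reduction you describe.

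The one substantive technical difference is the decomposition used in the local analysis. You track the time-varying rank-$1$ truncation $\mS(s)$ and treat $\mN(s)$ as $O(\alpha)$ forcing on the rank-$1$ flow via Duhamel. The paper instead projects $\mW-\zW$ onto the \emph{fixed} tangent space $\gT$ of the rank-$1$ manifold at $\zW$ and its orthogonal complement, and measures size in an adapted norm $\normv{\cdot}$ built from the eigenbasis of $\mJ(\zW)|_\gT$. Two things this buys. First, since \Cref{ass:local_min_convergence} only asserts that $\mJ(\zW)|_\gT$ is diagonalizable with negative real eigenvalues (not symmetric), contraction is guaranteed in $\normv{\cdot}$ but not in $\normF{\cdot}$; your Duhamel step would need the same adapted norm to avoid losing a condition-number factor. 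Second, a fixed linear projection removes exactly the obstacle you flag about maintaining the spectral gap and tracking eigenvector rotation: the role of your claim $\frac{\dd}{\dd s}\mS(s)=\vg(\mS(s))+O(\alpha)$ is played instead by the purely algebraic \Cref{lem:control_pib_use_pia}, which bounds the normal component of $\mW-\zW$ quadratically by its tangent component plus the tail $\normFsm{\mN}$, and then the differential inequality is written directly for $\normv{\mW-\zW}$ without ever differentiating the truncation map. Your route is workable but the paper's bookkeeping is what makes the clean $O(\alpha)$ (rather than $\widetilde O(\alpha)$) fall out without extra effort.
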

\myparagraph{Interpretation for the advantage of depth with multiple phases.} For depth-2 GLRL, the low-rankness is raised to some power less then $1$ per phase (depending on the eigengap). For deep GLRL, we show the low-rankness is only multiplied by some constant for the first phase and speculate it to be true for later phases. This conjecture is supported by our experiments; see \Cref{fig:equivalence}. Interestingly,  our theory and experiments (\Cref{fig:poly_vs_exp}) suggest that while \textbf{being deep is good for generalization, being much deeper may not be much better}: once $L \ge 3$, increasing the depth does not improve the order of low-rankness significantly. While this theoretical result is only for identity initialization,  \Cref{thm:inf-depth} and \Cref{cor:inf-depth} further show that the dynamics of GF \eqref{eq:imply_end_to_end} with any initialization pointwise converges as $L\to \infty$, under a suitable time rescaling. See \Cref{fig:inf_depth} for experimental verification.

\begin{figure}[t]
	\vspace{-1cm}
    \centering
	\includegraphics[width = 0.95 \textwidth]{./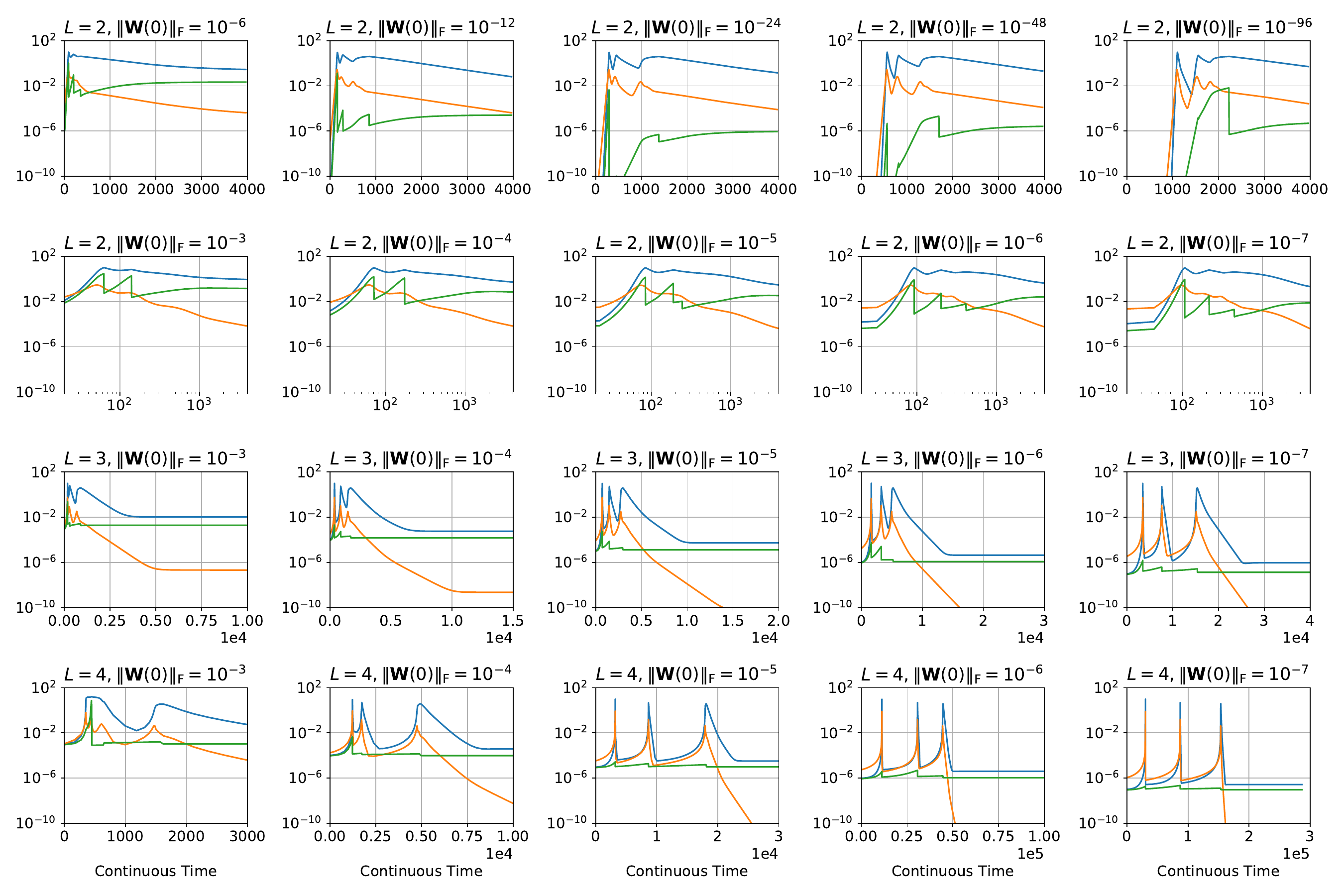}
	\includegraphics[width = 0.5 \textwidth]{./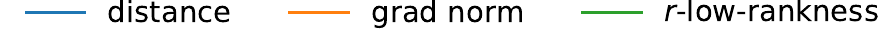}
    \caption{GD passes by the same set of critical points as GLRL when the initialization scale is small, and gets much closer to the critical points when $L\ge 3$. Depth-$2$ GD requires a much smaller initialization scale to maintain small low-rankness. Here the ground truth matrix $\mW^* \in \R^{20 \times 20}$ is of rank $3$ as stated in \Cref{sec:exp-setup}. In this case, GLRL has $3$ phases and $4$ critical points $\{\oWG_r\}_{r=0}^3$, where $\oWG_0 = \vzero$ and $\oWG_3 = \mW^*$. For each depth $L$ and initialization scale $\normFsm{\mW(0)}$, we plot the distance between the current step of GD and the closest critical point of GLRL, $\min_{0 \le r \le 3}\normFsm{\mW_{\text{GD}}(t)- \oWG_r}$, the norm of full gradient, $\normFsm{\nabla_{\mU_{1:L}} \Loss(\mU_{1:L})}$ and the $(r+1)$-low-rankness of $\mW_{\text{GD}}(t)$ with $r:= \argmin_{0\le r \le 3}\normFsm{\mW_{\text{GD}}(t)- \oW_r}$.}
	\label{fig:equivalence}
\end{figure}

\section{Conclusion and Future Directions}
In this work, we connect gradient descent to Greedy Low-Rank Learning (GLRL) to explain the success of using gradient descent to find low-rank solutions in the matrix factorization problem. This enables us to construct counterexamples to the implicit nuclear norm conjecture in \citep{gunasekar2017implicit}. Taking the view of GLRL can also help us understand the benefits of depth.

Our result on the equivalence between gradient flow with infinitesimal initialization and GLRL is based on some regularity conditions that we expect to hold generically. We leave it a future work to justify these condition, possibly through a smoothed analysis on the objective $\ffun$. Another interesting future direction is to find the counterpart of GLRL in training deep neural nets. This could be one way to go beyond the view of norm minimization in the study of the implicit regularization of gradient descent.

\subsubsection*{Acknowledgments}
The authors thank Sanjeev Arora and Jason D. Lee for helpful discussions. The authors also thank Runzhe Wang for useful suggestions on writing. ZL and YL acknowledge support from NSF, ONR, Simons Foundation, Schmidt Foundation, Mozilla
Research, Amazon Research, DARPA and SRC. ZL is also supported by Microsoft PhD Fellowship.

\bibliography{main}

\newpage

\appendix

\section{Preliminary Lemmas}

\begin{lemma} \label{lm:stat-and-criti}
	For $\zU \in \R^{d \times r}$ and $\zW := \zU\zU^{\top}$, the following statements are equivalent:
	\begin{enumerate}[(1).]
		\item $\zU$ is a stationary point of $\Loss(\mU) = \frac{1}{2} f(\mU\mU^{\top})$;
		\item $\nabla f(\zW) \zW = \vzero$;
		\item $\zW := \zU\zU^{\top}$ is a critical point of \eqref{eq:ode-W}.
	\end{enumerate}
\end{lemma}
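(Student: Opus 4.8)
The plan is to reduce the statement to matrix algebra built around the identity $\nabla\Loss(\zU)=\nabla f(\zW)\,\zU$, and then prove the two equivalences $(1)\Leftrightarrow(2)$ and $(2)\Leftrightarrow(3)$ separately. First I would record the gradient formula: differentiating $\Loss(\mU)=\tfrac12 f(\mU\mU^\top)$ and using the standing assumption $f(\mW)=f(\mW^\top)$ (so that $\nabla f(\mW)$ is symmetric for symmetric $\mW$, in particular for $\mW=\zW$), a one-line computation with the trace inner product gives $\nabla\Loss(\zU)=\nabla f(\zW)\,\zU$; thus $(1)$ is literally the statement $\nabla f(\zW)\,\zU=\vzero$, consistent with \eqref{eq:ode-U}.

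For $(1)\Leftrightarrow(2)$: the forward direction is immediate by right-multiplying $\nabla f(\zW)\,\zU=\vzero$ by $\zU^\top$. For the converse I would compute $\normF{\nabla f(\zW)\,\zU}^2=\Tr\!\big(\zU^\top\nabla f(\zW)^2\zU\big)=\Tr\!\big(\nabla f(\zW)^2\zW\big)=\Tr\!\big(\nabla f(\zW)[\nabla f(\zW)\zW]\big)=0$, using cyclicity of the trace and symmetry of $\nabla f(\zW)$ together with $(2)$; hence $\nabla f(\zW)\,\zU=\vzero$. For $(2)\Rightarrow(3)$: transposing $\nabla f(\zW)\zW=\vzero$ and using that $\zW$ and $\nabla f(\zW)$ are both symmetric gives $\zW\nabla f(\zW)=\vzero$ as well, so $\vg(\zW)=-\zW\nabla f(\zW)-\nabla f(\zW)\zW=\vzero$.

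The only step with any content is $(3)\Rightarrow(2)$, and this is where I expect the main obstacle: $\vg(\zW)=\vzero$ says a priori only that $A:=\nabla f(\zW)\zW$ satisfies $A+A^\top=\vzero$, i.e. $A$ is antisymmetric, which by itself does not force $A=\vzero$. To overcome this I would exploit that $\zW\succeq\vzero$: writing $M:=\nabla f(\zW)$ (symmetric) and $Z:=\zW$ with symmetric square root $Z^{1/2}$, cyclicity of the trace gives $\Tr(A^2)=\Tr(MZMZ)=\Tr\!\big((Z^{1/2}MZ^{1/2})^2\big)=\normF{Z^{1/2}MZ^{1/2}}^2\ge 0$, whereas antisymmetry gives $\normF{A}^2=\Tr(AA^\top)=-\Tr(A^2)\le 0$. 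Hence $\normF{A}^2=0$, i.e. $\nabla f(\zW)\zW=\vzero$, which is exactly $(2)$. Together with the previous two paragraphs this yields $(1)\Leftrightarrow(2)\Leftrightarrow(3)$; everything outside the trace-positivity argument in this last step is routine.
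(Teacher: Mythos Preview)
Your proof is correct. The organization differs slightly from the paper's: the paper proves the cycle $(1)\Rightarrow(2)\Rightarrow(3)\Rightarrow(1)$, whereas you prove the two equivalences $(1)\Leftrightarrow(2)$ and $(2)\Leftrightarrow(3)$ separately. The routine steps coincide, but the one substantive implication is handled differently. For $(3)\Rightarrow(1)$ the paper simply pairs $\vg(\zW)=\vzero$ with $\nabla f(\zW)$ and uses the given factorization $\zW=\zU\zU^\top$ directly: $0=\dotp{\vg(\zW)}{\nabla f(\zW)}=-2\Tr(\nabla f(\zW)\zW\nabla f(\zW))=-2\normFsm{\nabla f(\zW)\zU}^2$. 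Your $(3)\Rightarrow(2)$ instead observes that $A=\nabla f(\zW)\zW$ is antisymmetric and then uses the PSD square root $\zW^{1/2}$ to show $\Tr(A^2)\ge 0$, forcing $\normFsm{A}^2=-\Tr(A^2)=0$. Both arguments exploit positive semidefiniteness of $\zW$ via a trace identity; the paper's is a line shorter because it leverages the factor $\zU$ already in hand, while yours is ``intrinsic'' to $\zW$ and would work even without reference to any particular $\zU$.
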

\begin{proof}
	(2) $\Rightarrow$ (3) is trivial. We only prove (1) $\Rightarrow$ (2), (3) $\Rightarrow$ (1).
	
	\myparagraph{Proof for (1) ${\bm \Rightarrow}$ (2).} If $\zU$ is a stationary point, then $\vzero = \nabla \Loss(\zU) = \nabla f(\zW) \zU$. So
	\[
		\nabla f(\zW) \zW = \left(\nabla f(\zW) \zU\right) \zU^{\top} = \vzero.
	\]

	\myparagraph{Proof for (3) ${\bm \Rightarrow}$ (1).} If $\zW$ is a critical point, then
	\[
		0 = \dotp{\vg(\zW)}{\nabla f(\zW)} = -2\Tr(\nabla f(\zW) \zW \nabla f(\zW)) = -2 \normFsm{\nabla f(\zW) \zU}^2,
	\]
	which implies $\nabla \Loss(\zU) = \vzero$.
\end{proof}

\begin{lemma} \label{lm:f-global-min-psd}
	For a stationary point $\mU_0 \in \R^{d \times r}$ of $\Loss(\mU) = \frac{1}{2} f(\mU\mU^{\top})$ where $\ffun$ is convex, $\mW_0 := \mU_0\mU_0^{\top}$ attains the global minimum of $\ffun$ in $\psd := \{\mW : \mW \succeq \vzero\}$ iff $\nabla f(\mW_0)\succeq \vzero$.
\end{lemma}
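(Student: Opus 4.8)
The plan is to combine the characterization of stationary points from \Cref{lm:stat-and-criti} with the first-order optimality condition for convex minimization over the PSD cone. The key preliminary observation is that, since $\mU_0$ is a stationary point of $\Loss$, \Cref{lm:stat-and-criti} gives $\nabla f(\mW_0)\mW_0 = \vzero$, and hence in particular $\dotp{\nabla f(\mW_0)}{\mW_0} = \Tr(\nabla f(\mW_0)\mW_0) = 0$. Note also that $\nabla f(\mW_0)$ is symmetric (because $f(\mW)=f(\mW^{\top})$ and $\mW_0$ is symmetric), so the condition $\nabla f(\mW_0)\succeq\vzero$ is meaningful.

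For the ``if'' direction, suppose $\nabla f(\mW_0)\succeq\vzero$. For any $\mW\succeq\vzero$, writing $\mW=\mB\mB^{\top}$, we have $\dotp{\nabla f(\mW_0)}{\mW} = \Tr(\mB^{\top}\nabla f(\mW_0)\mB)\ge 0$, i.e.\ the trace inner product of two PSD matrices is nonnegative. Combining this with $\dotp{\nabla f(\mW_0)}{\mW_0}=0$ yields $\dotp{\nabla f(\mW_0)}{\mW-\mW_0}\ge 0$ for every $\mW\succeq\vzero$. Since $f$ is convex, $f(\mW)\ge f(\mW_0)+\dotp{\nabla f(\mW_0)}{\mW-\mW_0}\ge f(\mW_0)$, so $\mW_0$ attains the global minimum of $\ffun$ in $\psd$.

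For the ``only if'' direction, suppose $\mW_0$ minimizes $\ffun$ over $\psd$. Because $f$ is convex and $\psd$ is a convex set, the first-order optimality condition $\dotp{\nabla f(\mW_0)}{\mW-\mW_0}\ge 0$ holds for all $\mW\succeq\vzero$; using again $\dotp{\nabla f(\mW_0)}{\mW_0}=0$ we get $\dotp{\nabla f(\mW_0)}{\mW}\ge 0$ for every $\mW\succeq\vzero$. Specializing to $\mW=\vv\vv^{\top}$ for an arbitrary $\vv\in\R^d$ gives $\vv^{\top}\nabla f(\mW_0)\vv\ge 0$, i.e.\ $\nabla f(\mW_0)\succeq\vzero$. (Alternatively, one can avoid invoking the optimality condition by a direct perturbation: if $\nabla f(\mW_0)\not\succeq\vzero$, pick $\vv$ with $\vv^{\top}\nabla f(\mW_0)\vv<0$; then $\mW_0+t\vv\vv^{\top}\succeq\vzero$ and $f(\mW_0+t\vv\vv^{\top}) = f(\mW_0) + t\,\vv^{\top}\nabla f(\mW_0)\vv + O(t^2) < f(\mW_0)$ for small $t>0$, contradicting minimality.)

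I do not expect a genuine obstacle here: the only points that require a little care are justifying the first-order optimality condition over $\psd$ (convexity of $f$ plus convexity of the cone) and the nonnegativity of $\dotp{\mA}{\mB}$ for $\mA,\mB\succeq\vzero$, both of which are standard; the nontrivial input is really just the identity $\nabla f(\mW_0)\mW_0=\vzero$ supplied by \Cref{lm:stat-and-criti}, which is what lets us pass between $\dotp{\nabla f(\mW_0)}{\mW-\mW_0}$ and $\dotp{\nabla f(\mW_0)}{\mW}$.
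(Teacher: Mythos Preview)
Your proposal is correct and follows essentially the same approach as the paper: both use \Cref{lm:stat-and-criti} to obtain $\dotp{\nabla f(\mW_0)}{\mW_0}=0$, then reduce the first-order optimality condition $\dotp{\nabla f(\mW_0)}{\mW-\mW_0}\ge 0$ over the convex cone $\psd$ to $\dotp{\nabla f(\mW_0)}{\mW}\ge 0$ for all $\mW\succeq\vzero$, which is equivalent to $\nabla f(\mW_0)\succeq\vzero$. Your write-up is slightly more explicit (spelling out the PSD inner-product nonnegativity and offering the direct perturbation alternative), but the argument is the same.
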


\begin{proof}
	Since $f(\mW)$ is a convex function and $\psd$ is convex, we know that $\mW_0$ is a global minimizer of $f(\mW)$ in $\psd$ iff
	\begin{equation} \label{eq:f-global-min-psd-1}
		\dotp{\nabla f(\mW_0)}{\mW - \mW_0} \ge 0, \qquad \forall \mW \succeq \vzero.
	\end{equation}
	Note that $\dotp{\nabla f(\mW_0)}{\mW_0} = \Tr(\nabla f(\mW_0) \mW_0)$. By \Cref{lm:stat-and-criti}, $\dotp{\nabla f(\mW_0)}{\mW_0} = 0$. Combining this with~\eqref{eq:f-global-min-psd-1}, we know that $\mW_0$ is a global minimizer iff
	\begin{equation} \label{eq:f-global-min-psd-2}
		\dotp{\nabla f(\mW_0)}{\mW} \ge 0, \qquad \forall \mW \succeq \vzero.
	\end{equation}
	It is easy to check that this condition is equivalent to $\nabla f(\mW_0) \succeq \vzero$.
\end{proof}

\section{Proofs for Counter-example} \label{app:example}

\begin{conjecture}[Formal Statement, \citealt{gunasekar2017implicit}]
Suppose $f:\R^{d\times d}\to \R$ is a quadratic function and $\min\limits_{\mW\succeq \bm{0}}f(\mW)= 0$. 
Then for any $\mW_{\mathrm{init}} \succ \bm{0}$ if $\oWG_1 = \lim\limits_{\alpha \to 0} \lim\limits_{t \to +\infty} \phi(\alpha\mW_{\mathrm{init}}, t)$ exists and  $f(\oWG_1)=0$,  then $\normastsm{\oWG_1} = \min\limits_{\mW\succeq \vzero} \normastsm{\mW}~~\text{s.t.}~~f(\mW) = 0$.
\end{conjecture}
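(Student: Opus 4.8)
The displayed statement is the formal version of \Cref{conj:nuclear}, and the purpose of this subsection is to \emph{refute} it with the matrix completion instance of \Cref{example:4x4}; so the task is really to certify that example as a counterexample. The plan is in four parts: (i) verify that its loss $f$ satisfies Assumptions~\ref{ass:eigengap-at-origin} and \ref{ass:analytic}; (ii) show the first‑phase GLRL trajectory $\mWG_1(t)$ stays bounded and converges to $\oWG_1=\mM_{\mathrm{rank}}$, which is a rank‑one global minimizer of $f$ over $\psd$; (iii) invoke \Cref{thm:glrl-1st-phase-converge} to get $\lim_{\alpha\to0}\lim_{t\to+\infty}\phi(\alpha\mW_{\mathrm{init}},t)=\mM_{\mathrm{rank}}$ for every $\mW_{\mathrm{init}}\succ\vzero$; and (iv) observe that $\normastsm{\mM_{\mathrm{rank}}}=2R^2+2>4R=\normastsm{\mM_{\mathrm{norm}}}$ for $R>1$ while $\mM_{\mathrm{norm}}$ is feasible, so $\mM_{\mathrm{rank}}$ does not minimize the nuclear norm, contradicting the conjecture.

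\paragraph{Step (i): hypotheses, and a symmetry.}
Since $f(\mW)=\tfrac{1}{2}\sum_{(i,j)\in\Omega}(W_{ij}-M_{ij})^2$ is quadratic, \Cref{ass:analytic} is automatic. For \Cref{ass:eigengap-at-origin}, note $(\nabla f(\mW))_{ij}=W_{ij}-M_{ij}$ on $\Omega$ and $0$ elsewhere, so $A:=-\nabla f(\vzero)$ is $\mM$ with the $?$‑entries zeroed; in the block splitting $\{1,2\}\mid\{3,4\}$ it has the form $\left[\begin{smallmatrix}\vzero & B\\ B & \vzero\end{smallmatrix}\right]$ with the symmetric block $B=\left[\begin{smallmatrix}1&R\\R&0\end{smallmatrix}\right]$. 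Hence $\mathrm{spec}(A)=\{\pm\sigma_1,\pm\sigma_2\}$ with $\sigma_{1,2}=\tfrac{1}{2}(\sqrt{1+4R^2}\pm1)$, so $\mu_1=\sigma_1>\mu_2=\sigma_2>0$, the top eigenvalue is simple, and its eigenvector is $\vu_1\propto(1,c,1,c)$ with $c=\tfrac{\sqrt{1+4R^2}-1}{2R}\in(0,1)$. Moreover, letting $P$ be the permutation matrix of the involution $(1\,3)(2\,4)$, one checks that $\Omega$ and the observed values are $P$‑invariant, so $f(P\mW P^\top)=f(\mW)$; thus $\nabla f$ is $P$‑equivariant, $\phi(\cdot,t)$ commutes with conjugation by $P$, and $A$ commutes with $P$. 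Since the top eigenvalue of $A$ is simple and $\vu_1$ has equal entries across $\{1,3\}$ and $\{2,4\}$, we get $P\vu_1=\vu_1$.

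\paragraph{Step (ii): the first phase converges to $\mM_{\mathrm{rank}}$.}
Each $\mWG_{1,\epsilon}(t)=\phi(\epsilon\vu_1\vu_1^\top,t)$ is therefore $P$‑invariant, hence so are $\mWG_1(t)$ and $\oWG_1$. Because $\psdlx{1}$ is closed and every $\mWG_{1,\epsilon}(t)$ has rank $\le1$, I write $\mWG_1(t)=\vw(t)\vw(t)^\top$ with $\dot\vw=-\nabla f(\vw\vw^\top)\vw$; as $\vw(t)\ne\vzero$ for finite $t$ (the flow reaches the critical point $\vzero$ only at $\pm\infty$), $P$‑invariance of $\vw(t)\vw(t)^\top$ and continuity force $\vw(t)=(a(t),b(t),a(t),b(t))$ for scalars $a,b$. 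Two facts then pin down $\oWG_1$. First, $\{\mWG_1(t)\}_{t\ge0}$ is bounded (the obstacle, see below), so by \Cref{thm:glrl-1st-phase-converge} the limit $\oWG_1=\lim_{t\to+\infty}\mWG_1(t)$ exists and is a critical point of \eqref{eq:ode-W}. Second, a short case analysis of $\{\vw:\nabla f(\vw\vw^\top)\vw=\vzero\}$ — splitting on $w_1\in\{0\}\cup\{w_1:w_1w_4=R\}$ and $w_3\in\{0\}\cup\{w_3:w_2w_3=R\}$ — shows the rank‑one critical points are exactly $\vzero$, the global‑minimizer family $\{(\tau,R\tau,\tfrac{1}{\tau},\tfrac{R}{\tau}):\tau\ne0\}$, and two spurious families (with $w_1=0$ or $w_3=0$, on which $f=\tfrac{1+R^2}{2}>0$), of which only $\vzero$ and $(1,R,1,R)$ are $P$‑symmetric. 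Since $f$ strictly decreases along $\mWG_1$ away from critical points and $f(\mWG_1(0))<f(\vzero)$, we have $\oWG_1\ne\vzero$, hence $\oWG_1=(1,R,1,R)(1,R,1,R)^\top=\mM_{\mathrm{rank}}$; and as $f(\mM_{\mathrm{rank}})=0$, \Cref{lm:f-global-min-psd} confirms $\mM_{\mathrm{rank}}$ minimizes $f$ over $\psd$.

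\paragraph{Steps (iii)--(iv), and the main obstacle.}
For $\mW_\alpha=\alpha\mW_{\mathrm{init}}$ with $\mW_{\mathrm{init}}\succ\vzero$, the alignment $\dotp{\mW_\alpha}{\vu_1\vu_1^\top}/\normFsm{\mW_\alpha}=\vu_1^\top\mW_{\mathrm{init}}\vu_1/\normFsm{\mW_{\mathrm{init}}}>0$ is constant in $\alpha$, so $\{\mW_\alpha\}$ converges to $\vzero$ with positive alignment with $\vu_1\vu_1^\top$; combined with Step (ii), \Cref{thm:glrl-1st-phase-converge} yields $\lim_{\alpha\to0}\lim_{t\to+\infty}\phi(\alpha\mW_{\mathrm{init}},t)=\mM_{\mathrm{rank}}$. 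Finally, $\mM_{\mathrm{rank}}$ is rank‑one PSD, so $\normastsm{\mM_{\mathrm{rank}}}=\Tr\mM_{\mathrm{rank}}=2R^2+2$, whereas $\mM_{\mathrm{norm}}\succeq\vzero$ is feasible with $\normastsm{\mM_{\mathrm{norm}}}=\Tr\mM_{\mathrm{norm}}=4R<2R^2+2$; hence $\mM_{\mathrm{rank}}$ is not a minimum‑nuclear‑norm solution, and the conjecture fails. The hard part will be the boundedness of $\{\mWG_1(t)\}_{t\ge0}$ in Step (ii): $f$ depends on only six entries and is \emph{not} coercive — even on the $P$‑symmetric slice, the sublevel sets of $(a,b)\mapsto\tfrac{1}{2}[(a^2-1)^2+2(ab-R)^2]$ are unbounded (e.g.\ along $a\to0$ with $ab$ fixed) — so monotonicity of $f$ alone does not give boundedness. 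I would instead analyze the reduced planar system $\dot a=(1-a^2)a+(R-ab)b,\ \dot b=(R-ab)a$, whose linearization at the origin is exactly $B$ so that \Cref{thm:general-escape} applies: its escape trajectory from the origin enters the open first quadrant and then stays in a compact set, using that $\dot a>0$ when $a$ is small and $ab<R$, and $\dot b<0$ when $a>0$ and $ab>R$. Everything else — the eigengap computation, the symmetry bookkeeping, the critical‑point classification and the nuclear‑norm comparison — is routine.
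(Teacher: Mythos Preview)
Your plan is correct and mirrors the paper's proof of \Cref{prop:4x4} almost step for step: reduce to a two-dimensional flow on the $P$-symmetric slice $\vw=(a,b,a,b)$ (the paper gets there by direct verification rather than symmetry, but the effect is the same), classify its critical points, show the first-phase trajectory is bounded and converges to $\mM_{\mathrm{rank}}$, invoke \Cref{thm:glrl-1st-phase-converge}, and compare nuclear norms. The paper additionally proves that $\mM_{\mathrm{norm}}$ is the \emph{unique} nuclear-norm minimizer via PSD inequalities on $2\times2$ principal minors, but as you observe, mere feasibility of $\mM_{\mathrm{norm}}$ already suffices to refute the conjecture.

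The one place your sketch is incomplete is exactly the step you flag as hard: boundedness of the planar trajectory. Your two sign observations --- $\dot a>0$ when $a$ is small with $ab<R$, and $\dot b<0$ when $a>0$ with $ab>R$ --- are correct and give forward-invariance of the open first quadrant, but they do not by themselves rule out escape along $a\to0^+,\ b\to\infty$ with $ab$ hovering near $R$. The paper closes this with a short Lyapunov argument you can adopt verbatim. The reduced flow is (up to time rescaling) the gradient flow of $g(a,b)=\tfrac{1}{2}(a^2-1)^2+(ab-R)^2$, so $g\le g(0,0)=\tfrac12+R^2$ along the escape trajectory, whence both $|a|$ and $|ab|$ are bounded; then a direct computation with your ODEs gives
\[
\frac{\dd}{\dd t}\bigl(b^2-a^2\bigr)=2a^2(a^2-1)\le0\qquad\text{whenever }a^2\le1,
\]
and the bound on $|ab|$ forces $a^2\le1$ as soon as $b^2-a^2$ is large. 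This traps $b^2-a^2$, hence $b$, and boundedness follows. Your critical-point case split can also be shortened: on the $P$-symmetric slice the stationary points of $g$ are just $(0,0)$ and $\pm(1,R)$, so the general rank-one classification is unnecessary.
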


\begin{propsition}[Formal Statement for \Cref{example:4x4}]\label{prop:4x4}
	For constant $R>1$, let 		\[
		\mM = \begin{bmatrix}
		?  & ? & 1 & R\\
		?  & ? & R & ?\\
		1  & R & ? & ?\\
		R & ? & ? & ?
		\end{bmatrix},
		\mM_{\mathrm{norm}} =  \begin{bmatrix}
		R  & 1 & 1 & R\\
		1  & R & R & 1\\
		1  & R & R & 1\\
		R & 1 & 1 & R
		\end{bmatrix},
		\textrm{ and }
		\mM_{\mathrm{rank}} =  \begin{bmatrix}
		1   & R   & 1   & R   \\
		R   & R^2 & R   & R^2 \\
		1   & R   & 1   & R   \\
		R   & R^2 & R   & R^2
		\end{bmatrix}.
		\]and 
		\[\Loss(\mU) = \frac{1}{2} f(\mU\mU^{\top}), \quad f(\mW) = \frac{1}{2}\sum_{(i,j)\in\Omega} (W_{ij} - M_{ij})^2\] where $\Omega=\{(1,3),(1,4),(2,3),(3,1),(3,2),(4,1)\}$.
		
		Then for any $\mW_{\mathrm{init}}\succeq \bm{0}, ~~\text{s.t.}~~ \vu_1^\top\mW_{\mathrm{init}}\vu_1>0$,
				\[\lim\limits_{\alpha \to 0} \lim\limits_{t \to +\infty} \phi(\alpha\mW_{\mathrm{init}}, t) = \mM_{\mathrm{rank}}.\]
		
Moreover, we have \[\normast{\mM_{\mathrm{rank}}}= 2R^2+2> 4R = \normast{\mM_{\mathrm{norm}}} =  \min\limits_{\mW\succeq\bm{0}, f(\mW) = 0} \normast{\mW}.\]
\end{propsition}

\begin{proof}
	We define $\mWG_{1,\epsilon}(t), \mWG_1(t)$ in the same way as in \Cref{def:glrl-wg}, \Cref{thm:glrl-1st-phase}.
	\begin{align*}
		\mWG_{1,\epsilon}(t) &:= \phi\left(\epsilon \vu_1\vu_1^{\top}, t \right), \\
		\mWG_1(t) &:= \lim_{\epsilon \to 0} \mWG_{1,\epsilon}(\tfrac{1}{2\mu_1} \log \tfrac{1}{\epsilon} + t).
	\end{align*}
	Below we will show 
	\begin{enumerate}
		\item \Cref{ass:analytic} and \Cref{ass:eigengap-at-origin} are satisfied. 
		\item $\normF{\mWG_1(t)}$ bounded for $t\ge 0$;
		\item $\lim_{t \to +\infty} \mWG_1(t) = \mM_{\mathrm{rank}}$;
		\item $\mM_{\mathrm{norm}} = \argmin_{\mW\succeq\vzero, f(\mW) = 0} \normast{\mW}.$
	\end{enumerate}
	
	Thus Since $\mM_{\mathrm{rank}}$ is a global minimizer of $f(\,\cdot\,)$, applying \Cref{thm:glrl-1st-phase-converge} finishes the proof.

	\myparagraph{Proof for Item 1.} Let $\mM_0 := \nabla f(\vzero)$, then
	\[
		\mM_0 = \begin{bmatrix}
			0 & 0 & 1 & R \\
			0 & 0 & R & 0 \\
			1 & R & 0 & 0 \\
			R & 0 & 0 & 0
		\end{bmatrix}.
	\]
	Let $\mA := \left[\begin{smallmatrix}
	1 & R\\
	R & 0\\
	\end{smallmatrix}\right]$, then we have $\lambda_1(\mA)  = \frac{1+\sqrt{1+R^2}}{2}, \lambda_2(\mA) = \frac{1-\sqrt{1+R^2}}{2}$, thus $\lambda_1(\mA)> |\lambda_2(\mA)|> 0> \lambda_2(\mA)$. As a result, $\lambda_1(\mA)=\norm{\mA}_2$. Let $\vv_1\in\R^2$ be the top eigenvector of $\mA$. We claim that $\vu_1 = \left[\begin{smallmatrix}
	\vv_1 \\
	\vv_1 \\
	\end{smallmatrix}\right] \in \R^4$ is the top eigenvector of $\nabla f(\vzero)$. First by definition it is easy to check that $\mM_0\vu_1 = \lambda_1(\mA)\vu_1$. Further noticing that $\mM_0^2  = \left[\begin{smallmatrix}
	\mA^2 & \vzero\\
	\vzero & \mA^2\\
	\end{smallmatrix}\right]$,
	we know $\lambda^2_i(\mM_0)\in \{\lambda_1^2(\mA), \lambda_2^2(\mA)\}$ for all eigenvalues $\lambda_i(\mM_0)$. That is, $\lambda_1(\mM_0) = \lambda_1(\mA)$, $\lambda_2(\mM_0) = -\lambda_2(\mA)$, $\lambda_3(\mM_0) = \lambda_2(\mA)$, and $\lambda_4(\mM_0) = -\lambda_1(\mA)$. Thus \Cref{ass:eigengap-at-origin} is satisfied. Also note that $f$ is quadratic, thus analytic, i.e., \Cref{ass:analytic} is also satisfied.  
	
	\myparagraph{Proof for Item 2.} Let $(x_{\epsilon}(t), y_{\epsilon}(t)) \in \R^2$ be the gradient flow of $g(x,y) = \frac{1}{2}(x^2-1)^2 + (xy-R)^2$ starting from $(x_{\epsilon}(0), y_{\epsilon}(0)) = \sqrt{\epsilon} \vv_1$.
	\begin{equation}
		\begin{split}\label{eq:ODE_x_y}
			\frac{\dd x(t)}{\dd t} &= (1-x(t)^2)x(t)-2y(t)(x(t)y(t)-R)\\
			\frac{\dd y(t)}{\dd t} &= -2x(t)(x(t)y(t)-R)
		\end{split}
	\end{equation}
	Let $\mW_{\epsilon}(t)$ be the following matrix:
	\[
		\mW_{\epsilon}(t) := \begin{bmatrix}
			x_{\epsilon}(t) \\
			y_{\epsilon}(t) \\
			x_{\epsilon}(t) \\
			y_{\epsilon}(t)
		\end{bmatrix}
		\begin{bmatrix}
			x_{\epsilon}(t)   & y_{\epsilon}(t) & x_{\epsilon}(t)   & y_{\epsilon}(t)
		\end{bmatrix}.
	\]
	Then it is easy to verify that $\mW_{\epsilon}(0) = \mWG_{1,\epsilon}(0)$ and $\mW_{\epsilon}(t)$ satisfies \eqref{eq:ode-W}. Thus by the existence and uniqueness theorem, we have $\mW_{\epsilon}(t) = \mWG_{1,\epsilon}(t)$ for all $t$. Taking the limit $\epsilon \to 0$, we know that $\mWG_1(t)$ can also be written in the following form:
	\[
		\mWG_1(t) = \begin{bmatrix}
			x(t) \\
			y(t) \\
			x(t) \\
			y(t)
		\end{bmatrix}
		\begin{bmatrix}
			x(t)   & y(t) & x(t)   & y(t)
		\end{bmatrix},
	\]
	and $(x_{\epsilon}(t), y_{\epsilon}(t)) \in \R^2$ is a gradient flow of $g(x,y) = \frac{1}{2}(x^2-1)^2 + (xy-R)^2$.
	
	Since $g(x(t),y(t))$ is non-increasing overtime, and $\lim\limits_{t\to -\infty}g(x(-t),y(-t)) = g(x(-\infty),y(-\infty)) = g(0,0)=R^2+0.5$, we know $\lvert x(t)y(t) \rvert \le 3R$ for all $t$. So whenever $y^2(t)-x^2(t)\ge 9R^2$, we have $x^2(t)\le \frac{9R^2}{y^2(t)}\le \frac{9R^2}{y^2(t)-x^2(t)}\le 1$. In this case, $\frac{\dd (y^2(t)-x^2(t))}{\dd t } = 2x^2(t)(x^2(t)-1)\le 0$. Combining this with $y(-\infty)^2 - x(-\infty)^2 = 0 \le 9R^2$, we have  $y^2(t)-x^2(t)\le 9R^2$ for all $t$, which also implies that $y(t)$ is bounded. Noticing that $9R^2\ge g(x(t),y(t))\ge (x^2(t)-1)^2$, we know $x^2(t)$ is also bounded. Therefore, $\mW_1^G(t)$ is bounded.
	
	\myparagraph{Proof for Item 3.} Note that $(x(\infty),y(\infty))$ is a stationary point of $g(x,y)$. It is clear that $g(x,y)$ only has $3$ stationary points  --- $(0,0)$, $(1,R)$ and $(-1,-R)$. Thus $\oWG_1$ can only be $\vzero$ or $\mM_{\mathrm{rank}}$. However, since for all $t$,  $f(\mWG_1(t))<f(\vzero)$, $\oWG_1= \lim_{t\to \infty} \mWG_1(t)$ cannot be $\vzero$. So $\oWG_1$ must be $\mM_{\mathrm{rank}}$.
	
	\myparagraph{Proof for Item 4.} Let $m_{ij}$ be $(i,j)$th element of $\mM$. Suppose $\mM \succeq \vzero$, we have
	\begin{align*}
		(\ve_1-\ve_4)^\top \mM(\ve_1-\ve_4) \ge 0 &\Longrightarrow m_{11}+m_{44}\ge m_{14}+m_{41} =2R \\
		(\ve_2-\ve_3)^\top \mM(\ve_2-\ve_3) \ge 0 &\Longrightarrow m_{22}+m_{33}\ge m_{23}+m_{32} =2R
	\end{align*}
	Thus $4R =\min_{\mW\succeq\bm{0}, f(\mW)=0} \normast{\mW}$, where the equality is only attained at $m_{ii}=R,i=1,2,3,4$. Otherwise, either $\begin{bmatrix} m_{11} & m_{14}\\ m_{41} & m_{44}\end{bmatrix}$ or $\begin{bmatrix} m_{22} & m_{23}\\ m_{32} & m_{33}\end{bmatrix}$ will have negative eigenvalues. Contradiction to that $\mM\succeq \vzero$.
	
	Below we will show the rest unknown off-diagonal entries must be $1$. Let $\mV = \begin{bmatrix} 1 &-1 &0 &0 \\ 0& 0& 1 & 0\\ 0& 0 & 0& 1\end{bmatrix}$, then
	\[
		\mM\succeq\vzero \Longrightarrow \mV\mM\mV^\top \succeq  \vzero \Longrightarrow  \begin{bmatrix} 0  & m_{13}- m_{23} &m_{14}-m_{24} \\ m_{31}-m_{32}& R& R \\ m_{41}-m_{42}  & R& R\end{bmatrix}\succeq \vzero,
	\]
	which implies $m_{13}=m_{23}$, $m_{14}=m_{24}$.
	
	With the same argument for $\mV = \begin{bmatrix} 1 & 0 &0 &0 \\ 0& 1& 0 & 0\\ 0& 0 & 1& -1\end{bmatrix}$, we have $m_{13}=m_{14}$, $m_{23}=m_{24}$. Also note $\mM$ is symmetric and $m_{13}=1$, thus $m_{ij}=m_{ji}=1,\ \forall i=1,2,j=3,4$. Thus $\mM_{\mathrm{norm}} = \argmin_{\mW\succeq\vzero, f(\mW)=0} \normast{\mW}$, which is unique.
\end{proof}

\section{Experiments} \label{sec:app-exp}

\subsection{General Setup} \label{sec:exp-setup}

The code is written in Julia \citep{bezanson2012julia} and PyTorch \citep{pytorch}.

The ground-truth matrix $\mW_*$ is low-rank by construction: we sample a random orthogonal matrix $\mU$, a diagonal matrix $\mS$ with Frobenius norm $\normF{\mS} = 1$ and set $\mW_* = \mU \mS \mU^\T$. Each measurement $\mX$ in $\mX_1, \dots, \mX_m$ is generated by sampling two one-hot vectors $\vu$ and $\vv$ uniformly and setting $\mX = \frac{1}{2} \vu \vv^\T + \frac{1}{2} \vv \vu^\T$.

In \Cref{fig:frechet,fig:equivalence,fig:poly_vs_exp,fig:dist_to_ref,fig:group-fight,fig:const_Q}, the ground truth matrix $\mW_*$ has shape $20 \times 20$ and rank $3$, where $\normF{\mW^*}=20$, $\lambda_1(\mW^*) = 17.41 $, $\lambda_2(\mW^*) = 8.85 $, $\lambda_3(\mW^*) = 4.31 $ and $\lambda_1(-\nabla f(\vzero)) = 6.23,\lambda_2(-\nabla f(\vzero)) = 5.41$. $p=0.3$ is used for generating measurements, except $p = 0.25$ in \Cref{fig:group-fight}, i.e., each pair of entries of $\mW^*_{ij}$ and $\mW^*_{ji}$ is observed with probability $p$.

\myparagraph{Gradient Descent.} Let $\eefro > 0$ be the Frobenius norm of the target random initialization. For the depth-2 case, we sample $2$ orthogonal matrices $\mV_1, \mV_2$ and a diagonal matrix $\mD$ with Frobenius norm $\eefro$, and we set $\mU = \mV_1 \mD^{1/2} \mV_2^{\top}$; for the depth-$L$ case with $L \ge 3$,  we sample $L$ orthogonal matrices $\mV_1, \dots, \mV_L$ and a diagonal matrix $\mD$ with Frobenius norm $\eefro$, and we set $\mU_i := \mV_i \mD^{1/L} \mV_{i+1}^{\top}$ ($\mV_{L+1} = \mV_1$). In this way, we can guarantee that the end-to-end matrix $\mW = \mU_1 \cdots \mU_L$ is symmetric and the initialization is balanced for $L \ge 3$.

\begin{table}[t]
	\centering
	\begin{tabular}{|c|c|} \hline
		\textbf{Depth} ($L$) & \textbf{Simulation method} \\ \hline
		2 & Constant LR, $\eta = 10^{-3}$ for $10^6$ iterations \\ \hline
		3 & Adaptive LR, $\eta = 2 \times 10^{-5}$ and $\varepsilon = 10^{-4}$ for $10^6$ iterations \\ \hline
		4 & Adaptive LR, $\eta = 3 \times 10^{-4}$ and $\varepsilon = 10^{-3}$ for $10^6$ iterations \\ \hline
	\end{tabular}
	\caption{Choice of hyperparameters for simulating gradient flow. For $L = 2$, gradient descent escapes saddles in $O(\log \frac{1}{\eps})$ time, where $\eps$ is the distance between the initialization and  the saddle.}
	\label{tab:hyperparameters}
\end{table}

We discretize the time to simulate gradient flow. When $L > 2$, gradient flow stays around saddle points for most of the time, therefore we use full-batch GD with adaptive learning rate $\tilde{\eta}_t$, inspired by RMSprop~\citep{tieleman2012lecture}, for faster convergence:
\begin{align*}
    v_{t+1} & = \alpha v_t + (1 - \alpha) \normtwo{\nabla \Loss(\vtheta_t)}^2, \\
    \tilde\eta_t & = \frac{\eta}{\sqrt{\frac{v_{t+1}}{1 - \alpha^{t+1}}}+ \varepsilon}, \\
	\vtheta_{t+1} & = \vtheta_t -  \tilde \eta_t \nabla \Loss(\vtheta_t),
\end{align*}
where $\alpha = 0.99$, $\eta$ is the (unadjusted) learning rate. The choices of hyperparameters are summarized in \Cref{tab:hyperparameters}. The continuous time for $\vtheta_t$ is measured as $\sum_{i=0}^{t-1} \tilde \eta_i$.

\myparagraph{GLRL.} In \Cref{fig:frechet,fig:equivalence,fig:group-fight,fig:dist_to_ref}, the GLRL's trajectory is obtained by running Algorithm \ref{alg:glrl} with $\epsilon = 10^{-7}$ and $\eta = 10^{-3}$. The stopping criterion is that if the loop has been iterated for $10^7$ times.

\subsection{Experimental Equivalence between GLRL and Gradient Descent}

Here we provide experimental evidence supporting our theoretical claims about the equivalence between GLRL and GF for both cases, $L=2$ and $L\ge 3$.

In \Cref{fig:frechet}, we show the distance from every point on GF (simulated by GD) from random initialization is close to the trajectory of GLRL. In \Cref{fig:equivalence}, we first run GLRL and obtain the critical points $\{\oWG_r\}_{r=0}^3$ passed by GLRL.  We also define the distance of a matrix $\mW$ to the critical points to be $\min_{0\le r \le 3 } \normFsm{\mW - \oWG_r}$.

\subsection{How well does GLRL work?}

We compare GLRL with gradient descent (with not-so-small initialization), nuclear norm minimization and R1MP \citep{wang2014rank}. We use CVXPY \citep{diamond2016cvxpy, agrawal2018rewriting} for finding the nuclear norm solution. The results are shown in \Cref{fig:group-fight}. GLRL can fully recover the ground truth, while others have difficulty doing so.
\begin{figure}[htbp]
	\centering
    \hspace{0.2 \textwidth}\includegraphics[width = 0.8 \textwidth]{./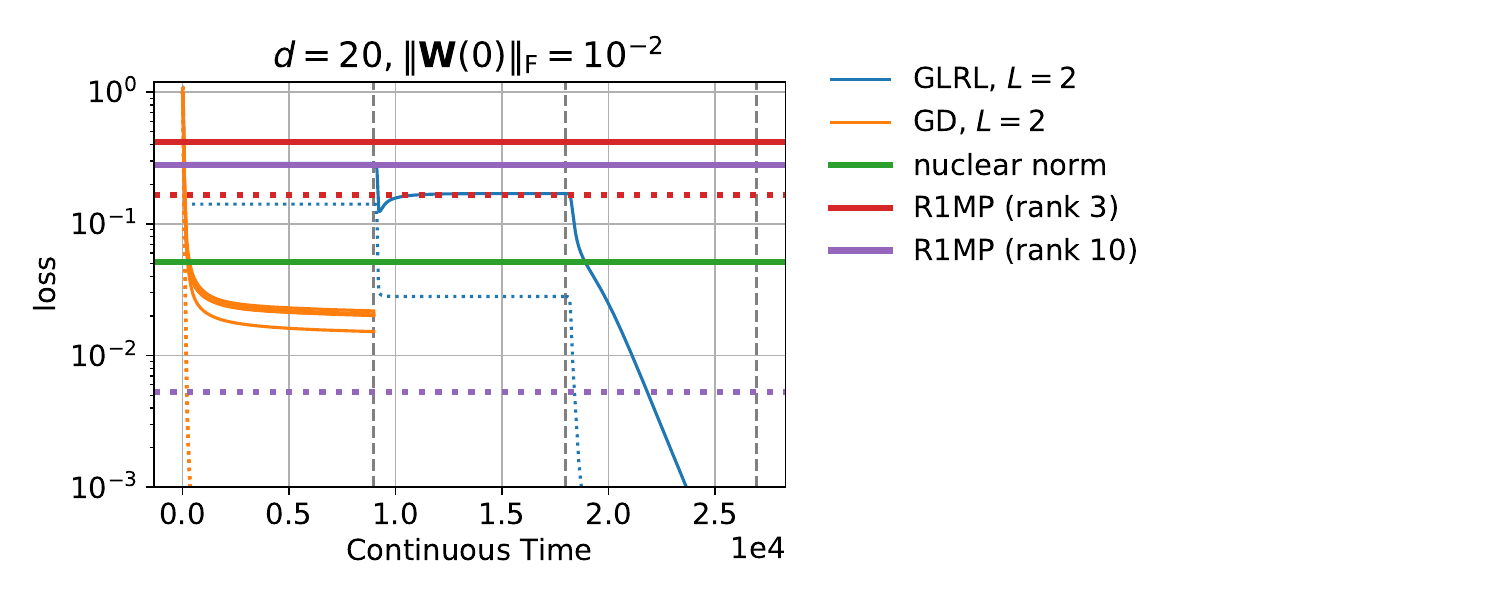}
	\captionof{figure}{GD with small initialization outperforms R1MP and minimal nuclear norm solution on synthetic data with low-rank ground truth. Solid (dotted) curves correspond to test (training) loss. Here the loss $f(\mW):=\frac{1}{d^2} \normFsm{\mW - \mW^*}^2$ and $f(\vzero) = 1$. We run 10 random seeds for GD and plot them separately (most of them overlap). % More details in \Cref{sec:exp-setup}.
	}
	\label{fig:group-fight}
\end{figure}

\subsection{How does initialization affect the convergence rate to the rank-1 GLRL trajectory?}
\label{app:dist-to-ref}

We use the general setting in \Cref{sec:exp-setup}. In these experiments, we use the constant learning rate $10^{-5}$ for $4 \times 10^{7}$ iterations. The reference matrix $\Wref$ is obtained by running the first stage of GLRL with $\normF{\mW(0)} = 10^{-48}$ and we pick one matrix in the trajectory with $\normF{\Wref}$ about 0.6.

For every $\eps =10^i, i\in \{-1, -2, -3, -4, -5\}$, we run both gradient descent and the first phase of GLRL with $\normF{\mW(0)} = \eps$. For gradient descent, we use random initialization so $\normF{\mW(0)}$ is full rank w.p. 1. The distance of a trajectory to $\Wref$ is defined as $\min_{t \geq 0} \normF{\mW(t) - \Wref}$. In practice, as we discretized time to simulate gradient flow, we check every $t$ during simulation to compute the distance. As a result, the estimation might be inaccurate when a trajectory is really close to $\Wref$.

The result is shown at \Cref{fig:dist_to_ref}. We observe that GLRL trajectories are  closer to the reference matrix $\Wref$ by magnitudes. Thus the take home message here is that GLRL is in general a more computational efficient method to simulate the trajectory of GF (GD) with infinitesimal initialization, as one can start GLRL with a much larger initialization, while still maintaining high precision.

\begin{figure}[htbp]
	\centering
	\includegraphics[width=0.4 \textwidth]{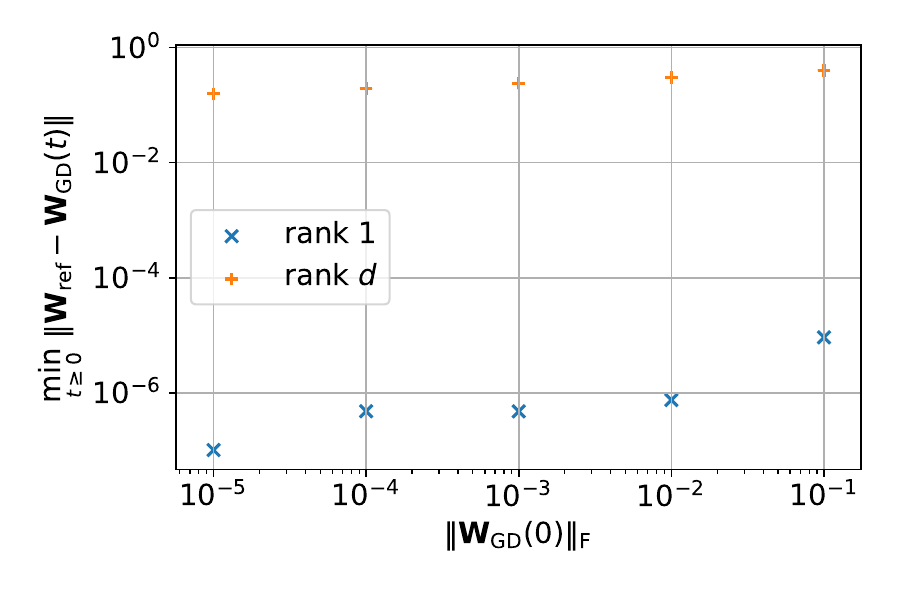}
	\caption{
		Using $\eps \vv_1\vv_1^\top$ (denoted by ``rank $1$'') as initialization makes GD much closer to GLRL compared to using random initialization (denoted by ``rank $d$''), where $\vv_1$ is the top eigenvector of $-\nabla f(\vzero)$. We take a fixed reference matrix on the trajectory of GLRL with constant norm and plot the distance of GD with each initialization to it respectively..
	}
	\label{fig:dist_to_ref}
\end{figure}
\begin{figure}[htbp]
	\centering
	\includegraphics[width = \textwidth]{./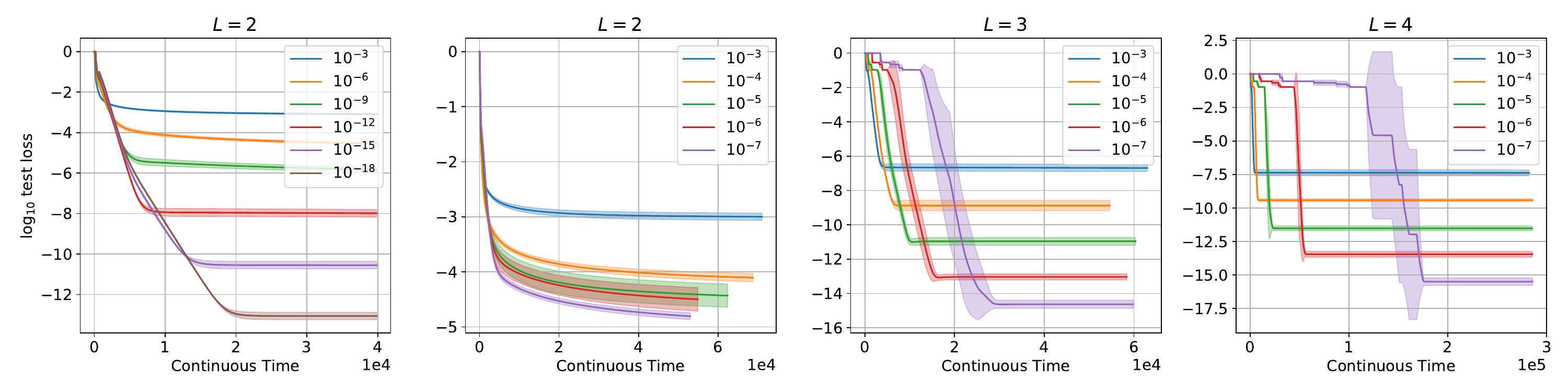}
	\caption{Deep matrix factorization encourages GF to find low rank solutions at a much practical initialization scale, e.g. $10^{-3}$. Here the ground truth is rank-$3$. For each setting, we run 5 different random seeds. The solid curves are the mean and the shaded area indicates one standard deviation. We observe that performance of GD is quite robust to its initialization. Note that for $L > 2$, the shaded area with initialization scale $10^{-7}$ is large, as the sudden decrement of loss occurs at quite different continuous times for different random seeds in this case.
	}
	\label{fig:poly_vs_exp}
\end{figure}

\subsection{Benefit of Depth: polynomial vs exponential dependence on initialization}

To verify the our theory in \Cref{sec:deeper}, we run gradient descent with different depth and initialization. The results are shown in \Cref{fig:poly_vs_exp}.
We can see that as the initialization becomes smaller, the final solution gets closer to the ground truth.
However, a depth-2 model requires exponentially small initialization, while deeper models require polynomial small initialization, though it takes much longer to converge.

\section{The marginal value of being deeper}
\label{sec:inf-depth}

\Cref{thm:inf-depth} shows that the end-to-end dynamics \eqref{eq:infinite_depth} converges point-wise while $L\to\infty$ if the product of learning rate and depth, $\eta L$, is fixed as constant. Interestingly, \eqref{eq:infinite_depth} also allows us to simulate the dynamics of $\mW(t)$ for all depths $L$ while the computation time is independent of $L$. In \Cref{fig:inf_depth}, we compare the effect of depth while fixing the initialization and $\eta L$. We can see that deeper models converge faster. The difference between $L = 1, 2$, and $4$ is large, while difference among $L \geq 16$ is marginal.

\begin{figure}[htbp]
	\centering
	\includegraphics[width = 0.5 \textwidth]{./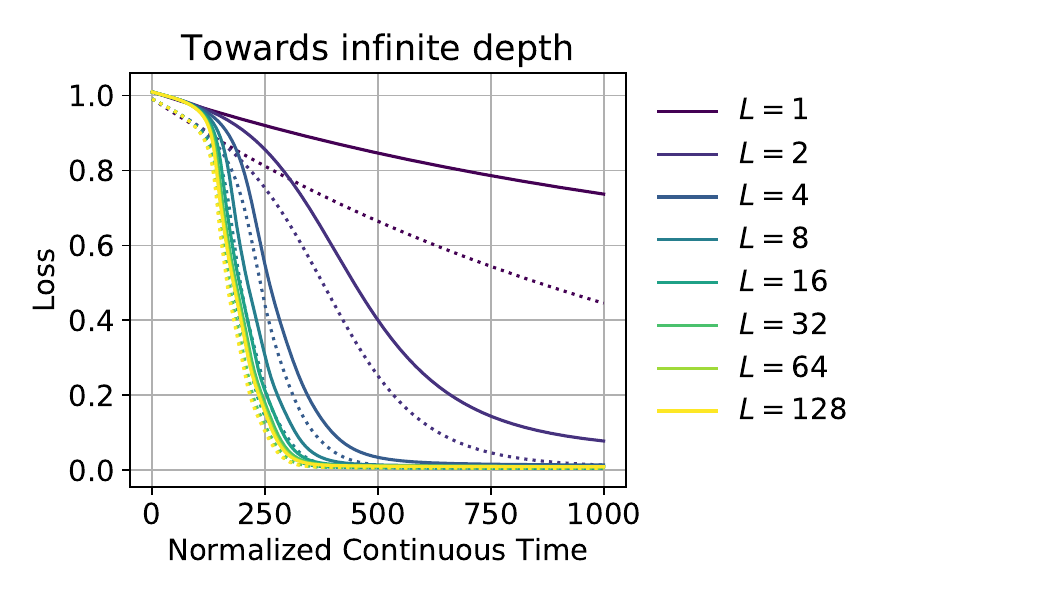}
	\captionof{figure}{
		The marginal value of being deeper. The trajectory of GD converges when depth goes to infinity. Solid (dotted) curves correspond to test (train) loss. The $x$-axis stands for the normalized continuous time $t$ (multiplied by $L$).
	}
	\label{fig:inf_depth}
\end{figure}

\begin{theorem}
\label{thm:inf-depth}
Suppose $\mW = \tdmU \tdmS \tdmV^\T$ is the SVD decomposition of $\mW$, where $\tdmS = \diag(\sigma_1, \dots, \sigma_d)$. The dynamics of $L$-layer linear net is the following, $ \circ $ denotes the entry-wise multiplication:
\begin{equation}\label{eq:infinite_depth}
\frac{\dd \mW}{\dd t} = -L \tdmU \left( \left(\tdmU^\T \nabla f(\mW) \tdmV \right)\circ \mK^{(L)} \right) \tdmV^\T,
\end{equation}
where $K^{(L)}_{i,i} = \sigma_i^{2 - 2/L}$, $K^{(L)}_{i, j} = \frac{\sigma_i^2 - \sigma_j^2}{L \sigma_i^{2/L} - L \sigma_j^{2/L} }$ for $i \neq j$.
\end{theorem}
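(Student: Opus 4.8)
The plan is to start from the balanced end-to-end dynamics \eqref{eq:imply_end_to_end} and diagonalize every factor through the SVD $\mW=\tdmU\tdmS\tdmV^\T$. First I would note that $\mW\mW^\T=\tdmU\tdmS^2\tdmU^\T$ and $\mW^\T\mW=\tdmV\tdmS^2\tdmV^\T$ are the spectral decompositions of these two PSD matrices, so by functional calculus $(\mW\mW^\T)^{i/L}=\tdmU\tdmS^{2i/L}\tdmU^\T$ and $(\mW^\T\mW)^{1-(i+1)/L}=\tdmV\tdmS^{2-2(i+1)/L}\tdmV^\T$ for $0\le i\le L-1$; all exponents appearing are nonnegative when $L\ge1$, so these powers are well defined even when $\mW$ is rank-deficient (with the usual convention that exponent $0$ gives the identity). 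Substituting into \eqref{eq:imply_end_to_end} and pulling the orthogonal factors $\tdmU,\tdmV$ out of the sum yields
\[
\frac{\dd\mW}{\dd t}=-\tdmU\Big(\sum_{i=0}^{L-1}\tdmS^{2i/L}\,\mG\,\tdmS^{2-2(i+1)/L}\Big)\tdmV^\T,\qquad \mG:=\tdmU^\T\nabla f(\mW)\tdmV .
\]

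Next I would read off the $(p,q)$ entry of the bracketed matrix, which is $G_{pq}\sum_{i=0}^{L-1}\sigma_p^{2i/L}\sigma_q^{2-2(i+1)/L}$. Writing the coefficient as $\sigma_q^{2-2/L}\sum_{i=0}^{L-1}r^i$ with $r:=\sigma_p^{2/L}/\sigma_q^{2/L}$ exposes a geometric series. When $\sigma_p=\sigma_q$ (in particular when $p=q$) the coefficient equals $L\sigma_p^{2-2/L}$, matching $K^{(L)}_{ii}$; otherwise $r\neq1$, and substituting $\sum_{i=0}^{L-1}r^i=\frac{r^L-1}{r-1}$, noting $r^L=\sigma_p^2/\sigma_q^2$, and cancelling the powers of $\sigma_q$ gives exactly $\frac{\sigma_p^2-\sigma_q^2}{\sigma_p^{2/L}-\sigma_q^{2/L}}=L\,K^{(L)}_{pq}$. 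Hence the bracketed matrix is $L\big(\mG\circ\mK^{(L)}\big)$ and we recover \eqref{eq:infinite_depth}. Two sanity checks come for free: the total exponent $2i/L+(2-2(i+1)/L)$ is $2-2/L$ for every $i$, consistent with the diagonal of $\mK^{(L)}$, and setting $L=1$ makes $\mK^{(1)}$ the all-ones matrix, returning $\tfrac{\dd\mW}{\dd t}=-\nabla f(\mW)$.

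The computation itself is routine, so I expect the only genuine subtlety to be well-definedness when $\mW$ has repeated or vanishing singular values. Vanishing singular values need no special treatment because $2-2/L\ge0$. For repeated singular values the SVD is not unique, so I would add a short remark that the right-hand side is nonetheless unambiguous: on an eigenspace of $\mW\mW^\T$ belonging to a common value $\sigma$, the coefficient block of $\mK^{(L)}$ is the constant scalar $\sigma^{2-2/L}$, so the residual orthogonal freedom inside that block commutes with the corresponding scalar blocks of $\tdmS^{2i/L}$ and $\tdmS^{2-2(i+1)/L}$, leaving $\tdmU(\mG\circ\mK^{(L)})\tdmV^\T$ invariant; and the off-diagonal formula for $K^{(L)}_{pq}$ with $\sigma_p\neq\sigma_q$ is continuous and agrees in the limit $\sigma_q\to\sigma_p$ with the diagonal value, so no inconsistency arises. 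This degeneracy bookkeeping, rather than any hard estimate, is the main point to handle carefully.
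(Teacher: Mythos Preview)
Your proposal is correct and follows essentially the same route as the paper: both start from \eqref{eq:imply_end_to_end}, diagonalize via the SVD to rewrite each summand as $\tdmS^{2i/L}\,\mG\,\tdmS^{2(L-1-i)/L}$ with $\mG=\tdmU^\T\nabla f(\mW)\tdmV$, then sum the resulting geometric series entrywise to identify $\mK^{(L)}$. Your extra remarks on well-definedness under repeated or vanishing singular values go beyond what the paper spells out, but they are sound and do not change the argument.
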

\begin{proof}
    We start from \eqref{eq:imply_end_to_end}:
    \begin{align*}
        \frac{\dd \mW}{\dd t} & = -\sum_{l = 0}^{L - 1} (\mW \mW^\T)^\frac{l}{L} \gfw (\mW^\T \mW)^\frac{L - 1 - l}{L} \\
        & = -\sum_{l=0}^{L - 1} \tdmU \tdmS^\frac{2l}{L} \tdmU^\T \gfw \tdmV \tdmS^\frac{2(L - 1 - l)}{L} \tdmV \\
        & = - L \tdmU \left[ L^{-1} \sum_{l = 0}^{L - 1} \tdmS^{\frac{2l}{L}} (\tdmU^\T \gfw \tdmV) \tdmS^{\frac{2(L - 1 - l)}{L}}  \right] \tdmV.
    \end{align*}
    Note that $\tdmS$ is diagonal, so
    \begin{align*}
        \tdmS^{\frac{2l}{L}} (\tdmU^\T \gfw \tdmV) \tdmS^{\frac{2(L - 1 - l)}{L}} = (\tdmU^\T \gfw \tdmV) \circ \mH^{(l)},
    \end{align*}
    where $\mH^{(l)}_{i, j} = \sigma_i^\frac{2l}{L} \sigma_j^\frac{2(L - 1 - l)}{L}$. Therefore,
    \begin{align*}
        L^{-1} \sum_{l = 0}^{L - 1} \tdmS^{\frac{2l}{L}} (\tdmU^\T \gfw \tdmV) \tdmS^{\frac{2(L - 1 - l)}{L}} & = L^{-1} \sum_{l=0}^{L - 1} (\tdmU^\T \gfw \tdmV) \circ \mH^{(l)} \\
        & = (\tdmU^\T \gfw \tdmV) \circ \mK^{(L)},
    \end{align*}
    where $\mK^{(L)} = L^{-1} \sum_{l=0}^{L - 1} \mH^{(l)}$. Hence,
    \begin{align*}
        \frac{\dd \mW}{\dd t} & = - L \tdmU \left[ (\tdmU^\T \gfw \tdmV) \circ \mK^{(L)}\right] \tdmV.
    \end{align*}
    The entries of $\mK^{(L)}$ can be directly calculated by
    \[
        K^{(L)}_{i, j} = L^{-1} \sum_{l=0}^{L - 1} \sigma_i^\frac{2l}{L} \sigma_j^\frac{2(L - 1 - l)}{L} = \begin{cases}
            \sigma_i^{2 - 2/L}, & i = j, \\
            \frac{\sigma_i^2 - \sigma_j^2}{L \sigma_i^{2/L} - L \sigma_j^{2/L}}, & i \neq j.
        \end{cases}
    \]
\end{proof}
\begin{corollary}\label{cor:inf-depth}
    As $L \to \infty$, $\mK^{(L)}$ converges to $\mK^*$, where $K^*_{i, i} = \sigma_i^2$, $K^*_{i, j} = \frac{\sigma_i^2 - \sigma_j^2}{\ln \sigma_i^2 - \ln \sigma_j^2}$ for $i \neq j$.
\end{corollary}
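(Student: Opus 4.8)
The plan is to pass to the limit entrywise in the closed-form expressions for $\mK^{(L)}$ obtained in \Cref{thm:inf-depth}, handling diagonal and off-diagonal entries separately; since all matrices involved are $d\times d$ with $d$ fixed, entrywise convergence immediately yields convergence of $\mK^{(L)}$ to $\mK^*$ in any matrix norm.

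First the diagonal entries. By \Cref{thm:inf-depth}, $K^{(L)}_{i,i}=\sigma_i^{2-2/L}$. If $\sigma_i=0$ this equals $0=K^*_{i,i}$ for every $L\ge 1$ and there is nothing to do. If $\sigma_i>0$, write $\sigma_i^{2-2/L}=\sigma_i^2\exp\!\left(-\tfrac{2}{L}\ln\sigma_i\right)$; the exponential tends to $1$ as $L\to\infty$, so $K^{(L)}_{i,i}\to\sigma_i^2=K^*_{i,i}$.

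Now the off-diagonal entries, assuming for the moment $\sigma_i,\sigma_j>0$ with $\sigma_i\ne\sigma_j$. I would start from the closed form $K^{(L)}_{i,j}=\dfrac{\sigma_i^2-\sigma_j^2}{L(\sigma_i^{2/L}-\sigma_j^{2/L})}$ and expand $\sigma^{2/L}=\exp\!\left(\tfrac{2}{L}\ln\sigma\right)=1+\tfrac{2\ln\sigma}{L}+O(L^{-2})$, which gives $L(\sigma_i^{2/L}-\sigma_j^{2/L})=2(\ln\sigma_i-\ln\sigma_j)+O(L^{-1})=\ln\sigma_i^2-\ln\sigma_j^2+O(L^{-1})$; dividing yields $K^{(L)}_{i,j}\to\dfrac{\sigma_i^2-\sigma_j^2}{\ln\sigma_i^2-\ln\sigma_j^2}=K^*_{i,j}$. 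Equivalently — and perhaps more transparently — one recalls from the proof of \Cref{thm:inf-depth} that $K^{(L)}_{i,j}=\tfrac{1}{L}\sum_{l=0}^{L-1}(\sigma_i^2)^{l/L}(\sigma_j^2)^{1-(l+1)/L}$, recognizes this (up to the vanishing factor $(\sigma_j^2)^{-1/L}$) as a Riemann sum for $\int_0^1(\sigma_i^2)^x(\sigma_j^2)^{1-x}\,dx$, and evaluates the integral as $\sigma_j^2\int_0^1(\sigma_i^2/\sigma_j^2)^x\,dx=\tfrac{\sigma_i^2-\sigma_j^2}{\ln\sigma_i^2-\ln\sigma_j^2}$.

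There is no genuinely hard step here; the only points requiring care are the degenerate cases. If $\sigma_j=0$ and $\sigma_i>0$, then $K^{(L)}_{i,j}=\tfrac{\sigma_i^2}{L\sigma_i^{2/L}}\to 0$, consistent with reading $K^*_{i,j}$ as $0$ (since $\ln\sigma_j^2=-\infty$); if $\sigma_i=\sigma_j>0$ with $i\ne j$, the stated formula for $K^*_{i,j}$ is a $0/0$ expression to be interpreted as its continuous extension $\sigma_i^2$, and either argument above (taking $\sigma_j\to\sigma_i$, or noting the sum collapses to $\tfrac{1}{L}\sum_l(\sigma_i^2)^{1-1/L}=(\sigma_i^2)^{1-1/L}\to\sigma_i^2$) confirms $\lim_L K^{(L)}_{i,j}=\sigma_i^2$. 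Collecting the diagonal and off-diagonal limits gives $\mK^{(L)}\to\mK^*$ entrywise, completing the proof.
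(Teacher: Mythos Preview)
Your proposal is correct; the paper states this corollary without proof, treating it as immediate from the explicit formulas in \Cref{thm:inf-depth}, and your entrywise limit computation (via Taylor expansion or the equivalent Riemann-sum reading) is exactly the natural way to fill in the omitted details.
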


\myparagraph{Experiment details.} We follow the general setting in \Cref{sec:exp-setup}. The ground truth $\mW^*$ is different but is generated in the same manner and has the same shape of $20 \times 20$ and $p = 0.3$ is used for observation generation. We directly apply \eqref{eq:infinite_depth}, in which we compute $\tdmV$ and $\tdmU$ through SVD, to simulate the trajectory together with a constant learning rate of $\frac{10^{-3}}{L}$ for depth $L$. $\mW(0)$ is sampled from $10^{-3}\times \mathcal{N}(0, \mI_d)$.

\section{Proofs for Dynamical System} \label{app:dynamical}

In this section, we prove \Cref{thm:general-escape} in \Cref{sec:dynamical}. In \Cref{sec:proof-reduce-to-diagonal}, we show how to reduce \Cref{thm:general-escape} to the case where $\mJ(\vzero)$ is exactly a diagonal matrix, then we prove this diagonal case in \Cref{sec:proof-diagonal}. Finally, in \Cref{sec:non-diagonalizable}, we discuss how to extend it to the case where $\mJ(\vzero)$ is non-diagonalizable.

\subsection{Reduction to the Diagonal Case} \label{sec:proof-reduce-to-diagonal}

\begin{theorem} \label{thm:general-escape-diagonal}
	If $\mJ(\vzero) = \diag(\tilde{\mu}_1, \dots, \tilde{\mu}_d)$ is diagonal, then the statement in \Cref{thm:general-escape} holds.
\end{theorem}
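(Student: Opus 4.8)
The plan is to make quantitative the ``escape from an unstable critical point'' argument sketched after \Cref{thm:general-escape}. First I would write the vector field as its linearization plus a higher-order remainder, $\vg(\vtheta) = \mJ(\vzero)\vtheta + \vh(\vtheta)$ with $\mJ(\vzero) = \diag(\tilde{\mu}_1,\dots,\tilde{\mu}_d)$; since $\vg$ is $\contC^2$ with $\vg(\vzero)=\vzero$, we have $\vh(\vzero)=\vzero$ and vanishing Jacobian at $\vzero$, so $\norm{\vh(\vtheta)} \le M\norm{\vtheta}^2$ and $\norm{\mJ(\vtheta)-\mJ(\vzero)}_{\mathrm{op}} = O(\norm{\vtheta})$ on a fixed ball $\norm{\vtheta}\le\rho_0$. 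The only place diagonalizability enters is the elementary identity $\norm{e^{s\mJ(\vzero)}}_{\mathrm{op}} = e^{\tilde{\mu}_1 s}$ for all $s\ge 0$, and $\norm{e^{s\mJ(\vzero)}|_{\mathrm{span}\{\tilde{\vv}_2,\dots,\tilde{\vv}_d\}}}_{\mathrm{op}} \le e^{\max\{\tilde{\mu}_2,0\}s}$; this keeps every Gronwall constant tame. From the variation-of-constants identity $\phi(\vtheta_0,s)=e^{s\mJ(\vzero)}\vtheta_0 + \int_0^s e^{(s-u)\mJ(\vzero)}\vh(\phi(\vtheta_0,u))\,\dd u$ and a bootstrap I would prove two workhorse estimates, valid as long as the trajectories involved stay in the ball: (i) an a priori bound $\norm{\phi(\vtheta_0,s)} \le 2\norm{\vtheta_0}\,e^{\tilde{\mu}_1 s}$, and (ii) a perturbation bound $\norm{\phi(\vtheta_0+\vxi,s) - \phi(\vtheta_0,s)} \le C\norm{\vxi}\,e^{\tilde{\mu}_1 s}$ (subtract the two integral equations, bound the difference of the $\vh$-terms using $\norm{\mJ(\cdot)-\mJ(\vzero)}_{\mathrm{op}} = O(\norm{\cdot})$, divide by $e^{\tilde{\mu}_1 s}$, and apply Gronwall).

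Next I would show the ``canonical'' trajectory exists, i.e.\ that $\vz_\alpha(t) := \phi(\alpha\tilde{\vv}_1,\, t+\tfrac{1}{\tilde{\mu}_1}\log\tfrac1\alpha)$ is Cauchy as $\alpha\to 0$. For $0<\alpha'\le\alpha$ small, integrate from $\alpha'\tilde{\vv}_1$ up to time $\tau := \tfrac{1}{\tilde{\mu}_1}\log\tfrac{\alpha}{\alpha'}$: coordinatewise, the linear part hits $\alpha\tilde{\vv}_1$ exactly, while both the correction to the $\tilde{\vv}_1$-coordinate and the off-axis coordinates (which start at $0$) are $O((\alpha')^2 e^{2\tilde{\mu}_1\tau}) = O(\alpha^2)$ by (i); hence $\phi(\alpha'\tilde{\vv}_1,\tau) = \alpha\tilde{\vv}_1 + \vxi$ with $\norm{\vxi}=O(\alpha^2)$. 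By the semigroup property and (ii), applied over the remaining time $\tfrac{1}{\tilde{\mu}_1}\log\tfrac1\alpha+t$ — split into the stretch where the orbit is still inside the ball (growth factor $O(1/\alpha)$ by (ii)) and a final stretch of length depending only on $t$ and $\rho_0$, along which the orbit is uniformly bounded by the paper's global-existence assumption plus compactness (constant growth factor) — one gets $\norm{\vz_{\alpha'}(t) - \vz_\alpha(t)} \le C_t\,\alpha$. Thus $\vz(t) := \lim_{\alpha\to 0}\vz_\alpha(t)$ exists with $\norm{\vz(t) - \vz_\alpha(t)} \le C_t\,\alpha$, and passing to the limit in $\vz_\alpha(t) = \vz_\alpha(0) + \int_0^t \vg(\vz_\alpha(s))\,\dd s$ (dominated convergence) shows $\vz$ solves \eqref{eq:dynamical}, so $\vz(t)=\phi(\vz(0),t)$.

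Then I would prove the main bound for a general $\vdelta_\alpha$ with positive alignment with $\tilde{\vu}_1$, which by definition gives $a_\alpha := \dotp{\vdelta_\alpha}{\tilde{\vu}_1} \ge q\norm{\vdelta_\alpha}$, hence $a_\alpha\asymp\norm{\vdelta_\alpha}$ and $a_\alpha>0$ (picking the $+\tilde{\vv}_1$ branch). I would choose the $\alpha$-dependent threshold $\rho_\alpha := a_\alpha^{\tilde{\gamma}/(\tilde{\mu}_1+\tilde{\gamma})}\to 0$ and set $t^*_\alpha := \tfrac{1}{\tilde{\mu}_1}\log\tfrac{\rho_\alpha}{a_\alpha}$. \emph{Phase 1:} show $\phi(\vdelta_\alpha, t^*_\alpha) = \rho_\alpha\tilde{\vv}_1 + \vxi_\alpha$ with $\norm{\vxi_\alpha}=O(\rho_\alpha^2)$ — the $\tilde{\vv}_1$-coordinate is $a_\alpha e^{\tilde{\mu}_1 t^*_\alpha} + O(\rho_\alpha^2) = \rho_\alpha + O(\rho_\alpha^2)$, and for $i\ge 2$ the off-axis linear part is at most $\norm{\vdelta_\alpha}(\rho_\alpha/a_\alpha)^{\tilde{\mu}_i/\tilde{\mu}_1} \le \tfrac1q a_\alpha(\rho_\alpha/a_\alpha)^{\tilde{\mu}_2/\tilde{\mu}_1} = \tfrac1q\rho_\alpha(a_\alpha/\rho_\alpha)^{\tilde{\gamma}/\tilde{\mu}_1}$ (using $\tilde{\mu}_i\le\tilde{\mu}_2$ and $\rho_\alpha/a_\alpha>1$), which equals $\tfrac1q\rho_\alpha^2$ precisely by the choice of $\rho_\alpha$, with nonlinear correction again $O(\rho_\alpha^2)$ by (i). \emph{Phase 2:} since $\tfrac{1}{\tilde{\mu}_1}\log\tfrac1{a_\alpha}+t = t^*_\alpha + \big(\tfrac{1}{\tilde{\mu}_1}\log\tfrac1{\rho_\alpha}+t\big)$, the semigroup property gives $\phi\big(\vdelta_\alpha, \tfrac{1}{\tilde{\mu}_1}\log\tfrac1{a_\alpha}+t\big) = \phi\big(\rho_\alpha\tilde{\vv}_1+\vxi_\alpha,\, \tfrac1{\tilde{\mu}_1}\log\tfrac1{\rho_\alpha}+t\big)$, which by (ii) lies within $O(\norm{\vxi_\alpha}/\rho_\alpha)=O(\rho_\alpha)$ of $\phi\big(\rho_\alpha\tilde{\vv}_1, \tfrac1{\tilde{\mu}_1}\log\tfrac1{\rho_\alpha}+t\big)=\vz_{\rho_\alpha}(t)$, itself within $O(\rho_\alpha)$ of $\vz(t)$ by the previous step. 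Adding up, the error is $O(\rho_\alpha) = O\big(a_\alpha^{\tilde{\gamma}/(\tilde{\mu}_1+\tilde{\gamma})}\big) = O\big(\norm{\vdelta_\alpha}^{\tilde{\gamma}/(\tilde{\mu}_1+\tilde{\gamma})}\big)$, which is the claimed rate.

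The main obstacle is precisely the phase-1 estimate together with the choice of $\rho_\alpha$: one must run the linearized ``power iteration'' long enough that the off-axis coordinates are suppressed by the factor $(a_\alpha/\rho_\alpha)^{\tilde{\gamma}/\tilde{\mu}_1}$, yet not so long that the quadratic remainder — which costs a relative error $O(\rho_\alpha)$ — destroys the approximation, and balancing $\rho_\alpha + (a_\alpha/\rho_\alpha)^{\tilde{\gamma}/\tilde{\mu}_1}$ is exactly what forces the exponent $\tilde{\gamma}/(\tilde{\mu}_1+\tilde{\gamma})$. The remaining work — the nested time-shift bookkeeping, verifying that each orbit involved stays in the fixed ball until the final bounded stretch, and that the various ``for sufficiently small $\alpha$'' thresholds are mutually consistent — is routine but delicate, and the diagonal normalization (which makes $\norm{e^{s\mJ(\vzero)}}_{\mathrm{op}} = e^{\tilde{\mu}_1 s}$ for $s\ge0$) is what prevents the Gronwall factors from blowing up.
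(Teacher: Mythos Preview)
Your proposal is correct and follows essentially the same approach as the paper: the paper's Lemmas~F.2--F.4 are exactly your workhorse estimates (i) and (ii) plus the ``coupling with the linearized flow'' bound, its Lemma~F.5 is your Cauchy argument for the existence of $\vz(t)$, and its main proof uses the identical threshold $\tilde{\alpha}_1 = \alpha^{\tilde{\gamma}/(\tilde{\mu}_1+\tilde{\gamma})}$ (your $\rho_\alpha$) together with the same two-phase split and the same algebraic balancing of the off-axis decay against the quadratic remainder. The only cosmetic differences are that the paper packages the ``stay in the ball'' bookkeeping via an explicit exit-time function $T_\alpha(r)$ built from $F(x)=\log x-\log(1+\kappa x)$ rather than your more informal bootstrap, and derives the estimates by direct differential inequalities rather than the variation-of-constants formula.
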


\begin{proof}[Proof for \Cref{thm:general-escape}]
	We show how to prove \Cref{thm:general-escape} based on \Cref{thm:general-escape-diagonal}. Let $\frac{\dd \vtheta}{\dd t} = \vg(\vtheta)$ be the dynamical system in \Cref{thm:general-escape}. Let $\mJ(\vzero) = \tilde{\mV} \tilde{\mD} \tilde{\mV}^{-1}$ be the eigendecomposition, where $\tilde{\mV}$ is an invertible matrix and $\tilde{\mD} = \diag(\tilde{\mu}_1, \dots, \tilde{\mu}_d)$. Now we define the following new dynamics by changing the basis:
	\[
		\hat{\vtheta}(t) = \tilde{\mV}^{-1} \vtheta(t).
	\]	
	Then $\frac{\dd \hat{\vtheta}(t)}{\dd t} = \hat{\vg}(\hat{\vtheta})$ for $\hat{\vg}(\hat{\vtheta}) := \tilde{\mV}^{-1} \vg(\tilde{\mV} \hat{\vtheta})$, and the associated Jacobian matrix is $\hat{\mJ}(\hat{\vtheta}) := \tilde{\mV}^{-1} \mJ(\tdmV \hat{\vtheta}) \tilde{\mV}$, and thus $\hat{\mJ}(\vzero) = \diag(\tdmu_1, \dots, \tdmu_d)$.
	
	Now we apply \Cref{thm:general-escape-diagonal} to $\hat{\vtheta}(t)$. Then $\hat{\vz}_{\alpha}(t) := \tdmV^{-1} \vz_{\alpha}(t)$ converges to the limit $\hat{\vz}(t) := \lim\limits_{\alpha \to 0} \hat{\vz}_{\alpha}(t)$. This shows that the limit $\vz(t) = \tilde{\mV} \hat{\vz}(t)$ exists in \Cref{thm:general-escape}. We can also verify that $\vz(t)$ is a solution of \eqref{eq:dynamical}.
	
	Given $\vdelta_\alpha$ converging to $\vzero$ with positive alignment with $\tilde{\vu}_1$ as $\alpha \to 0$, we can define $\hat{\vdelta}_{\alpha} := \tdmV^{-1} \vdelta_{\alpha}$, then $\hat{\vdelta}_{\alpha}$ converges to $\vzero$ with positive alignment with $\ve_1$, where $\ve_1$ is the first vector in the standard basis and is also the top eigenvector of $\hat{\mJ}(\vzero)$. Therefore, for every $t \in (-\infty, +\infty)$, there is a constant $C > 0$ such that
	\begin{equation}
		\normtwo{\tdmV^{-1}\phi\left(\vdelta_\alpha, t + \frac{1}{\tilde{\mu}_1} \log \frac{1}{\dotp{\vdelta_\alpha}{\tilde{\vu}_1}}\right) - \hat{\vz}(t)}  \le  C \cdot \normtwosm{\hat{\vdelta}_{\alpha}}^{\frac{\tdga}{\tdmu_1 + \tdga}}
	\end{equation}
	for every sufficiently small $\alpha$. As $\tdmV$ are invertible, this directly implies \eqref{eq:general-escape-err}.
\end{proof}

\subsection{Proof for the Diagonal Case} \label{sec:proof-diagonal}

Now we only need to prove \Cref{thm:general-escape-diagonal}. Let $\ve_1, \dots, \ve_d$ be the standard basis. Then $\tilde{\vu}_1 = \tilde{\vv}_1 = \ve_1$ in this diagonal case. We only use $\ve_1$ to stand for $\tilde{\vu}_1$ and $\tilde{\vv}_1$ in the rest of our analysis.

Let $R > 0$. Since $\vg(\vtheta)$ is $\contC^2$-smooth, there exists $\beta > 0$ such that
\begin{equation} \label{eq:j-lip}
	\normtwosm{\mJ(\vtheta) - \mJ(\vtheta + \vh)} \le \beta \normtwosm{\vh}
\end{equation}
for all $\normtwosm{\vtheta}, \normtwosm{\vtheta + \vh} \le R$. Then the following can be proved by integration:
\begin{align}
	\vg(\vtheta+\vh) - \vg(\vtheta) &= \left(\int_{0}^{1} \mJ(\vtheta + \xi \vh) \dd \xi\right) \vh, \label{eq:g-1st-taylor} \\
	\normtwosm{\vg(\vtheta + \vh) - \vg(\vtheta) - \mJ(\vtheta) \vh} &\le \beta \normtwosm{\vh}^2. \label{eq:g-2nd-taylor}
\end{align}
By~\eqref{eq:g-2nd-taylor}, we also have
\begin{equation} \label{eq:g-minus-jtheta}
	\normtwosm{\vg(\vtheta) - \mJ(\vzero) \vtheta} = \normtwosm{\vg(\vtheta) - \vg(\vzero) - \mJ(\vzero) \vtheta} \le \beta \normtwosm{\vtheta}^2.
\end{equation}

Let $\kappa := \beta / \tilde{\mu}_1$. We assume WLOG that $R \le 1 / \kappa$. Let $F(x) = \log x - \log(1 + \kappa x)$. It is easy to see that $F'(x) = \frac{1}{x + \kappa x^2}$ and $F(x)$ is an increasing function with range $(-\infty,\log(1/\kappa))$. We use $F^{-1}(y)$ to denote the inverse function of $F(x)$. Define $T_{\alpha}(r) := \frac{1}{\tilde{\mu}_1} \left( F(r) - F(\alpha) \right) = \frac{1}{\tilde{\mu}_1} \left( \log \frac{r}{\alpha} - \log \frac{1 + \kappa r}{1 + \kappa \alpha} \right)$.

Our proof only relies on the following properties of $\mJ(\vzero)$ (besides that $\tdmu_1, \ve_1$ are the top eigenvalue and eigenvector of $\mJ(\vzero)$):
\begin{lemma} \label{lm:J-basics}
	For $\mJ(\vzero) := \diag(\tdmu_1, \dots, \tdmu_d)$, we have
	\begin{enumerate}
		\item For any $\vh \in \R^d$, $\vh^{\top} \mJ(\vzero) \vh \le \tdmu_1 \normtwosm{\vh}^2$;
		\item For any $t \ge 0$, $\normtwo{e^{t \mJ(\vzero)} - e^{\tdmu_1 t} \ve_1\ve_1^{\top}} = e^{\tdmu_2 t}$.
	\end{enumerate}
\end{lemma}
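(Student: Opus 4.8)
\textbf{Proof proposal for \Cref{lm:J-basics}.} Both claims follow immediately from the fact that $\mJ(\vzero) = \diag(\tdmu_1, \dots, \tdmu_d)$ is diagonal with entries sorted as $\tdmu_1 \ge \tdmu_2 \ge \cdots \ge \tdmu_d$, so the plan is just to write everything out coordinatewise. For claim 1, I would expand in the standard basis: writing $\vh = (h_1, \dots, h_d)^\top$, we have $\vh^\top \mJ(\vzero) \vh = \sum_{i=1}^d \tdmu_i h_i^2 \le \tdmu_1 \sum_{i=1}^d h_i^2 = \tdmu_1 \normtwosm{\vh}^2$, using $\tdmu_i \le \tdmu_1$ for every $i$.

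For claim 2, I would use that the matrix exponential of a diagonal matrix is computed entrywise, $e^{t\mJ(\vzero)} = \diag(e^{\tdmu_1 t}, \dots, e^{\tdmu_d t})$. Subtracting the rank-one term that removes the first coordinate gives
\[
	e^{t\mJ(\vzero)} - e^{\tdmu_1 t}\ve_1\ve_1^\top = \diag\bigl(0, e^{\tdmu_2 t}, \dots, e^{\tdmu_d t}\bigr),
\]
which is again diagonal, hence its operator norm equals the largest of the absolute values of its diagonal entries, namely $\max_{2 \le i \le d} e^{\tdmu_i t}$. Since $t \ge 0$ and $\tdmu_2 \ge \tdmu_i$ for all $i \ge 2$, the map $x \mapsto e^{xt}$ is nondecreasing, so this maximum is attained at $i = 2$ and equals $e^{\tdmu_2 t}$, as claimed.

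There is no real obstacle here; the only point that deserves a word of care is the hypothesis $t \ge 0$, which is exactly what lets the ordering $\tdmu_2 \ge \tdmu_i$ survive after applying the exponential in the second part (for $t < 0$ the bound would instead involve $\tdmu_d$).
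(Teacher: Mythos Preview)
Your proposal is correct and matches the paper's proof essentially line for line: the paper also expands $\vh^\top \mJ(\vzero)\vh = \sum_i \tdmu_i h_i^2 \le \tdmu_1 \normtwosm{\vh}^2$ for Item~1 and writes $e^{t\mJ(\vzero)} - e^{\tdmu_1 t}\ve_1\ve_1^\top = \diag(0, e^{\tdmu_2 t}, \dots, e^{\tdmu_d t})$ with spectral norm $e^{\tdmu_2 t}$ for Item~2. Your extra remark about the role of $t \ge 0$ is a nice clarification but not something the paper spells out.
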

\begin{proof}
	For Item 1, $\vh^{\top} \mJ(\vzero) \vh = \sum_{i=1}^{d} \tdmu_i h_i^2 \le \tdmu_1 \normtwosm{\vh}^2$. For Item 2, $\normtwo{e^{t \mJ(\vzero)} - e^{\tdmu_1 t} \ve_1\ve_1^{\top}} = \normtwo{\diag(0, e^{\tdmu_2 t}, \dots, e^{\tdmu_d t})} = e^{\tdmu_2 t}$.
\end{proof}

\begin{lemma} \label{lm:general-growth}
	For $\vtheta(t) = \phi(\vtheta_0, t)$ with $\normtwosm{\vtheta_0} \le \alpha$ and $t \le T_{\alpha}(r)$,
	\[
		\normtwosm{\vtheta(t)} \le \frac{1 + \kappa r}{1 + \kappa \alpha} \alpha \cdot e^{\tilde{\mu}_1 t} \le r.
	\]
\end{lemma}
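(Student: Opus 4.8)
The plan is to reduce the vector dynamics to a scalar comparison ODE for the norm $\rho(t) := \normtwosm{\vtheta(t)}$. First I would differentiate the squared norm: as long as $\vtheta$ stays in the ball of radius $R$ where \eqref{eq:g-minus-jtheta} is available, $\frac12\frac{\dd}{\dd t}\rho^2 = \dotpsm{\vtheta}{\vg(\vtheta)} \le \dotpsm{\vtheta}{\mJ(\vzero)\vtheta} + \normtwosm{\vtheta}\,\normtwosm{\vg(\vtheta)-\mJ(\vzero)\vtheta} \le \tilde{\mu}_1\rho^2 + \beta\rho^3$, using Item 1 of \Cref{lm:J-basics} for the first term and \eqref{eq:g-minus-jtheta} for the second. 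Dividing by $\rho$ (the case $\vtheta_0=\vzero$ is trivial by uniqueness of solutions) gives the scalar differential inequality $\dot\rho \le \tilde{\mu}_1(\rho + \kappa\rho^2)$ with $\rho(0) \le \alpha$.

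Next I would introduce the comparison solution $u(t)$ of $\dot u = \tilde{\mu}_1(u+\kappa u^2)$ with $u(0)=\alpha$ and solve it in closed form. Since $F'(x)=\frac{1}{x+\kappa x^2}$, separation of variables gives $F(u(t)) = F(\alpha) + \tilde{\mu}_1 t$, equivalently $\frac{u(t)}{1+\kappa u(t)} = \frac{\alpha}{1+\kappa\alpha}\,e^{\tilde{\mu}_1 t}$ (and $F(\alpha)+\tilde{\mu}_1 t \le F(r) < \log(1/\kappa)$ stays in the range of $F$ on the interval in question, so $u$ is well defined there and does not blow up). A first-exit-time / continuation argument then yields $\rho(t) \le u(t)$ on all of $[0, T_\alpha(r)]$: at any first time $\rho$ would meet $u$ one has $\rho \le u \le r \le R$, so the differential inequality is in force there and $\dot\rho \le \tilde{\mu}_1(\rho+\kappa\rho^2) = \dot u$, preventing a crossing; in particular the trajectory never leaves the radius-$R$ ball on this interval, which is exactly what legitimizes using \eqref{eq:g-minus-jtheta} throughout.

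Finally I would read off the two claimed bounds from the explicit formula. Because $F$ is increasing, the hypothesis $t \le T_\alpha(r) = \frac{1}{\tilde{\mu}_1}(F(r)-F(\alpha))$ is precisely $F(u(t)) \le F(r)$, i.e. $u(t) \le r$, hence $\rho(t) \le r$. For the exponential bound set $v(t) := \frac{u(t)}{1+\kappa u(t)} = \frac{\alpha}{1+\kappa\alpha}e^{\tilde{\mu}_1 t}$; from $u(t) \le r$ we get $v(t) \le \frac{r}{1+\kappa r}$, so $1-\kappa v(t) \ge \frac{1}{1+\kappa r}$, and therefore $\rho(t) \le u(t) = \frac{v(t)}{1-\kappa v(t)} \le (1+\kappa r)\,v(t) = \frac{1+\kappa r}{1+\kappa\alpha}\,\alpha\, e^{\tilde{\mu}_1 t}$. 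Combined with $\rho(t)\le r$ this is exactly the asserted chain of inequalities.

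The only genuinely delicate point is the continuation argument in the second step: one must ensure the norm never escapes $\{\normtwosm{\cdot}\le R\}$ on $[0,T_\alpha(r)]$ before concluding $\rho \le u$, since the Taylor estimate \eqref{eq:g-minus-jtheta} holds only there. This is handled by taking $r \le R$ and examining the first time $\rho$ would touch $u$ (equivalently $R$), where $\dot\rho \le \dot u$ still forces the bound; everything else is routine ODE comparison and algebra with the function $F$.
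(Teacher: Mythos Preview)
Your proposal is correct and follows essentially the same approach as the paper. Both proofs derive the scalar inequality $\dot\rho \le \tilde{\mu}_1(\rho + \kappa\rho^2)$ from \eqref{eq:g-minus-jtheta} and Item~1 of \Cref{lm:J-basics}, and both exploit $F'(x) = \frac{1}{x+\kappa x^2}$; the paper integrates $F(\rho(t))$ directly to obtain $F(\rho(t)) \le F(\alpha) + \tilde{\mu}_1 t$ and then uses $\log x \le F(x) + \log(1+\kappa r)$ for $x \le r$, whereas you introduce the explicit comparison solution $u(t)$ and manipulate $v = u/(1+\kappa u)$ --- these are equivalent rewritings of the same computation. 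Your treatment of the continuation issue (ensuring $\rho$ stays in the ball of radius $R$ so that \eqref{eq:g-minus-jtheta} remains available) is more explicit than the paper's, which simply asserts $\rho(t)\le r$ without separately justifying that the differential inequality holds throughout.
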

\begin{proof}
	By \eqref{eq:g-minus-jtheta} and \Cref{lm:J-basics}, we have
	\[
		\frac{1}{2}\frac{\dd\normtwosm{\vtheta(t)}^2}{\dd t} = \dotp{\vtheta(t)}{\vg(\vtheta(t))} \le \dotp{\vtheta(t)}{\mJ(\vzero) \vtheta(t)} + \beta \normtwosm{\vtheta(t)}^3 \le \tilde{\mu}_1 \normtwosm{\vtheta(t)}^2 + \beta \normtwosm{\vtheta(t)}^3.
	\]
	This implies $\frac{\dd \normtwosm{\vtheta(t)}}{\dd t} \le \tilde{\mu}_1 (\normtwosm{\vtheta(t)} + \kappa \normtwosm{\vtheta(t)}^2)$.
	Since $F'(x) = \frac{1}{x + \kappa x^2}$, we further have
	\[
		\frac{\dd}{\dd t} F(\normtwosm{\vtheta(t)}) \le \tilde{\mu}_1.
	\]
	So $F(\normtwosm{\vtheta(t)}) \le F(\alpha) + \tilde{\mu}_1 t$. By definition of $T_\alpha(r)$, we then know that $\normtwosm{\vtheta(t)} \le r$ for all $t \le T_{\alpha}(r)$. So
	\[
		\log \normtwosm{\vtheta(t)} \le F(\normtwosm{\vtheta(t)}) + \log(1 + \kappa r) \le F(\alpha) + \tilde{\mu}_1 t + \log(1 + \kappa r).
	\]
	Expending $F(\alpha)$ proves the lemma.
\end{proof}

\begin{lemma} \label{lm:coupling-with-exp}
	For $\vtheta(t) = \phi(\vtheta_0, t)$ with $\normtwosm{\vtheta_0} \le \alpha$ and $t \le T_{\alpha}(r)$, we have
	\[
		\vtheta(t) = e^{t \mJ(\vzero)} \vtheta_0 + O(r^2).
	\]
\end{lemma}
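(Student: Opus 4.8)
The plan is a Duhamel (variation-of-constants) comparison of the nonlinear flow with the linear flow $e^{t\mJ(\vzero)}$, closed by the growth estimate already proved in \Cref{lm:general-growth}. Throughout I would take $0 \le t \le T_{\alpha}(r)$, the regime in which \Cref{lm:general-growth} applies, and recall that in this regime $\normtwosm{\vtheta(s)} \le r \le R$ for all $s \in [0,t]$, so the Taylor estimate \eqref{eq:g-minus-jtheta} is valid along the whole trajectory.

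First, I would split the vector field as $\vg(\vtheta) = \mJ(\vzero)\vtheta + \vr(\vtheta)$ with remainder $\vr(\vtheta) := \vg(\vtheta) - \mJ(\vzero)\vtheta$, so that $\normtwosm{\vr(\vtheta)} \le \beta\normtwosm{\vtheta}^2$ by \eqref{eq:g-minus-jtheta}. Viewing $\vr(\vtheta(\cdot))$ as an inhomogeneous forcing term for the linear system with fundamental solution $e^{t\mJ(\vzero)}$, the variation-of-constants formula yields
\[
	\vtheta(t) = e^{t\mJ(\vzero)}\vtheta_0 + \int_0^t e^{(t-s)\mJ(\vzero)}\,\vr(\vtheta(s))\,\dd s ,
\]
which can be verified directly by differentiating $e^{-t\mJ(\vzero)}\vtheta(t)$.

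Next I would estimate the integral. Since $\mJ(\vzero) = \diag(\tilde{\mu}_1,\dots,\tilde{\mu}_d)$ has largest eigenvalue $\tilde{\mu}_1 > 0$, we have $\normtwo{e^{u\mJ(\vzero)}} = e^{\tilde{\mu}_1 u}$ for all $u \ge 0$, applied here with $u = t-s$. Combining this with $\normtwosm{\vr(\vtheta(s))} \le \beta\normtwosm{\vtheta(s)}^2$ and the quantitative growth bound $\normtwosm{\vtheta(s)} \le c\,e^{\tilde{\mu}_1 s}$ from \Cref{lm:general-growth}, where $c := \frac{1+\kappa r}{1+\kappa\alpha}\,\alpha$, gives
\[
	\normtwo{\vtheta(t) - e^{t\mJ(\vzero)}\vtheta_0}
	\;\le\; \int_0^t e^{\tilde{\mu}_1(t-s)}\,\beta\,\bigl(c\,e^{\tilde{\mu}_1 s}\bigr)^2\,\dd s
	\;=\; \beta c^2 e^{\tilde{\mu}_1 t}\!\int_0^t e^{\tilde{\mu}_1 s}\,\dd s
	\;\le\; \frac{\beta c^2}{\tilde{\mu}_1}\,e^{2\tilde{\mu}_1 t}.
\]
Finally, using once more that $c\,e^{\tilde{\mu}_1 t} \le r$ for $t \le T_{\alpha}(r)$ (\Cref{lm:general-growth}), the last quantity equals $\frac{\beta}{\tilde{\mu}_1}\bigl(c\,e^{\tilde{\mu}_1 t}\bigr)^2 \le \frac{\beta}{\tilde{\mu}_1}\,r^2 = \kappa r^2$, which is exactly the claimed $O(r^2)$ bound.

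I do not anticipate a real obstacle; this is a standard perturbation-of-linear-flow argument. The only point requiring care is ensuring the implicit constant in $O(r^2)$ — namely $\kappa = \beta/\tilde{\mu}_1$, which depends only on $R$ (through $\beta$) — is genuinely uniform over $\alpha$, $r$, and $t$ in the stated range, and that $\normtwosm{\vtheta(s)}$ never escapes the radius-$R$ ball on which \eqref{eq:g-minus-jtheta} holds; both facts are supplied by \Cref{lm:general-growth}, so the argument closes without circularity.
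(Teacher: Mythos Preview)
Your argument is correct and reaches the identical explicit constant $\kappa r^2$ that the paper obtains. The route, however, is not quite the paper's: you use the Duhamel (variation-of-constants) representation $\vtheta(t)=e^{t\mJ(\vzero)}\vtheta_0+\int_0^t e^{(t-s)\mJ(\vzero)}\vr(\vtheta(s))\,\dd s$ together with the operator-norm bound $\normtwo{e^{u\mJ(\vzero)}}=e^{\tilde\mu_1 u}$, whereas the paper differentiates $\tfrac12\normtwosm{\vtheta-\hat\vtheta}^2$, uses the quadratic-form inequality $\vh^\top\mJ(\vzero)\vh\le\tilde\mu_1\normtwosm{\vh}^2$ from \Cref{lm:J-basics}, and closes with Gr\"onwall. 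The two are dual ways of packaging the same linearization-plus-remainder estimate, and both feed in \Cref{lm:general-growth} at the same place; your Duhamel version is slightly more direct here, while the paper's quadratic-form version is what carries over unchanged to the non-diagonalizable extension in \Cref{sec:non-diagonalizable} (where only the one-sided bound on the symmetric part, not a sharp operator-norm bound on $e^{t\mJ(\vzero)}$, is immediately available).
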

\begin{proof}
	Let $\hat{\vtheta}(t) = e^{t\mJ(\vzero)}\vtheta_0$. Then we have
	\begin{align*}
		\frac{1}{2}\frac{\dd}{\dd t} \normtwosm{\vtheta(t) - \hat{\vtheta}(t)}^2 &\le \dotp{\vg(\vtheta(t)) - \mJ(\vzero) \hat{\vtheta}(t)}{\vtheta(t) - \hat{\vtheta}(t)} \\
			&= \dotp{\vg(\vtheta(t)) - \mJ(\vzero) \vtheta(t)}{\vtheta(t) - \hat{\vtheta}(t)} + (\vtheta(t) - \hat{\vtheta}(t))^{\top} \mJ(\vzero) (\vtheta(t) - \hat{\vtheta}(t)) \\
			&\le \normtwosm{\vg(\vtheta(t)) - \mJ(\vzero) \vtheta(t)} \cdot \normtwosm{\vtheta(t) - \hat{\vtheta}(t)} + \tdmu_1\normtwosm{\vtheta(t) - \hat{\vtheta}(t)}^2,
	\end{align*}
	where the last inequality is due to \Cref{lm:J-basics}. By \eqref{eq:g-minus-jtheta} and \Cref{lm:general-growth}, we have
	\[
		\normtwosm{\vg(\vtheta(t)) - \mJ(\vzero) \vtheta(t)} \le \beta \normtwosm{\vtheta(t)}^2 \le \beta \left(\frac{1 + \kappa r}{1 + \kappa \alpha} \alpha\right)^2 \cdot e^{2\tilde{\mu}_1 t}.
	\]
	So we have $\frac{\dd}{\dd t} \normtwosm{\vtheta(t) - \hat{\vtheta}(t)} \le \beta \left(\frac{1 + \kappa r}{1 + \kappa \alpha} \alpha\right)^2 \cdot e^{2\tilde{\mu}_1 t} + \tdmu_1\normtwosm{\vtheta(t) - \hat{\vtheta}(t)}$. By Gr\"{o}nwall's inequality,
	\[
		\normtwosm{\vtheta(t) - \hat{\vtheta}(t)} \le \int_{0}^{t} \beta \left(\frac{1 + \kappa r}{1 + \kappa \alpha} \alpha\right)^2 \cdot e^{2\tilde{\mu}_1 \tau} e^{\tilde{\mu}_1 (t - \tau)} d\tau.
	\]
	Evaluating the integral gives
	\[
		\normtwosm{\vtheta(t) - \hat{\vtheta}(t)} \le \beta \left(\frac{1 + \kappa r}{1 + \kappa \alpha} \alpha\right)^2 e^{\tilde{\mu}_1 t} \cdot \frac{e^{\tilde{\mu}_1 t} - 1}{\tilde{\mu}_1} \le \kappa \left(\frac{1 + \kappa r}{1 + \kappa \alpha} \alpha \cdot e^{\tilde{\mu}_1 t}\right)^2 \le \kappa r^2,
	\]
	which proves the lemma.
\end{proof}

\begin{lemma} \label{lm:coupling-with-trajectory}
	Let $\vtheta(t) = \phi(\vtheta_0, t), \hat{\vtheta}(t) = \phi(\hat{\vtheta}_0, t)$. If $\max\{\normtwosm{\vtheta_0}, \normtwosm{\hat{\vtheta}_0}\} \le \alpha$, then for $t \le T_{\alpha}(r)$,
	\[
		\normtwosm{\vtheta(t) - \hat{\vtheta}(t)} \le e^{\tilde{\mu}_1 t + \kappa r} \normtwosm{\vtheta_0 - \hat{\vtheta}_0}.
	\]
\end{lemma}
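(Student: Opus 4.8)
Set $\vdelta(t) := \vtheta(t) - \hat{\vtheta}(t)$, so that $\vdelta(0) = \vtheta_0 - \hat{\vtheta}_0$; the plan is to derive a linear differential inequality for $\normtwosm{\vdelta(t)}$ and integrate it via Gr\"onwall, exactly in the spirit of \Cref{lm:general-growth} and \Cref{lm:coupling-with-exp}. First I would differentiate, using $\dot{\vtheta}(t) = \vg(\vtheta(t))$ and $\dot{\hat{\vtheta}}(t) = \vg(\hat{\vtheta}(t))$, to get $\frac{1}{2}\frac{\dd}{\dd t}\normtwosm{\vdelta(t)}^2 = \dotp{\vdelta(t)}{\vg(\vtheta(t)) - \vg(\hat{\vtheta}(t))}$. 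Then I would rewrite the difference of the vector fields via the integral Taylor identity \eqref{eq:g-1st-taylor} with base point $\hat{\vtheta}(t)$ and increment $\vdelta(t)$, namely $\vg(\vtheta(t)) - \vg(\hat{\vtheta}(t)) = \bigl(\int_0^1 \mJ(\hat{\vtheta}(t) + \xi\vdelta(t))\,\dd\xi\bigr)\vdelta(t)$, so that $\frac{1}{2}\frac{\dd}{\dd t}\normtwosm{\vdelta(t)}^2 = \int_0^1 \vdelta(t)^{\top}\mJ(\hat{\vtheta}(t) + \xi\vdelta(t))\vdelta(t)\,\dd\xi$. (Both $\hat{\vtheta}(t)$ and $\vtheta(t)$ have norm $\le r \le R$ for $t \le T_{\alpha}(r)$ by \Cref{lm:general-growth}, so \eqref{eq:g-1st-taylor} and \eqref{eq:j-lip} apply along the segment.)

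\textbf{Bounding the quadratic form.} Next I would split $\mJ(\hat{\vtheta}(t) + \xi\vdelta(t)) = \mJ(\vzero) + \bigl(\mJ(\hat{\vtheta}(t) + \xi\vdelta(t)) - \mJ(\vzero)\bigr)$. Item 1 of \Cref{lm:J-basics} handles the first part, $\vdelta(t)^{\top}\mJ(\vzero)\vdelta(t) \le \tilde{\mu}_1\normtwosm{\vdelta(t)}^2$. For the second part, the key observation is that $\hat{\vtheta}(t) + \xi\vdelta(t) = (1-\xi)\hat{\vtheta}(t) + \xi\vtheta(t)$ is a convex combination of $\hat{\vtheta}(t)$ and $\vtheta(t)$, each of norm at most $\rho(t) := \frac{1+\kappa r}{1+\kappa\alpha}\,\alpha\,e^{\tilde{\mu}_1 t}$ by \Cref{lm:general-growth} (using $\max\{\normtwosm{\vtheta_0},\normtwosm{\hat{\vtheta}_0}\}\le\alpha$), hence itself has norm $\le\rho(t)$; so \eqref{eq:j-lip} gives $\normtwosm{\mJ(\hat{\vtheta}(t) + \xi\vdelta(t)) - \mJ(\vzero)} \le \beta\rho(t)$, and therefore $\vdelta(t)^{\top}\mJ(\hat{\vtheta}(t) + \xi\vdelta(t))\vdelta(t) \le (\tilde{\mu}_1 + \beta\rho(t))\normtwosm{\vdelta(t)}^2$. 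Integrating over $\xi$ and using $\frac{\dd}{\dd t}\normtwosm{\vdelta(t)} = \frac{1}{2\normtwosm{\vdelta(t)}}\frac{\dd}{\dd t}\normtwosm{\vdelta(t)}^2$, I obtain $\frac{\dd}{\dd t}\normtwosm{\vdelta(t)} \le (\tilde{\mu}_1 + \beta\rho(t))\normtwosm{\vdelta(t)}$ (the degenerate case $\vdelta\equiv\vzero$ being immediate from uniqueness of solutions).

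\textbf{Integrating.} Finally, Gr\"onwall yields $\normtwosm{\vdelta(t)} \le \exp\!\bigl(\tilde{\mu}_1 t + \beta\!\int_0^t\rho(\tau)\,\dd\tau\bigr)\normtwosm{\vdelta(0)}$, and I would evaluate $\int_0^t\rho(\tau)\,\dd\tau = \frac{1+\kappa r}{1+\kappa\alpha}\alpha\cdot\frac{e^{\tilde{\mu}_1 t}-1}{\tilde{\mu}_1} \le \frac{1}{\tilde{\mu}_1}\cdot\frac{1+\kappa r}{1+\kappa\alpha}\alpha e^{\tilde{\mu}_1 t} \le \frac{r}{\tilde{\mu}_1}$, where the last inequality is again \Cref{lm:general-growth}. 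Since $\kappa = \beta/\tilde{\mu}_1$, this gives $\beta\int_0^t\rho(\tau)\,\dd\tau \le \kappa r$, which produces exactly the claimed exponent $\tilde{\mu}_1 t + \kappa r$ and finishes the proof.

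\textbf{Where the care is needed.} There is no genuine obstacle here; the one subtlety is that one must keep the \emph{time-dependent} exponential envelope $\rho(t)$ supplied by \Cref{lm:general-growth} for the intermediate points $\hat{\vtheta}(t)+\xi\vdelta(t)$, rather than the crude uniform bound $\normtwosm{\vtheta(t)}\le r$. The crude bound would only give $\normtwosm{\vdelta(t)} \le e^{\tilde{\mu}_1 t + \tilde{\mu}_1\kappa r t}\normtwosm{\vdelta(0)}$, with a spurious factor of $t$ in the exponent; the refinement is what collapses the perturbation contribution to the constant $\kappa r$ on which the downstream arguments rely.
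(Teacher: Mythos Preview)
Your proposal is correct and follows essentially the same approach as the paper: differentiate $\normtwosm{\vtheta(t)-\hat{\vtheta}(t)}^2$, use the integral mean-value identity \eqref{eq:g-1st-taylor} along the segment, split $\mJ$ at the origin and invoke \Cref{lm:J-basics} plus the Lipschitz bound \eqref{eq:j-lip} with the time-dependent envelope from \Cref{lm:general-growth}, then integrate the resulting linear differential inequality. Your closing remark about why the envelope $\rho(t)$ rather than the crude bound $r$ is needed is exactly the point.
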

\begin{proof}
	For $t \le T_{\alpha}(r)$, by \eqref{eq:g-1st-taylor},
	\begin{align*}
		\frac{1}{2}\frac{\dd}{\dd t} \normtwosm{\vtheta(t) - \hat{\vtheta}(t)}^2 &= \dotp{\vg(\vtheta(t)) - \vg(\hat{\vtheta}(t))}{\vtheta(t) - \hat{\vtheta}(t)} \\
		&= (\vtheta(t) - \hat{\vtheta}(t))^{\top} \left(\int_{0}^{1} \mJ(\vtheta_{\xi}(t)) \dd \xi\right) (\vtheta(t) - \hat{\vtheta}(t)),
	\end{align*}
	where $\vtheta_{\xi}(t) := \xi \vtheta(t) + (1 - \xi) \hat{\vtheta}(t)$. By~\Cref{lm:general-growth}, $\max\{\normtwosm{\vtheta(t)}, \normtwosm{\hat{\vtheta}(t)}\} \le \frac{1 + \kappa r}{1 + \kappa \alpha} \alpha \cdot e^{\tilde{\mu}_1 t}$ for all $t \le T_{\alpha}(r)$. So $\normtwosm{\vtheta_{\xi}(t)} \le \frac{1 + \kappa r}{1 + \kappa \alpha} \alpha \cdot e^{\tilde{\mu}_1 t}$. Combining these with \eqref{eq:j-lip} and \Cref{lm:J-basics}, we have
	\[
		\vh^{\top} \mJ(\vtheta_{\xi}(t)) \vh = \vh^{\top} \mJ(\vzero) \vh + \vh^{\top} (\mJ(\vtheta_{\xi}(t)) - \mJ(\vzero)) \vh \le \left(\tdmu_1 + \beta \cdot \frac{1 + \kappa r}{1 + \kappa \alpha} \alpha \cdot e^{\tilde{\mu}_1 t}\right)\normtwosm{\vh}^2,
	\]
	for all $\vh \in \R^d$. Thus, $\frac{\dd}{\dd t} \normtwosm{\vtheta(t) - \hat{\vtheta}(t)} \le \left(\tilde{\mu}_1 + \beta \cdot \frac{1 + \kappa r}{1 + \kappa \alpha} \alpha \cdot e^{\tilde{\mu}_1 t}\right) \normtwosm{\vtheta(t) - \hat{\vtheta}(t)}$.
	This implies
	\begin{align*}
		\log \frac{\normtwosm{\vtheta(t) - \hat{\vtheta}(t)}}{\normtwosm{\vtheta(0) - \hat{\vtheta}(0)}} &\le \int_{0}^{t} \left(\tilde{\mu}_1 + \beta \cdot \frac{1 + \kappa r}{1 + \kappa \alpha} \alpha \cdot e^{\tilde{\mu}_1 \tau}\right) d\tau \\
		&\le \tilde{\mu}_1 t + \kappa \cdot \frac{1 + \kappa r}{1 + \kappa \alpha} \alpha e^{\tilde{\mu}_1 t} \\
		&\le \tilde{\mu}_1 t + \kappa r.
	\end{align*}
	Therefore, $\normtwosm{\vtheta(t) - \hat{\vtheta}(t)} \le e^{\tilde{\mu}_1 t + \kappa r} \normtwosm{\vtheta(0) - \hat{\vtheta}(0)}$.
\end{proof}

\begin{lemma} \label{lm:z-existence}
	For every $t \in (-\infty, +\infty)$, $\vz(t)$ exists and $\vz_{\alpha}(t)$ converges to $\vz(t)$ in the following rate:
	\[
	\normtwo{ \vz_{\alpha}(t) - \vz(t)} = O(\alpha),
	\]
	where $O$ hides constants depending on $g(\vtheta)$ and $t$.
\end{lemma}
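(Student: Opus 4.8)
The plan is to reduce the claim to a single convenient negative reference time at which the trajectory still lives in the ``escape region'' where \Cref{lm:general-growth,lm:coupling-with-exp,lm:coupling-with-trajectory} apply, and then propagate the estimate to all $t\in\R$ via the semigroup property of the flow. Fix once and for all a constant $t_0 := \frac{1}{\tilde\mu_1}\log\frac{R}{1+\kappa R} < 0$ (any sufficiently negative constant depending only on $\vg$ does). I will first establish the Cauchy bound $\normtwo{\vz_\alpha(t) - \vz_{\alpha'}(t)} = O(\alpha)$ for all $0 < \alpha' < \alpha$ small enough and all $t \le t_0$; letting $\alpha'\to 0$ then shows that $\vz(t) := \lim_{\alpha\to 0}\vz_\alpha(t)$ exists for $t \le t_0$ with $\normtwo{\vz_\alpha(t) - \vz(t)} = O(\alpha)$. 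To pass to arbitrary $t$, observe the semigroup identity $\vz_\alpha(t) = \phi(\vz_\alpha(t_0),\, t - t_0)$ (both sides equal $\phi(\alpha\ve_1,\, t + \frac{1}{\tilde\mu_1}\log\frac{1}{\alpha})$); I then \emph{define} $\vz(t) := \phi(\vz(t_0),\, t - t_0)$ for $t > t_0$. Since $\{\phi(\vz(t_0), s) : 0 \le s \le t - t_0\}$ is a fixed compact curve, $\phi(\,\cdot\,, t - t_0)$ is Lipschitz on a neighbourhood of it with a constant depending only on $t$ and $\vg$, so $\normtwo{\vz_\alpha(t) - \vz(t)} \le e^{O(t - t_0)}\normtwo{\vz_\alpha(t_0) - \vz(t_0)} = O(\alpha)$. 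That $\vz$ solves \eqref{eq:dynamical} is then automatic: on $(-\infty, t_0]$ it satisfies $\vz(t') = \lim_\alpha \phi(\vz_\alpha(t), t' - t) = \phi(\vz(t), t' - t)$ by continuity of the flow, and on $[t_0, \infty)$ it equals $\phi(\vz(t_0),\,\cdot\, - t_0)$ by construction; these agree at $t_0$, so $\vz = \phi(\vz(t_0),\,\cdot\, - t_0)$ on all of $\R$.

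For the Cauchy bound, given $\alpha' < \alpha$ set $s_0 := \frac{1}{\tilde\mu_1}\log\frac{\alpha}{\alpha'} \ge 0$, so that $\vz_{\alpha'}(t) = \phi\!\bigl(\phi(\alpha'\ve_1, s_0),\ t + \tfrac{1}{\tilde\mu_1}\log\tfrac{1}{\alpha}\bigr)$. The role of the shift $s_0$ is that $e^{s_0 \mJ(\vzero)}(\alpha'\ve_1) = e^{\tilde\mu_1 s_0}\,\alpha'\ve_1 = \alpha\ve_1$ \emph{exactly}, because $\ve_1$ is the eigenvector of $\mJ(\vzero)$ for the top eigenvalue $\tilde\mu_1$. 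Feeding this into the quantitative estimate proved inside \Cref{lm:coupling-with-exp} — namely $\normtwo{\phi(\vtheta_0, s) - e^{s\mJ(\vzero)}\vtheta_0} \le \kappa\bigl(\tfrac{1+\kappa r}{1+\kappa\alpha'}\,\alpha'\, e^{\tilde\mu_1 s}\bigr)^2$ whenever $\normtwo{\vtheta_0}\le\alpha'$ and $s \le T_{\alpha'}(r)$ — applied with $\vtheta_0 = \alpha'\ve_1$, $s = s_0$ and any fixed $r \in (0, R]$ (the constraint $s_0 \le T_{\alpha'}(r)$ holds once $\alpha$ is below an absolute constant, since $\alpha' e^{\tilde\mu_1 s_0} = \alpha$), I get $\phi(\alpha'\ve_1, s_0) = \alpha\ve_1 + \vh$ with $\normtwo{\vh} = O(\alpha^2)$. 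Thus $\vz_\alpha(t)$ and $\vz_{\alpha'}(t)$ are exactly the images of the two initial points $\alpha\ve_1$ and $\alpha\ve_1 + \vh$ under the flow over the \emph{same} time $\tau := t + \frac{1}{\tilde\mu_1}\log\frac{1}{\alpha}$. Both of these initial points have norm at most $\alpha_1 := \alpha + O(\alpha^2)$, and the choice $t \le t_0$ guarantees $\tau \le T_{\alpha_1}(r)$, so \Cref{lm:coupling-with-trajectory} gives
\[
\normtwo{\vz_\alpha(t) - \vz_{\alpha'}(t)} \le e^{\tilde\mu_1 \tau + \kappa r}\,\normtwo{\vh} = e^{\tilde\mu_1 t + \kappa r}\cdot\frac{1}{\alpha}\cdot O(\alpha^2) = O(\alpha),
\]
and the factor $e^{\tilde\mu_1 t} \le e^{\tilde\mu_1 t_0} \le 1$ makes the implied constant uniform over $t \le t_0$.

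The step I expect to be delicate is the balance in this last display: the escape time $\tau$ is as large as $\frac{1}{\tilde\mu_1}\log\frac{1}{\alpha}$, so the flow amplifies initial-condition errors by a factor $e^{\tilde\mu_1\tau} = \Theta(\alpha^{-1})$; the estimate only closes because the mismatch $\normtwo{\vh}$ between the two (re-aligned) initial points is $O(\alpha^2)$ — quadratically small, not merely $O(\alpha)$. Obtaining that quadratic smallness is precisely why one aligns the two trajectories by the time shift $s_0$ instead of comparing each separately to a fixed point such as $\rho\ve_1$: at time $s_0$ the trajectory started at $\alpha'\ve_1$ has norm only $\Theta(\alpha)$, so the second-order Taylor remainder in \Cref{lm:coupling-with-exp} is $O(\alpha^2)$ rather than the $O(r^2)$ or $O(\rho^2)$ one would incur otherwise (and a non-vanishing $O(\rho^2)$ mismatch would be fatal). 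A secondary point requiring care is the hand-off at $t_0$ between the escape-region lemmas (valid only while $\normtwo{\vtheta}\le R$) and the global continuous-dependence argument for $t > t_0$; there one checks that the trajectories $\phi(\vz_\alpha(t_0),\,\cdot\,)$ stay in a fixed compact set for bounded time, which follows from global existence and continuity of the flow on a compact set of initial data.
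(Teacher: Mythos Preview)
Your proposal is correct and follows essentially the same approach as the paper: both proofs align $\vz_{\alpha'}$ with $\vz_\alpha$ via the time shift $s_0=\frac{1}{\tilde\mu_1}\log\frac{\alpha}{\alpha'}$, invoke \Cref{lm:coupling-with-exp} to bound the mismatch $\phi(\alpha'\ve_1,s_0)-\alpha\ve_1$ by $O(\alpha^2)$, propagate with \Cref{lm:coupling-with-trajectory} through the escape region (the $e^{\tilde\mu_1\tau}\sim\alpha^{-1}$ amplification cancels one factor of $\alpha$), and then pass to large $t$ by local Lipschitz dependence on initial data. The only cosmetic differences are that the paper introduces an auxiliary $\tilde\alpha$ (with $F(\tilde\alpha)=\log\alpha$) so it can cite the \emph{statement} $O(r^2)$ of \Cref{lm:coupling-with-exp} with $r=\tilde\alpha$ rather than the sharper bound from its proof, and it continues the Cauchy argument for large $t$ instead of defining $\vz(t):=\phi(\vz(t_0),t-t_0)$ directly; your parenthetical remark that ``any sufficiently negative constant'' works for $t_0$ correctly anticipates the small slack needed so that $\tau\le T_{\alpha_1}(r)$ holds despite $\alpha_1$ being slightly larger than $\alpha$.
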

\begin{proof}
	We prove the lemma in the cases of $t \in (-\infty, F(R)/\tilde{\mu}_1]$ and $t > F(R)/\tilde{\mu}_1$ respectively.
	
	\paragraph{Case 1.} Fix $t \in (-\infty, F(R)/\tilde{\mu}_1]$. Let $\tilde{\alpha}$ be the unique number such that $\frac{\tilde{\alpha}}{1 + \kappa \tilde{\alpha}} = \alpha$ (i.e., $F(\tilde{\alpha}) = \log \alpha$). Let $\alpha'$ be an arbitrary number less than $\alpha$. Let $t_0 := \frac{1}{\tilde{\mu}_1} \log \frac{\alpha}{\alpha'}$. Then $t_0 = \frac{1}{\tilde{\mu}_1} \left( F(\tilde{\alpha}) - \log \alpha' \right) \le T_{\alpha'}(\tilde{\alpha})$. By~\Cref{lm:coupling-with-exp}, we have
	\[
		\normtwo{\phi\left(\alpha' \ve_1, t_0\right) - \alpha \ve_1} = \normtwo{\phi\left(\alpha' \ve_1, t_0\right) - e^{t_0 \mJ(\vzero)} \alpha' \ve_1} = O(\tilde{\alpha}^2).
	\]
	Let $r := F^{-1}(\tilde{\mu}_1t) \le R$. Then $t + \frac{1}{\tilde{\mu}_1} \log \frac{1}{\alpha} = T_{\tilde{\alpha}}(r)$ if $\tilde{\alpha} < r$.
	
	By~\Cref{lm:general-growth}, $\normtwo{\phi\left(\alpha' \ve_1, t_0\right)} \le \tilde{\alpha}$. Also, $\normtwosm{\alpha \ve_1} = \frac{\tilde{\alpha}}{1 + \kappa \tilde{\alpha}} \le \tilde{\alpha}$. By~\Cref{lm:coupling-with-trajectory},
	\begin{align*}
		\normtwo{\vz_{\alpha}(t) - \vz_{\alpha'}(t)} &= \normtwo{\phi\left(\alpha' \ve_1, t + \frac{1}{\tilde{\mu}_1} \log \frac{1}{\alpha'}\right) - \phi\left(\alpha \ve_1, t + \frac{1}{\tilde{\mu}_1} \log \frac{1}{\alpha}\right)} \\
		&= \normtwo{\phi\left(\phi(\alpha' \ve_1, t_0), t + \frac{1}{\tilde{\mu}_1} \log \frac{1}{\alpha}\right) - \phi\left(\alpha \ve_1, t + \frac{1}{\tilde{\mu}_1} \log \frac{1}{\alpha}\right)} \\
		&\le O(\tilde{\alpha}^2 \cdot e^{\tilde{\mu}_1 (t + \frac{1}{\tilde{\mu}_1} \log \frac{1}{\alpha}) + \kappa r}) \\
		&\le O\left( \frac{\tilde{\alpha}^2}{\alpha}\right).
	\end{align*}
	For $\alpha$ small enough, we have $\tilde{\alpha} = O(\alpha)$, so for any $\alpha' \in (0, \alpha)$,
	\[
		\normtwo{\vz_{\alpha}(t) - \vz_{\alpha'}(t)} = O(\alpha).
	\]
	This implies that $\{\vz_{\alpha}(t)\}$ satisfies Cauchy's criterion for every $t$, and thus the limit $\vz(t)$ exists for $t \le F(R)/\tilde{\mu}_1$. The convergence rate can be deduced by taking limits for $\alpha' \to 0$ on both sides.
	
	\paragraph{Case 2.} For $t = F(R)/\tilde{\mu}_1 + \tau$ with $\tau > 0$, $\phi(\vtheta, \tau)$ is locally Lipschitz with respect to $\vtheta$. So
	\begin{align*}
		\normtwo{\vz_\alpha(t) - \vz_{\alpha'}(t)} &= \normtwo{\phi(\vz_\alpha(F(R)/\tilde{\mu}_1), \tau) - \phi(\vz_{\alpha'}(F(R)/\tilde{\mu}_1), \tau)} \\
		&= O(\normtwo{\vz_\alpha(F(R)/\tilde{\mu}_1) - \vz_{\alpha'}(F(R)/\tilde{\mu}_1)}) \\
		&= O(\alpha),
	\end{align*}
	which proves the lemma for $t > F(R)/\tilde{\mu}_1$.
\end{proof}

\begin{proof}[Proof for \Cref{thm:general-escape-diagonal}]
	The existence of $\vz(t) := \lim_{\alpha\to 0}\vz_{\alpha}(t) =\lim_{\alpha\to 0} \phi\left(\alpha\ve_1, t + \frac{1}{\tilde{\mu}_1} \log \frac{1}{\alpha}\right)$ has already been proved in~\Cref{lm:z-existence}, where we show $\normtwo{ \vz_{\alpha}(t) - \vz(t)} = O(\alpha)$.
	
	 By the continuity of $\phi(\,\cdot\,,t)$ for every $t\in \R$, we have 
	\[
		\vz(t) =\lim_{\alpha\to 0} \phi\left(\alpha\tilde{\vv}_1, t + \frac{1}{\tilde{\mu}_1} \log \frac{1}{\alpha}\right)
		=\phi\left(\lim_{\alpha\to 0} \phi\left(\alpha\tilde{\vv}_1, \frac{1}{\tilde{\mu}_1} \log \frac{1}{\alpha}\right), t \right)
		=\phi\left(\vz(0), t \right).
	\]
	Now it is only left to prove~\eqref{eq:general-escape-err}. WLOG we can assume that $\normtwosm{\vdelta_{\alpha}}$ is decreasing and $\frac{\alpha}{2} \le \normtwosm{\vdelta_{\alpha}} \le \alpha$ (otherwise we can do reparameterization). Then our goal becomes proving
	\begin{equation} \label{eq:general-escape-err2}
	\normtwo{\vtheta_{\alpha}(t) - \vz(t)}  =  O\left(\alpha^{\frac{\tilde{\gamma}}{\tilde{\mu}_1 + \tilde{\gamma}}}\right).
	\end{equation}
	where $\vtheta_{\alpha}(t) := \phi\left(\vdelta_\alpha, t + \frac{1}{\tilde{\mu}_1} \log \frac{1}{\dotp{\vdelta_\alpha}{\ve_1}}\right)$. We prove \eqref{eq:general-escape-err2} in the cases of $t \in (-\infty, F(R)/\tilde{\mu}_1]$ and $t > F(R)/\tilde{\mu}_1$ respectively.
	
	\paragraph{Case 1.} Fix $t \in (-\infty, (F(R) + \log q)/\tilde{\mu}_1]$.	Let $\tilde{\alpha}_1 = \alpha^{\frac{\tilde{\gamma}}{\tilde{\mu}_1 + \tilde{\gamma}}}$. Let $\alpha_1 := e^{F(\tilde{\alpha}_1)} = \frac{\tilde{\alpha}_1}{1 + \kappa \tilde{\alpha}_1}$. Let $t_0 := \frac{1}{\tilde{\mu}_1}(F(\tilde{\alpha}_1) - \log \alpha) \le T_{\normtwosm{\vdelta_\alpha}}(\tilde{\alpha}_1)$. At time $t_0$, by \Cref{lm:J-basics} we have
	\begin{equation} \label{eq:use-lm-J-basics-item-2}
		\normtwo{e^{t_0 \mJ(\vzero)} - e^{\tilde{\mu}_1 t_0} \ve_1\ve_1^{\top}} = e^{\tilde{\mu}_2 t_0} = e^{\frac{\tilde{\mu}_2}{\tilde{\mu}_1} (F(\tilde{\alpha}_1) - \log \alpha)} = \left( \frac{\alpha_1}{\alpha} \right)^{\frac{\tilde{\mu}_2}{\tilde{\mu}_1}}.
	\end{equation}
	Let $q_{\alpha} := \dotp{\frac{\vdelta_\alpha}{\alpha}}{\ve_1}$. By~\Cref{def:con-align}, there exists $q > 0$ such that $q_{\alpha} \ge q$ for all sufficiently small $\alpha$. Then we have
	\begin{align*}
		\normtwo{\phi\left(\vdelta_\alpha, t_0\right) - \alpha_1q_{\alpha} \ve_1} &= \normtwo{\phi\left(\vdelta_\alpha, t_0\right) - e^{t_0 \mJ(\vzero)} \vdelta_\alpha} + \normtwo{\left(e^{t_0 \mJ(\vzero)} - e^{\tilde{\mu}_1 t_0} \ve_1\ve_1^{\top}\right) \vdelta_\alpha} \\
		&= O(\tilde{\alpha}_1^2) + \left( \frac{\alpha_1}{\alpha} \right)^{\frac{\tilde{\mu}_2}{\tilde{\mu}_1}} \normtwosm{\vdelta_\alpha} \\
		&= O(\tilde{\alpha}_1^2 + \alpha_1^{\tilde{\mu}_2/\tilde{\mu}_1} \alpha^{1 - \tilde{\mu}_2 / \tilde{\mu}_1}) \\
		&= O(\alpha_1^2).
	\end{align*}
	Let $r := F^{-1}(\tilde{\mu}_1 t + \log \frac{1}{q_{\alpha}}) \le R$. Then $t + \frac{1}{\tilde{\mu}_1} \log \frac{1}{\alpha_1 q_{\alpha}} = T_{\tilde{\alpha}}(r)$ if $\tilde{\alpha} < r$. By~\Cref{lm:general-growth}, $\normtwo{\phi\left(\vdelta_\alpha, t_0\right)} \le \tilde{\alpha}_1$. Also, $\normtwosm{\alpha_1 q_{\alpha} \ve_1}\le \alpha_1 = \frac{\tilde{\alpha}_1}{1 + \kappa \tilde{\alpha}_1} \le \tilde{\alpha}_1$. By~\Cref{lm:coupling-with-trajectory},
	\begin{align*}
		\normtwo{\vtheta_\alpha(t) - \vz_{\alpha_1}(t)} &\le \normtwo{\phi\left(\phi\left(\vdelta_\alpha, t_0\right), t + \frac{1}{\tilde{\mu}_1} \log \frac{1}{\alpha_1 q_{\alpha}} \right) - \phi\left(\alpha_1q_{\alpha} \ve_1, t + \frac{1}{\tilde{\mu}_1} \log \frac{1}{\alpha_1 q_{\alpha}}\right)} \\
		&= O\left(\alpha_1^2 \cdot e^{\tilde{\mu}_1 \left(t + \frac{1}{\tilde{\mu}_1} \log \frac{1}{\alpha_1 q_{\alpha}}\right) + \kappa r}\right) \\
		&= O(\alpha_1).
	\end{align*}
	Combining this with the convergence rate for $\vz_{\alpha_1}(t)$, we have
	\[
		\normtwo{\vtheta_\alpha(t) - \vz(t)} \le \normtwo{\vtheta_\alpha(t) - \vz_{\alpha_1}(t)} + \normtwo{\vz_{\alpha_1}(t) - \vz(t)} = O(\alpha_1).
	\]
	
	\paragraph{Case 2.} For $t = (F(R) + \log q)/\tilde{\mu}_1 + \tau$ with $\tau > 0$, $\phi(\vtheta, \tau)$ is locally Lipschitz with respect to $\vtheta$. So
	\begin{align*}
		\normtwo{\vtheta_\alpha(t) - \vz(t)} &= \normtwo{\phi(\vtheta_\alpha((F(R) + \log q)/\tilde{\mu}_1), \tau) - \phi(\vz((F(R) + \log q)/\tilde{\mu}_1), \tau)} \\
		&= O(\normtwo{\vtheta_\alpha((F(R) + \log q)/\tilde{\mu}_1) - \vz((F(R) + \log q)/\tilde{\mu}_1)}) \\
		&= O(\alpha_1),
	\end{align*}
	which proves \eqref{eq:general-escape-err2} for $t > (F(R) + \log q)/\tilde{\mu}_1$.
\end{proof}

\subsection{Extension to Non-Diagonalizable Case} \label{sec:non-diagonalizable}

The proof in \Cref{sec:proof-diagonal} can be generalized to the case where $\mJ(\vzero)$. Now we state the theorem formally and sketch the proof idea. We use the notations $\vg(\vtheta), \phi(\vtheta_0, t), \mJ(\vtheta)$ as in \Cref{sec:dynamical}, but we do not assume that $\mJ(\vzero)$ is diagonalizable. Instead, we use $\tdmu_1, \tdmu_2, \dots, \tdmu_d \in \C$ to denote the eigenvalues of $\mJ(\vzero)$, repeated according to algebraic multiplicity. We sort the eigenvalues in the descending order of the real part of each eigenvalue, i.e., $\Re(\tdmu_1) \ge \Re(\tdmu_2) \ge \cdots \ge \Re(\tdmu_d)$, where $\Re(z)$ stands for the real part of a complex number $z \in \C$. We call the eigenvalue with the largest real part the top eigenvalue.

\begin{theorem}
	Assume that $\vtheta = \vzero$ is a critical point and the following regularity conditions hold:
	\begin{enumerate}
		\item $\vg(\vtheta)$ is $\contC^2$-smooth;
		\item $\phi(\vtheta_0, t)$ exists for all $\vtheta_0$ and $t$;
		\item The top eigenvalue of $\mJ(\vzero)$ is unique and is a positive real number, i.e.,
		\[
			\tdmu_1 > \max\{\Re(\tdmu_2), 0\}.
		\]
	\end{enumerate} 
	Let $\tilde{\vv}_1, \tilde{\vu}_1$ be the left and right eigenvectors associated with $\tdmu_1$, satisfying $\tilde{\vu}_1^{\top} \tilde{\vv}_1 = 1$. Let $\vz_{\alpha}(t) := \phi(\alpha\tilde{\vv}_1, t + \frac{1}{\tdmu_1} \log \frac{1}{\alpha})$ for every $\alpha > 0$, then $\forall t \in \R$, $\vz(t) := \lim\limits_{\alpha \to 0} \vz_{\alpha}(t)$ exists and $\vz(t) = \phi(\vz(0),t)$. If $\vdelta_\alpha$ converges to $\vzero$ with positive alignment with $\tilde{\vu}_1$ as $\alpha \to 0$, then for any $t\in\R$ and for any $\epsilon > 0$, there is a constant $C > 0$ such that for every sufficiently small $\alpha$,
	\begin{equation} \label{eq:general-escape-err-jordan}
		\normtwo{\phi\left(\vdelta_\alpha, t + \tfrac{1}{\tdmu_1} \log \tfrac{1}{\dotp{\vdelta_\alpha}{\tilde{\vu}_1}}\right) - \vz(t)} \le  C \cdot \normtwosm{\vdelta_\alpha}^{\frac{\tilde{\gamma}}{\tdmu_1 + \tilde{\gamma}} - \epsilon},
	\end{equation}
	where $\tilde{\gamma} := \tdmu_1 - \Re(\tdmu_2)$ is the eigenvalue gap.
\end{theorem}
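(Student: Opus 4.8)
The plan is to follow the two-step structure of \Cref{sec:proof-reduce-to-diagonal,sec:proof-diagonal}, with diagonalization replaced by the real Jordan form together with an adapted inner product. First I would reduce, exactly as in the reduction of \Cref{thm:general-escape} to \Cref{thm:general-escape-diagonal}, by a \emph{real} linear change of basis $\hat{\vtheta}=\tdmV^{-1}\vtheta$ to the case where $\mJ(\vzero)$ is block-diagonal with a $1\times1$ top-left block $[\tdmu_1]$ sitting in the coordinate $\ve_1$ (this block is $1\times1$ because $\tdmu_1$ has algebraic multiplicity one) and a block $\mJ'$ acting on the invariant complement $\mathrm{span}(\ve_2,\dots,\ve_d)$, all of whose eigenvalues have real part $\le\Re(\tdmu_2)$. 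Here $\tdmV$ is real (real Jordan form), and in the new coordinates the left and right eigenvectors attached to $\tdmu_1$ are both $\ve_1$, so this reduction step — including the fact that it preserves the inner product $\dotp{\vdelta_\alpha}{\tilde{\vu}_1}$ and positive alignment — is word-for-word the one in \Cref{sec:proof-reduce-to-diagonal}.

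Next I would replace \Cref{lm:J-basics} by an $\epsilon$-approximate version. Fix $\epsilon\in(0,\tilde{\gamma})$. Since $\mJ'$ has spectral abscissa at most $\Re(\tdmu_2)$, the classical Jordan-block rescaling produces an inner product on $\mathrm{span}(\ve_2,\dots,\ve_d)$ in which the matrix measure (logarithmic norm) of $\mJ'$ is at most $\Re(\tdmu_2)+\epsilon$; taking its orthogonal direct sum with the standard inner product on $\mathrm{span}(\ve_1)$ yields a Euclidean structure in which (i) $\vh^{\top}\mJ(\vzero)\vh\le\tdmu_1\norm{\vh}^2$ for all $\vh$, using $\Re(\tdmu_2)+\epsilon<\tdmu_1$, and (ii) $\norm{e^{t\mJ(\vzero)}-e^{\tdmu_1 t}\ve_1\ve_1^{\top}}=\norm{e^{t\mJ'}}\le e^{(\Re(\tdmu_2)+\epsilon)t}$ for all $t\ge0$. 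These are precisely Items 1 and 2 of \Cref{lm:J-basics} with $\tdmu_2$ replaced by $\Re(\tdmu_2)+\epsilon$. Working in this adapted norm is harmless, since all norms on $\R^d$ are equivalent; the equivalence constant (and hence the final $C$) may depend on $\epsilon$, which is allowed.

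With these two facts in hand the remaining lemmas and the final argument go through with only bookkeeping changes. \Cref{lm:general-growth}, \Cref{lm:coupling-with-exp}, \Cref{lm:coupling-with-trajectory} and \Cref{lm:z-existence} use only Item 1 together with the fact that $\ve_1$ is a genuine eigenvector, so they carry over verbatim; in particular $\vz(t)=\lim_{\alpha\to0}\vz_\alpha(t)$ exists, $\vz(t)=\phi(\vz(0),t)$, and $\normtwo{\vz_\alpha(t)-\vz(t)}=O(\alpha)$. Item 2 (in its $\epsilon$-approximate form) enters only through the analogue of \eqref{eq:use-lm-J-basics-item-2}: the factor $e^{\tdmu_2 t_0}=(\alpha_1/\alpha)^{\tdmu_2/\tdmu_1}$ becomes $(\alpha_1/\alpha)^{(\Re(\tdmu_2)+\epsilon)/\tdmu_1}$, so one takes $\tilde{\alpha}_1=\alpha^{(\tilde{\gamma}-\epsilon)/(\tdmu_1+\tilde{\gamma}-\epsilon)}$ in place of $\alpha^{\tilde{\gamma}/(\tdmu_1+\tilde{\gamma})}$, which balances the two error contributions $\tilde{\alpha}_1^2$ and $\alpha_1^{(\Re(\tdmu_2)+\epsilon)/\tdmu_1}\alpha^{1-(\Re(\tdmu_2)+\epsilon)/\tdmu_1}$ at $O(\alpha_1)$. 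Combining with $\normtwo{\vz_{\alpha_1}(t)-\vz(t)}=O(\alpha_1)$, and reducing beforehand to $\tfrac{\alpha}{2}\le\normtwosm{\vdelta_\alpha}\le\alpha$ by reparametrization as in the diagonalizable proof, this yields $\normtwo{\vtheta_\alpha(t)-\vz(t)}=O(\alpha^{(\tilde{\gamma}-\epsilon)/(\tdmu_1+\tilde{\gamma}-\epsilon)})$; since $\tfrac{\tilde{\gamma}-\epsilon}{\tdmu_1+\tilde{\gamma}-\epsilon}\to\tfrac{\tilde{\gamma}}{\tdmu_1+\tilde{\gamma}}$ as $\epsilon\to0$, relabeling $\epsilon$ and absorbing its dependence into $C$ gives \eqref{eq:general-escape-err-jordan}.

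The main obstacle is exactly the loss of the clean identity $\norm{e^{t\mJ(\vzero)}-e^{\tdmu_1 t}\ve_1\ve_1^{\top}}=e^{\tdmu_2 t}$: Jordan blocks attached to the subdominant eigenvalues contribute polynomial-in-$t$ factors, and in the plain Euclidean norm the matrix measure of $\mJ(\vzero)$ need not equal $\tdmu_1$. The adapted-norm / $\epsilon$-rescaling device fixes both at once, but only up to an $\epsilon$ worsening of the effective attenuation rate $\Re(\tdmu_2)$, and this is precisely the gap between \eqref{eq:general-escape-err} and \eqref{eq:general-escape-err-jordan}. Everything else — the growth bound, the coupling with the linearized flow, and the Cauchy argument for the limiting trajectory — is insensitive to diagonalizability.
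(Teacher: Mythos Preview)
Your proposal is correct and follows essentially the same approach as the paper: reduce to real Jordan form, then use the $\delta$-rescaling of Jordan blocks (what you call the adapted inner product / logarithmic-norm device) to recover Items 1 and 2 of \Cref{lm:J-basics} with $\tdmu_2$ replaced by $\Re(\tdmu_2)+\epsilon$, after which the diagonal-case lemmas and the argument around \eqref{eq:use-lm-J-basics-item-2} go through verbatim with the degraded gap. The paper carries this out by writing the rescaled blocks $\mJ^{(r)}_{a,\delta}$ explicitly and bounding $\vh^\top\mJ_\delta\vh$ and $\normtwosm{e^{t\mJ_{\delta[j]}}}$ by hand, but the content is identical to your sketch.
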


\begin{proof}[Proof Sketch]
	Define the following two types of matrices. For $r \ge 1, a, \delta \in \R$, we define
	\[
		\mJ^{(r)}_{a,\delta} := \begin{bmatrix}
			a & \delta   &     &        &        &     \\
			& a & \delta   &        &        &     \\
			&     & a & \delta      &        &     \\
			&     &     & \ddots & \ddots &     \\
			&     &     &        & a    & \delta   \\
			&     &     &        &        & a
		\end{bmatrix} \in \R^{r \times r}.
	\]
	For $r \ge 1, a, b, \delta \in \R$, we define
	\[
		\mJ^{(r)}_{a,b,\delta} := \begin{bmatrix}
			\mC & \delta\mI   &     &        &        &     \\
			& \mC & \delta\mI   &        &        &     \\
			&     & \mC & \delta\mI      &        &     \\
			&     &     & \ddots & \ddots &     \\
			&     &     &        & \mC    & \delta\mI   \\
			&     &     &        &        & \mC
		\end{bmatrix} \in \R^{2r \times 2r},
	\]
	where $\mC = \left[\begin{smallmatrix} a & -b \\ b & a \end{smallmatrix}\right] \in \R^{2 \times 2}$.
	
	By linear algebra, the real matrix $\mJ(\vzero)$ can be written in the real Jordan normal form, i.e., $\mJ(\vzero) = \tdmV \diag(\mJ_{[1]}, \dots, \mJ_{[m]}) \tdmV^{-1}$, where $\tdmV \in \R^{d \times d}$ is an invertible matrix, and each $\mJ_{[j]}$ is a real Jordan block. Recall that there are two types of real Jordan blocks, $\mJ^{(r)}_{a,1}$ or $\mJ^{(r)}_{a,b,1}$. The former one is associated with a real eigenvalue $a$, and the latter one is associated with a pair of complex eigenvalues $a \pm bi$. The sum of sizes of all Jordan blocks corresponding to a real eigenvalue $a$ is its algebraic multiplicity. The sum of sizes of all Jordan blocks corresponding to a pair of complex eigenvalues $a \pm bi$ is two times the algebraic multiplicity of $a + bi$ or $a - bi$ (note that $a \pm bi$ have the same multiplicity).
	
	It is easy to see that $\mJ^{(r)}_{a,\delta} = \mD \mJ^{(r)}_{a,1} \mD^{-1}$ for $\mD = \diag(\delta^r, \delta^{r-1}, \dots, \delta) \in \R^{r \times r}$ and $\mJ^{(r)}_{a,b,\delta} = \mD \mJ^{(r)}_{a,b,1} \mD^{-1}$ for $\mD = \diag(\delta^r, \delta^r, \delta^{r-1}, \delta^{r-1}, \dots, \delta, \delta) \in \R^{2r \times 2r}$. This means for every $\delta > 0$ there exists $\tdmV_{\delta}$ such that $\mJ(\vzero) = \tdmV_{\delta} \mJ_{\delta} \tdmV_{\delta}^{-1}$, where $\mJ_{\delta} := \diag(\mJ_{\delta[1]}, \dots, \mJ_{\delta[m]})$, $\mJ_{\delta[j]} := \mJ^{(r)}_{a,\delta}$ if $\mJ_{[j]} := \mJ^{(r)}_{a,1}$, or $\mJ_{\delta[j]} := \mJ^{(r)}_{a,b,\delta}$ if $\mJ_{[j]} := \mJ^{(r)}_{a,b,1}$. Since the top eigenvalue of $\mJ(\vzero)$ is positive and unique, $\tdmu_1$ corresponds to only one block $[\tdmu_1] \in \R^{1 \times 1}$. WLOG we let $\mJ_1 = [\tdmu_1]$, and thus $\mJ_{\delta[1]} = [\tdmu_1]$.

	We only need to select a parameter $\delta > 0$ and prove the theorem in the case of $\mJ(\vzero) = \mJ_{\delta}$ since we can change the basis in a similar way as we have done in \Cref{sec:proof-reduce-to-diagonal}. By scrutinizing the proof for \Cref{thm:general-escape-diagonal}, we can find that we only need to reprove \Cref{lm:J-basics}. However, \Cref{lm:J-basics} may not be correct since $\mJ(\vzero)$ is not diagonal anymore. Instead, we prove the following:
	\begin{enumerate}
		\item If $\delta \in (0, \tdga)$, then $\vh^{\top} \mJ_{\delta} \vh \le \tdmu_1 \normtwosm{\vh}^2$ for all $\vh \in \R^d$;
		\item For any $\tdmu'_2 \in (\Re(\tdmu_2), \tdmu_1)$, if $\delta \in (0, \tdmu'_2 - \Re(\tdmu_2))$, then $\normtwo{e^{t \mJ_{\delta}} - e^{\tdmu_1 t} \ve_1\ve_1^{\top}} \le e^{\tdmu'_2 t}$ for all $t \ge 0$.
	\end{enumerate}
	
	\myparagraph{Proof for Item 1.} Let $\gK$ be the set of pairs $(k_1, k_2)$ such that $k_1 \ne k_2$ and the entry of $\mJ_{\delta}$ at the $k_1$-th row and the $k_2$-th column is non-zero. Then we have
	\begin{align*}
		\vh^{\top} \mJ_{\delta} \vh = \vh^{\top} \frac{\mJ_{\delta} + \mJ_{\delta}^{\top}}{2} \vh &= \sum_{k=1}^{d} \Re(\tdmu_k) h_k^2 +  \sum_{(k_1, k_2) \in \gK} h_{k_1} h_{k_2} \delta \\
		&\le \sum_{k=1}^{d} \Re(\tdmu_k) h_k^2 +  \sum_{(k_1, k_2) \in \gK} \frac{h^2_{k_1} + h^2_{k_2}}{2}\delta.
	\end{align*}
	Note that $\Re(\tdmu_k) \le \Re(\tdmu_2)$ for $k \ge 2$. Also note that there is no pair in $\gK$ has $k_1 = 1$ or $k_2 = 1$, and for every $k \ge 2$ there are at most two pairs in $\gK$ has $k_1 = k$ or $k_2 = k$. Combining all these together gives
	\[
		\vh^{\top} \mJ_{\delta} \vh \le \tdmu_1 h_1^2 + (\Re(\tdmu_2) + \delta)\sum_{k=2}^{d} h_k^2 \le \tdmu_1 \normtwosm{\vh}^2,
	\]
	which proves Item 1.
	
	\myparagraph{Proof for Item 2.} Since $\mJ_{\delta}$ is block diagonal, we only need to prove that $\normtwosm{e^{t\mJ_{\delta[j]}}} \le e^{\tdmu'_2 t}$ for every $j \ge 2$. If $\mJ_{\delta[j]} = \mJ^{(r)}_{a,\delta} = a\mI + \delta \mN$, where $\mN$ is the nilpotent matrix, then
	\[
		e^{t\mJ_{\delta[j]}} = e^{at\mI + \delta t \mN} = e^{at\mI} e^{\delta t \mN} =  e^{at} e^{\delta t \mN},
	\]
	where the second equality uses the fact that $\mI$ and $\mN$ are commutable. So we have
	\[
		\normtwosm{e^{t\mJ_{\delta[j]}}} \le e^{at} \normtwosm{e^{\delta t \mN}} = e^{at} e^{\delta t\normtwosm{\mN}} \le e^{(a + \delta) t}.
	\]
	If $\mJ_{\delta[j]} = \mJ^{(r)}_{a,\delta} = \mD + \delta \mN^2$, where $\mD = \diag(\mC, \mC, \dots, \mC)$ and $\mN$ is the nilpotent matrix, then
	\[
		e^{t\mJ_{\delta[j]}} = e^{t\mD + \delta t \mN^2} = e^{t\mD} e^{\delta t \mN^2},
	\]
	where the second equality uses the fact that $\mD$ and $\mN^2$ are commutable. Note that $e^{t\mC} = e^{at} \left[ \begin{smallmatrix}
		\cos(bt) & -\sin(bt) \\
		\sin(bt) & \cos(bt)
	\end{smallmatrix} \right]$, which implies $\normtwosm{e^{t\mD}} = \normtwosm{e^{t\mC}} = e^{at}$. So we have
	\[
		\normtwosm{e^{t\mJ_{\delta[j]}}} \le \normtwosm{e^{t\mD}} \cdot \normtwosm{e^{\delta t \mN^2}} = e^{at} e^{\delta t\normtwosm{\mN^2}} \le e^{(a + \delta) t}.
	\]
	Since $\delta \in (0, \tdmu'_2 - \Re(\tdmu_2))$, we know that $a + \delta < \tdmu'_2$, which completes the proof.
	
	\myparagraph{Proof for a fixed $\delta$.} Since Item 1 continues to hold for $\delta \in (0, \tdga)$, \Cref{lm:general-growth,lm:coupling-with-exp,lm:coupling-with-trajectory,lm:z-existence} also hold. This proves that $\vz(t)$ exists and satisfies \eqref{eq:dynamical}.
	
	It remains to prove \eqref{eq:general-escape-err-jordan} for any $\epsilon > 0$. Let $\tdga' \in (0, \tdga)$ be a number such that $\frac{\tdga'}{\tdmu_1 + \tdga'} \ge \frac{\tdga}{\tdmu_1 + \tdga} - \epsilon$. Fix $\tdmu'_2 = \tdmu_1 - \tdga', \delta = \tdmu'_2 - \Re(\tdmu_2)$.
	By Item 2, we have $\normtwo{e^{t \mJ_{\delta}} - e^{\tdmu_1 t} \ve_1\ve_1^{\top}} \le e^{\tdmu'_2 t}$ for all $t \ge 0$. By scrutinizing the proof for \Cref{thm:general-escape-diagonal}, we can find that the only place we use Item 2 in \Cref{lm:J-basics} is in \eqref{eq:use-lm-J-basics-item-2}. For proving \eqref{eq:general-escape-err-jordan}, we can repeat the proof while replacing all the occurrences of $\tdmu_2$ by $\tdmu'_2$. Then we know that for every $t \in \R$, there is a constant $C > 0$ such that
	\[
		\normtwo{\tdmV_{\delta}^{-1} \phi\left(\vdelta_\alpha, t + \tfrac{1}{\tdmu_1} \log \tfrac{1}{\dotp{\vdelta_\alpha}{\tilde{\vu}_1}}\right) - \tdmV_{\delta}^{-1}\vz(t)} \le  C \cdot \normtwosm{\tdmV_{\delta}^{-1} \vdelta_\alpha}^{\frac{\tdga'}{\tdmu_1 + \tdga'}},
	\]
	for every sufficiently small $\alpha$. By definition of $\tdga'$, $\frac{\tdga'}{\tdmu_1 + \tdga'} \ge \frac{\tdga}{\tdmu_1 + \tdga} - \epsilon$. Since $\vdelta_{\alpha} \to \vzero$ as $\alpha \to 0$, we have $\normtwosm{\tdmV_{\delta}^{-1} \vdelta_\alpha} < 1$ for sufficiently small $\alpha$. Then we have
	\begin{align*}
		\normtwo{\phi\left(\vdelta_\alpha, t + \tfrac{1}{\tdmu_1} \log \tfrac{1}{\dotp{\vdelta_\alpha}{\tilde{\vu}_1}}\right) - \vz(t)} &\le \normtwosm{\tdmV_{\delta}} \cdot
		\normtwo{\tdmV_{\delta}^{-1} \phi\left(\vdelta_\alpha, t + \tfrac{1}{\tdmu_1} \log \tfrac{1}{\dotp{\vdelta_\alpha}{\tilde{\vu}_1}}\right) - \tdmV_{\delta}^{-1}\vz(t)} \\
		&\le \normtwosm{\tdmV_{\delta}} \cdot C \cdot \normtwosm{\tdmV_{\delta}^{-1} \vdelta_\alpha}^{\frac{\tdga'}{\tdmu_1 + \tdga'}} \\
		&\le C \cdot \normtwosm{\tdmV_{\delta}} \cdot \normtwosm{\tdmV_{\delta}^{-1}}^{\frac{\tdga'}{\tdmu_1 + \tdga'}} \cdot \normtwosm{\vdelta_\alpha}^{\frac{\tdga}{\tdmu_1 + \tdga} - \epsilon}.
	\end{align*}
	Absorbing $\normtwosm{\tdmV_{\delta}} \cdot \normtwosm{\tdmV_{\delta}^{-1}}^{\frac{\tdga'}{\tdmu_1 + \tdga'}}$ into $C$ proves \eqref{eq:general-escape-err-jordan}.
\end{proof}

\section{Eigenvalues of Jacobians and Hessians} \label{sec:app-eigen}
In this section we analyze the eigenvalues of the Jacobian $\mJ(\mW)$ at critical points of \eqref{eq:ode-W}.

For notation simplicity, we write $\symz{\mA} := \mA + \mA^{\top}$ to denote the symmetric matrix produced by adding up $\mA$ and its transpose, and write $\acom{\mA}{\mB} = \mA \mB + \mB \mA$ to denote the anticommutator of two matrices $\mA, \mB$. Then $\vg(\mW)$ can be written as $\vg(\mW) := -\acom{\nabla f(\mW)}{\mW}$.

Let $\zU\in\R^{d\times r}$ be a stationary point of the function $\Loss: \R^{d \times r} \to \R, \Loss(\mU)=\frac{1}{2}f(\mU\mU^\top)$, i.e., $\nabla \Loss(\zU) = \nabla f(\zU\zU^\top)\zU =\vzero$. By \Cref{lm:stat-and-criti}, this implies
\begin{equation} \label{eq:fww-zero}
	\nabla f(\zW) \zW = \vzero
\end{equation}
for $\zW := \zU\zU^{\top}$, and thus $\zW$ is a critical point of \eqref{eq:ode-W}.

For a real-valued or vector-valued function $F(\vtheta)$, we use $DF(\vtheta)[\vdelta], D^2F(\vtheta)[\vdelta_1,\vdelta_2]$ to denote the first- and second-order directional derivatives of $F(\,\cdot\,)$ at $\vtheta$.

Let $\mathcal{X}$ be a linear space, which can be $\R^{d \times d}$ or $\R^{d \times r}$. For a function $F: \mathcal{X} \to \mathcal{X}$, we use $DF(\vtheta)$ to denote the directional derivative of $F$ at $\vtheta$, represented by the linear operator
\[
	DF(\vtheta)[\mDelta]: \mathcal{X} \to \mathcal{X}, \mDelta \mapsto DF(\vtheta)[\mDelta] = \lim_{t \to 0} \frac{F(\vtheta + t \mDelta) - F(\vtheta)}{t}.
\]
We also write $DF(\vtheta)[\mDelta_1, \mDelta_2] := \dotp{DF(\vtheta)[\mDelta_1]}{\mDelta_2}$.

For a function $F: \mathcal{X} \to \R$, we use $D^2F(\vtheta) = D(\nabla F(\vtheta))$ to denote the second directional derivative of $F$ at $\vtheta$, i.e., $D^2F(\vtheta)[\mDelta] = D(\nabla F(\vtheta))[\mDelta], D^2F(\vtheta)[\mDelta_1, \mDelta_2] = D(\nabla F(\vtheta))[\mDelta_1, \mDelta_2]$.

Define $\mJ(\mW) := D\vg(\mW)$. By simple calculus, we can compute the formula for $\mJ(\zW)$:
\begin{align*}
	\mJ(\zW)[\mDelta] &= -\acom{\nabla f(\zW)}{\mDelta} - \acom{D^2f(\zW)[\mDelta]}{\zW}, \\
	\mJ(\zW)[\mDelta_1, \mDelta_2] &= - \dotp{\nabla f(\zW)}{\symz{\mDelta_1\mDelta_2^{\top}}} - D^2 f(\zW)[\mDelta_1, \symz{\zW\mDelta_2^{\top}}],
\end{align*}
where $\mDelta, \mDelta_1, \mDelta_2 \in \R^{d \times d}$.

We can also compute the formula for $D^2 \Loss(\zU)$:
\begin{align*}
D^2 \Loss(\zU)[\mDelta] &= \nabla f(\zW)\mDelta + D^2 f(\zW)[\symz{\mDelta \zU^{\top}}]\zU, \\
D^2 \Loss(\zU)[\mDelta_1, \mDelta_2] &= \frac{1}{2}\left(\dotp{\nabla f(\zW)}{\symz{\mDelta_1\mDelta_2^{\top}}} + D^2 f(\zW)[\symz{\mDelta_1 \zU^{\top}}, \symz{\mDelta_2 \zU^{\top}}]\right),
\end{align*}
where $\mDelta, \mDelta_1, \mDelta_2 \in \R^{d \times r}$.

\subsection{Eigenvalues at the Origin}

The eigenvalues of $\mJ(\vzero)$ is given in \Cref{lm:eigen-rank-1}. Now we provide the proof.
\begin{proof}[Proof for \Cref{lm:eigen-rank-1}]
For $\zW = \vzero$, we have
\begin{align*}
	\mJ(\vzero)[\mDelta] &= -\nabla f(\vzero)\mDelta -\mDelta\nabla f(\vzero) \\
	\mJ(\vzero)[\mDelta_1,\mDelta_2] &= - \dotp{\nabla f(\vzero)}{\symz{\mDelta_1\mDelta_2^{\top}}}
\end{align*}
It is easy to see from the second equation that $\mJ(\vzero)$ is symmetric.

Let $-\nabla f(\vzero) = \sum_{i=1}^{d} \mu_i \vu_{1[i]} \vu_{1[i]}^{\top}$ be the eigendecomposition of the symmetric matrix $-\nabla f(\vzero)$. Then we have
\begin{align*}
	\mJ(\vzero)[\mDelta] &= \sum_{i=1}^{d} \mu_i \left(\vu_{1[i]} \vu_{1[i]}^{\top} \mDelta +  \mDelta \vu_{1[i]} \vu_{1[i]}^{\top}\right) \\
	&= \sum_{i=1}^{d} \sum_{j=1}^{d} \mu_i \left(\vu_{1[i]} \vu_{1[i]}^{\top} \mDelta  \vu_{1[j]} \vu_{1[j]}^{\top} +  \vu_{1[j]} \vu_{1[j]}^{\top} \mDelta \vu_{1[i]} \vu_{1[i]}^{\top}\right) \\
	&= \sum_{i=1}^{d} \sum_{j=1}^{d} (\mu_i + \mu_j) \vu_{1[i]} \vu_{1[i]}^{\top} \mDelta  \vu_{1[j]} \vu_{1[j]}^{\top} \\
	&= \sum_{i=1}^{d} \sum_{j=1}^{d} (\mu_i + \mu_j) \dotp{\mDelta}{\vu_{1[i]}\vu_{1[j]}^{\top}} \vu_{1[i]} \vu_{1[j]}^{\top},
\end{align*}
which proves \eqref{eq:mJ0-formula}.

For $\mDelta = \vu_{1[i]} \vu_{1[j]}^{\top} + \vu_{1[j]} \vu_{1[i]}^{\top}$, we have
\[
	\mJ(\vzero)[\mDelta] = (\mu_i + \mu_j) \vu_{1[i]} \vu_{1[j]}^{\top} + (\mu_i + \mu_j) \vu_{1[j]} \vu_{1[i]}^{\top} = (\mu_i + \mu_j) \mDelta.
\]
So $\vu_{1[i]} \vu_{1[j]}^{\top} + \vu_{1[j]} \vu_{1[i]}^{\top}$ is an eigenvector of $\mJ(\vzero)$ associated with eigenvalue $\mu_i + \mu_j$. Note that $\{\vu_{1[i]} \vu_{1[j]}^{\top} + \vu_{1[j]} \vu_{1[i]}^{\top} : i, j \in [d]\}$ spans all the symmetric matrices, so these are all the eigenvectors in the space of symmetric matrices.

For every antisymmetric matrix $\mDelta$ (i.e., $\mDelta = -\mDelta^{\top}$), we have
\[
	\mJ(\vzero)[\mDelta] = \mJ(\vzero)[\mDelta^{\top}] = \mJ(\vzero)[-\mDelta].
\]
So $\mJ(\vzero)[\mDelta] = \vzero$ and every antisymmetric matrix is an eigenvector associated with eigenvalue $0$.

Since every matrix can be expressed as the sum of a symmetric matrix and an antisymmetric matrix, we have found all the eigenvalues.
\end{proof}

\subsection{Eigenvalues at Second-Order Stationary Points}

Now we study the eigenvalues of $\mJ(\zW)$ when $\zU$ is a second-order stationary point of $\Loss(\,\cdot\,)$, i.e., $\nabla \Loss(\zU) = \vzero, D^2 \Loss(\zU)[\mDelta, \mDelta] \ge \vzero$ for all $\mDelta \in \R^{d \times r}$. We further assume that $\zU$ is full-rank, i.e., $\rank(\zU) = r$. This condition is meet if $\zW := \zU\zU^{T}$ is a local minimizer of $\ffun$ in $\psd$ but not a minimizer in $\psd$.
\begin{lemma} \label{lm:L-sosp-full-rank}
	For $r \le d$, if $\zU \in \R^{d \times r}$ is a second-order stationary point of $\Lossfun$, then either $\rank(\zU) = \rank(\zW) = r$, or $\zW$ is a minimizer of $\ffun$ in $\psd$, where $\zW = \zU\zU^\top$.
\end{lemma}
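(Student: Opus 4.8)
The plan is to argue by contraposition on the column rank of $\zU$: assuming $\rank(\zU) < r$, I will extract a rank-deficiency direction, feed a suitably chosen perturbation into the second-order stationarity inequality, and deduce $\nabla f(\zW) \succeq \vzero$; then \Cref{lm:f-global-min-psd} immediately gives that $\zW$ is a minimizer of $\ffun$ in $\psd$. Since $\rank(\zW) = \rank(\zU\zU^\top) = \rank(\zU)$ always holds, the only alternative to ``$\zW$ is a minimizer'' is $\rank(\zU) = \rank(\zW) = r$, which is exactly the stated dichotomy.

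Concretely, suppose $\rank(\zU) < r$. Then the columns of $\zU$ are linearly dependent, so there is a nonzero $\vc \in \R^{r}$ with $\zU\vc = \vzero$. For an arbitrary $\vb \in \R^{d}$, I would take the test direction $\mDelta := \vb\vc^{\top} \in \R^{d \times r}$. Then $\mDelta\zU^{\top} = \vb(\zU\vc)^{\top} = \vzero$, so $\symz{\mDelta\zU^{\top}} = \vzero$ and the $D^2 f$ term in the Hessian formula for $D^2\Loss(\zU)$ drops out entirely. Using $\mDelta\mDelta^{\top} = \normtwosm{\vc}^{2}\,\vb\vb^{\top}$, what remains is
\[
	D^2\Loss(\zU)[\mDelta,\mDelta] \;=\; \tfrac12 \dotp{\nabla f(\zW)}{\symz{\mDelta\mDelta^{\top}}} \;=\; \normtwosm{\vc}^{2}\;\vb^{\top}\nabla f(\zW)\,\vb .
\]
Because $\zU$ is a second-order stationary point, the left-hand side is nonnegative for every $\vb$, and since $\vc \neq \vzero$ this forces $\vb^{\top}\nabla f(\zW)\vb \ge 0$ for all $\vb$, i.e.\ $\nabla f(\zW) \succeq \vzero$.

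Finally, a second-order stationary point is in particular a stationary point of $\Lossfun$, so \Cref{lm:f-global-min-psd} applies: combined with $\nabla f(\zW) \succeq \vzero$, it yields that $\zW$ attains the global minimum of $\ffun$ over $\psd$. This completes the contrapositive argument and hence the lemma. I do not anticipate a genuine obstacle here; the only points needing a little care are the bookkeeping in the Hessian formula (verifying that the $D^2 f$ term truly vanishes for $\mDelta = \vb\vc^{\top}$ and that $\symz{\mDelta\mDelta^{\top}} = 2\normtwosm{\vc}^{2}\vb\vb^{\top}$) and recalling the elementary identity $\rank(\zU\zU^{\top}) = \rank(\zU)$ so that the two stated cases are exhaustive.
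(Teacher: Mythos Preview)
Your proof is correct and follows essentially the same approach as the paper: both exploit a null vector of $\zU$ (your $\vc$, the paper's unit vector $\vq$) to build the rank-one test direction $\vb\vc^\top$, observe that $\symz{\mDelta\zU^\top}=\vzero$ kills the $D^2f$ term, and read off the sign of $\vb^\top\nabla f(\zW)\vb$ from second-order stationarity. The only cosmetic difference is that the paper phrases it as a contradiction (assuming both $\rank(\zU)<r$ and $\zW$ not a minimizer, then exhibiting a single $\vv$ with $\vv^\top\nabla f(\zW)\vv<0$) rather than your direct contraposition quantifying over all $\vb$.
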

\begin{proof}
	Assume to the contrary that $\zU$ has rank $< r$ and $\zW$ is a minimizer of $\ffun$ in $\psd$. The former one implies that there exists a unit vector $\vq \in \R^{r}$ such that $\zU \vq = \vzero$, and the latter one implies that there exists $\vv \in \R^{d}$ such that $\vv^{\top}\nabla f(\zW)\vv < 0$ by \Cref{lm:f-global-min-psd}.

	Let $\mDelta = \vv \vq^{\top}$. Then we have
	\begin{align*}
		D^2 \Loss(\zU)[\mDelta, \mDelta] &= \dotp{\nabla f(\zW)}{\vv\vv^{\top}} + \frac{1}{2} D^2 f(\zW)[\symz{\vv (\zU\vq)^{\top}}, \symz{\vv (\zU\vq)^{\top}}] \\
		&= \dotp{\nabla f(\zW)}{\vv\vv^{\top}} + \frac{1}{2} D^2 f(\zW)[\vzero, \vzero] \\
		&= \dotp{\nabla f(\zW)}{\vv\vv^{\top}}.
	\end{align*}
	So $D^2 \Loss(\zU)[\mDelta, \mDelta] < 0$, which leads to a contradiction.
\end{proof}

By~\eqref{eq:fww-zero}, the symmetric matrices $-\nabla f(\zW)$ and $\zW$ commute, so they can be simultaneously diagonalizable. Since \eqref{eq:fww-zero} also implies that they have different column spans, we can have the following diagonalization:
\begin{equation} \label{eq:fW-W-sim-diag}
	-\nabla f(\zW) = \sum_{i=1}^{d-r} \mu_i \vv_i \vv_i^{\top}, \qquad \zW = \sum_{i=d-r+1}^{d} \mu_i \vv_i \vv_i^{\top}.
\end{equation}

First we prove the following lemma on the eigenvalues and eigenvectors of the linear operator $-D^2 \Loss(\zU)$:
\begin{lemma}
	For every $\mDelta \in \R^{d \times d}$, if
	\begin{equation} \label{eq:udelta-deltau}
		\zU\mDelta^\top + \mDelta\zU^\top=\vzero
	\end{equation}
	then $\mDelta$ is an eigenvector of the linear operator $-D^2 \Loss(\zU)[\,\cdot\,]: \R^{d \times r} \to \R^{d \times r}$ associated with eigenvalue $0$. Moreover, the solutions of \eqref{eq:udelta-deltau} spans a linear space of dimension $\frac{r(r-1)}{2}$.
\end{lemma}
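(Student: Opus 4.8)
The plan is to reduce the statement to two elementary linear-algebra observations: that \eqref{eq:udelta-deltau} is \emph{exactly} the condition $\symz{\mDelta\zU^\top}=\vzero$, and that it forces the columns of $\mDelta$ to lie in the column space of $\zU$. Recall the formula derived above, $D^2 \Loss(\zU)[\mDelta] = \nabla f(\zW)\mDelta + D^2 f(\zW)[\symz{\mDelta \zU^{\top}}]\zU$. Since $\symz{\mDelta\zU^\top} = \mDelta\zU^\top + \zU\mDelta^\top$, assumption \eqref{eq:udelta-deltau} says precisely that $\symz{\mDelta\zU^\top}=\vzero$, and linearity of $D^2f(\zW)$ then kills the second summand, leaving $D^2\Loss(\zU)[\mDelta] = \nabla f(\zW)\mDelta$.

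Next I would show that any $\mDelta$ solving \eqref{eq:udelta-deltau} satisfies $\mDelta = \zU\mA$ for some $\mA\in\R^{r\times r}$. Let $\mP$ be the orthogonal projector onto $\mathrm{col}(\zU)^\perp$, so $\mP=\mP^\top$ and $\mP\zU=\vzero$; left-multiplying \eqref{eq:udelta-deltau} by $\mP$ yields $\mP\mDelta\zU^\top=\vzero$, and then right-multiplying by $\zU$ and using that $\zU^\top\zU$ is invertible (this is where $\rank(\zU)=r$ enters) gives $\mP\mDelta=\vzero$, i.e. $\mathrm{col}(\mDelta)\subseteq\mathrm{col}(\zU)$. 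Hence $\mDelta=\zU\mA$. Combining with the first paragraph and the first-order stationarity identity $\nabla f(\zW)\zU = \nabla\Loss(\zU) = \vzero$, we obtain $D^2\Loss(\zU)[\mDelta] = \nabla f(\zW)\zU\mA = \vzero$, so $\mDelta\in\ker D^2\Loss(\zU)$, which is exactly the eigenspace of $-D^2\Loss(\zU)[\,\cdot\,]$ for eigenvalue $0$.

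For the dimension count I would substitute $\mDelta=\zU\mA$ back into \eqref{eq:udelta-deltau}, obtaining $\zU(\mA+\mA^\top)\zU^\top=\vzero$. Since the linear map $\mB\mapsto\zU\mB\zU^\top$ is injective on $\R^{r\times r}$ (again by $\rank(\zU)=r$, since one may left-multiply by $(\zU^\top\zU)^{-1}\zU^\top$ and right-multiply by $\zU(\zU^\top\zU)^{-1}$), this is equivalent to $\mA$ being antisymmetric; and since $\mA\mapsto\zU\mA$ is injective, the solution set of \eqref{eq:udelta-deltau} is linearly isomorphic to the space of $r\times r$ antisymmetric matrices, which has dimension $\tfrac{r(r-1)}{2}$.

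The computation is short. The only step that takes a moment of thought is the column-space containment in the second paragraph: recognizing that \eqref{eq:udelta-deltau} constrains $\mathrm{col}(\mDelta)$ is what simultaneously makes the $\nabla f(\zW)\mDelta$ term vanish and supplies the clean parametrization $\mDelta=\zU\mA$ used in the dimension count. Everything else is bookkeeping with the explicit second-derivative formula for $\Loss$ and the stationarity condition $\nabla f(\zW)\zU=\vzero$.
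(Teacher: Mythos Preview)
Your proof is correct and follows essentially the same approach as the paper's: both first establish that \eqref{eq:udelta-deltau} forces $\mDelta=\zU\mA$ for some $\mA\in\R^{r\times r}$ (you via the orthogonal projector onto $\mathrm{col}(\zU)^\perp$, the paper via the pseudoinverse $\zU^+$), then use stationarity $\nabla f(\zW)\zU=\vzero$ together with $\symz{\mDelta\zU^\top}=\vzero$ to kill both terms of $D^2\Loss(\zU)[\mDelta]$, and finally substitute back to identify the solution set with the $r\times r$ antisymmetric matrices. The only differences are cosmetic ordering and the choice of projector versus pseudoinverse for the column-space step.
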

\begin{proof}
	Suppose $\zU\mDelta^\top + \mDelta\zU^\top=\vzero$. Then we have $\zU\mDelta^\top = -\mDelta\zU^\top$, and thus $\mDelta^{\top} = -\zU^{+} \mDelta \zU^{\top}$, where $\zU^+$ is the pseudoinverse of the full-rank matrix $\zU$. This implies that there is a matrix $\mR \in \R^{r \times r}$, such that $\mDelta = \zU\mR$.
	Then we have
	\begin{align*}
		-D^2 \Loss(\zU)[\mDelta] &= -\nabla f(\zW)\zU\mR - D^2 f(\zW)[\zU\mDelta^\top + \mDelta\zU^\top]\zU \\
		&= -\left(\nabla f(\zW)\zU\right) \mR - D^2 f(\zW)[\vzero]\zU \\
		&= \vzero.
	\end{align*}
	Replacing $\mDelta$ with $\zU\mR$ in \eqref{eq:udelta-deltau} gives $\zU(\mR+\mR^\top)\zU^\top = \vzero$, which is equivalent to $\mR = -\mR^\top$ since $\zU$ is full-rank. Since the dimension of $r \times r$ antisymmetric matrices is $\frac{r(r-1)}{2}$, the span spanned by the solutions of \eqref{eq:udelta-deltau} also has dimension $\frac{r(r-1)}{2}$.
\end{proof}

\begin{definition}[Eigendecomposition of $-D^2 \Loss(\zU)$] \label{def:eigen-hessian-zU}
	Let
	\[
		-D^2 \Loss(\zU)[\mDelta] = \sum_{p=1}^{rd}\xi_{p} \dotp{\mE_p}{\mDelta} \mE_p
	\]
	be the eigendecomposition of the symmetric linear operator $-D^2 \Loss(\zU)[\,\cdot\,]: \R^{d \times r} \to \R^{d \times r}$, where $\xi_1, \dots, \xi_{rd} \in \R$ are eigenvalues, $\mE_1, \dots, \mE_{rd} \in \R^{d \times r}$ are eigenvectors satisfying $\dotp{\mE_p}{\mE_q} = \delta_{pq}$. We enforce $\xi_p$ to be $0$ and $\mE_p$ to be a solution of \eqref{eq:udelta-deltau} for every $rd - \frac{r(r-1)}{2} < p \le rd$.
\end{definition}

\begin{lemma} \label{lm:left-right-diag}
	Let $\mA \in \R^{D \times D}$ be a matrix. If $\{\hat{\vu}_1, \dots, \hat{\vu}_K\}$ is a set of linearly independent left eigenvectors associated with eigenvalues $\hat{\lambda}_1, \dots, \hat{\lambda}_K$ and $\{\tilde{\vv}_1, \dots, \tilde{\vv}_{D-K}\}$ is a set of linearly independent right eigenvectors associated with eigenvalues $\tilde{\lambda}_1, \dots, \tilde{\lambda}_{D-K}$, and $\dotp{\hat{\vu}_i}{\tilde{\vv}_j} = 0$ for all $1 \le i \le K, 1 \le j \le D - K$, then $\hat{\lambda}_1, \dots, \hat{\lambda}_K, \tilde{\lambda}_1, \dots, \tilde{\lambda}_{D-K}$ are all the eigenvalues of $\mA$.
\end{lemma}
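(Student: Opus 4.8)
The plan is to conjugate $\mA$ into block upper-triangular form using the right eigenvectors, and then identify the spectrum of the remaining block using the left eigenvectors together with the orthogonality hypothesis; the two diagonal blocks will contribute exactly the $\tilde\lambda$'s and the $\hat\lambda$'s. Concretely, I would first collect the data into matrices $\tilde{\mV} := [\,\tilde{\vv}_1\ \cdots\ \tilde{\vv}_{D-K}\,] \in \R^{D\times(D-K)}$ and $\hat{\mU} := [\,\hat{\vu}_1\ \cdots\ \hat{\vu}_K\,] \in \R^{D\times K}$, so that the hypotheses become $\mA\tilde{\mV} = \tilde{\mV}\tilde{\mD}$ with $\tilde{\mD} = \diag(\tilde{\lambda}_1, \dots, \tilde{\lambda}_{D-K})$, $\hat{\mU}^\top\mA = \hat{\mD}\hat{\mU}^\top$ with $\hat{\mD} = \diag(\hat{\lambda}_1, \dots, \hat{\lambda}_K)$, and $\hat{\mU}^\top\tilde{\mV} = \vzero$. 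Since $\tilde{\mV}$ has full column rank, adjoin $K$ further columns $\mQ\in\R^{D\times K}$ so that $\mS := [\,\tilde{\mV}\ \ \mQ\,]$ is invertible. Then $\mS^{-1}\tilde{\mV} = \left[\begin{smallmatrix}\mI_{D-K}\\\vzero\end{smallmatrix}\right]$, and from $\mA\tilde{\mV} = \tilde{\mV}\tilde{\mD}$ one reads off that the first $D-K$ columns of $\mS^{-1}\mA\mS$ equal $\left[\begin{smallmatrix}\tilde{\mD}\\\vzero\end{smallmatrix}\right]$, i.e. $\mS^{-1}\mA\mS = \left[\begin{smallmatrix}\tilde{\mD} & \ast\\\vzero & \mB\end{smallmatrix}\right]$ for some $\mB\in\R^{K\times K}$, so $\det(\lambda\mI - \mA) = \prod_j(\lambda - \tilde{\lambda}_j)\cdot\det(\lambda\mI - \mB)$.

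Next I would show $\det(\lambda\mI - \mB) = \prod_i(\lambda - \hat{\lambda}_i)$. Right-multiplying $\hat{\vu}_i^\top\mA = \hat{\lambda}_i\hat{\vu}_i^\top$ by $\mS$ and inserting $\mS\mS^{-1}$ gives $(\hat{\vu}_i^\top\mS)(\mS^{-1}\mA\mS) = \hat{\lambda}_i(\hat{\vu}_i^\top\mS)$; but $\hat{\vu}_i^\top\mS = [\,\hat{\vu}_i^\top\tilde{\mV}\ \ \hat{\vu}_i^\top\mQ\,] = [\,\vzero\ \ \vc_i^\top\,]$ with $\vc_i := \mQ^\top\hat{\vu}_i$, using $\hat{\mU}^\top\tilde{\mV} = \vzero$, so the block structure forces $\vc_i^\top\mB = \hat{\lambda}_i\vc_i^\top$: each $\vc_i$ is a left eigenvector of $\mB$ for $\hat{\lambda}_i$. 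These are linearly independent, for if $\sum_i a_i\vc_i = \vzero$ then also $\sum_i a_i\hat{\vu}_i^\top\tilde{\mV} = \vzero$, hence $\big(\sum_i a_i\hat{\vu}_i\big)^\top\mS = \vzero$, hence $\sum_i a_i\hat{\vu}_i = \vzero$ by invertibility of $\mS$, hence all $a_i = 0$ by independence of the $\hat{\vu}_i$. Thus $\mB^\top$ is a $K\times K$ matrix with $K$ linearly independent eigenvectors $\vc_i$ and eigenvalues $\hat{\lambda}_i$, so it is diagonalizable and $\det(\lambda\mI - \mB) = \det(\lambda\mI - \mB^\top) = \prod_i(\lambda - \hat{\lambda}_i)$. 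Combining with the previous paragraph, $\det(\lambda\mI - \mA) = \prod_j(\lambda - \tilde{\lambda}_j)\prod_i(\lambda - \hat{\lambda}_i)$, which is exactly the assertion.

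Most of this is bookkeeping; the single load-bearing step is the linear independence of the $\vc_i$, and it is precisely there that the hypothesis $\hat{\vu}_i^\top\tilde{\vv}_j = 0$ is used — without it the left and right eigendata do not decouple and the conclusion fails. The degenerate cases $K=0$ (no left eigenvectors: $\mA$ is similar to $\tilde{\mD}$) and $K=D$ (no right eigenvectors: $\mA^\top$ is similar to $\hat{\mD}$) are covered by the same formulas with one block absent.
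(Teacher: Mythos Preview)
Your proof is correct and follows the same overall strategy as the paper: conjugate $\mA$ to block upper-triangular form and read off the eigenvalues from the diagonal blocks. The execution differs in one respect worth noting. The paper builds the change-of-basis matrix in one shot from \emph{both} families of eigenvectors, setting
\[
\mP := \begin{bmatrix}\hat{\mU}^{\top}\\ \tilde{\mV}^{+}\end{bmatrix},\qquad \mQ := \begin{bmatrix}(\hat{\mU}^{\top})^{+} & \tilde{\mV}\end{bmatrix},
\]
and uses the orthogonality $\hat{\mU}^{\top}\tilde{\mV}=\vzero$ to verify $\mP\mQ=\mI_D$; the conjugate $\mP\mA\mP^{-1}$ then has \emph{both} diagonal blocks already diagonal, namely $\diag(\hat{\lambda}_i)$ and $\diag(\tilde{\lambda}_j)$. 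You instead complete $\tilde{\mV}$ to a basis arbitrarily, which makes one block diagonal and leaves a residual $K\times K$ block $\mB$ to analyze; you then use the left eigenvectors and the orthogonality (in your linear-independence step for the $\vc_i$) to pin down the spectrum of $\mB$. Both routes are short; the paper's pseudoinverse construction is a bit more symmetric, while yours avoids pseudoinverses entirely and makes explicit exactly where the hypothesis $\hat{\vu}_i^{\top}\tilde{\vv}_j=0$ enters.
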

\begin{proof}
	Let $\htmU := (\hat{\vu}_1, \dots, \hat{\vu}_K)^{\top} \in \R^{K \times D}$ and $\tdmV := (\tilde{\vv}_1, \dots, \tilde{\vv}_{D-K}) \in \R^{D \times (D-K)}$. Then both $\htmU$ and $\tdmV$ are full-rank. Let $\htmU^+ = \htmU^{\top} (\htmU\htmU^{\top})^{-1}, \tdmV^+ = (\tdmV^{\top}\tdmV)^{-1} \tdmV^{\top}$ be the pseudoinverses of $\htmU$ and $\tdmV$.

	Now we define
	\[
		\mP := \begin{bmatrix}
			\htmU \\
			\tdmV^+
		\end{bmatrix},
		\quad
		\mQ :=
		\begin{bmatrix}
			\htmU^+ & \tdmV
		\end{bmatrix}.
	\]
	Then we have
	\[
		\mP \mQ = \begin{bmatrix}
			\htmU \htmU^+ & \htmU \tdmV \\
			\tdmV^+ \htmU^+ & \tdmV^+ \tdmV
		\end{bmatrix}.
	\]
	Note that $\htmU \htmU^+ = \mI_{K}$, $\htmU \tdmV = \vzero$, $\tdmV^+ \htmU^+ = (\tdmV^{\top}\tdmV)^{-1} (\htmU\tdmV)^{\top} (\htmU\htmU^{\top})^{-1} = \vzero$, $\tdmV^+ \tdmV = \mI_{D-K}$. So $\mP \mQ = \mI_D$, or equivalently $\mQ = \mP^{-1}$. Then we have
	\[
	\mP^{-1} \mA \mP = \begin{bmatrix}
		\diag(\hat{\lambda}_1, \dots, \hat{\lambda}_{K}) & * \\
		\vzero                                           & \diag(\tilde{\lambda}_1, \dots, \tilde{\lambda}_{D-K})
	\end{bmatrix},
	\]
	where $*$ can be any $K \times (D-K)$ matrix. Since $\mP^{-1} \mA \mP$ is upper-triangular, we know that $\mP^{-1} \mA \mP$ has eigenvalues  $\hat{\lambda}_1, \dots, \hat{\lambda}_K, \tilde{\lambda}_1, \dots, \tilde{\lambda}_{D-K}$, and so does $\mA$.
\end{proof}

\begin{theorem}\label{thm:eigen-local-min}
	The eigenvalues of $\mJ(\zW)$ can be fully classified into the following $3$ types:
	\begin{enumerate}
		\item $\mu_i+\mu_j$ is an eigenvalue for every $1 \le i \le j \le d-r$, and $\hat{\mU}_{ij} := \vv_i\vv_j^\top + \vv_j\vv_i^\top$ is an associated left eigenvector.
		\item $\xi_p$ is an eigenvalue for every $1 \le p \le rd - \frac{r(r-1)}{2}$, and $\tilde{\mV}_p := \mE_p \zU^{\top} + \zU\mE_p^{\top}$ is an associated right eigenvector.
		\item $0$ is an eigenvalue, and any antisymmetric matrix is an associated right eigenvector, which spans a linear space of dimension $\frac{d(d-1)}{2}$.
	\end{enumerate}
\end{theorem}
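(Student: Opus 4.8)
The plan is to exhibit, for each of the three types, an explicit family of left or right eigenvectors of the linear operator $\mJ(\zW)=D\vg(\zW)$, to verify linear independence within each family and orthogonality between the Type-1 (left) eigenvectors and the Type-2/Type-3 (right) eigenvectors, to check that the three families together supply $d^2=\dim\R^{d\times d}$ vectors, and finally to invoke \Cref{lm:left-right-diag} with $D=d^2$ and $K=\tfrac{(d-r)(d-r+1)}{2}$ (the number of Type-1 vectors) to conclude that the listed eigenvalues, counted with multiplicity, are exactly the spectrum of $\mJ(\zW)$. All computations are done in the orthonormal basis $\vv_1,\dots,\vv_d$ in which $-\nabla f(\zW)$ and $\zW$ are simultaneously diagonalized as in \eqref{eq:fW-W-sim-diag}: $\vv_1,\dots,\vv_{d-r}$ span $\ker\zW=\ker\zU^\top$ and carry the eigenvalues $\mu_1,\dots,\mu_{d-r}$ of $-\nabla f(\zW)$, while $\vv_{d-r+1},\dots,\vv_d$ span the column space of $\zU$.

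For Type 1, fix $1\le i\le j\le d-r$ and pair the formula $\mJ(\zW)[\mDelta]=-\acom{\nabla f(\zW)}{\mDelta}-\acom{D^2f(\zW)[\mDelta]}{\zW}$ with $\hat{\mU}_{ij}:=\vv_i\vv_j^\top+\vv_j\vv_i^\top$ for an arbitrary $\mDelta\in\R^{d\times d}$. Since $\zW\vv_i=\zW\vv_j=\vzero$ we have $\zW\hat{\mU}_{ij}=\hat{\mU}_{ij}\zW=\vzero$, so $\dotp{\acom{D^2f(\zW)[\mDelta]}{\zW}}{\hat{\mU}_{ij}}=0$; using self-adjointness of $\mDelta\mapsto\acom{\nabla f(\zW)}{\mDelta}$, the remaining term equals $\dotp{\mDelta}{-\acom{\nabla f(\zW)}{\hat{\mU}_{ij}}}=(\mu_i+\mu_j)\dotp{\mDelta}{\hat{\mU}_{ij}}$ by \eqref{eq:fW-W-sim-diag}. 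Hence $\hat{\mU}_{ij}$ is a left eigenvector with eigenvalue $\mu_i+\mu_j$, and the $\tfrac{(d-r)(d-r+1)}{2}$ such matrices are linearly independent (distinct pairs give disjoint supports in the $\vv$-basis). For Type 3, by the same argument as for $\mJ(\vzero)$ in \Cref{lm:eigen-rank-1}, every antisymmetric matrix is a right eigenvector of $\mJ(\zW)$ with eigenvalue $0$, which supplies $\tfrac{d(d-1)}{2}$ further right eigenvectors.

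Type 2 is the crux and goes through the $\mU$-dynamics. Writing $\psi(\mU):=\mU\mU^\top$, the parametrization intertwines gradient flow on $\mU$ with \eqref{eq:ode-W}, i.e. $\vg(\psi(\mU))=D\psi(\mU)[-\nabla\Loss(\mU)]$ with $D\psi(\mU)[\mDelta]=\symz{\mDelta\mU^\top}$. Differentiating this identity at the stationary point $\zU$ and using $\nabla\Loss(\zU)=\vzero$, the term involving $D^2\psi(\zU)$ drops out, leaving $\mJ(\zW)[\symz{\mDelta\zU^\top}]=\symz{(-D^2\Loss(\zU)[\mDelta])\zU^\top}$ for every $\mDelta\in\R^{d\times r}$. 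Taking $\mDelta=\mE_p$, an eigenvector of the symmetric operator $-D^2\Loss(\zU)$ with eigenvalue $\xi_p$ (\Cref{def:eigen-hessian-zU}), yields $\mJ(\zW)[\tilde{\mV}_p]=\xi_p\tilde{\mV}_p$ with $\tilde{\mV}_p=\symz{\mE_p\zU^\top}$. For $1\le p\le rd-\tfrac{r(r-1)}{2}$ these $\mE_p$ span the orthogonal complement of $\ker D\psi(\zU)$, whose $\tfrac{r(r-1)}{2}$-dimensional solution set --- the solutions of \eqref{eq:udelta-deltau} --- is the span of the remaining $\mE_p$ by \Cref{def:eigen-hessian-zU}; hence $D\psi(\zU)$ is injective on that span and the $\tilde{\mV}_p$ are nonzero and linearly independent.

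Finally, the three families are mutually orthogonal in the left/right sense: each $\tilde{\mV}_p=\symz{\mE_p\zU^\top}$ has a vanishing $(\ker\zW)\times(\ker\zW)$ block because $\zU^\top\vv_i=\vzero$ for $i\le d-r$, so $\dotp{\hat{\mU}_{ij}}{\tilde{\mV}_p}=0$ since $\hat{\mU}_{ij}$ is supported entirely in that block; and $\hat{\mU}_{ij}$ is symmetric while the Type-3 vectors are antisymmetric. Adding the counts, $\tfrac{(d-r)(d-r+1)}{2}+\bigl(rd-\tfrac{r(r-1)}{2}\bigr)+\tfrac{d(d-1)}{2}=d^2$, so \Cref{lm:left-right-diag} applies and the three lists exhaust the spectrum of $\mJ(\zW)$. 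I expect the main obstacle to be the Type-2 intertwining step --- confirming that the second-order term $D^2\psi(\zU)$ really drops at the stationary point, and that $\symz{\mE_p\zU^\top}\neq\vzero$ over the stated index range --- together with the bookkeeping that makes the orthogonality in the last paragraph exact, both of which hinge on the simultaneous block structure \eqref{eq:fW-W-sim-diag} of $\nabla f(\zW)$ and $\zW$.
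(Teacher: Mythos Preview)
Your proposal is correct and follows essentially the same strategy as the paper's proof. Two minor differences in presentation are worth noting. First, for Type~2 you derive the eigenvector relation via the intertwining identity $\mJ(\zW)\circ D\psi(\zU)=D\psi(\zU)\circ(-D^2\Loss(\zU))$ obtained by differentiating $\vg\circ\psi=D\psi(\cdot)[-\nabla\Loss(\cdot)]$ at the stationary point; the paper instead expands $-D^2\Loss(\zU)[\mE_p]=\xi_p\mE_p$, right-multiplies by $\zU^\top$, uses $\nabla f(\zW)\zU=\vzero$, and then applies $\symz{\cdot}$. These are the same computation packaged differently; your version is slightly more conceptual. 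Second, in the final step you apply \Cref{lm:left-right-diag} directly on $\R^{d\times d}$ with the Type-1 vectors as the $K$ left eigenvectors and the Type-2 and Type-3 vectors together as the $D-K$ right eigenvectors, whereas the paper first observes that $\symm$ and $\asymm$ are invariant subspaces of $\mJ(\zW)$, handles $\asymm$ trivially, and invokes the lemma only on $\symm$. Both routes work; for yours to go through you need the Type-2 and Type-3 right eigenvectors to be \emph{jointly} linearly independent, which you did not state explicitly but which is immediate since the $\tilde{\mV}_p$ are symmetric and the Type-3 vectors are antisymmetric.
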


\begin{proof}[Proof of \Cref{thm:eigen-local-min}]
	We first prove each item respectively, and then prove that these are all the eigenvalues of $\mJ(\zW)$.

	\paragraph{Proof for Item 1.} For $\hat{\mU}_{ij} = \vv_i\vv_j^\top + \vv_j\vv_i^\top$, it is easy to check:
	\begin{align*}
		\acom{-\nabla f(\zW)}{\hat{\mU}_{ij}} &= (\lambda_i + \lambda_j) \hat{\mU}_{ij} \\
		\hat{\mU}_{ij} \zW &= \vzero \\
		\zW \hat{\mU}_{ij} &= \vzero
	\end{align*}
	So we have
	\[
		\mJ(\mW_0)[\mDelta,\hat{\mU}_{ij}] = (\lambda_i + \lambda_j)\dotp{\mDelta}{\hat{\mU}_{ij}} - D^2 f(\zW)[\mDelta, \vzero] = (\lambda_i + \lambda_j)\dotp{\mDelta}{\hat{\mU}_{ij}},
	\]
	which shows that $\hat{\mU}_{ij}$ is a left eigenvector associated with eigenvalue $\lambda_i + \lambda_j$.

	\paragraph{Proof for Item 2.} By definition of eigenvector, we have $-D^2 \Loss(\zU)[\mE_p] = \xi_p\mE_p$, so
	\[
		\xi_p\mE_p = -\nabla f(\zW)\mE_p - D^2 f(\zW)[\zU\mE_p^\top + \mE_p\zU^\top]\zU.
	\]
	Right-multiplying both sides by $\zU^\top$, we get
	\begin{align*}
		\xi_p\mE_p\zU^\top &= -\nabla f(\zW)\mE_p\zU^\top - D^2 f(\zW)[\tilde{\mV}_p] \zW \\
		                       &= -\nabla f(\zW)(\mE_p\zU^\top + \zU \mE_p^{\top}) - D^2 f(\zW)[\tilde{\mV}_p] \zW \\
		                       &= -\nabla f(\zW)\tilde{\mV}_p - D^2 f(\zW)[\tilde{\mV}_p] \zW,
	\end{align*}
	where the second equality uses the fact that $\nabla f(\zW)\zU=\vzero$ since $\mU_0$ is a critical point. Taking both sides into $\symz{\cdot}$ gives
	\begin{align*}
		\xi_p\tilde{\mV}_p &= -\symz{\nabla f(\zW)\tilde{\mV}_p} - \symz{D^2 f(\zW)[\tilde{\mV}_p] \zW} \\
		                   &= \mJ(\zW)[\tilde{\mV}_p],
	\end{align*}
	which proves that $\tilde{\mV}_p$ is a right eigenvector associated with eigenvalue $\xi_p$.

	\paragraph{Proof for Item 3.} Since $\nabla f(\mW)$ is symmetric,  $\vg(\mW)$ is also symmetric. For any $\mDelta = -\mDelta^\top$,
	\[
		\mJ(\zW)[\mDelta] = \mJ(\zW)[\mDelta^\top] = \mJ(\zW)[-\mDelta].
	\]
	So $\mJ(\zW)[\mDelta] = \vzero$ and $\mDelta$ is an eigenvector associated with eigenvalue $0$.

	\paragraph{No other eigenvalues.} Let $\symm$ be the space of symmetric matrices and $\asymm$ be the space of antisymmetric matrices. It is easy to see that $\symm$ and $\asymm$ are orthogonal to each other, and $\symm$ and $\asymm$ are invariant subspaces of $\mJ(\zW)[\mDelta]$. Let $\vh: \symm \to \symm, \mDelta \mapsto \mJ(\zW)[\mDelta]$ be the linear operator $\mJ(\zW)[\mDelta]$ restricted on symmetric matrices. We only need to prove that $\vh$ is diagonalizable.

	It is easy to see that $\{\hat{\mU}_{ij}\}$ are linearly independent to each other and thus spans a subspace of $\symm$ with dimension $\frac{(d-r)(d-r+1)}{2}$. We can also prove that $\{\tilde{\mV}_p\}$ spans a subspace of $\symm$ with dimension $rd - \frac{r(r-1)}{2}$ by contradiction. Assume to the contrary that there exists scalars $\alpha_p$ for $1 \le p \le rd - r(r-1)/2$, not all zero, such that $\sum_{p=1}^{rd - r(r-1)/2} \alpha_p \tilde{\mV}_p = \vzero$. Then $\sum_{p=1}^{rd - r(r-1)/2} \alpha_p\mE_p$ is a solution of \eqref{eq:udelta-deltau}. However, this suggests that $\sum_{p=1}^{rd - r(r-1)/2} \alpha_p\mE_p$ lies in the span of $\{\mE_p\}_{rd - r(r-1)/2 < p \le rd}$, which contradicts to the linear independence of $\{\mE_p\}_{1 \le p \le rd}$.

	Note that
	\[
		\frac{(d-r)(d-r+1)}{2} + \left(rd - \frac{r(r-1)}{2}\right) = \frac{d(d+1)}{2} = \dim(\symm).
	\]
	Also note that $\dotp{\hat{\mU}_{ij}}{\tilde{\mV}_p} = 2 \vv_i^{\top} \mE_p \mU_0^{\top} \vv_j + 2 \vv_j^{\top} \mE_p \mU_0^{\top} \vv_i = 0$. By \Cref{lm:left-right-diag}, Items 1 and 2 give all the eigenvalues of $\vh$, and thus Items 1, 2, 3 give all the eigenvalues of $\mJ(\zW)$.
\end{proof}

\section{Proofs for the Depth-2 Case}\label{sec:proof_for_depth_2}

\subsection{Proof for \Cref{thm:glrl-1st-phase}} \label{sec:pf-glrl-1st-phase}

\begin{proof}[Proof for \Cref{thm:glrl-1st-phase}]	
	Since $\mW(t)$ is always symmetric, it suffices to study the dynamics of the lower triangle of $\mW(t)$. For any symmetric matrix $\mW \in \symm$, let $\lotri(\mW) \in \R^{\frac{d(d+1)}{2}}$ be the vector consisting of the $\frac{d(d+1)}{2}$ entries of $\mW$ in the lower triangle, permuted according to some fixed order.
	
	Let $\vg(\mW)$ be the function defined in \eqref{eq:ode-W}, which always maps symmetric matrices to symmetric matrices. Let $\tilde{\vg}: \R^{\frac{d(d+1)}{2}} \to \R^{\frac{d(d+1)}{2}}$ be the function such that $\tilde{\vg}(\lotri(\mW)) = \lotri(\vg(\mW))$ for any $\mW \in \symm$. For $\mW(t)$ evolving with \eqref{eq:ode-W}, we view $\lotri(\mW(t))$ as a dynamical system.
	\[
		\frac{\dd}{\dd t} \lotri(\mW(t)) = \tilde{\vg}(\lotri(\mW(t))).
	\]
	By \Cref{lm:eigen-rank-1}, the spaces of symmetric matrices $\symm$ and antisymmetric matrices $\asymm$ are invariant subspaces of $\mJ(\vzero)$, and $\left\{(\mu_i + \mu_j, \vu_{1[i]}\vu_{1[j]}^{\top} + \vu_{1[j]}\vu_{1[i]}^{\top})\right\}_{1 \le i \le j \le d}$ is the set of all the eigenvalues and eigenvectors in the invariant subspace $\symm$. Thus, $\tilde{\mu}_1 := 2\mu_1$ and $\tilde{\mu}_2 := \mu_1 + \mu_2$ are the largest and second largest eigenvalues of the Jacobian of $\tilde{\vg}(\cdot)$ at $\lotri(\mW) = \vzero$, and $\tilde{\vu}_1 = \tilde{\vv}_1 = \vu_1 \vu_1^{\top}$ are the corresponding left and right eigenvectors of the top eigenvalue. Then it is easy to translate \Cref{thm:general-escape} to \Cref{thm:glrl-1st-phase}.
\end{proof}

\subsection{Proof for \Cref{thm:glrl-1st-phase-converge}} \label{sec:pf-glrl-1st-phase-converge}

The proof for \Cref{thm:glrl-1st-phase-converge} relies on the following Lemma on the gradient flow around a local minimizer:

\begin{lemma} \label{lm:local-min-attract}
	If $\bar{\vtheta}$ is a local minimizer of $\Loss(\vtheta)$ and for all $\normtwosm{\vtheta - \bar{\vtheta}} \le r$, $\vtheta$ satisfies Łojasiewicz inequality:
	\[
	\normtwo{\nabla \Loss(\vtheta)} \ge c \left(\Loss(\vtheta) - \Loss(\bar{\vtheta})\right)^{\mu}
	\]
	for some $\mu \in [1/2, 1)$, then the gradient flow $\vtheta(t) = \phi(\vtheta_0, t)$ converges to a point $\vtheta_\infty$ near $\bar{\vtheta}$ if $\vtheta_0$ is close enough to $\bar{\vtheta}$, and the distance can be bounded by $\normtwosm{\vtheta_{\infty} - \bar{\vtheta}} = O(\normtwosm{\vtheta_0 - \bar{\vtheta}}^{2(1-\mu)})$.
\end{lemma}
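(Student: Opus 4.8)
The plan is to run the classical Łojasiewicz finite-length argument. Write $E(t) := \Loss(\vtheta(t)) - \Loss(\bar{\vtheta})$, which is nonnegative as long as $\vtheta(t)$ stays in the ball $B(\bar{\vtheta},r)$ around the local minimizer $\bar{\vtheta}$; I will verify below that the trajectory never leaves this ball once $\vtheta_0$ is close enough. Along gradient flow one has $\dot{E}(t) = -\normtwosm{\nabla\Loss(\vtheta(t))}^2$ and $\normtwosm{\dot{\vtheta}(t)} = \normtwosm{\nabla\Loss(\vtheta(t))}$. The case $E(t_0) = 0$ for some $t_0$ is trivial ($\vtheta$ is then stationary and stays put), so I assume $E(t) > 0$ for all $t$.

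First I would differentiate $E^{1-\mu}$ and feed in the Łojasiewicz inequality in the form $E(t)^{\mu} \le c^{-1}\normtwosm{\nabla\Loss(\vtheta(t))}$:
\[
\frac{\dd}{\dd t} E(t)^{1-\mu} = -(1-\mu)\,\frac{\normtwosm{\nabla\Loss(\vtheta(t))}^2}{E(t)^{\mu}} \le -(1-\mu)\,c\,\normtwosm{\nabla\Loss(\vtheta(t))} = -(1-\mu)\,c\,\normtwosm{\dot{\vtheta}(t)}.
\]
Integrating from $0$ to $T$ and dropping the nonnegative term $E(T)^{1-\mu}$ gives the length bound $\int_0^T \normtwosm{\dot{\vtheta}(t)}\,\dd t \le \frac{E(0)^{1-\mu}}{(1-\mu)c}$, uniformly in $T$. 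Hence $\vtheta(t)$ is Cauchy as $t\to\infty$, converges to some $\vtheta_\infty$, and $\normtwosm{\vtheta_\infty - \vtheta_0} \le \frac{E(0)^{1-\mu}}{(1-\mu)c}$ by the triangle inequality applied to $\vtheta_\infty - \vtheta_0 = \int_0^\infty \dot{\vtheta}(t)\,\dd t$.

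Next I would close the "stays in the ball'' assumption by a continuation argument: let $\tau := \sup\{t : \vtheta(s)\in B(\bar{\vtheta},r)\text{ for all }s\le t\}$; on $[0,\tau)$ the estimate above applies, so $\normtwosm{\vtheta(t)-\bar{\vtheta}} \le \normtwosm{\vtheta_0-\bar{\vtheta}} + \frac{E(0)^{1-\mu}}{(1-\mu)c}$. Since $\Loss$ is $\contC^2$ (in the paper's setting $\Loss(\mU)=\tfrac12 f(\mU\mU^\top)$ with $f$ smooth) and $\nabla\Loss(\bar{\vtheta})=\vzero$, Taylor expansion gives $E(0) \le \tfrac{L}{2}\normtwosm{\vtheta_0-\bar{\vtheta}}^2$ for a local Lipschitz constant $L$ of $\nabla\Loss$; in particular the right-hand side above tends to $0$ as $\vtheta_0\to\bar{\vtheta}$, so for $\vtheta_0$ sufficiently close it is $< r$. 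Then $\vtheta(\tau)$ would lie in the interior of $B(\bar{\vtheta},r)$, contradicting maximality of $\tau$ unless $\tau=\infty$, which also retroactively justifies $E(t)\ge 0$ throughout. Plugging the Taylor bound into the length bound finally yields
\[
\normtwosm{\vtheta_\infty - \bar{\vtheta}} \le \normtwosm{\vtheta_0-\bar{\vtheta}} + \frac{(L/2)^{1-\mu}}{(1-\mu)c}\,\normtwosm{\vtheta_0-\bar{\vtheta}}^{2(1-\mu)} = O\!\left(\normtwosm{\vtheta_0-\bar{\vtheta}}^{2(1-\mu)}\right),
\]
the last step using $2(1-\mu)\le 1$ (because $\mu\ge\tfrac12$), so this term dominates as $\vtheta_0\to\bar{\vtheta}$.

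I expect the only real subtlety to be the simultaneous nature of the two estimates: the length bound is valid only while the trajectory is in $B(\bar{\vtheta},r)$, and confinement to $B(\bar{\vtheta},r)$ is itself deduced from the length bound; the bootstrap in the third paragraph is what resolves this. A minor bookkeeping point is the degenerate case $E\equiv 0$, where the division by $E^{\mu}$ is moot but the conclusion is immediate.
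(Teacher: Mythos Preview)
Your proposal is correct and follows essentially the same classical Łojasiewicz finite-length argument as the paper: differentiate $(\Loss(\vtheta(t))-\Loss(\bar\vtheta))^{1-\mu}$, apply the Łojasiewicz inequality to bound the arc length, then combine with the quadratic bound $E(0)=O(\normtwosm{\vtheta_0-\bar\vtheta}^2)$ coming from smoothness and $\nabla\Loss(\bar\vtheta)=\vzero$. If anything, your write-up is slightly more careful than the paper's (you make the continuation/bootstrap argument and the $2(1-\mu)\le 1$ dominance explicit), but the substance is identical.
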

\begin{proof}
	For every $t \ge 0$, if $\normtwosm{\vtheta(t) - \bar{\vtheta}} \le r$,
	\begin{align*}
		\frac{\dd}{\dd t} \left(\Loss(\vtheta(t)) - \Loss(\bar{\vtheta}) \right)^{1-\mu} &= (1 - \mu) \left(\Loss(\vtheta(t)) - \Loss(\bar{\vtheta}) \right)^{-\mu} \cdot \dotp{\nabla \Loss}{\frac{d \vtheta}{dt}} \\
		&= -(1 - \mu) \left(\Loss(\vtheta(t)) - \Loss(\bar{\vtheta}) \right)^{-\mu} \cdot \normtwo{\nabla \Loss} \cdot \normtwo{\frac{d \vtheta}{dt}} \\
		&\le -(1 - \mu)c \normtwo{\frac{d \vtheta}{dt}}.
	\end{align*}
	Therefore, $\normtwosm{\vtheta(t) - \vtheta_0} \le \int_{0}^{t}\normtwo{\frac{\dd\vtheta}{\dd t}} \dd t \le \frac{1}{(1-\mu)c} \Loss(\vtheta_0)^{1-\mu} = O(\normtwosm{\vtheta_0 - \bar{\vtheta}}^{2(1-\mu)})$. If we choose $\normtwosm{\vtheta(t) - \bar{\vtheta}}$ small enough, then $\normtwosm{\vtheta(t) - \bar{\vtheta}} \le \normtwosm{\vtheta(t) - \vtheta_0} + \normtwosm{\vtheta_0 - \bar{\vtheta}} = O(\normtwosm{\vtheta_0 - \bar{\vtheta}}^{2(1-\mu)}) < r$, and thus $\int_{0}^{+\infty}\normtwo{\frac{d \vtheta}{dt}} \dd t$ is convergent and finite. This implies that $\vtheta_{\infty} := \lim_{t \to +\infty} \vtheta(t)$ exists and $\normtwosm{\vtheta_{\infty} - \bar{\vtheta}} = O(\normtwosm{\vtheta_0 - \bar{\vtheta}}^{2(1-\mu)})$.
\end{proof}

\begin{proof}[Proof for \Cref{thm:glrl-1st-phase-converge}]
	Since $\mWG_1(t) \in \psdlx{1}$ satisfies \eqref{eq:ode-W}, there exists $\vu(t) \in \R^d$ such that $\vu(t)\vu(t)^{\top} = \mWG_1(t)$ and $\vu(t)$ satisfies \eqref{eq:ode-U}, i.e., $\frac{\dd \vu}{\dd t} = -\nabla \Loss(\vu)$, where $\Loss: \R^d \to \R, \vu \mapsto \frac{1}{2} f(\vu\vu^{\top})$. If $\mWG_1(t)$ does not diverge to infinity, then so does $\vu(t)$. This implies that there is a limit point $\bar{\vu}$ of the set $\{\vu(t) : t \ge 0\}$.
	
	Let $\gU := \{ \vu : \Loss(\vu) \ge \Loss(\bar{\vu})\}$. Since $\Loss(\vu(t))$ is non-increasing, we have $\vu(t) \in \gU$ for all $t$. Note that $\bar{\vu}$ is a local minimizer of $\Lossfun$ in $\gU$. By analyticity of $\ffun$, Łojasiewicz inequality holds for $\Lossfun$ around $\bar{\vu}$ \citep{lojasiewicz1965ensembles}. Applying \Cref{lm:local-min-attract} for $\Loss$ restricted on $\gU$, we know that if $\vu(t_0)$ is sufficiently close to $\bar{\vu}$, the remaining length of the trajectory of $\vu(t)$ ($t \ge t_0$) is finite and thus $\lim_{t \to +\infty} \vu(t)$ exists. As $\bar{\vu}$ is a limit point, this limit can only be $\bar{\vu}$. Therefore, $\oWG_1 := \lim_{t \to +\infty} \mWG_1(t) = \bar{\vu}\bar{\vu}^{\top}$ exists.
	
	If $\oWG_1$ is a minimizer of $\ffun$, $\oU = (\bar{\vu}, \vzero, \cdots, \vzero) \in \R^{d \times d}$ is also a minimizer of $\Loss: \R^{d \times d} \to \R, \mU \mapsto \frac{1}{2}f(\mU\mU^{\top})$. By analyticity of $\ffun$, Łojasiewicz inequality holds for $\Loss(\,\cdot\,)$ around $\oU$. For every $\epsilon > 0$, we can always find a time $t_{\epsilon}$ such that $\normtwosm{\vu(t_{\epsilon}) - \bar{\vu}} \le \epsilon/2$. On the other hand, by~\Cref{thm:glrl-1st-phase}, there exists a number $\alpha_{\epsilon}$ such that for every $\alpha < \alpha_{\epsilon}$,
	\[
		\normtwo{\phi(\mW_{\alpha}, T(\mW_{\alpha}) + t_{\epsilon}) - \mWG_1(t_{\epsilon})} \le \epsilon/2, \quad \text{where} \quad T(\mW) := \frac{1}{2\mu_1} \log \frac{1}{\dotp{\mW}{\vu_1\vu_1^{\top}}}.
	\]
	Combining these together we have $\normtwo{\phi(\mW_{\alpha}, T(\mW_{\alpha}) + t_{\epsilon}) - \oWG_1} \le \epsilon$.
	
	It is easy to construct a factorization $\phi(\mW_{\alpha}, T(\mW_{\alpha}) + t_{\epsilon}) := \mU_{\alpha,\epsilon} \mU_{\alpha,\epsilon}^{\top}$ such that $\normtwo{\mU_{\alpha,\epsilon} - \oU} = O(\epsilon)$, e.g., we can find an arbitrary factorization and then right-multiply an orthogonal matrix so that the row vector with the largest norm aligns with the direction of $\bar{\vu}$. Applying \Cref{lm:local-min-attract}, we know that gradient flow starting with $\mU_{\alpha,\epsilon}$ converges to a point that is only $O(\epsilon^{2(1-\mu)})$ far from $\bar{\vu}$. So we have
	\[
		\normtwo{\lim_{t \to +\infty} \phi(\mW_{\alpha}, T(\mW_{\alpha}) + t) - \oWG_1} = O(\epsilon^{2(1-\mu)}).
	\]
	Taking $\epsilon \to 0$ complete the proof.
\end{proof}

\subsection{Proof for \Cref{thm:general-escape-glrl}} \label{sec:proof-for-thm-general-esc}

\begin{theorem} \label{thm:general-escape-glrl-formal}
	Let $\oW$ be a critical point of \eqref{eq:ode-W} satisfying that $\oW$ is a local minimizer of $\ffun$ in $\psdlr$ for some $r \ge 1$ but not a minimizer in $\psd$. Let $-\nabla f(\oW) = \sum_{i=1}^{d} \mu_i \vv_i \vv_i^{\top}$ be the eigendecomposition of $-\nabla f(\oW)$. If $\mu_1 > \mu_2$, the following limit exists and is a solution of \eqref{eq:ode-W}.
	\[
		\mWG(t) := \lim_{\epsilon \to 0} \phi\left(\oWG + \epsilon \vv_1\vv_1^{\top}, \frac{1}{2\mu_1} \log \frac{1}{\epsilon} + t\right).
	\]
	For $\{\mW_{\alpha}\} \subseteq \psd$, if there exists time $T_{\alpha} \in \R$ for every $\alpha$ so that $\phi(\mW_\alpha, T_{\alpha})$ converges to $\oW$ with positive alignment with the top principal component $\vv_1\vv_1^{\top}$ as $\alpha \to 0$, then $\forall t \in \R$,
	\[
		\lim_{\alpha \to 0} \phi\left(\mW_\alpha, T_\alpha + \frac{1}{2\mu_1} \log \frac{1}{\dotp{\phi(\mW_{\alpha}, T_{\alpha})}{\vv_1\vv_1^{\top}}} + t\right) = \mWG(t).
	\]
	Moreover, there exists a constant $C > 0$ such that
	\[
		\normF{\phi\left(\mW_\alpha, T_\alpha + \frac{1}{2\mu_1} \log \frac{1}{\dotp{\phi(\mW_{\alpha}, T_{\alpha})}{\vv_1\vv_1^{\top}}} + t\right) - \mWG(t)} \le C \normF{\phi(\mW_{\alpha}, T_{\alpha})}^{\frac{\tilde{\gamma}}{2 \mu_1 + \tilde{\gamma}}}
	\]
	for every sufficiently small $\alpha$, where $\tilde{\gamma} := 2\mu_1 - \max\{\mu_1 + \mu_2, 0\}$.
\end{theorem}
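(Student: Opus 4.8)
\textbf{The plan} is to view this theorem as the analogue, at a general critical point $\oW$, of \Cref{thm:glrl-1st-phase} (which is the case $\oW=\vzero$): recenter the end-to-end dynamics \eqref{eq:ode-W} at $\oW$ and invoke \Cref{thm:general-escape}. First I would record the structural facts about $\oW$. Because $\oW$ is a critical point of \eqref{eq:ode-W} that is \emph{not} a minimizer of $f$ in $\psd$, \Cref{lm:f-global-min-psd} gives $\nabla f(\oW)\not\succeq\vzero$, so the top eigenvalue $\mu_1$ of $-\nabla f(\oW)$ is strictly positive; by hypothesis $\mu_1>\mu_2$, and by \Cref{lm:L-sosp-full-rank} there is a full-rank factorization $\oW=\zU\zU^{\top}$ with $\zU\in\R^{d\times r}$, and $\zU$ is a second-order stationary point of $\Loss$ on $\R^{d\times r}$ (since $\oW$ is a local minimizer of $f$ in $\psdlr$ but not in $\psd$), so in particular $D^2\Loss(\zU)\succeq\vzero$. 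Restricting \eqref{eq:ode-W} to symmetric matrices and vectorizing the lower triangle via $\lotri$, exactly as in the proof of \Cref{thm:glrl-1st-phase}, turns this into a $\contC^2$ dynamical system on $\R^{d(d+1)/2}$ with critical point $\lotri(\oW)$ whose Jacobian there is $\mJ(\oW)$ restricted to the space $\symm$ of symmetric matrices.

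Next I would pin down the top of the spectrum of $\mJ(\oW)|_{\symm}$ via \Cref{thm:eigen-local-min}: on $\symm$ the eigenvalues are the Type-1 values $\mu_i+\mu_j$ for $1\le i\le j\le d-r$ and the Type-2 values $\xi_p$, the eigenvalues of $-D^2\Loss(\zU)$. Since $D^2\Loss(\zU)\succeq\vzero$, every $\xi_p\le0$, so the largest eigenvalue is $2\mu_1$, with left eigenvector $\hat{\mU}_{11}=\vv_1\vv_1^{\top}$; it is \emph{simple} because $2\mu_1>\mu_1+\mu_2$ (using $\mu_1>\mu_2$) dominates all other Type-1 values and $2\mu_1>0\ge\xi_p$ dominates all Type-2 values, all eigenvalues are real, the second-largest is at most $\max\{\mu_1+\mu_2,0\}$, and hence the gap is at least $\tilde{\gamma}:=2\mu_1-\max\{\mu_1+\mu_2,0\}$. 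I would also note that $\mJ(\oW)|_{\symm}$ is diagonalizable over $\R$ (the eigenvectors exhibited in \Cref{thm:eigen-local-min} span complementary subspaces of $\symm$), so the diagonalizable form of \Cref{thm:general-escape} applies and delivers the sharp exponent; failing that, the non-diagonalizable variant still applies at the cost of an arbitrarily small slack in the exponent.

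Then I would apply \Cref{thm:general-escape} to the recentered flow $\tilde{\phi}(\mDelta,t):=\phi(\oW+\mDelta,t)-\oW$, whose Jacobian at $\vzero$ is $\mJ(\oW)|_{\symm}$ with top left eigenvector $\vv_1\vv_1^{\top}$. The key elementary observation is that $\vv_1$ spans an eigenspace of $-\nabla f(\oW)$ for the nonzero eigenvalue $\mu_1$ while $\nabla f(\oW)\oW=\vzero$, so $\oW\vv_1=\vzero$; therefore $\dotpsm{\oW}{\vv_1\vv_1^{\top}}=0$ and $\dotpsm{\mW-\oW}{\vv_1\vv_1^{\top}}=\dotpsm{\mW}{\vv_1\vv_1^{\top}}$ for every symmetric $\mW$. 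Setting $\vdelta_\alpha:=\phi(\mW_\alpha,T_\alpha)-\oW$, the hypothesis is precisely that $\vdelta_\alpha\to\vzero$ with positive alignment with $\vv_1\vv_1^{\top}$, so \Cref{thm:general-escape} yields that $\vz(t):=\lim_{\alpha\to0}\tilde{\phi}(\alpha\vv_1\vv_1^{\top},t+\tfrac{1}{2\mu_1}\log\tfrac1\alpha)$ exists and solves the recentered ODE — hence $\mWG(t)=\oW+\vz(t)$ exists and solves \eqref{eq:ode-W}, and using the perturbation $\vv_1\vv_1^{\top}$ in the definition of $\mWG$ (rather than the true top right eigenvector) is legitimate precisely because $\vv_1\vv_1^{\top}$ is trivially positively aligned with the top \emph{left} eigenvector, so by \Cref{thm:general-escape} it selects the same limiting trajectory — together with the error bound
\[
\normF{\tilde{\phi}\big(\vdelta_\alpha,\ t+\tfrac{1}{2\mu_1}\log\tfrac{1}{\dotpsm{\vdelta_\alpha}{\vv_1\vv_1^{\top}}}\big)-\vz(t)}\ \le\ C\,\normF{\vdelta_\alpha}^{\frac{\tilde{\gamma}}{2\mu_1+\tilde{\gamma}}}
\]
for all sufficiently small $\alpha$. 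Translating back through $\tilde{\phi}(\vdelta_\alpha,s)=\phi(\mW_\alpha,T_\alpha+s)-\oW$ and $\dotpsm{\vdelta_\alpha}{\vv_1\vv_1^{\top}}=\dotpsm{\phi(\mW_\alpha,T_\alpha)}{\vv_1\vv_1^{\top}}$ gives exactly the stated limit and bound (with base $\normF{\phi(\mW_\alpha,T_\alpha)-\oW}$).

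I expect the main obstacle to be the spectral analysis at $\oW$: unlike $\mJ(\vzero)$, the Jacobian $\mJ(\oW)$ is not symmetric, so its top left and right eigenvectors differ, and one must lean on \Cref{thm:eigen-local-min} together with the local minimality of $\oW$ in $\psdlr$ (to force $\xi_p\le0$) and its non-minimality in $\psd$ (to force $\mu_1>0$) to conclude that $2\mu_1$ is a simple, real, positive eigenvalue and to control both the gap $\tilde{\gamma}$ and the diagonalizability needed for the sharp exponent. The remaining conceptual point — that defining $\mWG$ via the perturbation $\vv_1\vv_1^{\top}$ rather than the true right eigenvector is harmless — comes for free from the positive-alignment formulation of \Cref{thm:general-escape}, which is exactly the device that makes the limiting trajectory insensitive to the (positively aligned) choice of perturbation direction.
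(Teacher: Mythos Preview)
Your proposal is correct and follows essentially the same approach as the paper: restrict \eqref{eq:ode-W} to $\symm$ via $\lotri$, invoke \Cref{thm:eigen-local-min} together with local minimality in $\psdlr$ (forcing $\xi_p\le 0$) and non-minimality in $\psd$ (forcing $\mu_1>0$) to identify $2\mu_1$ as the simple top eigenvalue with left eigenvector $\vv_1\vv_1^{\top}$ and gap at least $\tilde{\gamma}$, note $\dotpsm{\oW}{\vv_1\vv_1^{\top}}=0$ via \eqref{eq:fW-W-sim-diag}, and apply \Cref{thm:general-escape}. You are in fact more explicit than the paper on two points the paper leaves to ``easy to translate'': the diagonalizability of $\mJ(\oW)|_{\symm}$ needed for the sharp exponent, and the observation that defining $\mWG$ via the perturbation $\vv_1\vv_1^{\top}$ (a \emph{left} eigenvector) rather than the top right eigenvector is harmless because $\epsilon\,\vv_1\vv_1^{\top}$ is itself a sequence with positive alignment with $\tilde{\vu}_1$, hence selects the same limiting trajectory $\vz(t)$.
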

\begin{proof}
	Following \Cref{sec:pf-glrl-1st-phase},  we view $\lotri(\mW(t))$ as a dynamical system.
	\[
	\frac{\dd}{\dd t} \lotri(\mW(t)) = \tilde{\vg}(\lotri(\mW(t))).
	\]
	Let $\oW = \oU\oU^{\top}$ be a factorization of $\oW$, where $\oU \in \R^{d \times r}$. Since $\oW$ is a local minimizer of $\ffun$ in $\psdlr$, $\oU$ is also a local minimizer of $\Loss: \R^{d \times r} \to \R, \mU \mapsto \frac{1}{2} f(\mU\mU^{\top})$. Since $\oW$ is not a minimizer of $\ffun$ in $\psd$, by \Cref{lm:L-sosp-full-rank}, $\oU$ is full-rank. By \Cref{thm:eigen-local-min}, $\mJ(\oW)$ has eigenvalues $\mu_i + \mu_j, \xi_p, 0$. By a similar argument as in \Cref{sec:pf-glrl-1st-phase}, the Jacobian of $\tilde{\vg}$ at $\lotri(\mW(t))$ has eigenvalues $\mu_i + \mu_j, \xi_p$.
	
	Since $\oU$ is a local minimizer, $\xi_p \le 0$ for all $p$. If $\mu_1 > \mu_2$, then $2\mu_1$ is the unique largest eigenvalue, and \Cref{thm:eigen-local-min} shows that $\lotri(\vv_1 \vv_1^{\top})$ is a left eigenvector associated with $2\mu_1$. The eigenvalue gap $\tilde{\gamma} := 2\mu_1 - \max\{\mu_1 + \mu_2, \max\{\xi_p : 1 \le p \le rd - \frac{r(r-1)}{2}\} \} \ge 2 \mu_1 - \max\{\mu_1 + \mu_2, 0\}$. 
	
	Also note that $\dotp{\phi(\mW_{\alpha}, T_{\alpha}) - \oW}{\vv_1\vv_1^{\top}} = \dotp{\phi(\mW_{\alpha}, T_{\alpha})}{\vv_1\vv_1^{\top}}$ because $\dotp{\oW}{\vv_1\vv_1^{\top}} = 0$ by \eqref{eq:fW-W-sim-diag}. If $\phi(\mW_\alpha, T_{\alpha})$ converges to $\oW$ as $\alpha \to 0$, then it has positive alignment with $\vv_1\vv_1^{\top}$ iff $\liminf_{\alpha \to 0} \frac{\dotp{\phi(\mW_\alpha, T_{\alpha})}{\vv_1 \vv_1^{\top}}}{\normF{\phi(\mW_\alpha, T_{\alpha}) - \oW}} > 0$. Then it is easy to translate \Cref{thm:general-escape} to \Cref{thm:general-escape-glrl-formal}.
\end{proof}

\subsection{Gradient Flow only finds minimizers (Proof for \Cref{thm:GD_converge_to_min})}\label{appsec:proof_of_GD_converge_to_min}

The proof for \Cref{thm:GD_converge_to_min} is based on the following two theorems from the literature.
\begin{theorem}[Theorem 3.1 in \citealt{du2018power}]\label{thm:du_lee_strict_saddle}
Let $f:\R^{d\times d}\to \R$ be a $\gC^2$ convex function. Then $\Loss:\R^{d\times k}\to \R$, $\Loss(\mU) = f(\mU\mU^\top), k\ge d$ satisfies that (1). Every local minimizer of $\Loss$ is also a global minimizer; (2). All saddles are strict. Here saddles denote those stationary points whose hessian are not positive semi-definite (thus including local maximizers). \footnote{Though the original theorem is proven for convex functions of form $\sum_{i=1}^n \ell(\vx_i\mU\mU^\top\vx_i^\top, y_i)$, where $\ell(\cdot,\cdot)$ is $\gC^2$ convex for its first variable. By scrutinizing their proof, we can see the assumption can be relaxed to $f$ is $\gC^2$ convex.}
\end{theorem}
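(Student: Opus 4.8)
The plan is to reduce both claims to a single statement about stationary points of $\Loss$ and then prove that statement by a perturbation argument. Write $\mW_0 := \mU_0\mU_0^{\top}$ for a stationary point $\mU_0 \in \R^{d\times k}$; since $f$ is $\mathcal{C}^2$, so is $\Loss$. Because $k \ge d$, every PSD matrix can be written as $\mU\mU^{\top}$, so $\min_{\mU} \Loss(\mU) = \min_{\mW \succeq \vzero} f(\mW)$, and hence $\mU_0$ is a global minimizer of $\Loss$ iff $\mW_0$ attains the minimum of $f$ over $\psd$, which by \Cref{lm:f-global-min-psd} is equivalent to $\nabla f(\mW_0) \succeq \vzero$. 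So I would aim to prove the assertion: \emph{if $\mU_0$ is a stationary point of $\Loss$ whose Hessian $D^2\Loss(\mU_0)$ is positive semidefinite, then $\nabla f(\mW_0) \succeq \vzero$.} This yields (1), since a local minimizer of a $\mathcal{C}^2$ function has a PSD Hessian (second-order necessary condition). It also yields (2): the stated notion of ``saddle'' — a stationary point with non-PSD Hessian — automatically has a direction of strictly negative curvature, so every such point is strict; and the form in which the theorem is used in \Cref{thm:GD_converge_to_min} is that any stationary point which is \emph{not} a strict saddle (equivalently, has PSD Hessian) must be a global minimizer, which is exactly the assertion combined with \Cref{lm:f-global-min-psd}.

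To prove the assertion I would first dispose of the degenerate case $\mathrm{rank}(\mU_0) = k$: since $\mathrm{rank}(\mU_0) \le d \le k$ this forces $k = d$ with $\mU_0$ invertible, and then stationarity ($\nabla f(\mW_0)\mU_0 = \vzero$, from $\nabla\Loss(\mU_0) = 2\nabla f(\mW_0)\mU_0$; cf.\ \Cref{lm:stat-and-criti}) immediately gives $\nabla f(\mW_0) = \vzero \succeq \vzero$. In the remaining case $\mathrm{rank}(\mU_0) < k$, I would pick a unit $\vq \in \R^{k}$ with $\mU_0\vq = \vzero$, fix an arbitrary $\vv \in \R^{d}$, and set $\mDelta := \vv\vq^{\top}$. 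Since $\mU_0\vq = \vzero$ kills every cross term, $(\mU_0 + t\mDelta)(\mU_0 + t\mDelta)^{\top} = \mW_0 + t^2\vv\vv^{\top}$, so $\Loss(\mU_0 + t\mDelta) = f(\mW_0 + t^2\vv\vv^{\top})$; differentiating this twice at $t = 0$ gives $2\dotp{\nabla f(\mW_0)}{\vv\vv^{\top}} = 2\,\vv^{\top}\nabla f(\mW_0)\vv$, which therefore equals the Hessian quadratic form $D^2\Loss(\mU_0)[\mDelta,\mDelta]$ (the $D^2f$ contribution in the chain-rule expansion of the Hessian vanishes because $\mDelta\mU_0^{\top} = \mU_0\mDelta^{\top} = \vzero$). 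Under the PSD-Hessian hypothesis this is $\ge 0$ for every $\vv$, i.e.\ $\nabla f(\mW_0) \succeq \vzero$, as required.

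The only genuinely nontrivial step is this last one: seeing that the second-order condition at a stationary point, probed along the rank-raising direction $\vv\vq^{\top}$, collapses to the curvature of $f$ at $\mW_0$ along $\vv\vv^{\top}$, which is exactly the quantity controlling whether $\nabla f(\mW_0) \succeq \vzero$ — the first-order optimality (KKT) condition for the convex program $\min_{\mW\succeq\vzero} f(\mW)$ encoded in \Cref{lm:f-global-min-psd}. Everything else is bookkeeping: the choice of $\vq$ and the separate handling of the full-rank case $k = d$, and checking that the cross terms and the $D^2f$ term drop out. I would also remark that convexity of $f$ is used \emph{only} inside \Cref{lm:f-global-min-psd} (for the sufficiency of $\nabla f(\mW_0)\succeq\vzero$ for global optimality), which is precisely why the hypothesis can be relaxed from convexity of the composed loss $\mU \mapsto f(\mU\mU^{\top})$ to convexity of $f$ itself, as noted in the footnote.
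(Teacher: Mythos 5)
Your proof is correct and uses the same argument the paper employs internally in \Cref{lm:L-sosp-full-rank}: probe $D^2\Loss(\mU_0)$ along the rank-raising direction $\mDelta=\vv\vq^{\top}$ with $\mU_0\vq=\vzero$, so the cross and $D^2 f$ terms vanish and the Hessian quadratic form reduces to (a multiple of) $\vv^{\top}\nabla f(\mW_0)\vv$, then invoke \Cref{lm:f-global-min-psd}. The paper itself cites \citet{du2018power} for the stated theorem rather than re-proving it, but your derivation matches both that reference and the paper's own \Cref{lm:L-sosp-full-rank}.
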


\begin{theorem}[Theorem 2 in \citealt{lee2017first}]\label{thm:lee_zero_measure}
	Let $\vg$ be a $\gC^1$ mapping from $\gX\to\gX$ and $\det(D\vg(x))\neq 0$ for all $\vx\in \gX$. Then the set of initial points that converge to an unstable fixed point has measure zero, $\mu\left( \{\vx_0:\lim_{k\to \infty} \vg^k(\vx_0)\in \gA_\vg^*\}\right)=0$, where $\gA^*_\vg = \{\vx:\vg(\vx) =\vx, \max_i|\lambda_i(D\vg(\vx))|>1\}$.
\end{theorem}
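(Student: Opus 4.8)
The plan is to reduce the statement to a local fact at each unstable fixed point, via the Center Stable Manifold Theorem, and then glue the local pieces together using second countability together with the fact that the hypothesis $\det(D\vg(\vx))\neq 0$ makes $\vg$ a local diffeomorphism. First I would extract this last point precisely: by the inverse function theorem $\vg$ is locally a $\mathcal{C}^1$ diffeomorphism, so $\gX$ is a countable union of open sets $U_j$ on which $\vg|_{U_j}$ is a diffeomorphism onto its image; since $(\vg|_{U_j})^{-1}$ is $\mathcal{C}^1$ hence locally Lipschitz, and Lipschitz maps between open subsets of a fixed Euclidean space preserve Lebesgue-null sets, it follows that $\vg^{-1}(N)$ is null whenever $N$ is null, and inductively that $\vg^{-T}(N)$ is null for every integer $T\ge 0$.

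Next I would carry out the local analysis. Fix $\vx^*\in\gA^*_\vg$. Since $D\vg(\vx^*)$ has an eigenvalue of modulus strictly greater than $1$, its unstable eigenspace $E^u$ is nonzero, so the center-stable eigenspace $E^{cs}:=E^c\oplus E^s$ has dimension at most $\dim\gX-1$. I would then invoke the discrete-time, non-hyperbolic Center Stable Manifold Theorem: there exist an open neighborhood $B_{\vx^*}\ni\vx^*$ and an embedded $\mathcal{C}^1$ submanifold $W_{\vx^*}\subseteq B_{\vx^*}$, tangent to $E^{cs}$ at $\vx^*$, of dimension at most $\dim\gX-1$, which is locally invariant and, crucially, satisfies $\bigcap_{k\ge 0}\vg^{-k}(B_{\vx^*})\subseteq W_{\vx^*}$ --- that is, any point whose entire forward orbit stays in $B_{\vx^*}$ lies on $W_{\vx^*}$. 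Being at most $(\dim\gX-1)$-dimensional, $W_{\vx^*}$ has Lebesgue measure zero in $\gX$.

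Then I would globalize. If $\lim_{k\to\infty}\vg^k(\vx_0)=\vx^*\in\gA^*_\vg$, then since $B_{\vx^*}$ is open there is a $T$ with $\vg^k(\vx_0)\in B_{\vx^*}$ for all $k\ge T$; applying the containment above to the orbit starting at $\vg^T(\vx_0)$ yields $\vg^T(\vx_0)\in W_{\vx^*}$, i.e.\ $\vx_0\in\vg^{-T}(W_{\vx^*})$. The neighborhoods $\{B_{\vx^*}:\vx^*\in\gA^*_\vg\}$ cover $\gA^*_\vg$, and since $\gX$ is second countable (hence Lindel\"of) there is a countable subcover $\{B_{\vx_i^*}\}_{i\in\N}$; as the limit of every convergent orbit lies in some $B_{\vx_i^*}$, the reasoning above gives
\[
\{\vx_0:\textstyle\lim_{k\to\infty}\vg^k(\vx_0)\in\gA^*_\vg\}\ \subseteq\ \bigcup_{i\in\N}\ \bigcup_{T\ge 0}\ \vg^{-T}(W_{\vx_i^*}).
\]
Each $W_{\vx_i^*}$ is null and each $\vg^{-T}$ preserves nullity by the first step, so the right-hand side is a countable union of null sets; hence the left-hand side is null, which is the claim.

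The step I expect to be the main obstacle is the Center Stable Manifold Theorem in the regime where eigenvalues of modulus exactly $1$ are permitted, so that genuine center directions may be present: one must cite the theorem in its correct non-hyperbolic discrete-time form (e.g.\ from Shub's \emph{Global Stability of Dynamical Systems}, possibly after reindexing the spectrum so the modulus-$>1$ block is an isolated spectral subset, or after passing to $\vg^{-1}$), and in particular secure the containment property $\bigcap_{k\ge 0}\vg^{-k}(B_{\vx^*})\subseteq W_{\vx^*}$, which is exactly what converts the soft hypothesis ``the orbit converges to $\vx^*$'' into membership in a set of dimension $<\dim\gX$. The remaining ingredients --- that $\vg^{-T}$ maps null sets to null sets, and the Lindel\"of reduction to a countable cover --- are routine, the only care being to keep the indexing over $\gA^*_\vg$ countable even though $B_{\vx^*}$, $W_{\vx^*}$, and $T$ all vary with the fixed point.
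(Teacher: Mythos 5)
The paper does not prove \Cref{thm:lee_zero_measure}; it is imported verbatim as Theorem~2 of \citet{lee2017first} and used as a black box. Your reconstruction --- (i) null sets are preserved under $\vg^{-T}$ because $\det(D\vg)\neq 0$ makes $\vg$ a local $\gC^1$ diffeomorphism, (ii) the discrete-time Center Stable Manifold Theorem at each $\vx^*\in\gA^*_\vg$ gives a locally invariant $\gC^1$ submanifold $W_{\vx^*}$ of dimension $\le \dim\gX-1$ that absorbs every forward orbit staying in $B_{\vx^*}$, and (iii) Lindel\"of plus a countable union over return times $T$ --- is exactly the argument in the cited source, and it is correct, including the subtle point that an orbit converging to a limit inside $B_{\vx_i^*}$ eventually lies entirely in $B_{\vx_i^*}$ even when that limit is not $\vx_i^*$ itself.
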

\begin{theorem}[GF only finds minimizers, a continuous analog of \Cref{thm:lee_zero_measure}]\label{thm:gf_zero_measure}
	Let $\vf:\R^d \to \R^d$ be a $\mathcal{C}^1$-smooth function, and $\phi:\R^d\times \R \to \R^d$ be the solution of the following differential equation,
		\[\frac{\dd \phi(\vx,t)}{\dd t}  = \vf(\phi(\vx,t)), \quad \phi(\vx,0) = \vx, \quad \forall \vx\in\R^d, t\in\R.\]
Then the set of initial points that converge to a unstable critical point has measure zero, $\mu\left( \left\{\vx_0:\lim_{t\to \infty} \phi(\vx_0, t)\in \gU^*_\vf \right\} \right)=0$, where $\gU^*_\vf = \{\vx:\vf(\vx) = \vzero, \lambda_1(D \vf(\vx))>0\}$ and $D\vf$ is the Jacobian matrix of $\vf$.
\end{theorem}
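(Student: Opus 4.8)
The plan is to deduce this continuous-time statement from its discrete-time counterpart, \Cref{thm:lee_zero_measure}, by passing to the \emph{time-one map} of the flow. Define $\vg : \R^d \to \R^d$ by $\vg(\vx) := \phi(\vx, 1)$. By the flow (semigroup) property $\phi(\phi(\vx, s), t) = \phi(\vx, s + t)$, an easy induction gives $\vg^k(\vx) = \phi(\vx, k)$ for all integers $k \ge 1$. If $\vf(\vx) = \vzero$ then $\phi(\vx, t) \equiv \vx$, so $\vx$ is a fixed point of $\vg$; and if $\phi(\vx_0, t) \to \vx^*$ as $t \to \infty$, then along integer times $\vg^k(\vx_0) = \phi(\vx_0, k) \to \vx^*$. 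Hence $\{\, \vx_0 : \lim_{t\to\infty} \phi(\vx_0, t) \in \gU^*_\vf \,\} \subseteq \{\, \vx_0 : \lim_{k\to\infty} \vg^k(\vx_0) \in \gU^*_\vf \,\}$, and it suffices to show the latter set is Lebesgue-null.

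First I would verify that $\vg$ meets the hypotheses of \Cref{thm:lee_zero_measure}, namely that $\vg$ is $\mathcal{C}^1$ with everywhere nonvanishing Jacobian determinant. Since $\vf$ is $\mathcal{C}^1$, the standard theory of smooth dependence of ODE solutions on initial conditions gives that $\vx \mapsto \phi(\vx, t)$ is $\mathcal{C}^1$ for each fixed $t$, so $\vg \in \mathcal{C}^1$ and $D\vg(\vx) = M_\vx(1)$, where $M_\vx(t) := D_{\vx}\phi(\vx, t)$ solves the variational equation $\dot M_\vx(t) = D\vf(\phi(\vx, t)) \, M_\vx(t)$, $M_\vx(0) = \mI$. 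By Liouville's formula, $\det D\vg(\vx) = \exp\!\big( \int_0^1 \Tr\, D\vf(\phi(\vx, s)) \, \dd s \big) > 0$, in particular nonzero everywhere. Thus \Cref{thm:lee_zero_measure} applies to $\vg$ and yields $\mu\big( \{\, \vx_0 : \lim_{k} \vg^k(\vx_0) \in \gA^*_\vg \,\} \big) = 0$, with $\gA^*_\vg := \{\, \vx : \vg(\vx) = \vx,\ \max_i |\lambda_i(D\vg(\vx))| > 1 \,\}$.

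It then remains only to check the inclusion $\gU^*_\vf \subseteq \gA^*_\vg$; combined with the display in the first paragraph this finishes the proof, since a subset of a null set is null. Let $\vx^* \in \gU^*_\vf$, so $\vf(\vx^*) = \vzero$ and $\lambda_1(D\vf(\vx^*)) > 0$, i.e.\ $D\vf(\vx^*)$ has an eigenvalue of positive real part. We already know $\vg(\vx^*) = \vx^*$. Evaluating the variational equation at the fixed point gives $M_{\vx^*}(t) = \exp(t\, D\vf(\vx^*))$, hence $D\vg(\vx^*) = \exp(D\vf(\vx^*))$; since the eigenvalues of $\exp(\mA)$ are $\{\exp(\lambda) : \lambda \in \mathrm{spec}(\mA)\}$, the matrix $D\vg(\vx^*)$ has an eigenvalue of modulus $\exp(\mathrm{Re}\,\lambda) > 1$. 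Therefore $\vx^* \in \gA^*_\vg$, as needed.

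I do not expect a genuine obstacle here: the argument is routine once \Cref{thm:lee_zero_measure} is available. The two points that need a bit of care are (i) the $\mathcal{C}^1$-regularity of the time-one map and the Liouville identity certifying $\det D\vg \ne 0$ (so that the discrete theorem is actually applicable), and (ii) the spectral-mapping identity $D\vg(\vx^*) = \exp(D\vf(\vx^*))$, which is what lets one translate ``strict saddle of the flow'' ($\lambda_1(D\vf) > 0$) into ``unstable fixed point of $\vg$'' ($\max_i|\lambda_i(D\vg)| > 1$). The only structural observation is that the correct reduction is to the time-one map, and that only the one-directional implications (critical point $\Rightarrow$ fixed point, flow-convergence $\Rightarrow$ orbit-convergence, $\gU^*_\vf \subseteq \gA^*_\vg$) are required.
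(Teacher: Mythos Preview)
Your proposal is correct and follows essentially the same route as the paper: reduce to the discrete result (\Cref{thm:lee_zero_measure}) via the time-one map $\vg(\vx)=\phi(\vx,1)$, check $\det D\vg\neq 0$, and then show $\gU^*_\vf\subseteq \gA^*_\vg$ using the variational equation and the identity $D\vg(\vx^*)=\exp(D\vf(\vx^*))$ at critical points. The only cosmetic difference is that you certify $\det D\vg\neq 0$ via Liouville's formula, whereas the paper argues it by observing that $\vg^{-1}(\vx)=\phi(\vx,-1)$ is $\mathcal{C}^1$, so $D\vg$ is everywhere invertible; both arguments are standard and equally short.
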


\begin{proof}[Proof of \Cref{thm:gf_zero_measure}]
	 By Theorem 1 in Section 2.3, \citet{perko2013differential}, we know $\phi(\cdot,\cdot)$ is $\mathcal{C}^1$-smooth for both $x,t$. We let $\vg(x)=\phi(x,1)$, then we know  $\vg^{-1}(x)=\phi(x,-1)$ and both  $\vg, \vg^{-1}$ are $\contC^1$-smooth. Note that $D\vg^{-1}(x)$ is the inverse matrix of $D\vg(x)$. So both of the two matrices are invertible. Thus we can apply \Cref{thm:lee_zero_measure} and we know $\mu\left( \{x_0:\lim_{k\to\infty} \vg^k(x_0)\in \gA_\vg^*\}\right)=0$.
	 
	 Note that if $\lim_{t\to\infty} \phi(x,t)$ exists, then $\lim_{k\to\infty} \vg^k(\vx) = \lim_{t\to\infty} \phi(\vx,t)$. It remains to show that $\gU^*_\vf\subseteq \gA^*_\vg$. For $\vf (\vx_0) = \vzero$, we have $\phi(\vx_0, t) = \vx_0$ and thus $\vg(\vx_0) = \vx_0$. Now it suffices to prove that $\lambda_1(D\vg(\vx_0)) > 1$. For every $t \in [0, 1]$, by Corollary of Theorem 1 in Section 2.3, \citet{perko2013differential},  we have $\frac{\partial}{\partial t} D\phi(\vx, t) = D\vf(\phi(\vx,t)) D\phi(\vx, t)$, $\forall \vx,t$. Thus,
\[	\frac{\partial}{\partial t} D\phi(x_0, t) = D\vf(\phi(x_0,t)) D\phi(x_0, t)= D\vf(x_0) D\phi(x_0, t).
\]
Solving this ODE gives $D\vg(\vx_0)  =D \phi(\vx,1) = e^{D\vf (x_0)} D\phi(\vx,0) =e^{D\vf (x_0)} $, where the last equality is due to $ D\phi(\vx,0)\equiv \mI,\ \forall \vx$. Combining this with $\lambda_1(D\vf(x_0)) > 0$, we have $\lambda_1(D\vg(x_0)) > 1$.

Thus we have $\gU^*_\vf:=\{\vx_0 : \vf(\vx_0)=\vzero, \lambda_1(D\vf(\vx_0)) > 0\}\subseteq \gA^*_\vg$, which implies that $ \{\vx_0:\lim_{t\to \infty} \phi(\vx_0,t) \in \gU^* \} \subseteq \{\vx_0:\lim_{k\to \infty} \vg^k(\vx_0)\in \gA_\vg^*\}$
\end{proof}

\GDconvergetomin*
\begin{proof}[Proof of \Cref{thm:GD_converge_to_min}]
For (1), by \Cref{thm:du_lee_strict_saddle}, we immediately know all the stationary points of $\Loss(\,\cdot\,)$ are either global minimizers or strict saddles. (2) is just a direct consequence of \Cref{thm:gf_zero_measure} by setting $\vf$ in the above proof to $-\nabla \Loss$.
\end{proof}

\section{Equivalence Between GF and GLRL} \label{app:eq-gf-glrl}

In this section we elaborate on the theoretical evidence that GF and GLRL are equivalent generically, including the case where GLRL does not end in the first phase. The word ``generically'' used when we want to assume one of the following regularity conditions:
\begin{enumerate}
	\item We want to assume that GF converges to a local minimizer (i.e., GF does not get stuck on saddle points);
	\item We want to assume that the top eigenvalue $\lambda_1(-\nabla f(\mW))$ is unique for a critical point $\mW$ of \eqref{eq:ode-W} that is not a minimizer of $\ffun$ in $\psd$;
 	\item We want to assume that a convergent sequence of PSD matrices $\mW_{\alpha} \to \oW$ has positive alignment with $\vv\vv^{\top}$ for some fixed vector $\vv$ with $\dotpsm{\oW}{\vv\vv^{\top}} = \vzero$, i.e., for a convergent sequence of PSD matrices $\mW_{\alpha} \to \oW$, it holds for sure that $\liminf\limits_{\alpha \to 0}\dotp{\frac{\mW_{\alpha} - \oW}{\normFsm{\mW_{\alpha} - \oW}}}{\vv\vv^{\top}} = \liminf\limits_{\alpha \to 0}\frac{\dotp{\mW_{\alpha}}{\vv\vv^{\top}}}{\normFsm{\mW_{\alpha} - \oW}} \ge 0$, and we further assume that the inequality is strict generically.
\end{enumerate}

\Cref{thm:general-escape-glrl-formal} uncovers how GF with infinitesimal initialization generically behaves. Let $\oWG_0 := \vzero$. For every $r \ge 1$, if $\oWG_{r-1}$ is a local minimizer in $\psdlx{r-1}$ but not a minimizer in $\psd$, then $\lambda_1(-\nabla f(\oWG_{r-1})) > 0$ by \Cref{lm:f-global-min-psd}. Generically, the top eigenvalue  $\lambda_1(-\nabla f(\oWG_{r-1}))$ should be unique, i.e., $\lambda_1(-\nabla f(\oWG_{r-1})) > \lambda_2(-\nabla f(\oWG_{r-1}))$. This enables us to apply \Cref{thm:general-escape-glrl-formal} and deduce that the limiting trajectory
\[
	\mWG_{r}(t) := \lim_{\epsilon \to 0} \phi\left(\oWG_{r-1} + \epsilon \vu_{r}\vu_{r}^{\top}, \frac{1}{2\lambda_1(-\nabla f(\oWG_{r-1}))} \log \frac{1}{\epsilon} + t\right)
\]
exists, where $\vu_{r}$ is the top eigenvector of $-\nabla f(\oWG_{r-1})$. This $\mWG_{r}(\,\cdot\,)$ is exactly the trajectory of GLRL in phase $r$ as $\epsilon \to 0$.

Note that $\mWG_{r}(\,\cdot\,)$ corresponds to a trajectory of GF minimizing $\Lossfun$ in $\R^{d \times r}$, which should generically converge to a local minimizer of $\Lossfun$ in $\R^{d \times r}$. This means the limit $\oWG_{r} := \lim_{t \to +\infty} \mWG_r(t)$ should generically be a local minimizer of $\ffun$ in $\psdlr$. If $\oWG_{r}$ is further a  minimizer in $\psd$, then $\lambda_1(-\nabla f(\oWG_r))\le 0$ and GLRL exits with $\oWG_r$; otherwise GLRL enters phase $r + 1$.

If GF aligns well with GLRL in the beginning of phase $r$ (defined below), then by \Cref{thm:general-escape-glrl-formal}, as $\alpha \to 0$, the minimum distance from GF to $\mWG_{r}(t)$ converges to $0$ for every $t \in \R$. Therefore, GF can get arbitrarily close to the $r$-th critical point $\oWG_r$ of GLRL, i.e., there exists a suitable choice $T^{(r)}_{\alpha}$ so that $\lim_{\alpha\to 0}\phi(\mW_{\alpha}, T^{(r)}_{\alpha}) = \oWG_r$. Note that $\dotp{\oW_r}{\vu_r\vu_r^{\top}} = 0$ by \eqref{eq:fW-W-sim-diag} and thus $\liminf\limits_{\alpha \to 0}\dotp{\frac{\phi(\mW_{\alpha}, T^{(r)}_{\alpha}) - \oW_r}{\normFsm{\phi(\mW_{\alpha}, T^{(r)}_{\alpha}) - \oW_r}}}{\vu_r\vu_r^{\top}} = \liminf\limits_{\alpha \to 0}\frac{\dotp{\phi(\mW_{\alpha}, T^{(r)}_{\alpha})}{\vu_r\vu_r^{\top}}}{\normFsm{\phi(\mW_{\alpha}, T^{(r)}_{\alpha}) - \oW_r}} \ge 0$. Generically, there should exist a suitable choice of $T^{(r)}_{\alpha}$ so that $\phi(\mW_{\alpha}, T^{(r)}_{\alpha})$ not only converges to $\oWG_{r}$ but also has positive alignment with $\vu_r\vu_r^{\top}$, that is, GF should generically align well with GLRL in the beginning of phase $r + 1$.

\begin{definition}
	We say that GF aligns well with GLRL in the beginning of phase $r$ if there exists $T^{(r)}_{\alpha}$ for every $\alpha > 0$ such that $\phi(\mW_{\alpha}, T^{(r)}_{\alpha})$ converges to $\oWG_{r-1}$ with positive alignment with $\vu_r\vu_r^{\top}$ as $\alpha\to 0$.
\end{definition}

If the initialization satisfies that $\mW_{\alpha}$ converges to $\vzero$ with positive alignment with $\vu_1\vu_1^{\top}$ as $\alpha \to 0$, then GF aligns well with GLRL in the beginning of phase $1$, which can be seen by taking $T_{\alpha}^{(1)} = 0$. Now assume that GF aligns well with GLRL in the beginning of phase $r - 1$, then the above argument shows that GF should generically align well with GLRL in the beginning of phase $r$, if GLRL does not exit in phase $r-1$. In the other case, we can use a similar argument as in \Cref{thm:glrl-1st-phase-converge} to show that GF converges to a solution near the minimizer $\oWG_r$ of $\ffun$ as $t\to \infty$, and the distance between the solution and $\oWG_r$ converges to $0$ as $\alpha \to 0$. By this induction we prove that GF with infinitesimal initialization is equivalent to GLRL generically. 

\section{Proofs for Deep Matrix Factorization} \label{app:deep}

\subsection{Preliminary Lemmas}

\begin{lemma}\label{lem:rank_balance_end_to_end}
	If $\mW(0)\succeq \vzero$, then $\mW(t)\succeq \vzero $ and $\rank(\mW(t))=\rank(\mW(0))$ for all $t$.
\end{lemma}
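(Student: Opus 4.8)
The plan is to reduce the lemma to a forward‑and‑backward invariance statement for the closed sets $\psdlx{k}$, $k=0,\dots,d$, under the flow $\phi$ of \eqref{eq:imply_end_to_end}, and then specialize. First I would note that, since $\mW(0)$ is symmetric and $\nabla f$ sends symmetric matrices to symmetric matrices (by the paper's WLOG convention), the vector field $\vg^{(L)}(\mW):=-\sum_{i=0}^{L-1}(\mW\mW^\top)^{i/L}\nabla f(\mW)(\mW^\top\mW)^{1-(i+1)/L}$ preserves symmetry, so by uniqueness $\mW(t)$ stays symmetric and $\mW\mW^\top=\mW^\top\mW=\mW^2$ along the trajectory; in particular $\vg^{(L)}(\mW)=-\sum_{i=0}^{L-1}(\mW^2)^{i/L}\nabla f(\mW)(\mW^2)^{(L-1-i)/L}$. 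Granting the invariance claim, taking $k=\rank(\mW(0))$ gives $\mW(t)\in\psdlx{\rank(\mW(0))}$ for all $t$, i.e.\ $\mW(t)\succeq\vzero$ and $\rank(\mW(t))\le\rank(\mW(0))$; applying the same statement to the trajectory started at $\mW(t)$ (legitimate because $\phi$ is defined on all of $\R$) gives $\rank(\mW(0))\le\rank(\mW(t))$, hence equality.

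The main step is to show that $\vg^{(L)}$ is tangent to the fixed‑rank stratum. Fix $\mW\in\psdlx{k}$, set $j=\rank(\mW)$, and write $\mW=\mU\mU^\top$ with $\mU\in\R^{d\times j}$ of full column rank. For every $s>0$ the matrix $(\mW^2)^{s}$ is PSD with $\mathrm{range}\big((\mW^2)^{s}\big)=\mathrm{range}(\mW)$, hence equals $\mU\mP\mU^\top$ for some symmetric $\mP$. Now group the $i$-th and $(L{-}1{-}i)$-th summands of $\vg^{(L)}(\mW)$: for $1\le i\le L-2$ both exponents $i/L$ and $(L-1-i)/L$ are positive, so that summand is $\mU\mS\mU^\top$ for some $\mS\in\R^{j\times j}$; the $i=0$ summand is $-\nabla f(\mW)(\mW^2)^{(L-1)/L}=-\big(\nabla f(\mW)\mU\mP\big)\mU^\top$, and pairing it with the $i=L-1$ summand (its transpose, since $\vg^{(L)}(\mW)$ is symmetric) yields $\mU\mB^\top+\mB\mU^\top$. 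Using symmetry of $\vg^{(L)}(\mW)$ to symmetrize the middle pairs as well, every group is of the form $\mU\mC^\top+\mC\mU^\top$, so $\vg^{(L)}(\mW)\in\mathcal{T}_\mW:=\{\mU\mB^\top+\mB\mU^\top:\mB\in\R^{d\times j}\}$, the tangent space at $\mW$ of the manifold $\psdx{j}$ of rank‑exactly‑$j$ PSD matrices.

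I would then invoke a Nagumo‑type invariance argument. Because rank is lower semicontinuous, $\psdx{j}$ is relatively open in $\psdlx{j}\subseteq\psdlx{k}$, so $\mathcal{T}_\mW$ is the Bouligand contingent cone of $\psdx{j}$ at $\mW$ and is contained in the contingent cone of $\psdlx{k}$ at $\mW$; thus the previous paragraph verifies the subtangentiality condition at every point of $\psdlx{k}$. Since solutions of \eqref{eq:imply_end_to_end} are assumed to exist and be unique, the Nagumo--Brezis invariance theorem (see, e.g., \citealt{perko2013differential}) gives forward invariance of the closed set $\psdlx{k}$; and since $\mathcal{T}_\mW$ is a linear subspace, $-\vg^{(L)}(\mW)\in\mathcal{T}_\mW$ as well, so $\psdlx{k}$ is also invariant under the time‑reversed flow. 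Combined with the first paragraph this proves the lemma.

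I expect the main obstacle to be the delicate point in the last step: $\vg^{(L)}$ is only H\"older (not locally Lipschitz) near rank‑deficient matrices, because of the fractional powers $(\mW^2)^{i/L}$, so the invariance conclusion must lean on the assumed well‑posedness of $\phi$ rather than on Picard--Lindel\"of, and the contingent‑cone identification needs care. A shorter route is available once positive semidefiniteness is in hand: setting $\mM=\mW^{2/L}$ and rearranging \Cref{lem:SR_formula} gives $\frac{\dd\mM}{\dd t}=-\mR^\top\mM-\mM\mR$ with $\mR=\mM^{L/2-1}\nabla f(\mM^{L/2})$, so $\mM(t)=\mC(t)\mM(0)\mC(t)^\top$ for the invertible fundamental matrix $\mC$ of $\frac{\dd\mC}{\dd t}=-\mR^\top\mC$; congruence by an invertible matrix preserves both positive semidefiniteness and rank, and so does $\mW=\mM^{L/2}$. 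This last argument, however, presupposes that $\mM$ is differentiable at rank‑deficient points, which is precisely what the lemma underwrites, so the Nagumo argument is the one I would write out.
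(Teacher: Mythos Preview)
Your argument is correct but takes a genuinely different route from the paper. The paper's proof simply lifts back to the factor dynamics: given $\mW(0)\succeq\vzero$ of rank $k$, it picks a balanced factorization $\mU_1(0)\cdots\mU_L(0)=\mW(0)$ with all inner dimensions equal to $k$, runs the (smooth, Lipschitz) gradient flow on the factors, and observes that the resulting product is a solution of \eqref{eq:imply_end_to_end} of rank at most $k$ for all $t$; since the factor flow runs both forward and backward in time, symmetry of the argument gives $\rank(\mW(t))\equiv k$, and continuity of the nonzero eigenvalues then forces $\mW(t)\succeq\vzero$.

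Your approach is intrinsic: you stay with the end-to-end vector field $\vg^{(L)}$, verify tangency to each fixed-rank stratum $\psdx{j}$, and invoke a Nagumo--Brezis invariance theorem for the closed sets $\psdlx{k}$. This is more geometric and would generalize to flows that are not literally induced by a factor parametrization, but it pays a price in regularity bookkeeping---you correctly note that $\vg^{(L)}$ is only H\"older near rank-deficient points, so the invariance step must lean on the assumed well-posedness of $\phi$ rather than on Picard--Lindel\"of. The paper's lift-to-factors argument sidesteps this entirely, since the factor flow is governed by a smooth loss and well-posedness is automatic; in exchange it uses the specific provenance of \eqref{eq:imply_end_to_end}. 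Your alternative ``congruence'' route via \Cref{lem:SR_formula} is morally close to the paper's idea (both exploit a lift to a Lipschitz flow), and your caveat about circularity is apt.
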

\begin{proof}
	Note that we can always find a set of balanced $\mU_i(t)$, 
	such that $\mU_1(t)\ldots\mU_L(t)= \mW(t)$, $d_2=d_3=\cdots =d_L=\rank(\mW(t))$ and write the dynamics of $\mW(t)$ in the space of $\{\mU_i\}_{i=1}^L$. 
	Thus it is clear that for all $t'$, $\rank(\mW(t'))\le \rank(\mW(t))$. 
	We can apply the same argument for $t'$ and we know  $\rank(\mW(t))\le \rank(\mW(t'))$. Thus $\rank(\mW(t))$ is constant over time, and we denote it by $k$.  Since eigenvalues are continuous matrix functions, and $\forall t, \lambda_i(\mW(t)),\ i\in [k]\neq 0$. Thus they cannot change their signs and it must hold that $\mW(t)\succeq \vzero$.
\end{proof}

\begin{lemma}\label{lem:power_perturbation_real}
	$\forall a,b, P\in\R$, if $a>b\ge 0, P \ge 1$, then $\frac{a^P-b^P}{a-b}\le Pa^{P-1}$.
\end{lemma}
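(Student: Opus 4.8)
The plan is to deduce the inequality from the fact that $t \mapsto t^{P-1}$ is nondecreasing on $[0,\infty)$, which is exactly where the hypothesis $P \ge 1$ enters. First I would apply the fundamental theorem of calculus to $t \mapsto t^P$, which is $\contC^1$ on $(0,\infty)$ with derivative $P t^{P-1}$ and extends continuously to $[0,\infty)$ because $P \ge 1$; this gives
\[
	a^P - b^P = \int_b^a P\, t^{P-1} \, \dd t .
\]
Since $P - 1 \ge 0$, the integrand satisfies $P t^{P-1} \le P a^{P-1}$ for every $t \in [b,a]$, so integrating this bound over an interval of length $a - b$ yields $a^P - b^P \le P a^{P-1}(a - b)$; dividing by $a - b > 0$ (this is where $a > b$ is used) gives the claim. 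An equivalent route is to invoke the mean value theorem directly: $\tfrac{a^P - b^P}{a - b} = P \xi^{P-1}$ for some $\xi \in (b, a)$, and then $\xi^{P-1} \le a^{P-1}$ by the same monotonicity.

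There is essentially no obstacle here; the statement is a one-line calculus fact, and the only care needed is with the degenerate cases. When $b = 0$, the integral $\int_0^a P t^{P-1} \, \dd t$ is a proper integral precisely because $P - 1 \ge 0$ keeps $t^{P-1}$ bounded on $[0,a]$ (equivalently one may apply the mean value theorem on $[0,a]$), so the reduction goes through unchanged. When $P = 1$ both sides equal $1$, so the inequality holds with equality. The hypothesis $b \ge 0$ is used only to guarantee that $t^{P-1}$ is defined and monotone throughout $[b,a]$.
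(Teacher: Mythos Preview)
Your proof is correct. The paper takes a slightly different but equally elementary route: it exploits homogeneity by substituting $x = b/a \in [0,1)$, reducing the claim to the single-variable inequality $P(1-x) - (1 - x^P) \ge 0$ on $[0,1)$, which it checks by observing that this function vanishes at $x = 1$ and has derivative $-P + P x^{P-1} \le 0$ on $[0,1)$. Your FTC/MVT argument is more direct and sidesteps the substitution; the paper's version instead packages the monotonicity of $t \mapsto t^{P-1}$ into the sign of a single derivative after normalizing. Both arguments ultimately rest on the same fact that $t^{P-1}$ is nondecreasing for $P \ge 1$, so neither buys anything the other does not.
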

\begin{proof}
	Let $f(x) = P(1-x)- (1-x^P)$. Since $f'(x) = -P+Px^{P-1}<0$ for all $x \in [0, 1)$, $f(x)\ge f(0)=0$. Then substituting $x$ by $\frac{b}{a}$ completes the proof.
\end{proof}

Recall we use $D\mF(\mN)[\mM]$ to denote the directional derivative along $\mM$ of $\mF$ at $\mN$.
\begin{lemma}\label{lem:MP_local_perturbation}
Let $\mF: \psd \to \psd, \mM \mapsto \mM^P$, where $P \ge 1$ and $P \in \Q$. 
Then $\forall \mM,\mN \succeq \vzero$,
\[
	\normF{D\mF(\mN)[\mM]} \le P\normtwo{\mN}^{P-1}\normF{\mM},
\]
where $D\mF(\mN)[\mM] := \lim_{t \to 0}\frac{\mF(\mN + t \mM) - \mF(\mN)}{t}$ is the directional derivative of $\mF$ along $\mM$.
\end{lemma}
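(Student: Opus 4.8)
The plan is to compute the directional derivative explicitly in the eigenbasis of $\mN$ and then reduce the bound to a pointwise estimate on divided differences, which is exactly \Cref{lem:power_perturbation_real}. Write $g(x) = x^P$, a $\contC^1$ function on $[0,+\infty)$ with $g'(x) = Px^{P-1}$. Since $g'(0) \in \{0,1\}$ (it is $0$ when $P > 1$ and $1$ when $P = 1$), $g$ extends to a $\contC^1$ function $\hat g$ on all of $\R$ (extend by $0$ on $(-\infty,0)$ when $P > 1$, and by the identity when $P = 1$), and $\hat g(\mX) = \mX^P = \mF(\mX)$ for every $\mX \succeq \vzero$. Hence, interpreting $D\mF(\mN)[\mM]$ as the one-sided limit from within $\psd$ — which is legitimate since $\mN + t\mM \succeq \vzero$ for $t \ge 0$, and which coincides with the ordinary directional derivative whenever the latter exists — it equals the directional derivative at $\mN$ of the $\contC^1$ matrix function $\mX \mapsto \hat g(\mX)$.

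\textbf{Spectral formula.} First I would diagonalize $\mN = \mV\mLambda\mV^{\top}$ with $\mV$ orthogonal and $\mLambda = \diag(\lambda_1,\dots,\lambda_d)$, $\lambda_i \in [0,\normtwo{\mN}]$. By the Daleckii--Krein (L\"owner) differentiation formula for $\contC^1$ functions of symmetric matrices,
\[
	D\mF(\mN)[\mM] = \mV\left(\mL \circ (\mV^{\top}\mM\mV)\right)\mV^{\top},
\]
where $\circ$ is the Hadamard product and $\mL \in \R^{d\times d}$ is the matrix of divided differences, $L_{ij} = \frac{\lambda_i^P - \lambda_j^P}{\lambda_i - \lambda_j}$ if $\lambda_i \ne \lambda_j$ and $L_{ij} = P\lambda_i^{P-1}$ if $\lambda_i = \lambda_j$. (If one prefers a self-contained derivation, this follows by first reducing to diagonal $\mN$ via orthogonal invariance, then approximating $g$ uniformly together with $g'$ by polynomials on $[0,\normtwo{\mN}+1]$, the integer-power case being elementary.)

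\textbf{Bounding the divided differences and concluding.} Every entry of $\mL$ is nonnegative, and I claim $0 \le L_{ij} \le P\normtwo{\mN}^{P-1}$. For $\lambda_i \ne \lambda_j$, \Cref{lem:power_perturbation_real} gives $\frac{\lambda_i^P - \lambda_j^P}{\lambda_i - \lambda_j} \le P(\max\{\lambda_i,\lambda_j\})^{P-1} \le P\normtwo{\mN}^{P-1}$, using $P - 1 \ge 0$ and $\max\{\lambda_i,\lambda_j\} \le \normtwo{\mN}$; for $\lambda_i = \lambda_j$ the entry is $P\lambda_i^{P-1} \le P\normtwo{\mN}^{P-1}$ for the same reason. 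Then, using the orthogonal invariance of the Frobenius norm and the elementary bound $\normF{\mA \circ \mB} \le (\max_{ij}|A_{ij}|)\normF{\mB}$,
\[
	\normF{D\mF(\mN)[\mM]} = \normF{\mL \circ (\mV^{\top}\mM\mV)} \le P\normtwo{\mN}^{P-1}\normF{\mV^{\top}\mM\mV} = P\normtwo{\mN}^{P-1}\normF{\mM},
\]
which is the claim.

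\textbf{Main obstacle.} The only genuinely delicate point is the first one: making sense of and justifying the differentiation formula at boundary points of $\psd$ (where $\mN$ is singular and an arbitrary symmetric perturbation may leave $\psd$), handled above via the $\contC^1$ extension of $x^P$ together with reading $D\mF(\mN)[\mM]$ as a one-sided derivative. Everything after that — the Daleckii--Krein formula, the divided-difference bound, the Hadamard-norm inequality — is routine; and in the special case $P \in \N$ one can bypass the spectral machinery entirely via $D\mF(\mN)[\mM] = \sum_{k=0}^{P-1}\mN^k\mM\mN^{P-1-k}$ and submultiplicativity of $\normtwo{\cdot}$, which reproduces the same constant $P\normtwo{\mN}^{P-1}$.
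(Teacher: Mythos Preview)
Your proof is correct and reaches the same entrywise divided-difference expression as the paper, but by a different derivation. The paper exploits the rationality hypothesis $P = q/p \in \Q$: after reducing to diagonal $\mN = \mSigma$, it computes $D(\mN^{1/p})(\mSigma)[\mM]$ from the identity $(\mN^{1/p})^p = \mN$, then $D(\mN^{q})$ at $\mSigma^{1/p}$, composes them via the chain rule, and collapses the resulting ratio of geometric sums to $m_{ij}\,\frac{\sigma_i^P - \sigma_j^P}{\sigma_i - \sigma_j}$. You instead extend $x \mapsto x^P$ to a $\contC^1$ function on all of $\R$ and invoke the Daleckii--Krein formula, which delivers the divided-difference matrix $\mL$ directly; this route does not need $P$ rational and handles the boundary of $\psd$ more cleanly (the paper's intermediate object $D(\mN^{1/p})$ is singular when some $\sigma_i = 0$, though the composed derivative is fine). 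After the formula, both arguments are identical: bound each $L_{ij}$ by $P\normtwo{\mN}^{P-1}$ via \Cref{lem:power_perturbation_real} and conclude with the Hadamard-product estimate $\normF{\mL \circ \mB} \le (\max_{ij}|L_{ij}|)\normF{\mB}$.
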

\begin{proof}
	Let $\mN = \mU\mSigma\mU^\top$, where $\mU\mU^\top = \mI$ and $\mSigma=\diag(\sigma_1,\cdots,\sigma_d)$.
	Note that $\mF(\mU\mM\mU^\top) = \mU\mF(\mM)\mU^\top$ for any $\mM \in \psd$. Then we have
	\begin{align*}
		\normF{D\mF(\mN)[\mM]} &= \lim_{t\to 0}\frac{\normF{\mF(\mN+t\mM)-\mF(\mN)}}{t} \\
		&= \lim_{t\to 0}\frac{\normF{\mF(\mSigma+t\mU^\top\mM\mU)-\mF(\mSigma)}}{t} \\
		&= \normF{D\mF(\mSigma)[\mU^\top\mM\mU]}.
	\end{align*}
	Therefore, it suffices to prove the lemma for the case where $\mN$ is diagonal, i.e., $\mN=\mSigma$.
	
	Assume $P=\frac{q}{p}$, where $p, q\in\mathbb{N}$ and $q \ge p > 0$. Define $\mG(\mN) = \mN^{\frac{1}{p}}$. Then $\mG(\mSigma)^p = \mSigma$. Taking directional derivative on both sides along direction $\mM$, we have
	\[ \sum_{i=1}^p \mG(\mSigma)^{i-1}D\mG(\mSigma)[\mM]\mG(\mSigma)^{p-1} = \mM,\]
	So we have
		\[ [D\mG(\mSigma)[\mM]]_{ij} = \frac{m_{ij}}{\sum_{k=1}^p \sigma_i^{\frac{k-1}{p}}\sigma_j^{\frac{p-k}{p}}}.\]
	Let $\mH(\mG) =\mG^q$. With the same argument, we know
	\[ [D\mH(\mG(\mSigma))[\mM]]_{ij} = m_{ij} \sum_{k=1}^q \sigma_i^{\frac{k-1}{p}}\sigma_j^{\frac{q-k}{p}}.\]
	Note that $\mH(\mG(\mSigma)) = \mF(\mSigma)$. By chain rule, we have 
	\[
		D\mF(\mSigma)[\mM] = D\mH(\mG(\mSigma))[D\mG(\mSigma)[\mM]].
	\]
	That is,
	\[[D\mF(\mSigma)[\mM]]_{ij} = m_{ij} \frac{\sum_{k=1}^q \sigma_i^{\frac{k-1}{p}}\sigma_j^{\frac{q-k}{p}}}{\sum_{k=1}^p \sigma_i^{\frac{k-1}{p}}\sigma_j^{\frac{p-k}{p}}}.\]
	
	When $\sigma_i=\sigma_j$, clearly $[D\mF(\mSigma)[\mM]]_{ij} = m_{ij} \cdot \frac{q}{p} \cdot \sigma_i^{\frac{q-p}{p}} =Pm_{ij} \sigma_i^{P-1} $. Otherwise, we assume WLOG that $\sigma_i>\sigma_j$, we multiply $\sigma_i-\sigma_j$ to  both numerator and denominator and we have 
	\[ \abs{[D\mF(\mSigma)[\mM]]_{ij}} = \abs{m_{ij}} \frac{\sigma_i^P-\sigma_j^P}{\sigma_i-\sigma_j}\le \abs{m_{ij}}P\sigma_i^{P-1}\le \abs{m_{ij}}P\normtwo{\mSigma}^{P-1}.\]
	where the first inequality is by \Cref{lem:power_perturbation_real}. Thus we conclude the proof.
\end{proof}

\begin{lemma}\label{lem:MP_perturbation}
For any $\mA,\mB \succeq \vzero$ and $P\in\R, P \ge 1$, 
\[\normF{\mA^P-\mB^P}\le P\normF{\mA-\mB}\max\left\{\normtwo{\mA}^{P-1},\normtwo{\mB}^{P-1}\right\}.\]
\end{lemma}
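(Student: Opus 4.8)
The plan is to deduce the bound from the directional-derivative estimate of \Cref{lem:MP_local_perturbation} by integrating along the straight-line path from $\mB$ to $\mA$, handled first in the favorable case where $\mA,\mB$ are positive definite and $P$ is rational, and then removing both restrictions by a limiting argument. Fix $\mA,\mB\succ\vzero$ and $P\in\Q$ with $P\ge 1$, and put $M:=\max\{\normtwo{\mA},\normtwo{\mB}\}$. Consider $\mN(t):=(1-t)\mB+t\mA$ for $t\in[0,1]$. Since the positive-definite cone is convex, $\mN(t)\succ\vzero$ for every $t$, and by the triangle inequality for the operator norm, $\normtwo{\mN(t)}\le(1-t)\normtwo{\mB}+t\normtwo{\mA}\le M$. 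On the open positive-definite cone the map $\mF:\mM\mapsto\mM^P$ is $\contC^1$ (indeed real-analytic, e.g.\ via $\mM^P=\exp(P\log\mM)$), so $t\mapsto\mF(\mN(t))$ is $\contC^1$ with $\frac{\dd}{\dd t}\mF(\mN(t))=D\mF(\mN(t))[\mN'(t)]=D\mF(\mN(t))[\mA-\mB]$, and the fundamental theorem of calculus gives $\mA^P-\mB^P=\int_0^1 D\mF(\mN(t))[\mA-\mB]\,\dd t$.

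Applying the triangle inequality for the integral and then \Cref{lem:MP_local_perturbation} pointwise (legitimate since $\mN(t)\succeq\vzero$ and $P\in\Q$), we obtain
\[
  \normF{\mA^P-\mB^P}\le\int_0^1\normF{D\mF(\mN(t))[\mA-\mB]}\,\dd t\le\int_0^1 P\,\normtwo{\mN(t)}^{P-1}\normF{\mA-\mB}\,\dd t.
\]
Since $P-1\ge 0$, the function $x\mapsto x^{P-1}$ is nondecreasing on $[0,\infty)$, so $\normtwo{\mN(t)}^{P-1}\le M^{P-1}=\max\{\normtwo{\mA}^{P-1},\normtwo{\mB}^{P-1}\}$, and the right-hand side is bounded by $P\,\normF{\mA-\mB}\max\{\normtwo{\mA}^{P-1},\normtwo{\mB}^{P-1}\}$. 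This proves the lemma for positive-definite $\mA,\mB$ and rational $P$.

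To pass to arbitrary $\mA,\mB\succeq\vzero$, apply the above to $\mA_\eps:=\mA+\eps\mI$ and $\mB_\eps:=\mB+\eps\mI$, both positive definite, noting $\mA_\eps-\mB_\eps=\mA-\mB$ and $\normtwo{\mA_\eps}\to\normtwo{\mA}$, $\normtwo{\mB_\eps}\to\normtwo{\mB}$ as $\eps\to 0$; since $\mM\mapsto\mM^P$ is continuous on $\psd$ (functional calculus with the continuous function $x\mapsto x^P$ on $[0,\infty)$, which is continuous as $P\ge 1>0$), we have $\mA_\eps^P\to\mA^P$ and $\mB_\eps^P\to\mB^P$ in Frobenius norm, and letting $\eps\to 0$ yields the inequality for PSD $\mA,\mB$ and rational $P$. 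Finally, for real $P\ge 1$ (the case $P=1$ being an equality, so assume $P>1$), choose rationals $P_n\ge 1$ with $P_n\to P$; on the compact interval $[0,M]$ one has $x^{P_n}\to x^P$ uniformly, hence by functional calculus $\mA^{P_n}\to\mA^P$ and $\mB^{P_n}\to\mB^P$ in Frobenius norm, while $P_n\to P$ and $\normtwo{\mA}^{P_n-1}\to\normtwo{\mA}^{P-1}$, $\normtwo{\mB}^{P_n-1}\to\normtwo{\mB}^{P-1}$; passing to the limit in the rational-$P$ inequality gives the claim.

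The main obstacle is justifying the fundamental-theorem-of-calculus step, i.e.\ that $t\mapsto\mN(t)^P$ is genuinely $\contC^1$ with derivative $D\mF(\mN(t))[\mA-\mB]$; this is precisely why the argument is carried out first on the positive-definite cone, where $\mM\mapsto\mM^P$ is smooth, rather than directly on $\psd$, where $\mM^P$ need only be H\"older-continuous near rank-deficient points. The remaining ingredients — convexity of the positive-definite cone, monotonicity of $x\mapsto x^{P-1}$, continuity of functional calculus, and uniform approximation $x^{P_n}\to x^P$ on bounded intervals — are routine.
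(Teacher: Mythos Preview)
Your proposal is correct and follows essentially the same route as the paper: integrate the directional-derivative bound of \Cref{lem:MP_local_perturbation} along the segment $\mN(t)=(1-t)\mB+t\mA$, use convexity of the operator norm to control $\normtwo{\mN(t)}$, and reduce real $P$ to rational $P$ by continuity. The only difference is organizational: the paper invokes density of $\Q$ up front and then applies the integral argument directly on $\psd$, whereas you first restrict to the positive-definite cone to make the $\contC^1$-smoothness of $\mM\mapsto\mM^P$ (and hence the fundamental-theorem-of-calculus step) unambiguous, and afterward pass to $\psd$ and to real $P$ by two separate limiting arguments. Your version is slightly more careful on exactly the point you flag as the main obstacle, but the underlying argument is the same.
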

\begin{proof}
	Since both sides are continuous in $P$ and $\mathbb{Q}$ is dense in $\R$, it suffices to prove the lemma for $P\in\mathbb{Q}$. Let $\rho := \max\left\{\normtwo{\mA},\normtwo{\mB}\right\}$ and $\mF(\mM) = \mM^P$. 
	Define $\mN:[0,1]\to \psd$, $\mN(t) = (1-t)\mA+t\mB$, we have
	\begin{enumerate}
		\item $\normtwo{\mN(t)}\le \rho$, since $\normtwo{\cdot}$ is convex.
		\item $\normF{D\mF(\mN(t))[\mB-\mA]} \le P\normtwo{\mN(t)}^{P-1}\normF{\mB-\mA}$ by \Cref{lem:MP_local_perturbation}.
	\end{enumerate}
	Therefore,
	\begin{align*}
		\normF{\mF(\mN(1))-\mF(\mN(0))} &\le \int_{0}^1 \normF{\frac{\dd \mF(\mN(t))}{\dd t}} \dd t 
		\\
		&= \int_{t=0}^1 \normF{D\mF(\mN(t))[\mB-\mA]} \dd t \\
		&\le P\normF{\mA-\mB}\rho^{P-1},
	\end{align*}
	which completes the proof.
\end{proof}

For a locally Lipschitz function $\ffun$, the Clarke subdifferential~\citep{clarke1975generalized,clarke1990optimization,clarke2008nonsmooth} of $f$ at any point $\vx$ is the following convex set
\[
	\frac{\cpartial f(\vx)}{\partial \vx} := \co\left\{ \lim_{k \to \infty} \nabla f(\vx_k) : \vx_k \to \vx, f \text{ is differentiable at } \vx_k \right\},
\]
where $\co$ denotes the convex hull.

Clarke subdifferential generalize the standard notion of gradients in the sense that, when $f$ is smooth, $\frac{\cpartial f(\vx)}{\partial \vx} = \{\nabla f(\vx)\}$. Clarke subdifferential satisfies the chain rule:
\begin{theorem}[Theorem 2.3.10, \citealt{clarke1990optimization}]\label{thm:clarke_diff_chain_rule}
Let  $\mF:\R^k\to\R^d$ be a differentiable function and  $g:\R^d\to \R$ Lipschitz around $\mF(\vx)$. Then $f = g\circ \mF$ is Lipschitz around $\vx$ and one has 
\[\frac{\cpartial f(\vx)}{\partial \vx}  \subseteq \frac{\cpartial g(\mF(\vx))}{\partial \mF}  \circ \frac{\dd \mF(\vx)}{\dd \vx}.  \]
\end{theorem}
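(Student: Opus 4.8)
The plan is to prove the set inclusion by passing to support functions and then verifying the corresponding inequality between Clarke generalized directional derivatives. First I would record that $f = g\circ\mF$ is locally Lipschitz near $\vx$: since $\mF$ is differentiable on a neighbourhood of $\vx$ its Jacobian is locally bounded there, so $\mF$ is Lipschitz near $\vx$, and composing with the map $g$, which is $L$-Lipschitz on a neighbourhood of $\mF(\vx)$, yields a Lipschitz $f$. Consequently $\frac{\cpartial f(\vx)}{\partial \vx}$ is nonempty, convex and compact, and its support function is the Clarke generalized directional derivative $\cder{f}(\vx;\vv) = \limsup_{\vy\to\vx,\ t\downarrow 0}\frac{f(\vy + t\vv) - f(\vy)}{t}$. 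On the other side, the candidate right-hand side $\frac{\cpartial g(\mF(\vx))}{\partial \mF}\circ\frac{\dd \mF(\vx)}{\dd \vx} = \{\, D\mF(\vx)^{\top}\vxi : \vxi \in \tfrac{\cpartial g(\mF(\vx))}{\partial \mF}\,\}$ is the image of a nonempty convex compact set under a linear map, hence convex and compact, with support function $\vv \mapsto \cder{g}(\mF(\vx); D\mF(\vx)\vv)$. Since for nonempty closed convex sets $A\subseteq B$ iff $\sigma_A \le \sigma_B$ pointwise, the whole theorem reduces to the scalar inequality $\cder{f}(\vx;\vv) \le \cder{g}(\mF(\vx); D\mF(\vx)\vv)$ for every $\vv\in\R^{k}$.

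Next I would establish this inequality. Fix $\vv$ and take $\vy\to\vx$, $t\downarrow 0$. The idea is to replace the inner increment $\mF(\vy + t\vv) - \mF(\vy)$ by its linear part $t\,D\mF(\vx)\vv$ at a cost controlled by the Lipschitz constant of $g$: writing $\mF(\vy + t\vv) - \mF(\vy) = t\,D\mF(\vx)\vv + \vr(\vy,t)$, I would bound, once $\mF(\vy)$ and $\mF(\vy + t\vv)$ both lie in the Lipschitz ball of $g$, $\frac{g(\mF(\vy+t\vv)) - g(\mF(\vy))}{t} \le \frac{g(\mF(\vy) + t\,D\mF(\vx)\vv) - g(\mF(\vy))}{t} + \frac{L\,\norm{\vr(\vy,t)}}{t}$. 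If $\norm{\vr(\vy,t)} = o(t)$, then taking $\limsup_{\vy\to\vx,\ t\downarrow 0}$ kills the last term; and since $\mF$ is continuous, $\mF(\vy)\to\mF(\vx)$, so the remaining $\limsup$ ranges over a subfamily of pairs $(\vz,t)\to(\mF(\vx),0)$ and is therefore at most $\cder{g}(\mF(\vx); D\mF(\vx)\vv)$. Combining with the reduction above completes the argument.

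The main obstacle is the remainder estimate $\norm{\vr(\vy,t)} = o(t)$. Pointwise differentiability of $\mF$ at $\vx$ only gives $\mF(\vy + t\vv) - \mF(\vx) = D\mF(\vx)(\vy + t\vv - \vx) + o(\norm{\vy + t\vv - \vx})$ and similarly at $\vy$, and subtracting these leaves an error that is $o(\norm{\vy-\vx} + t)$, which need not be $o(t)$ when $\norm{\vy-\vx}$ dominates $t$ along the $\limsup$. The clean fix is to use that $\mF$ is strictly differentiable at $\vx$ (equivalently, continuously differentiable on a neighbourhood of $\vx$), which is the reading intended here and which holds in any case for all maps $\mF$ to which this theorem is applied in the paper, since they are smooth; then $\mF(\vy + t\vv) - \mF(\vy) - D\mF(\vx)(t\vv) = o(\norm{t\vv}) = o(t)$ directly. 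Apart from this point, the proof only invokes two standard facts about the Clarke subdifferential of a locally Lipschitz function: that it is nonempty, convex and compact, and that its support function coincides with the corresponding Clarke generalized directional derivative.
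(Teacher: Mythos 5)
This is a theorem the paper \emph{cites} from \citet{clarke1990optimization} (Theorem~2.3.10) without reproducing a proof, so there is no in-paper argument to compare against; your proposal can only be judged against the standard textbook proof. On those terms it is correct, and it is in fact essentially the proof in Clarke's book: reduce the inclusion of nonempty compact convex sets to the pointwise inequality of support functions, identify the support function of $\partial^{\circ} f(\vx)$ with $f^{\circ}(\vx;\cdot)$ and that of $D\mF(\vx)^{\top}\,\partial^{\circ} g(\mF(\vx))$ with $\vv \mapsto g^{\circ}(\mF(\vx); D\mF(\vx)\vv)$, and then establish $f^{\circ}(\vx;\vv) \le g^{\circ}(\mF(\vx); D\mF(\vx)\vv)$ by linearizing the inner increment and absorbing the remainder using the Lipschitz constant of $g$.

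Your diagnosis of the one delicate point is also exactly right and worth emphasizing: mere pointwise differentiability of $\mF$ at $\vx$ does \emph{not} give the uniform remainder estimate $\normtwo{\mF(\vy+t\vv)-\mF(\vy)-tD\mF(\vx)\vv}=o(t)$ along $(\vy,t)\to(\vx,0^{+})$, and Clarke's actual hypothesis is \emph{strict} differentiability at $\vx$ (which is what makes the two-basepoint linearization work). The paper's statement abbreviates this to ``differentiable'', but every $\mF$ it is applied to is smooth, so the distinction is harmless. One small infelicity: strict differentiability at the single point $\vx$ is not literally equivalent to being $\mathcal{C}^1$ on a neighbourhood of $\vx$; $\mathcal{C}^1$ near $\vx$ is sufficient for (and implies) strict differentiability at $\vx$, but not conversely. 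Replacing ``equivalently'' by ``in particular, if'' makes the sentence accurate without changing the argument.
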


Let $\lambda_m: \symm \to \R, \mM \mapsto \lambda_m(\mM)$ be the $m$-th largest eigenvalue of a symmetric matrix $\mM$. The following theorem gives the Clarke's subdifferentials of the eigenvalue:
\begin{theorem}[Theorem 5.3, \citealt{HiriartUrruty1999clarke}]\label{thm:clarke_diff_eigenvalue}
	The Clarke subdifferential of the eigenvalue function $\lambda_m$ is given below, where $\co$ denotes the convex hull:
	\[
	\frac{\cpartial \lambda_m(\mM)}{\partial \mM}  = \co\{ \vv\vv^{\top} : \mM\vv = \lambda_m(\mM) \vv, \normtwo{\vv} = 1 \}.
	\]
\end{theorem}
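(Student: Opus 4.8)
The plan is to compute the Clarke subdifferential straight from its definition as the convex hull of limits of gradients at differentiability points, exploiting that a symmetric matrix is block-diagonal with respect to each of its eigenspaces. First I would recall the classical facts. By Weyl's inequality $\lambda_m:\symm\to\R$ is $1$-Lipschitz in the spectral norm, so $\frac{\cpartial\lambda_m(\mM)}{\partial\mM}$ is well defined, and by Clarke's theorem it equals $\co\{\lim_{i\to\infty}\nabla\lambda_m(\mM_i):\mM_i\to\mM,\ \mM_i\in D\}$ for any full-measure set $D$ of differentiability points. I take $D:=\{\mM\in\symm:\lambda_{m-1}(\mM)>\lambda_m(\mM)>\lambda_{m+1}(\mM)\}$, with the conventions $\lambda_0\equiv+\infty$, $\lambda_{d+1}\equiv-\infty$; its complement lies in the zero set of the discriminant of the characteristic polynomial, hence is closed and Lebesgue-null, and on $D$ the eigenvalue $\lambda_m(\mM)$ is simple, so analytic perturbation theory gives that $\lambda_m$ is real-analytic near such $\mM$ with $\nabla\lambda_m(\mM)=\vv\vv^{\top}$, where $\vv$ is the unit eigenvector of $\mM$ at $\lambda_m(\mM)$.

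For the inclusion ``$\subseteq$'' I would take $\mM_i\to\mM$ in $D$ with $\nabla\lambda_m(\mM_i)=\vv_i\vv_i^{\top}\to\mG$, pass to a subsequence along which $\vv_i\to\vv$ (the unit sphere is compact), so that $\mG=\vv\vv^{\top}$; passing to the limit in $\mM_i\vv_i=\lambda_m(\mM_i)\vv_i$ and using continuity of $\lambda_m$ yields $\mM\vv=\lambda_m(\mM)\vv$. Thus $\mG$ lies in $L:=\{\vv\vv^{\top}:\mM\vv=\lambda_m(\mM)\vv,\ \normtwo{\vv}=1\}$, which is the image of the unit sphere of the eigenspace $\gE:=\ker(\mM-\lambda_m(\mM)\mI)$ under the continuous map $\vv\mapsto\vv\vv^{\top}$ and hence compact, so $\co L$ is compact; taking convex hulls gives $\frac{\cpartial\lambda_m(\mM)}{\partial\mM}\subseteq\co L$.

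For the reverse inclusion, since $\frac{\cpartial\lambda_m(\mM)}{\partial\mM}$ is convex it suffices to show $\vv\vv^{\top}\in\frac{\cpartial\lambda_m(\mM)}{\partial\mM}$ for each unit $\vv\in\gE$. Let $k:=\dim\gE$, let $j$ be the least index with $\lambda_j(\mM)=\lambda_m(\mM)$, set $\ell:=m-j+1\in\{1,\dots,k\}$, and extend $\vv=\vq_1$ to an orthonormal basis $\vq_1,\dots,\vq_k$ of $\gE$. Choose reals with $c_1=0$, with $c_2,\dots,c_\ell$ distinct positive and $c_{\ell+1},\dots,c_k$ distinct negative, so that $c_1$ is exactly the $\ell$-th largest of $c_1,\dots,c_k$, and set $\mB:=\sum_{a=1}^{k}c_a\vq_a\vq_a^{\top}$. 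Since $\gE$ is $\mM$-invariant and $\mM$ is symmetric, $\mM$ and $\mB$ are simultaneously block-diagonal with respect to $\R^d=\gE\oplus\gE^{\perp}$, so for every $\eps$ the spectrum of $\mM_\eps:=\mM+\eps\mB$ is exactly $\{\lambda_i(\mM):i<j\text{ or }i\ge j+k\}\cup\{\lambda_m(\mM)+\eps c_a:1\le a\le k\}$, with $\vq_a$ an eigenvector for $\lambda_m(\mM)+\eps c_a$. For all small $\eps>0$ the cluster $\{\lambda_m(\mM)+\eps c_a\}$ is disjoint from the other eigenvalues and its members are pairwise distinct, hence $\lambda_m(\mM_\eps)=\lambda_m(\mM)+\eps c_1=\lambda_m(\mM)$ is a simple eigenvalue, $\mM_\eps\in D$, and $\nabla\lambda_m(\mM_\eps)=\vq_1\vq_1^{\top}=\vv\vv^{\top}$; letting $\eps\to0$ gives $\vv\vv^{\top}\in\frac{\cpartial\lambda_m(\mM)}{\partial\mM}$, so $\co L\subseteq\frac{\cpartial\lambda_m(\mM)}{\partial\mM}$, and the two inclusions give the theorem.

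The main obstacle is confined to the first step: the assertion that $\lambda_m$ is smooth near any matrix with a simple $m$-th eigenvalue, with Rayleigh-quotient gradient $\vv\vv^{\top}$, and that the Clarke subdifferential of a Lipschitz function is recovered as the convex hull of such gradient limits over a full-measure set. Both are classical --- analytic perturbation theory (Rellich/Kato) for the former, Clarke's theorem for the latter --- and I would cite them rather than reprove them. With those in hand, the block-diagonalization in the third step makes the ``$\supseteq$'' direction entirely elementary and avoids any perturbation theory of repeated eigenvalues, which is where the usual difficulty lies.
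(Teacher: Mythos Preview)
The paper does not prove this statement at all: it is quoted verbatim as Theorem~5.3 of \citet{HiriartUrruty1999clarke} and used only as a black box in the proof of \Cref{lem:eigenvalue_growth}. So there is no ``paper's own proof'' to compare against.

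That said, your argument is correct and is essentially the standard route to this result. The ``$\subseteq$'' direction is a clean compactness argument. For ``$\supseteq$'', your explicit block-diagonal perturbation $\mM_\eps=\mM+\eps\mB$ is the key device: by choosing the $c_a$ so that $c_1=0$ is the $\ell$-th largest among $c_1,\dots,c_k$, you force $\lambda_m(\mM_\eps)$ to remain simple with eigenvector $\vq_1=\vv$, which makes the gradient constant along the path and sidesteps any delicate perturbation theory of degenerate eigenvalues. One small point worth making explicit when you write this up: your set $D$ is defined only by the separation $\lambda_{m-1}>\lambda_m>\lambda_{m+1}$, so matrices in $D$ can still have repeated eigenvalues elsewhere in the spectrum; you should state clearly that analyticity of $\lambda_m$ near such a point requires only that $\lambda_m$ itself be simple, not that the full spectrum be simple. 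With that clarification and the two cited classical facts (Clarke's representation theorem and the Rayleigh-quotient gradient formula for a simple eigenvalue), the proof is complete.
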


\subsection{Proof of \Cref{lem:SR_formula}}
Since $\mW(t)\succeq \vzero$ by \Cref{lem:rank_balance_end_to_end}, \eqref{eq:imply_end_to_end} can be rewritten as the following:
\begin{equation}\label{eq:symmetric_e2e}
	\frac{\dd \mW}{\dd t} = - \sum_{i=0}^{L-1} \mW^{\frac{2i}{L}}\nabla f(\mW) \mW^{2-\frac{2i+2}{L}}.
\end{equation}
\begin{proof}[Proof for \Cref{lem:SR_formula}]
Suppose $\mW(t)$ is a symmetric solution of \eqref{eq:imply_end_to_end}. By \Cref{lem:rank_balance_end_to_end}, we know $\mW(t)$ also satisfies \eqref{eq:symmetric_e2e}.
	Now we let $\mR(t)$ be the solution of the following ODE with $\mR(0) := (\mW(0))^{\frac{1}{L}}$.
	Note we don't define $\mR(t)$ by $(\mW(t))^{\frac{1}{L}}$.
	\begin{equation}\label{eq:R_formula}
		\frac{\dd \mR}{\dd t} = -\sum_{i=0}^{L-1} (-1)^i \mR^{i}\nabla f(\mR^L)\mR^{L-1-i}.	
	\end{equation}
	The calculation below shows that $\mR^L(t)$ also satisfies \eqref{eq:symmetric_e2e}.
	\begin{align*}
		\frac{\dd \mR^L}{\dd t} = \sum_{j=0}^{L-1}\mR^j\frac{\dd \mR}{\dd t}\mR^{L-1-j} &= \sum_{j=0}^{L-1}\sum_{i=0}^{L-1}	(-1)^i \mR^{i+j}\nabla f(\mR^L)\mR^{2L-2-i-j} \\
		&= \sum_{i=0}^{2L-2}	\left(\sum_{j=0}^i(-1)^{j} \right)\mR^{i}\nabla f(\mR^L)\mR^{2T-2-i} \\	
		&= \sum_{i=0}^{L-1}(\mR^L)^{\frac{2i}{L}}\nabla f(\mR^L)(\mR^L)^{2-\frac{2+2i}{L}}.
	\end{align*}
	Since $\mR^L(0) = \mW(0)$, by existence and uniqueness theorem, $\mR^L(t) = \mW(t)$, $\forall t\in\mathbb{R}$. So
	\[
		\frac{\dd \mM}{\dd t} = \mR \frac{\dd \mR}{dt} + \frac{\dd \mR}{dt} \mR = -\nabla f(\mM^{L/2})\mM^{L/2} - \mM^{L/2}\nabla f(\mM^{L/2}),
	\]
	which completes the proof.
\end{proof}

\subsection{Proof for \Cref{thm:main_deep_w}}

Now we turn to prove \Cref{thm:main_deep_w}. Let $P = L/2$. Then \eqref{eq:S_formular} can be rewritten as
\begin{equation}\label{eq:deep_M}
	\frac{d\mM}{dt} = -\left(\nabla f(\mM^P) \mM^P +  \mM^P \nabla f(\mM^P)\right).
\end{equation}
The following lemma about the growth rate of $\lambda_k(\mM)$ is used later in the proof.
\begin{lemma}\label{lem:eigenvalue_growth}
Suppose $\mM(t)$ satisfies \eqref{eq:deep_M}, we have for any $T'>T$, and $k\in [d]$,
\begin{equation}\label{eq:eigenvalue_growth}
	\lambda_{k}(\mM(T')) - 	\lambda_{k}(\mM(T))  \le \int_{T}^{T'} 2\lambda_{k}(\mM(t))^P \normtwosm{\nabla f(\mM^P(t))} \dd t.
\end{equation}
and
\begin{equation}\label{eq:eigenvalue_growth_2}
	\frac{1}{P-1}\left(\lambda^{1-P}_{k}(\mM(T)) - 	\lambda^{1-P}_{k}(\mM(T')) \right) \le \int_{T}^{T'} 2 \normtwosm{\nabla f(\mM^P(t))} \dd t.
\end{equation}

\end{lemma}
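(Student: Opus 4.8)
The plan is to bound the time derivative of each eigenvalue $\lambda_k(\mM(t))$ along the trajectory, treating $\lambda_k$ as a globally Lipschitz (hence almost-everywhere differentiable) but non-smooth function and using the Clarke subdifferential calculus recalled in \Cref{thm:clarke_diff_chain_rule} and \Cref{thm:clarke_diff_eigenvalue}. First I would record that $\mM(t)$ (defined as in \Cref{lem:SR_formula}) is PSD for all $t$, since $\mW(t) \succeq \vzero$ by \Cref{lem:rank_balance_end_to_end} and $\mM = \mW^{2/L}$; consequently $\mM^P(t) \succeq \vzero$, and any unit eigenvector $\vv$ of $\mM(t)$ with eigenvalue $\lambda_k(\mM(t))$ is also an eigenvector of $\mM^P(t)$ with eigenvalue $\lambda_k(\mM(t))^P \ge 0$. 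Since $\lambda_k : \symm \to \R$ is $1$-Lipschitz (in operator norm, hence Lipschitz in Frobenius norm) and the solution $\mM(\cdot)$ of \eqref{eq:deep_M} is $\mathcal{C}^1$, the map $h(t) := \lambda_k(\mM(t))$ is locally Lipschitz, hence absolutely continuous and differentiable at almost every $t$. At a point $t$ where $h$ is differentiable, \Cref{thm:clarke_diff_chain_rule} gives $h'(t) \in \{\dotp{\mG}{\mM'(t)} : \mG \in \cpartial\lambda_k(\mM(t))/\partial\mM\}$, so $h'(t) = -\dotp{\mG_t}{\nabla f(\mM^P)\mM^P + \mM^P\nabla f(\mM^P)}$ for some $\mG_t \in \cpartial\lambda_k(\mM(t))/\partial\mM$ (here and below suppressing the argument $t$).

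Next I would estimate this inner product. By \Cref{thm:clarke_diff_eigenvalue}, $\mG_t$ is a convex combination $\sum_i c_i \vv_i\vv_i^{\top}$ of rank-one projectors onto unit eigenvectors $\vv_i$ of $\mM$ for the eigenvalue $\lambda_k(\mM)$. For each such $\vv = \vv_i$,
\[
	-\dotp{\vv\vv^{\top}}{\nabla f(\mM^P)\mM^P + \mM^P\nabla f(\mM^P)} = -2\,\lambda_k(\mM)^P\,\vv^{\top}\nabla f(\mM^P)\,\vv \;\le\; 2\,\lambda_k(\mM)^P\,\normtwosm{\nabla f(\mM^P)},
\]
using $\mM^P\vv = \lambda_k(\mM)^P\vv$, $\lambda_k(\mM)^P \ge 0$, and $\lvert\vv^{\top}\nabla f(\mM^P)\vv\rvert \le \normtwosm{\nabla f(\mM^P)}$. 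Averaging with weights $c_i$ preserves the bound, so $h'(t) \le 2\lambda_k(\mM(t))^P\normtwosm{\nabla f(\mM^P(t))}$ for almost every $t$; integrating over $[T,T']$ (legitimate since $h$ is absolutely continuous) yields \eqref{eq:eigenvalue_growth}.

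For \eqref{eq:eigenvalue_growth_2} I would restrict to the regime $\lambda_k(\mM(t)) > 0$ on $[T,T']$, which is what is actually needed in \Cref{thm:main_deep_w} (there $\mM$ starts at the full-rank matrix $\alpha\mI$, so by \Cref{lem:rank_balance_end_to_end} the rank, and hence positivity of every $\lambda_k(\mM(t))$, is preserved). Then $t \mapsto h(t)^{1-P}$ is Lipschitz on $[T,T']$ and, at almost every $t$, $\frac{\dd}{\dd t}h(t)^{1-P} = (1-P)\,h(t)^{-P}\,h'(t)$. Since $1 - P < 0$ (as $P = L/2 > 1$) and $h'(t) \le 2h(t)^P\normtwosm{\nabla f(\mM^P(t))}$, multiplying by $(1-P)h(t)^{-P}$ reverses the inequality to give $\frac{\dd}{\dd t}h(t)^{1-P} \ge -2(P-1)\normtwosm{\nabla f(\mM^P(t))}$; integrating over $[T,T']$ and dividing by $P - 1 > 0$ gives \eqref{eq:eigenvalue_growth_2}.

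The main obstacle is purely the non-smoothness of $\lambda_k$: everything reduces to justifying that $t\mapsto\lambda_k(\mM(t))$ is absolutely continuous with almost-everywhere derivative lying in $\dotp{\cpartial\lambda_k(\mM(t))/\partial\mM}{\mM'(t)}$, which is exactly the content of \Cref{thm:clarke_diff_chain_rule} together with Rademacher's theorem, plus the explicit form of $\cpartial\lambda_k/\partial\mM$ in \Cref{thm:clarke_diff_eigenvalue}. After that, the chain of inequalities above is elementary; the only point to watch is that $\lambda_k(\mM)^P$ vanishes on rank-deficient directions, which makes \eqref{eq:eigenvalue_growth} trivially true there and is the reason \eqref{eq:eigenvalue_growth_2} is invoked only when $\lambda_k > 0$.
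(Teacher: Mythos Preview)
Your proposal is correct and follows essentially the same route as the paper: both invoke Rademacher's theorem for a.e.\ differentiability of $t \mapsto \lambda_k(\mM(t))$, apply the Clarke chain rule together with the explicit subdifferential from \Cref{thm:clarke_diff_eigenvalue}, use that each unit eigenvector $\vv$ of $\mM$ for $\lambda_k(\mM)$ is also an eigenvector of $\mM^P$ with eigenvalue $\lambda_k(\mM)^P$, and bound the resulting quadratic form by $\normtwosm{\nabla f(\mM^P)}$. Your explicit derivation of \eqref{eq:eigenvalue_growth_2} by differentiating $h(t)^{1-P}$ and tracking the sign flip from $1-P<0$ is precisely the ``similar argument'' the paper gestures at but omits, and your remark that this step tacitly requires $\lambda_k(\mM(t))>0$ (guaranteed in the applications by \Cref{lem:rank_balance_end_to_end}) is a useful clarification.
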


\begin{proof}
	Since $\lambda_{k}(\mM(t))$ is locally Lipschitz in $t$, by Rademacher's theorem, we know $\lambda_{k}(\mM(t))$ is differentiable almost everywhere, and the following holds
	\[	\lambda_{k}(\mM(T')) - 	\lambda_{k}(\mM(T)) = \int_{T}^{T'} \frac{\dd \lambda_{k}(\mM(t))}{\dd t}\dd t.\]
When $\frac{\dd \lambda_k(\mM(t))}{\dd t}$ exists, we have  
	\begin{align*}
	\frac{\dd \lambda_k(\mM(t))}{\dd t} &\in \left\{\dotp{\mG}{\frac{\dd \mM(t)}{\dd t}} : \mG \in \frac{\cpartial \lambda_k(\mM)}{\partial \mM} \right\} \\
	&= \left\{2 \lambda_k(\mM^P(t)) \dotp{\mG}{-\nabla f(\mM^P(t))} : \mG \in\frac{\cpartial \lambda_k(\mM)}{\partial \mM}\right\}
	\end{align*}
	
	Note that $\normF{\mG} \le \normast{\mG} = 1$. So $\left|\dotp{\mG}{-\nabla f(\mM^P(t))}\right| \le \normtwosm{\nabla f(\mM^P(t))}$. We can prove \eqref{eq:eigenvalue_growth_2} with a similar argument.
\end{proof}

To prove \Cref{thm:main_deep_w}, it suffices to consider the case that $\mM(0) = \hat{\alpha} \mI$ where $\hat{\alpha} := \alpha^{1/P}$. WLOG we can assume $-\nabla f(\vzero) = \diag(\mu_1, \dots, \mu_d)$ by choosing a suitable standard basis. By assumption in \Cref{thm:main_deep_w}, we have $\mu_1 >\max\{\mu_2,0\}$ and $\mu_1 = \normtwo{\nabla f(\vzero)}$. We use ${\phi_m}(\mM_0, t)$ to denote the solution of $\mM(t)$ when $\mM(0) = \mM_0$.

Let $R > 0$. Since $f(\,\cdot\,)$ is $\contC^3$-smooth, there exists $\beta > 0$ such that
\[
	\normF{\nabla f(\mW_1) - \nabla f(\mW_2)} \le \beta\normtwo{\mW_1 - \mW_2}
\]
for all $\mW_1, \mW_2$ with $\normtwo{\mW_1}, \normtwo{\mW_2} \le R$.

Let $\kappa = \beta / \mu_1$. We assume WLOG that $R \le \frac{1}{\kappa(P-1)}$. Let $F_{\hat{\alpha}}(x) := \int_{x^{-(P-1)}}^{\hat{\alpha}^{-(P-1)}} \frac{dz}{1 + \kappa z^{-P/(P-1)}}$. Then $F'_{\hat{\alpha}}(x) = \frac{(P-1)x^{-P}}{1 + \kappa x^{P}} = \frac{P-1}{(1 + \kappa x^P) x^P}$. We will use this function to bound norm growth.
Let $g_{\hat{\alpha},c}(t) = \frac{1}{\hat{\alpha}^{-(P-1)} - \kappa(P-1)c - 2\mu_1(P-1)t}$. Define $T_{\hat{\alpha}}(r) = \frac{\hat{\alpha}^{-(P-1)} - \kappa(P-1)r - r^{-(P-1)}}{2\mu_1(P-1)}$.
It is easy to verify that $g_{\hat{\alpha},r}(T_{\hat{\alpha}}(r)) = r^{P-1}$.

\begin{lemma} \label{lm:deep-F-growth}
	For any $x \in [\hat{\alpha}, R]$ we have
	\[
		\left(\hat{\alpha}^{-(P-1)} - x^{-(P-1)}\right) - F_{\hat{\alpha}}(x) \in [0, \kappa (P-1)x].
	\]
\end{lemma}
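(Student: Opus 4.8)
The plan is to rewrite the left-hand quantity as a single definite integral and then bound the integrand above and below; the whole statement is essentially a one-line computation once this is done. First I would record the sign constraint needed to make sense of the integral: since $P = L/2 \ge 3/2 > 1$ and $x \ge \hat{\alpha} > 0$, the map $t \mapsto t^{-(P-1)}$ is decreasing on $(0,\infty)$, so $x^{-(P-1)} \le \hat{\alpha}^{-(P-1)}$ and the interval of integration appearing in the definition of $F_{\hat{\alpha}}$ has nonnegative length. Writing $\hat{\alpha}^{-(P-1)} - x^{-(P-1)} = \int_{x^{-(P-1)}}^{\hat{\alpha}^{-(P-1)}} \dd z$ and subtracting the definition of $F_{\hat{\alpha}}(x)$ gives
\[
\bigl(\hat{\alpha}^{-(P-1)} - x^{-(P-1)}\bigr) - F_{\hat{\alpha}}(x) = \int_{x^{-(P-1)}}^{\hat{\alpha}^{-(P-1)}} \left(1 - \frac{1}{1 + \kappa z^{-P/(P-1)}}\right) \dd z = \int_{x^{-(P-1)}}^{\hat{\alpha}^{-(P-1)}} \frac{\kappa z^{-P/(P-1)}}{1 + \kappa z^{-P/(P-1)}}\,\dd z .
\]

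For the lower bound I would simply observe that the integrand is nonnegative on a nonnegative-length interval, so the difference is $\ge 0$. For the upper bound I would bound the integrand by $\kappa z^{-P/(P-1)}$, using $1 + \kappa z^{-P/(P-1)} \ge 1$, and integrate in closed form: since $-\tfrac{P}{P-1} = -1 - \tfrac{1}{P-1}$, an antiderivative of $z^{-P/(P-1)}$ is $-(P-1)z^{-1/(P-1)}$, hence
\[
\int_{x^{-(P-1)}}^{\hat{\alpha}^{-(P-1)}} \kappa z^{-P/(P-1)}\,\dd z = \kappa(P-1)\Bigl( \bigl(x^{-(P-1)}\bigr)^{-1/(P-1)} - \bigl(\hat{\alpha}^{-(P-1)}\bigr)^{-1/(P-1)} \Bigr) = \kappa(P-1)(x - \hat{\alpha}) \le \kappa(P-1)x ,
\]
where the last step uses $\hat{\alpha} > 0$. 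Combining the two bounds yields the claimed membership in $[0,\kappa(P-1)x]$.

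There is no real obstacle here — the argument is a routine estimate. The only step requiring any care is the exponent bookkeeping in evaluating $\int z^{-P/(P-1)}\,\dd z$ and simplifying $\bigl(x^{-(P-1)}\bigr)^{-1/(P-1)} = x$; everything else is immediate from monotonicity of $t \mapsto t^{-(P-1)}$ and the trivial bound $1 + \kappa z^{-P/(P-1)} \ge 1$.
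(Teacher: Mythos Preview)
Your proposal is correct and is essentially identical to the paper's own proof: both express the difference as a single integral of $\frac{\kappa z^{-P/(P-1)}}{1+\kappa z^{-P/(P-1)}}$ (equivalently $\frac{\kappa}{z^{P/(P-1)}+\kappa}$), read off nonnegativity for the lower bound, and for the upper bound drop the denominator to integrate $\kappa z^{-P/(P-1)}$ in closed form to obtain $\kappa(P-1)(x-\hat{\alpha}) \le \kappa(P-1)x$.
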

\begin{proof}
	On the one hand, we have
	\[
	\hat{\alpha}^{-(P-1)} - x^{-(P-1)} - F_{\hat{\alpha}}(x) = \int_{x^{-(P-1)}}^{\hat{\alpha}^{-(P-1)}} \left(1 - \frac{1}{1 + \kappa z^{-P/(P-1)}}\right) dz \ge 0.
	\]
	On the other hand,
	\begin{align*}
		\hat{\alpha}^{-(P-1)} - x^{-(P-1)} - F_{\hat{\alpha}}(x) = \int_{x^{-(P-1)}}^{\hat{\alpha}^{-(P-1)}} \frac{\kappa}{z^{P/(P-1)} + \kappa} dz
		&\le \kappa\int_{x^{-(P-1)}}^{\hat{\alpha}^{-(P-1)}} \frac{1}{z^{P/(P-1)}} dz \\
		&= \kappa(P-1) \cdot \left.\frac{-1}{z^{1/(P-1)}}\right|_{x^{-(P-1)}}^{\hat{\alpha}^{-(P-1)}} \\
		&\le \kappa(P-1)x,
	\end{align*}
	which completes the proof.
\end{proof}

\begin{lemma} \label{lm:deep-m-growth}
	Let $\mM_0$ be a PSD matrix with $\normtwosm{\mM_0} \le 1$. For $\mM(t) := {\phi_m}(\hat{\alpha}\mM_0, t)$ and $t \le T_{\hat{\alpha}}(c)$, 
	\[
	\normtwosm{\mM(t)} = \lambda_1(\mM(t)) \le g_{\hat{\alpha},c}(t)^{\frac{1}{P-1}}.
	\]
\end{lemma}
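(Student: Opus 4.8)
The plan is to track the scalar quantity $x(t) := \lambda_1(\mM(t)) = \normtwosm{\mM(t)}$ (the two agree because $\mM(t)\succeq\vzero$ by \Cref{lem:rank_balance_end_to_end}, since $\mM=\mW^{2/L}$ and $\mW\succeq\vzero$) and to control its growth by a scalar differential inequality, in direct analogy with \Cref{lm:general-growth} from the depth-$2$ analysis. First I would differentiate $x$: $\lambda_1$ is locally Lipschitz, so $x(t)$ is differentiable for a.e.\ $t$ (Rademacher), and wherever it is differentiable, the chain rule (\Cref{thm:clarke_diff_chain_rule}) together with the Clarke subdifferential of $\lambda_1$ (\Cref{thm:clarke_diff_eigenvalue}) gives a unit top eigenvector $\vv$ of $\mM(t)$ with
\[
  \frac{\dd x}{\dd t} = \dotp{\vv\vv^{\top}}{\tfrac{\dd \mM}{\dd t}} = -2\,\vv^{\top}\nabla f(\mM^P)\,\mM^P\vv = 2x^P\,\vv^{\top}(-\nabla f(\mM^P))\vv ,
\]
where I used \eqref{eq:deep_M} and the fact that $\mM^P\vv = x^P\vv$. (One may instead bound the upper Dini derivative directly via Danskin's theorem, avoiding the a.e.\ statement.)

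Next I would linearize the coefficient around the origin: writing $\vv^{\top}(-\nabla f(\mM^P))\vv = \vv^{\top}(-\nabla f(\vzero))\vv + \vv^{\top}\bigl(\nabla f(\vzero)-\nabla f(\mM^P)\bigr)\vv$, the first summand is $\le \mu_1$ since $\mu_1 = \lambda_1(-\nabla f(\vzero))$ is the largest eigenvalue, and the second is at most $\normtwosm{\nabla f(\vzero)-\nabla f(\mM^P)} \le \beta\normtwosm{\mM^P} = \beta x^P$ as long as the trajectory stays in the region where the local Lipschitz bound holds. Hence $\frac{\dd x}{\dd t} \le 2\mu_1 x^P(1+\kappa x^P)$ with $\kappa=\beta/\mu_1$. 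The function $F_{\hat{\alpha}}$ is designed exactly for this: since $F_{\hat{\alpha}}'(x) = \frac{(P-1)x^{-P}}{1+\kappa x^P}$, we get $\frac{\dd}{\dd t}F_{\hat{\alpha}}(x(t)) \le 2\mu_1(P-1)$, and integrating — using $x(0)=\hat{\alpha}\lambda_1(\mM_0)\le\hat{\alpha}$, that $F_{\hat{\alpha}}$ is increasing, and $F_{\hat{\alpha}}(\hat{\alpha})=0$ — yields $F_{\hat{\alpha}}(x(t)) \le 2\mu_1(P-1)t$. Combining with the lower bound $F_{\hat{\alpha}}(x) \ge \hat{\alpha}^{-(P-1)} - x^{-(P-1)} - \kappa(P-1)x$ from \Cref{lm:deep-F-growth} gives $x(t)^{-(P-1)} \ge \hat{\alpha}^{-(P-1)} - \kappa(P-1)x(t) - 2\mu_1(P-1)t$.

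The main obstacle — and the reason the statement is phrased with $c$ rather than $x(t)$ — is upgrading this to $x(t)^{-(P-1)} \ge g_{\hat{\alpha},c}(t)^{-1}$, which needs the $\kappa(P-1)x(t)$ term replaced by $\kappa(P-1)c$, i.e.\ an a priori bound $x(t)\le c$. I would close this by a continuity/bootstrap argument: set $t^{*} := \sup\{\,t\le T_{\hat{\alpha}}(c) : x(s)\le c \ \forall s\le t\,\}$; on $[0,t^{*})$ the trajectory lies in the Lipschitz region, so the preceding estimate applies with $x(t)$ replaced by $c$, giving $x(s)^{-(P-1)} \ge g_{\hat{\alpha},c}(s)^{-1}$, hence $x(s)\le g_{\hat{\alpha},c}(s)^{1/(P-1)}$ (the case $x(s)<\hat{\alpha}$ being trivial, since $g_{\hat{\alpha},c}(s)^{1/(P-1)} \ge g_{\hat{\alpha},c}(0)^{1/(P-1)} \ge \hat{\alpha}$). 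Because $g_{\hat{\alpha},c}$ is strictly increasing with $g_{\hat{\alpha},c}(T_{\hat{\alpha}}(c)) = c^{P-1}$, this bound is strictly below $c$ for $s<T_{\hat{\alpha}}(c)$, so by continuity $t^{*}$ cannot be an interior point; thus $t^{*}=T_{\hat{\alpha}}(c)$ and the inequality $x(s)\le g_{\hat{\alpha},c}(s)^{1/(P-1)}$ extends to the whole interval, which is the claim. A secondary technical point is to check that $x(t)\le c$ keeps $\normtwosm{\mM(t)^P}=x(t)^P$ inside the radius where $\beta$ is valid and keeps the denominator of $g_{\hat{\alpha},c}$ positive on $[0,T_{\hat{\alpha}}(c)]$; both follow from $R\le \tfrac{1}{\kappa(P-1)}$ and the definition of $T_{\hat{\alpha}}$.
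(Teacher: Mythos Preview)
Your proposal is correct and follows the same overall strategy as the paper: derive a differential inequality $\frac{\dd x}{\dd t} \le 2\mu_1 x^P(1+\kappa x^P)$ for $x = \lambda_1(\mM)$, integrate via $F_{\hat\alpha}$ to get $F_{\hat\alpha}(x(t)) \le 2\mu_1(P-1)t$, and finish with \Cref{lm:deep-F-growth} plus a continuity argument. The paper packages the first step through \Cref{lem:eigenvalue_growth}, bounding $\lvert\langle \mG, -\nabla f(\mM^P)\rangle\rvert \le \normtwosm{\nabla f(\mM^P)}$ and then invoking $\normtwosm{\nabla f(\vzero)} = \mu_1$, the extra hypothesis footnoted in \Cref{thm:main_deep_w}. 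Your route is slightly sharper: by keeping the quadratic form $\vv^\top(-\nabla f(\mM^P))\vv$ rather than passing to the spectral norm, you only need $\vv^\top(-\nabla f(\vzero))\vv \le \lambda_1(-\nabla f(\vzero)) = \mu_1$, which holds automatically; so for this particular lemma your argument dispenses with that technical assumption (the paper still uses it in \Cref{lm:mtc} and \Cref{lm:deep-coupling-with-trajectory}). One small imprecision: the Clarke chain rule gives $x'(t) = \langle \mG, \tfrac{\dd\mM}{\dd t}\rangle$ for some $\mG$ in the convex hull of top-eigenvector projectors $\vv\vv^\top$, not necessarily a single $\vv\vv^\top$; but since your bound holds for each such projector it survives the convex combination. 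Your bootstrap closing is also more explicit than the paper's, which leaves the continuity step implicit.
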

\begin{proof}
Since $\normtwosm{\nabla f(\mM^P)} \le \normtwosm{\nabla f(\vzero)} + \beta \normtwosm{\mM}^{P}\le  \mu_1 + \beta (\lambda_1(\mM))^P$, by \Cref{lem:eigenvalue_growth}, we have
	\begin{align*}
		\lambda_1(\mM(t)) &\le \lambda_1(\mM(0)) + \int_{0}^{t} 2 (\mu_1 + \beta (\lambda_1(\mM(\tau)))^{P}) (\lambda_1(\mM(\tau)))^P d\tau \\
		&= \hat{\alpha} + 2 \mu_1 (P-1) \int_{0}^{t} \frac{d\tau}{F_{\hat{\alpha}}'(\lambda_1(\mM(\tau))}
	\end{align*}
	So
	\[
		F_{\hat{\alpha}}(\lambda_1(\mM(t))) \le 2 \mu_1 (P-1) t.
	\]
	If $\normtwosm{\mM(t)} < \hat{\alpha}$, then $\normtwosm{\mM(t)} \le g_{\hat{\alpha},c}(t)^{\frac{1}{P-1}}$. If $\normtwosm{\mM(t)} \ge \hat{\alpha}$, then by \Cref{lm:deep-F-growth},
	\[
		F_{\hat{\alpha}}(\normtwosm{\mM(t)}) \le 2\mu_1 (P-1) T_{\hat{\alpha}}(c) = \hat{\alpha}^{-(P-1)} - \kappa(P-1)c - c^{-(P-1)} \le F_{\hat{\alpha}}(c),
	\]
	so $\normtwosm{\mM(t)} \le c$ for all $t \le T_{\hat{\alpha}}(c)$. Applying \Cref{lm:deep-F-growth} again, we have
	\[
		\hat{\alpha}^{-(P-1)} - \normtwosm{\mM(t)}^{-(P-1)} \le F(\normtwosm{\mM(t)}) + \kappa(P-1)c \le 2 \mu_1 (P-1) t + \kappa(P-1)c,
	\]
	which implies $\normtwosm{\mM(t)} \le g_{\hat{\alpha},c}(t)^{\frac{1}{P-1}}$ by definition.
\end{proof}

Consider the following ODE:
\[
\frac{d\widehat{\mM}}{dt} = -\left(\nabla f(\vzero) \widehat{\mM}^P + \widehat{\mM}^P\nabla f(\vzero)\right).
\]
We use $\hat{\phi}_m(\widehat{\mM}_0, t)$ to denote the solution of $\widehat{\mM}(t)$ when $\widehat{\mM}(0) = \widehat{\mM}_0$. For diagonal matrix $\widehat{\mM}_0$, $\widehat{\mM}(t)$ is also diagonal for any $t$, and it is easy to show that
\begin{equation} \label{eq:hatM-exact-formula}
	\ve_i^{\top}\widehat{\mM}(t) \ve_i = \begin{cases}
		\left(\frac{1}{\left(\hat{\alpha}\ve_i^{\top} \widehat{\mM}_0 \ve_i\right)^{-(P-1)} - 2\mu_i(P-1) t}\right)^{\frac{1}{P-1}} & \qquad \ve_i^{\top} \widehat{\mM}_0 \ve_i \ne 0, \\
		0 & \qquad \ve_i^{\top} \widehat{\mM}_0 \ve_i = 0.
	\end{cases}
\end{equation}

\begin{remark}
	Unlike depth-2 case, the closed form solution, $\widehat{\mM}(t)$ is only tractable for diagonal initialization, i.e., \eqref{eq:hatM-exact-formula} (note that the identity matrix is diagonal). And this is the main barrier for extending our two-phase analysis to the case of general initialization when $L\ge 3$. In \Cref{appsec:deep_escape_general}, we give a more detailed discussion on this barrier.
\end{remark}

The following lemma shows that the trajectory of $\mM(t)$ is close to $\widehat{\mM}(t)$.

\begin{lemma} \label{lm:mtc}
	Let $\mM_0$ be a diagonal PSD matrix with $\normtwosm{\mM_0} \le 1$. For $\mM(t) := {\phi_m}(\hat{\alpha}\mM_0, t)$ and $\widehat{\mM}(t) := \hat{\phi}_m(\hat{\alpha}\mM_0, t)$, we have
	\[
		\normFsm{\mM(T_{\hat{\alpha}}(r)) - \widehat{\mM}(T_{\hat{\alpha}}(r))} = O(r^{P+1}).
	\]
\end{lemma}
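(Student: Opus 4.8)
The plan is to treat $\widehat{\mM}(t)$ as a ``linearized'' trajectory and couple it with $\mM(t)$, in the same spirit as \Cref{lm:coupling-with-exp} in the depth-$2$ analysis. Set $\mE(t) := \mM(t) - \widehat{\mM}(t)$; since $\mM$ and $\widehat{\mM}$ start from the same point $\hat{\alpha}\mM_0$, we have $\mE(0) = \vzero$, so the whole estimate will come from a Gr\"onwall-type argument for $\normFsm{\mE(t)}$ on the interval $[0, T_{\hat{\alpha}}(r)]$, where the norm bounds \Cref{lm:deep-m-growth} and \eqref{eq:hatM-exact-formula} are available.

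First I would derive the evolution of $\mE$. Subtracting the $\widehat{\mM}$-dynamics from \eqref{eq:deep_M} and adding and subtracting $\nabla f(\vzero)\mM^P + \mM^P\nabla f(\vzero)$ gives
\begin{align*}
	\frac{\dd \mE}{\dd t} = {}&-\bigl(\nabla f(\vzero)(\mM^P - \widehat{\mM}^P) + (\mM^P - \widehat{\mM}^P)\nabla f(\vzero)\bigr) \\
	&-\bigl((\nabla f(\mM^P) - \nabla f(\vzero))\mM^P + \mM^P(\nabla f(\mM^P) - \nabla f(\vzero))\bigr),
\end{align*}
so $\tfrac12\tfrac{\dd}{\dd t}\normFsm{\mE}^2 = \dotpsm{\mE}{\dd\mE/\dd t}$ splits into a ``drift'' part and a ``forcing'' part. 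For the drift part I would bound $\bigl|\dotpsm{\mE}{\nabla f(\vzero)(\mM^P - \widehat{\mM}^P) + (\mM^P - \widehat{\mM}^P)\nabla f(\vzero)}\bigr| \le 2\normtwosm{\nabla f(\vzero)}\normFsm{\mE}\normFsm{\mM^P - \widehat{\mM}^P}$, then use the standing assumption $\normtwosm{\nabla f(\vzero)} = \mu_1$ together with \Cref{lem:MP_perturbation} (and $\mM,\widehat{\mM}\succeq\vzero$ from \Cref{lem:rank_balance_end_to_end}) to get $\normFsm{\mM^P - \widehat{\mM}^P} \le P\rho(t)^{P-1}\normFsm{\mE}$, where $\rho(t) := \max\{\normtwosm{\mM(t)}, \normtwosm{\widehat{\mM}(t)}\}$; this contributes at most $2\mu_1 P\rho(t)^{P-1}\normFsm{\mE}^2$. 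For the forcing part, $\contC^3$-smoothness of $f$ gives $\normFsm{\nabla f(\mM^P) - \nabla f(\vzero)} \le \beta\normtwosm{\mM}^P$ (using \Cref{lm:deep-m-growth} to keep the relevant matrices within the radius $R$ where the smoothness bound holds), so the forcing is bounded by $2\beta\normtwosm{\mM(t)}^{2P}\normFsm{\mE}$. After the usual regularization of $\normFsm{\mE}$ near $\vzero$, this yields $\tfrac{\dd}{\dd t}\normFsm{\mE} \le 2\mu_1 P\rho(t)^{P-1}\normFsm{\mE} + 2\beta\normtwosm{\mM(t)}^{2P}$.

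To close the estimate I would feed in the available norm bounds: \Cref{lm:deep-m-growth} gives $\normtwosm{\mM(t)} \le g_{\hat{\alpha},r}(t)^{1/(P-1)}$ for $t\le T_{\hat{\alpha}}(r)$, and the explicit diagonal formula \eqref{eq:hatM-exact-formula} gives the same bound for $\normtwosm{\widehat{\mM}(t)}$ (here $\mM_0$ diagonal with $\normtwosm{\mM_0}\le 1$ is used, together with $\mu_i\le\mu_1$, $\mu_i\le 0$ only helping), hence $\rho(t)^{P-1} \le g_{\hat{\alpha},r}(t)$. Since $\mE(0)=\vzero$, Gr\"onwall's inequality gives $\normFsm{\mE(t)} \le 2\beta\int_0^t \exp\!\bigl(\int_\tau^t 2\mu_1 P\, g_{\hat{\alpha},r}(s)\,\dd s\bigr) g_{\hat{\alpha},r}(\tau)^{2P/(P-1)}\,\dd\tau$. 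Because $g_{\hat{\alpha},r}(s)^{-1}$ is affine in $s$ with slope $-2\mu_1(P-1)$, the inner integral equals $\tfrac{P}{P-1}\log\!\bigl(g_{\hat{\alpha},r}(t)/g_{\hat{\alpha},r}(\tau)\bigr)$, so the exponential factor equals $\bigl(g_{\hat{\alpha},r}(t)/g_{\hat{\alpha},r}(\tau)\bigr)^{P/(P-1)}$; pulling out $g_{\hat{\alpha},r}(t)^{P/(P-1)}$ leaves the elementary integral $\int_0^t g_{\hat{\alpha},r}(\tau)^{P/(P-1)}\,\dd\tau \le \tfrac{1}{2\mu_1}g_{\hat{\alpha},r}(t)^{1/(P-1)}$. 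Multiplying out gives $\normFsm{\mE(t)} \le \kappa\, g_{\hat{\alpha},r}(t)^{(P+1)/(P-1)}$ on $[0,T_{\hat{\alpha}}(r)]$, and substituting $t = T_{\hat{\alpha}}(r)$ with the identity $g_{\hat{\alpha},r}(T_{\hat{\alpha}}(r)) = r^{P-1}$ yields $\normFsm{\mM(T_{\hat{\alpha}}(r)) - \widehat{\mM}(T_{\hat{\alpha}}(r))} \le \kappa\, r^{P+1} = O(r^{P+1})$.

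The main obstacle is the bookkeeping in the last step: one has to match the time-varying Gr\"onwall factor $g_{\hat{\alpha},r}(t)^{P/(P-1)}$ against the forcing $g_{\hat{\alpha},r}(\tau)^{2P/(P-1)}$ so that the exponents combine exactly into $g_{\hat{\alpha},r}(t)^{(P+1)/(P-1)}$, which then collapses to $r^{P+1}$ at $t=T_{\hat{\alpha}}(r)$; everything else --- the decomposition of $\dd\mE/\dd t$, the application of \Cref{lem:MP_perturbation}, and the spectral-norm bound on $\widehat{\mM}$ --- is routine given the lemmas already in place.
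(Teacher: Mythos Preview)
Your proposal is correct and follows essentially the same approach as the paper: the paper also writes $\mD = \mM - \widehat{\mM}$, bounds $\normF{\dd\mD/\dd t}$ by the same drift-plus-forcing decomposition using \Cref{lem:MP_perturbation} and the smoothness of $f$, feeds in $g_{\hat{\alpha},r}(t)$ via \Cref{lm:deep-m-growth} (and the explicit formula for $\widehat{\mM}$), and evaluates the identical Gr\"onwall integral to get $\kappa\, g_{\hat{\alpha},r}(T_{\hat{\alpha}}(r))^{(P+1)/(P-1)} = \kappa r^{P+1}$. The only cosmetic difference is that the paper bounds $\normF{\dd\mD/\dd t}$ directly and integrates, whereas you pass through $\tfrac12\tfrac{\dd}{\dd t}\normFsm{\mE}^2$ first; both routes yield the same scalar differential inequality.
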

\begin{proof}
	We bound the difference $\mD := \mM - \widehat{\mM}$ between $\mM$ and $\widehat{\mM}$. 
	\begin{align*}
		 \normF{\frac{\dd \mD}{\dd t}} &= 2 \normF{\nabla f(\vzero) \left(\mM^P - \widehat{\mM}^P \right) + \left(\nabla f(\mM^P) - \nabla f(\vzero)\right) \mM^P} \\
		&\le 2 \left( \normtwosm{\nabla f(\vzero)} \normFsm{\mM^P - \widehat{\mM}^P} + \normFsm{\nabla f(\mM^P) - \nabla f(\vzero)} \normtwosm{\mM^P} \right) \\
		&\le 2 \left( \mu_1 P \max\{\normtwosm{\mM}^{P-1}, \normtwosm{\widehat{\mM}}^{P-1}\} \normFsm{\mD} + \beta \normtwo{\mM}^{2P} \right),
	\end{align*}
	where the last step is by \Cref{lem:MP_perturbation}. This implies that
	\[
		\normFsm{\mD(t)} \le \int_{\tau=0}^t \normF{\frac{\dd \mD(\tau)}{\dd \tau}}\dd \tau 	\le \int_{0}^{t} 2 \left( \mu_1 P g_{\hat{\alpha},r}(\tau) \normF{\mD(\tau)} + \beta g_{\hat{\alpha},r}(\tau)^{\frac{2P}{P-1}} \right) d\tau.
	\]
	So
	\begin{align*}
		\normFsm{\mD(T_{\hat{\alpha}}(r))} &\le \int_{0}^{T_{\hat{\alpha}}(r)} 2\beta g_{\hat{\alpha},r}(t)^{\frac{2P}{P-1}} \exp\left(2\mu_1 P \int_{t}^{T_{\hat{\alpha}}(r)} g_{\hat{\alpha},r}(\tau) d\tau \right) dt \\
		&= \int_{0}^{T_{\hat{\alpha}}(r)} 2\beta g_{\hat{\alpha},r}(t)^{\frac{2P}{P-1}} \exp\left(\frac{P}{P-1} \ln \frac{g_{\hat{\alpha},r}(T_{\hat{\alpha}}(r))}{g_{\hat{\alpha},r}(t)} \right) dt \\
		&= \int_{0}^{T_{\hat{\alpha}}(r)} 2\beta g_{\hat{\alpha},r}(t)^{\frac{P}{P-1}} g_{\hat{\alpha},r}(T_{\hat{\alpha}}(r))^{\frac{P}{P-1}}  dt \\
		&= 2\beta \cdot \frac{1}{2\mu_1} g_{\hat{\alpha},r}(T_{\hat{\alpha}}(r))^{\frac{1}{P-1}} \cdot g_{\hat{\alpha},r}(T_{\hat{\alpha}}(r))^{\frac{P}{P-1}} \\
		&= \kappa g_{\hat{\alpha},r}(T_{\hat{\alpha}}(r))^{\frac{P+1}{P-1}} \\
		&= \kappa r^{P+1}.
	\end{align*}
	which proves the bound.
\end{proof}

\begin{lemma} \label{lm:deep-coupling-with-trajectory}
	Let $\mM(t) = {\phi_m}(\hat{\alpha}\mM_0, t), \widetilde{\mM}(t) = {\phi_m}(\hat{\alpha}\widetilde{\mM}_0, t)$. If $\max\{\normtwosm{\mM_0}, \normtwosm{\widetilde{\mM}_0}\} \le 1$. For $t \le T_{\hat{\alpha}}(r)$, we have
	\[
		\normFsm{\mM(t) - \widetilde{\mM}(t)} \le \left( \frac{r}{\hat{\alpha}} \right)^P e^{2\kappa r^P} \normFsm{\mM(0) - \widetilde{\mM}(0)}.
	\]
\end{lemma}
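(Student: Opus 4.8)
The plan is to follow the depth-$2$ coupling estimate \Cref{lm:coupling-with-trajectory} almost verbatim, with the linearization bounds of \Cref{lm:J-basics} replaced by the rational-power perturbation estimate \Cref{lem:MP_perturbation} together with the norm-growth bound \Cref{lm:deep-m-growth}. I would set $\mD(t) := \mM(t) - \widetilde{\mM}(t)$ and $\rho(t) := \max\{\normtwosm{\mM(t)}, \normtwosm{\widetilde{\mM}(t)}\}$. Both $\mM(t), \widetilde{\mM}(t)$ stay PSD (so $\normtwosm{\mM(t)^P} = \rho(t)^P$, etc.), and by \Cref{lm:deep-m-growth} one has $\rho(t) \le g_{\hat{\alpha},r}(t)^{1/(P-1)}$, hence $\rho(t) \le r$ for all $t \le T_{\hat{\alpha}}(r)$; in particular, once $r$ is small enough that $r^P \le R$, the Lipschitz estimate $\normFsm{\nabla f(\mW_1)-\nabla f(\mW_2)} \le \beta\normtwo{\mW_1-\mW_2}$ is available with $\mW_1,\mW_2 \in \{\mM^P, \widetilde{\mM}^P, \vzero\}$. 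Using \eqref{eq:deep_M} I would then control the Frobenius distance via $\tfrac12\tfrac{\dd}{\dd t}\normFsm{\mD}^2 = \dotp{\mD}{\tfrac{\dd\mD}{\dd t}}$.

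First I would expand $-\tfrac{\dd\mD}{\dd t} = \bigl(\nabla f(\mM^P)\mM^P - \nabla f(\widetilde{\mM}^P)\widetilde{\mM}^P\bigr) + \bigl(\mM^P\nabla f(\mM^P) - \widetilde{\mM}^P\nabla f(\widetilde{\mM}^P)\bigr)$ and split each bracket into a gradient-difference piece and a power-difference piece, e.g.\ $\nabla f(\mM^P)\mM^P - \nabla f(\widetilde{\mM}^P)\widetilde{\mM}^P = \bigl(\nabla f(\mM^P)-\nabla f(\widetilde{\mM}^P)\bigr)\mM^P + \nabla f(\widetilde{\mM}^P)\bigl(\mM^P-\widetilde{\mM}^P\bigr)$. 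Pairing the four resulting pieces with $\mD$ and bounding by $\normFsm{\mA\mB}\le\normFsm{\mA}\normtwosm{\mB}$, by the Lipschitz bound $\normFsm{\nabla f(\mM^P)-\nabla f(\widetilde{\mM}^P)}\le\beta\normtwosm{\mM^P-\widetilde{\mM}^P}$, by $\normtwosm{\nabla f(\widetilde{\mM}^P)}\le\normtwo{\nabla f(\vzero)}+\beta\rho^P=\mu_1+\beta\rho^P$ (this is where the hypothesis $\normtwo{\nabla f(\vzero)}=\mu_1$ of \Cref{thm:main_deep_w} is used, playing the role that ``the quadratic form of $\mJ(\vzero)$ is controlled by its top eigenvalue'' plays in the depth-$2$ proof), and by $\normFsm{\mM^P-\widetilde{\mM}^P}\le P\rho^{P-1}\normFsm{\mD}$ from \Cref{lem:MP_perturbation}, each of the four terms becomes $O\bigl(P\rho^{P-1}(\mu_1+\beta\rho^P)\normFsm{\mD}^2\bigr)$. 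Summing and writing $\kappa=\beta/\mu_1$, this should produce the differential inequality
\[
	\frac{\dd}{\dd t}\log\normFsm{\mD(t)} \;\le\; 2P\mu_1\,\rho(t)^{P-1}\bigl(1+2\kappa\,\rho(t)^P\bigr) \;\le\; 2P\mu_1\,g_{\hat{\alpha},r}(t) + 4P\mu_1\kappa\,g_{\hat{\alpha},r}(t)^{(2P-1)/(P-1)},
\]
valid wherever $\normFsm{\mD(t)}>0$; the (measure-zero) set of times where $t\mapsto\normFsm{\mD(t)}$ fails to be differentiable is harmless since $\mD$ is locally Lipschitz in $t$, exactly as handled in the proofs of \Cref{lm:deep-m-growth} and \Cref{lm:mtc}.

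Then I would integrate from $0$ to any $t\le T_{\hat{\alpha}}(r)$ using the substitution $w(\tau):=g_{\hat{\alpha},r}(\tau)^{-1}=\hat{\alpha}^{-(P-1)}-\kappa(P-1)r-2\mu_1(P-1)\tau$, for which $w(0)\le\hat{\alpha}^{-(P-1)}$ and, by the very definition of $T_{\hat{\alpha}}(r)$, $g_{\hat{\alpha},r}(T_{\hat{\alpha}}(r))=r^{P-1}$, i.e.\ $w(T_{\hat{\alpha}}(r))=r^{-(P-1)}$. The two integrals are then elementary: $\int_0^{T_{\hat{\alpha}}(r)}2P\mu_1\,g_{\hat{\alpha},r}\,\dd\tau\le P\log(r/\hat{\alpha})$ and $\int_0^{T_{\hat{\alpha}}(r)}4P\mu_1\kappa\,g_{\hat{\alpha},r}^{(2P-1)/(P-1)}\,\dd\tau\le 2\kappa r^P$, and since the integrand is non-negative these bounds persist with $T_{\hat{\alpha}}(r)$ replaced by any smaller $t$. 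Exponentiating gives $\normFsm{\mD(t)}\le(r/\hat{\alpha})^P e^{2\kappa r^P}\normFsm{\mD(0)}$ for all $t\le T_{\hat{\alpha}}(r)$, which is the assertion.

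The only delicate point is the constant bookkeeping in the last two steps: to land on exactly the exponent $P$ in $(r/\hat{\alpha})^P$ and exactly $2\kappa$ in $e^{2\kappa r^P}$ one must use the sharp constant $P\rho^{P-1}$ of \Cref{lem:MP_local_perturbation}/\Cref{lem:MP_perturbation} rather than a cruder estimate, must keep straight which of the four terms contribute to the $1$ versus the $2\kappa\rho^P$ inside the parenthesis, and must exploit the identity $g_{\hat{\alpha},r}(T_{\hat{\alpha}}(r))=r^{P-1}$ that is built into the definitions of $g_{\hat{\alpha},r}$ and $T_{\hat{\alpha}}(r)$. Conceptually there is nothing new: this is the $L\ge 3$ analogue of \Cref{lm:coupling-with-trajectory}, and the rational-power calculus it requires is already in place from \Cref{lem:MP_local_perturbation,lem:MP_perturbation}.
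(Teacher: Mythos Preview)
Your proposal is correct and follows essentially the same route as the paper: bound $\normFsm{\frac{\dd}{\dd t}(\mM-\widetilde{\mM})}$ by splitting into a power-difference piece (handled via \Cref{lem:MP_perturbation}) and a gradient-difference piece (handled via Lipschitzness of $\nabla f$ and $\normtwosm{\nabla f(\vzero)}=\mu_1$), invoke \Cref{lm:deep-m-growth} to dominate everything by $g_{\hat\alpha,r}$, and then integrate using $g_{\hat\alpha,r}(T_{\hat\alpha}(r))=r^{P-1}$. The only cosmetic difference is that you pass through $\tfrac12\frac{\dd}{\dd t}\normFsm{\mD}^2$ whereas the paper bounds $\normF{\frac{\dd\mD}{\dd t}}$ directly, but by Cauchy--Schwarz these yield the identical differential inequality $\frac{\dd}{\dd t}\log\normFsm{\mD}\le 2P\mu_1 g_{\hat\alpha,r}(1+2\kappa\,g_{\hat\alpha,r}^{P/(P-1)})$ and the same two elementary integrals.
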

\begin{proof}
	Define $\mD(t) = \mM(t) - \widetilde{\mM}(t)$. Then we have
	\begin{align*}
		\normF{\frac{\dd \mD}{\dd t}} &= 2 \normF{\left( \nabla f(\mM^P) \left(\mM^P - \widetilde{\mM}^P \right) + \left(\nabla f(\mM^P) - \nabla f(\widetilde{\mM}^P)\right) \widetilde{\mM}^P \right) }\\
		&\le 2 \left( \normtwosm{\nabla f(\mM^P)} \normFsm{\mM^P - \widetilde{\mM}^P} +  \beta\normFsm{\mM^P - \widetilde{\mM}^P} \normtwosm{\widetilde{\mM}^P} \right) \\
		&\le 2 \left( \mu_1  + \beta \normtwosm{\widetilde{\mM}}^{P} +\beta \normtwosm{{\mM}}^{P} \right) P \max\{\normtwosm{\mM}^{P-1}, \normtwosm{\widetilde{\mM}}^{P-1}\} \normFsm{\mD},
	\end{align*}
	where the last step is by \Cref{lem:MP_perturbation}. So
	\begin{align*}
		\normFsm{\mD(T_{\hat{\alpha}}(r))}
		&\le \normFsm{\mD(0)} \cdot \exp\left(2P \mu_1 \int_{0}^{T_{\hat{\alpha}}(r)} \left( 1  + 2\kappa g_{\hat{\alpha},r}(t)^{\frac{P}{P-1}} \right) g_{\hat{\alpha},r}(t) dt\right) \\
		&\le \normFsm{\mD(0)} \cdot \exp\left(\frac{P}{P-1} \ln \frac{g_{\hat{\alpha},r}(T_{\hat{\alpha}}(r))}{g_{\hat{\alpha},r}(0)} + 2 \kappa g_{\hat{\alpha},r}(T_{\hat{\alpha}}(r))^{\frac{P}{P-1}} \right) \\
		&\le \normFsm{\mD(0)} \left( \frac{r}{\hat{\alpha}} \right)^P e^{2\kappa r^P},
	\end{align*}
	which proves the bound.
\end{proof}

Let $\mMG_{\alpha}(t) := {\phi_m}\left(\alpha \ve_1\ve_1^{\top}, \frac{\hat{\alpha}^{-(P-1)}}{2\mu_1(P-1)} + t\right)$. Let $\oMG(t) := \lim\limits_{\alpha \to 0} \mMG_{\alpha}(t)$. 

\begin{lemma} \label{lm:deep-omg-existence}
	For every $t \in (-\infty, +\infty)$, $\oMG(t)$ exists and $\mMG_{\hat{\alpha}}(t)$ converges to $\oMG(t)$ in the following rate:
	\[
		\normF{\mMG_{\hat{\alpha}}(t) - \oMG(t)} = O(\hat{\alpha}).
	\]
\end{lemma}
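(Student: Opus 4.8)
The plan is to follow the proof of the depth-$2$ statement \Cref{lm:z-existence}, replacing each use of exponential growth there by the polynomial-growth estimates \Cref{lm:deep-F-growth,lm:deep-m-growth,lm:mtc,lm:deep-coupling-with-trajectory} established for the $\mM$-dynamics \eqref{eq:deep_M}. Fix a radius $R$ with $R \le \frac{1}{\kappa(P-1)}$. Using \Cref{lm:deep-m-growth} and the choice of time offset $\frac{\hat\alpha^{-(P-1)}}{2\mu_1(P-1)}$, one obtains a fixed $t^{\circ} < 0$ (depending only on $\kappa$, $P$, $\mu_1$) such that $\mMG_{\hat\alpha}(\,\cdot\,)$ stays inside $\{\mM : \normtwo{\mM}\le R\}$ on $(-\infty, t^{\circ}]$ for all sufficiently small $\hat\alpha$; here $t^{\circ}$ is chosen strictly below the finite-time blow-up horizon of the linearized flow \eqref{eq:hatM-exact-formula}. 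I would then prove the two regimes $t \le t^{\circ}$ and $t > t^{\circ}$ separately, exactly as in the two cases of \Cref{lm:z-existence}.

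For $t \le t^{\circ}$, I fix such a $t$, take two small scales $\hat\alpha' < \hat\alpha$, and estimate $\normF{\mMG_{\hat\alpha'}(t) - \mMG_{\hat\alpha}(t)}$. The key point is that running the diagonal linear flow \eqref{eq:hatM-exact-formula} from $\hat\alpha'\ve_1\ve_1^{\top}$ for the time $t_0 := \frac{(\hat\alpha')^{-(P-1)} - \hat\alpha^{-(P-1)}}{2\mu_1(P-1)}$ lands exactly on $\hat\alpha\ve_1\ve_1^{\top}$. Since $t_0 \le T_{\hat\alpha'}(r)$ for an $r = \hat\alpha(1+O(\hat\alpha^{P})) = O(\hat\alpha)$, \Cref{lm:mtc} gives $\normF{{\phi_m}(\hat\alpha'\ve_1\ve_1^{\top}, t_0) - \hat\alpha\ve_1\ve_1^{\top}} = O(r^{P+1}) = O(\hat\alpha^{P+1})$, while \Cref{lm:deep-m-growth} bounds $\normtwo{{\phi_m}(\hat\alpha'\ve_1\ve_1^{\top}, t_0)}$ by $O(\hat\alpha)$. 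Because the time offset of $\mMG_{\hat\alpha'}$ equals $t_0$ plus that of $\mMG_{\hat\alpha}$, both $\mMG_{\hat\alpha'}(t)$ and $\mMG_{\hat\alpha}(t)$ are images under ${\phi_m}(\,\cdot\,,\tfrac{\hat\alpha^{-(P-1)}}{2\mu_1(P-1)}+t)$ of matrices of norm $O(\hat\alpha)$ that differ by $O(\hat\alpha^{P+1})$, and for the fixed $t \le t^{\circ}$ the residual running time stays within the range covered by \Cref{lm:deep-coupling-with-trajectory}; that lemma then contracts the gap to $O(\hat\alpha^{-P})\cdot O(\hat\alpha^{P+1}) = O(\hat\alpha)$. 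Hence $\{\mMG_{\hat\alpha}(t)\}$ is Cauchy as $\hat\alpha\to 0$, so $\oMG(t)$ exists, and sending $\hat\alpha'\to 0$ in the estimate yields $\normF{\mMG_{\hat\alpha}(t) - \oMG(t)} = O(\hat\alpha)$.

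For $t > t^{\circ}$, write $t = t^{\circ} + \tau$ with $\tau > 0$; then $\mMG_{\hat\alpha}(t) = {\phi_m}(\mMG_{\hat\alpha}(t^{\circ}), \tau)$, and since $\mMG_{\hat\alpha}(t^{\circ})$ lies in the fixed ball of radius $R$ and ${\phi_m}(\,\cdot\,,\tau)$ is locally Lipschitz (it is $\contC^1$ because $f$ is $\contC^3$ and the flow is global), the Case-2 argument of \Cref{lm:z-existence} gives $\normF{\mMG_{\hat\alpha}(t) - \oMG(t)} = O(\normF{\mMG_{\hat\alpha}(t^{\circ}) - \oMG(t^{\circ})}) = O(\hat\alpha)$ with $\oMG(t) := {\phi_m}(\oMG(t^{\circ}),\tau)$, which by continuity of ${\phi_m}$ is also a solution of \eqref{eq:deep_M}. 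I expect the main obstacle to be the bookkeeping of the auxiliary scale parameters so that the exact hypotheses of \Cref{lm:mtc,lm:deep-m-growth,lm:deep-coupling-with-trajectory} hold literally — namely $t_0 \le T_{\hat\alpha'}(r)$, initial norms $\le 1$ after rescaling, and residual time below the relevant $T_{\hat\alpha''}(r')$ — which is the deep analogue of the $\tilde\alpha$-reparametrization in \Cref{lm:z-existence}, together with the new feature that, unlike depth $2$, the linearized ODE \eqref{eq:hatM-exact-formula} blows up in finite time, so the second regime must absorb all $t > t^{\circ}$, in particular $t = 0$ and all positive times.
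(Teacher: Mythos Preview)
Your proposal is correct and follows essentially the same approach as the paper's proof. The paper likewise splits into two cases at a threshold $\bar T := \frac{-\kappa(P-1)c - c^{-(P-1)}}{2\mu_1(P-1)}$ for a small constant $c$, introduces the auxiliary scale $\tilde\alpha$ satisfying $\kappa(P-1)\tilde\alpha + \tilde\alpha^{-(P-1)} = \hat\alpha^{-(P-1)}$ (your $r = \hat\alpha(1+O(\hat\alpha^{P}))$), sets $t_0 = T_{\hat\alpha'}(\tilde\alpha) = \frac{(\hat\alpha')^{-(P-1)}-\hat\alpha^{-(P-1)}}{2\mu_1(P-1)}$, applies \Cref{lm:mtc} and \Cref{lm:deep-m-growth} at time $t_0$, then \Cref{lm:deep-coupling-with-trajectory} to obtain the $O(\tilde\alpha)=O(\hat\alpha)$ Cauchy estimate, and finishes Case~2 by local Lipschitzness of $\phi_m(\cdot,\tau)$ exactly as you describe.
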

\begin{proof}
	Let $c$ be a sufficiently small constant. Let $\bar{T} := \frac{-\kappa(P-1)c - c^{-(P-1)}}{2\mu_1 (P-1)}$. We prove this lemma in the cases of $t \in (-\infty, \bar{T}]$ and $t > \bar{T}$ respectively.
	
	\paragraph{Case 1.} Fix $t \in (-\infty, \bar{T}]$. Then $\frac{\hat{\alpha}^{-(P-1)}}{2\mu_1(P-1)} + t \le T_{\hat{\alpha}}(c)$. Let $\tilde{\alpha}$ be the unique number such that $\kappa(P-1)\tilde{\alpha} + \tilde{\alpha}^{-(P-1)} = \hat{\alpha}^{-(P-1)}$. Let $\hat{\alpha}' < \hat{\alpha}$ be an arbitrarily small number. Let $t_0 := T_{\hat{\alpha}'}(\tilde{\alpha}) = \frac{(\hat{\alpha}')^{-(P-1)} - \hat{\alpha}^{-(P-1)}}{2\mu_1 (P-1)}$. By \Cref{lm:mtc} and \eqref{eq:hatM-exact-formula}, we have
	\[
		\normF{{\phi_m}(\hat{\alpha}' \ve_1 \ve_1^{\top}, t_0) - \hat{\alpha} \ve_1 \ve_1^{\top}} \le \normF{{\phi_m}(\hat{\alpha}' \ve_1 \ve_1^{\top}, t_0) - \hat{\phi}_m(\hat{\alpha}'\ve_1 \ve_1^{\top}, t_0)} \le O(\tilde{\alpha}^{P+1}).
	\]
	By \Cref{lm:deep-m-growth}, $\normtwosm{{\phi_m}(\hat{\alpha}' \ve_1 \ve_1^{\top}, t_0)} \le \tilde{\alpha}$. Then by \Cref{lm:deep-coupling-with-trajectory},  we have
	\begin{align*}
		\normF{{\phi_m}(\hat{\alpha}' \ve_1 \ve_1^{\top}, t_0 + t) - \vphi(\hat{\alpha} \ve_1 \ve_1^{\top}, t)} \le \left( \frac{c}{\tilde{\alpha}} \right)^P e^{2\kappa c^P} \cdot O(\tilde{\alpha}^{P+1}) = O(\tilde{\alpha}) = O(\hat{\alpha}).
	\end{align*}
	This implies that $\{\mMG_{\hat{\alpha}}(t)\}$ satisfies Cauchy's criterion for every $t$, and thus the limit $\oMG(t)$ exists for $t \le \bar{T}$. The convergence rate can be deduced by taking limits for $\hat{\alpha}' \to 0$ on both sides.
	
	\paragraph{Case 2.} For $t = \bar{T} + \tau$ with $\tau > 0$, ${\phi_m}(\mM, \tau)$ is locally Lipschitz with respect to $\mM$. So
	\begin{align*}
		\normF{\mMG_{\hat{\alpha}}(t) - \mMG_{\hat{\alpha}'}(t)} &= \normF{{\phi_m}(\mMG_{\hat{\alpha}}(\bar{T}), \tau) - {\phi_m}(\mMG_{\hat{\alpha}'}(\bar{T}), \tau)} \\
		&= O(\normF{\mMG_{\hat{\alpha}}(\bar{T}) - \mMG_{\hat{\alpha}'}(\bar{T})}) \\
		&= O(\hat{\alpha}),
	\end{align*}
	which proves the lemma for $t > \bar{T}$.
\end{proof}

\begin{theorem}\label{thm:main_deep_m}
	For every $t \in (-\infty, +\infty)$, as $\alpha \to 0$, we have:
	\begin{equation} \label{eq:deep_glrl_main_M}
			\normF{{\phi_m}\left(\hat{\alpha} \mI, \frac{\hat{\alpha}^{-(P-1)}}{2\mu_1(P-1)} + t\right) - \oMG(t)} = O(\hat{\alpha}^{\frac{1}{P+1}}),
	\end{equation}
	and for any $2\le k\le d$,  
	\begin{equation} \label{eq:deep_glrl_low_rank_M}
			\lambda_k\left({\phi_m}\left(\hat{\alpha} \mI, \frac{\hat{\alpha}^{-(P-1)}}{2\mu_1(P-1)} + t\right) \right) = O(\hat{\alpha}).
	\end{equation}

\end{theorem}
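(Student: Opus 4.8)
The plan is to run the same two-phase ``escape'' analysis used in the depth-$2$ case (\Cref{app:dynamical}) and in the proof of \Cref{lm:deep-omg-existence}, but with the constant-coefficient system $\frac{\dd\widehat\mM}{\dd t} = -\bigl(\nabla f(\vzero)\widehat\mM^P + \widehat\mM^P\nabla f(\vzero)\bigr)$ --- whose solution $\hat\phi_m$ from a diagonal PSD initialization is given in closed form by \eqref{eq:hatM-exact-formula} --- playing the role of the linearization. Write $\mM(t) := {\phi_m}(\hat\alpha\mI, t)$ (PSD for all $t$ by \Cref{lem:rank_balance_end_to_end}) and set $t^{\star} := \tfrac{\hat\alpha^{-(P-1)}}{2\mu_1(P-1)}$. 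First I would fix the intermediate scale $c := \hat\alpha^{1/(P+1)}$ and the stopping time $t_0 := T_{\hat\alpha}(c)$. On $[0, t_0]$ the standing lemmas apply and give $\normtwosm{\mM(t_0)} \le g_{\hat\alpha,c}(t_0)^{1/(P-1)} = c$ (\Cref{lm:deep-m-growth}) and $\normFsm{\mM(t_0) - \widehat\mM(t_0)} = O(c^{P+1}) = O(\hat\alpha)$ (\Cref{lm:mtc}), with $\widehat\mM(t) := \hat\phi_m(\hat\alpha\mI,t)$. Evaluating \eqref{eq:hatM-exact-formula} at $t_0$: $\widehat\mM(t_0)$ is diagonal, its $(1,1)$-entry is $\hat c := \bigl(\kappa(P-1)c + c^{-(P-1)}\bigr)^{-1/(P-1)} \le c$, and for $k\ge 2$ the denominator $\hat\alpha^{-(P-1)} - 2\mu_k(P-1)t_0$ is $\Omega(\hat\alpha^{-(P-1)})$ with constant controlled by the gap $\mu_1-\mu_2>0$, so its $(k,k)$-entry is $O(\hat\alpha)$. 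Hence $\normFsm{\mM(t_0) - \hat c\,\ve_1\ve_1^{\top}} = O(\hat\alpha)$ and $\lambda_k(\mM(t_0)) = O(\hat\alpha)$ for all $k\ge 2$.

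The second phase is a trajectory coupling. The $\kappa$-corrected choice $t_0 = T_{\hat\alpha}(c)$ is made precisely so that $\hat c^{-(P-1)} = \hat\alpha^{-(P-1)} - 2\mu_1(P-1)t_0$, i.e.\ the blow-up time of the rank-one linearized dynamics from $\hat c\,\ve_1\ve_1^{\top}$ equals $\Delta := t^{\star} - t_0$; therefore for any fixed $t$, $\mM(t^{\star}+t) = {\phi_m}(\mM(t_0),\Delta+t)$ and $\mMG_{\hat c}(t) = {\phi_m}(\hat c\,\ve_1\ve_1^{\top},\Delta+t)$ are same-time images of $O(\hat\alpha)$-close, norm-$\le c$ initializations. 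I would then apply \Cref{lm:deep-coupling-with-trajectory} at scale $c$ (valid up to time $T_c(r)$ for a small constant $r$), bounding the separation by $(r/c)^P e^{2\kappa r^P}\cdot O(\hat\alpha) = O(\hat\alpha/c^P) = O(\hat\alpha^{1/(P+1)})$; since $T_c(r) - \Delta \to -\tfrac{\kappa(P-1)r + r^{-(P-1)}}{2\mu_1(P-1)}$ as $\hat\alpha\to 0$, this is legitimate for all $t$ below a fixed threshold $\bar T < 0$ (obtained by optimizing $r\in(0,R]$). Combined with $\normFsm{\mMG_{\hat c}(t) - \oMG(t)} = O(\hat c) = O(\hat\alpha^{1/(P+1)})$ from \Cref{lm:deep-omg-existence}, this proves \eqref{eq:deep_glrl_main_M} for $t\le\bar T$; for $t > \bar T$ I would propagate the bound by one more step using the local Lipschitzness of ${\phi_m}(\,\cdot\,,t-\bar T)$, exactly as in Case~2 of \Cref{lm:deep-omg-existence}.

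For the low-rankness \eqref{eq:deep_glrl_low_rank_M} the distance bound only yields $O(\hat\alpha^{1/(P+1)})$ (note $\oMG(t)$ is rank $\le 1$ as a limit of rank-one matrices), so I would instead run the sharp eigenvalue-growth inequality \eqref{eq:eigenvalue_growth_2} over $[t_0, t^{\star}+t]$. On that interval $\normtwosm{\mM(\tau)} = O(1)$ --- on $[t_0, t^{\star}+\bar T]$ by \Cref{lm:deep-m-growth} restarted from $\mM(t_0)$ at scale $c$, and on the remaining fixed-length window by comparison with the compact curve ${\phi_m}(\oMG(\bar T),\,\cdot\,)$ --- so $\normtwosm{\nabla f(\mM^P(\tau))} = O(1)$ and the integral in \eqref{eq:eigenvalue_growth_2} is $O(t^{\star}+t-t_0) = O(\Delta) = O(\hat\alpha^{-(P-1)/(P+1)})$. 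Since $\lambda_k(\mM(t_0)) = O(\hat\alpha)$ forces $\lambda_k^{1-P}(\mM(t_0)) = \Omega(\hat\alpha^{-(P-1)})$, which dominates the integral, \eqref{eq:eigenvalue_growth_2} leaves $\lambda_k^{1-P}(\mM(t^{\star}+t)) = \Omega(\hat\alpha^{-(P-1)})$, i.e.\ $\lambda_k(\mM(t^{\star}+t)) = O(\hat\alpha)$.

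The step I expect to be the main obstacle is making the second-phase coupling close. In the depth-$2$ case escape from the origin is logarithmic in the initialization scale, whereas here it is polynomial ($\sim\hat\alpha^{-(P-1)}$), and Lemmas~\ref{lm:deep-m-growth}, \ref{lm:mtc}, \ref{lm:deep-coupling-with-trajectory} are only available out to $T_{\,\cdot\,}(r)$, so the estimates are tight. This forces the specific choices above: $c = \hat\alpha^{1/(P+1)}$ so that the linearization error $O(c^{P+1})$ and the separation blow-up $O(c^{-P})$ exactly balance to $O(\hat\alpha^{1/(P+1)})$, and the $\kappa$-corrected stopping time $t_0 = T_{\hat\alpha}(c)$ (rather than the naive time at which the $(1,1)$-entry of $\widehat\mM$ first reaches $c$) so that $\Delta$ coincides with the blow-up time of $\hat c\,\ve_1\ve_1^{\top}$ and the hand-off to the rank-one trajectory $\mMG_{\hat c}$ loses no leading-order constant. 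Verifying these balances, and that the resulting threshold $\bar T$ is finite so Case~2 covers the rest, is where the real work lies.
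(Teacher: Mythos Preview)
Your proposal is correct and follows essentially the same two-phase argument as the paper. The only difference is notational: what you call $c$ (the intermediate scale $\hat\alpha^{1/(P+1)}$) the paper calls $\hat\alpha_1$ (and its $\kappa$-corrected sibling $\tilde\alpha_1$), and what you call $r$ (the fixed small constant up to which the coupling lemma is run) the paper calls $c$; the stopping time, the use of \Cref{lm:mtc} and \Cref{lm:deep-coupling-with-trajectory}, the hand-off to $\mMG_{\hat c}$, and the separate eigenvalue-growth argument via \eqref{eq:eigenvalue_growth_2} for the sharp $O(\hat\alpha)$ low-rank bound are all identical.
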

\begin{proof}
	Let $\mM_{\hat{\alpha}}(t) := {\phi_m}\left(\hat{\alpha} \mI, \frac{\hat{\alpha}^{-(P-1)}}{2\mu_1(P-1)} + t\right)$. Again we let $c$ be a sufficiently small constant and $\bar{T} := \frac{-\kappa(P-1)c - c^{-(P-1)}}{2\mu_1 (P-1)}$. We prove in the cases of $t \in (-\infty, \bar{T}]$ and $t > \bar{T}$ respectively.
	
	\paragraph{Case 1.} Fix $t \in (-\infty, \bar{T}]$. Let $\hat{\alpha}_1 := \hat{\alpha}^{\frac{1}{P+1}}$. Let $\tilde{\alpha}_1$ be the unique number such that $\kappa(P-1)\tilde{\alpha}_1 + \tilde{\alpha}_1^{-(P-1)} = \hat{\alpha}_1^{-(P-1)}$. Let $t_0 := T_{\hat{\alpha}}(\tilde{\alpha}_1) = \frac{\hat{\alpha}^{-(P-1)} - \hat{\alpha}_1^{-(P-1)}}{2\mu_1(P-1)}$. Then
	\begin{align*}
		\normF{{\phi_m}(\hat{\alpha} \mI, t_0) - \hat{\alpha}_1 \ve_1 \ve_1^{\top}} &\le \normF{{\phi_m}(\hat{\alpha} \mI, t_0) - \hat{\phi}_m(\hat{\alpha} \mI, t_0)} + \normF{\hat{\phi}_m(\hat{\alpha} \mI, t_0) - \hat{\alpha}_1 \ve_1 \ve_1^{\top}} \\
		&= O(\tilde{\alpha}_1^{P+1} + \hat{\alpha}) \\
		&= O(\hat{\alpha}).
	\end{align*}
	By \Cref{lm:deep-m-growth}, $\normtwosm{{\phi_m}(\hat{\alpha}' \mI, t_0)} \le \tilde{\alpha}_1$. Then by~\Cref{lm:deep-coupling-with-trajectory}, we have
	\begin{align*}
		\normF{\mM_{\hat{\alpha}}(t) - \mMG_{\hat{\alpha}_1}(t)} &= \normF{{\phi_m}(\hat{\alpha} \mI, t_0 + t) - {\phi_m}(\hat{\alpha}_1 \ve_1 \ve_1^{\top}, t)} \\
		&\le \left( \frac{c}{\tilde{\alpha}_1} \right)^P e^{2\kappa c^P} \cdot O(\hat{\alpha}) = O(\hat{\alpha}^{\frac{1}{P+1}}).
	\end{align*}
	Combining this with the convergence rate for $\mMG_{\hat{\alpha}_1}(t)$ proves the bound \eqref{eq:deep_glrl_main_M}.
	
	For \eqref{eq:deep_glrl_low_rank_M}, by \Cref{lem:eigenvalue_growth}, we have 
	\begin{equation}
	\begin{split}
	\lambda^{1-P}_{k}(\mM_{\hat{\alpha}}(\bar{T})) - 	\lambda^{1-P}_{k}(\mM_{\hat{\alpha}}(t_0)) 
	&\le \int_{t_0}^{\bar{T}} 2(P-1) \normtwo{\nabla f(\mM_{\hat{\alpha}}^P(t))} \dd t \\
	&\le \int_{t_0}^{\bar{T}} 2(P-1) (\mu_1 + \beta\normtwo{\mM_{\hat{\alpha}}(t)}^P))\dd t 	\\
	& \le -2(P-1)\left( \mu_1(t-T_1) + \frac{\kappa}{2} \cdot g_{\hat{\alpha},c}(t)^{\frac{1}{P-1}}  \right).
	\end{split}
	\end{equation}
	
	By~\Cref{lm:mtc}, $\lambda_1(\mM_{\hat{\alpha}}(\bar{T})) = \normtwo{\mM_{\hat{\alpha}}(\bar{T})} = c + O(c^{P+1})$. For $k \ge 2$,
	\begin{align*}
	\lambda_k(\mM_{\hat{\alpha}}(\bar{T}))^{-(P-1)} &\ge \Omega(\hat{\alpha}^{-(P-1)}) -2(P-1)\left( \mu_1(\bar{T}-T_1) + \frac{\kappa}{2} \cdot c  \right) \\
	&\ge \Omega(\hat{\alpha}^{-(P-1)}) - \frac{\hat{\alpha}^{-\frac{P-1}{P+1}} - c^{-(P-1)}}{2\mu_1(P-1)} - O(c)\\
	&\ge \Omega(\hat{\alpha}^{-(P-1)}).
	\end{align*} 
	
	Thus $\lambda_k(\mM_{\hat{\alpha}}(\bar{T})) \le O(\hat{\alpha})$.
	
	\paragraph{Case 2.} For $t = \bar{T} + \tau$ with $\tau > 0$, ${\phi_m}(\mM, \tau)$ is locally Lipschitz with respect to $\mM$. So
	\begin{align*}
		\normF{\mM_{\hat{\alpha}}(t) - \mMG_{\hat{\alpha}_1}(t)} &= \normF{{\phi_m}(\mM_{\hat{\alpha}}(\bar{T}), \tau) - {\phi_m}(\mMG_{\hat{\alpha}_1}(\bar{T}), \tau)} \\
		&= O\left(\normF{\mM_{\hat{\alpha}}(\bar{T}) - \mMG_{\hat{\alpha}_1}(\bar{T})}\right) = O(\hat{\alpha}^{\frac{1}{P+1}}),
	\end{align*}
	which proves the bound \eqref{eq:deep_glrl_main_M}.
	
	For \eqref{eq:deep_glrl_low_rank_M}, again by \Cref{lem:eigenvalue_growth}, we have  
	\begin{align*}
	 & \quad \lambda^{1-P}_{k}(\mM_{\hat{\alpha}}(\bar{T})) - 	\lambda^{1-P}_{k}(\mM_{\hat{\alpha}}(\bar{T}+\tau)) \\
	& \le \int_{\bar{T}}^{\bar{T}+\tau} 2(P-1) \normtwo{\nabla f(\mM_{\hat{\alpha}}^P(t))}\dd t\\
	&\le \int_{\bar{T}}^{\bar{T} +\tau} 2(P-1) \left(\beta\norm{\mM^P_{\hat{\alpha}}(t) -(\mMG)^P(t)}_2  + \normtwo{\nabla f\left((\mMG)^P(t)\right)}\right) \dd t 	\\
	& \le \int_{\bar{T}}^{\bar{T}+\tau} 2(P-1) (O({\hat{\alpha}}^{\frac{1}{1+P}})  + \beta\normtwo{\mMG(t)}^P)\dd t 	\\
	& \le  O(1).
	\end{align*}
	
	Thus $\lambda^{1-P}_{k}(\mM_{\hat{\alpha}}(\bar{T}+\tau)) = \Omega(\hat{\alpha}^{-(P-1)})$, that is, $\lambda_{k}(\mM_{\hat{\alpha}}(\bar{T}+\tau)) = O(\hat{\alpha})$, $\forall k\ge 2$.
\end{proof}

\begin{proof}[Proof of \Cref{thm:main_deep_w}]
Note that $\left(\oMG(t)\right)^P = \oWG(t)$ and
\[ \left({\phi_m}\left(\hat{\alpha} \mI, \frac{\hat{\alpha}^{-(P-1)}}{2\mu_1(P-1)} + t\right)\right)^P = \phi\left({\alpha} \mI, \frac{{\hat{\alpha}}^{-(P-1)}}{2\mu_1(P-1)} + t\right).\]
By \Cref{thm:main_deep_m}, We have 
\begin{align*}
& \quad \normF{\phi\left({\alpha} \mI, \frac{{\alpha}^{-(1-1/P)}}{2\mu_1(P-1)} + t\right) - \oWG(t)} \\
&\le \normF{ \left({\phi_m}\left(\hat{\alpha} \mI, \frac{\hat{\alpha}^{-(P-1)}}{2\mu_1(P-1)} + t\right)\right)^P - \left(\oMG(t)\right)^P} \\
&\le P\normF{{\phi_m}\left(\hat{\alpha} \mI, \frac{\hat{\alpha}^{-(P-1)}}{2\mu_1(P-1)} + t\right) - \oMG(t)} \max\left(\normtwo{{\phi_m}\left(\hat{\alpha} \mI, \frac{\hat{\alpha}^{-(P-1)}}{2\mu_1(P-1)} + t\right)},\normtwo{\oMG(t)}\right)^{P-1}	\\
&= O(\hat{\alpha}^{\frac{1}{P+1}})O(1) =O(\alpha^{\frac{1}{P(P+1)}}),
\end{align*}
and for $2\le k \le d$,
\[ \lambda_k\left({\phi}\left({\alpha} \mI, \frac{{\alpha}^{-(1-1/P)}}{2\mu_1(P-1)} + t\right) \right) = \lambda_k\left({\phi_m}\left(\hat{\alpha} \mI, \frac{\hat{\alpha}^{-(P-1)}}{2\mu_1(P-1)} + t\right) \right)  = O(\hat{\alpha}^P )= O(\alpha).
\]
\end{proof}

\section{Escaping direction for deep matrix factorization}\label{appsec:deep_escape_general}

For deep matrix factorization, recall that we only prove that GF with infinitesimal identity initialization escapes in the direction of the top eigenvector. The main burden for us to generalize this proof to general initialization is that we don't know how to analyze the early phase dynamics of \eqref{eq:S_formular}, i.e., the analytical solution of \eqref{eq:S0_formular} is difficult to compute, when $L\ge 3$. Intuitively, the direction that the infinitesimal initialization escapes $\vzero$ is exactly $\oM :=\lim_{t\to\infty}\frac{\mM(t)}{\normF{\mM(t)}}$, where $\mM(t)$ is the solution of \eqref{eq:S0_formular}. Showing $\oM =\vv_1\vv_1^\top$ is a critical step in our analysis towards convergence to GLRL.
\begin{equation}\label{eq:S0_formular}
	\frac{\dd \mM}{\dd t}= -\nabla f (\vzero)\mM^{L/2} - \mM^{L/2}\nabla f (\vzero).
\end{equation}
However, unlike the depth-2 case, $\oM$ can be different from $\vv_1\vv_1^\top$ even if $\vv_1^\top\mM(0)\vv_1 >0$.  We here give an example for diagonal $\mM(0)$ and $\gfz$ at \Cref{app:const_Q-construction}.
 Nevertheless, we still conjecture that except for a zero measure set of $\mM(0)$,  $\oM =\vv_1\vv_1^\top$, based on the following theoretical and experimental evidences:
\begin{itemize}
	\item If $\vv_1^\top\mM(0)\vv_1 >0$ and $\rank(\mM(0))=1$, we  prove that $\oM=\vv_1\vv_1^\top$. (See \Cref{thm:deep_init_rank_1})
	\item For the counter-example, we show experimentally, even with perturbation of only magnitude $10^{-5}$, $\oM =\vv_1\vv_1^\top$.  The results are shown at \Cref{fig:const_Q}. The $y$-axis indicates $\langle \vv_1, \vu_1(t) \rangle$ where $\vu_1(t)$ is the top eigenvector of $\mM(t)$. As $\normF{\mW(t)}$ becomes larger, $\vu_1(t)$ aligns better with $\vv_1$, which means the noise helps $\mM$ escaping from $\vv_1$. The larger the noise is, the faster $\vu_1(t)$ converges to $\vv_1$.
\end{itemize}

\subsection{Rank-one Case}

\begin{theorem}[rank-1 initialization escapes along the top eigenvector]\label{thm:deep_init_rank_1}
    When $\rank(\mM(0))=1$, $\lim_{t\to\infty}\frac{\mM(t)}{\norm{\mM(t)}_F} =\vv_1\vv_1^\top$, if $\vv_1^\top\mM(0)\vv_1 >0$.
\end{theorem}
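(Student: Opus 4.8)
The plan is to exploit the fact that \eqref{eq:S0_formular} preserves the rank-one PSD structure, reduce it to a scalar-rescaled linear vector ODE, and read off the escape direction from power iteration. First I would write $\mM(0) = \vm_0\vm_0^{\top}$ (possible since $\mM(0)$ is rank one and, in the setting at hand, PSD), noting that the hypothesis $\vv_1^{\top}\mM(0)\vv_1 = (\vv_1^{\top}\vm_0)^2 > 0$ says exactly that $c_1 := \vv_1^{\top}\vm_0 \neq 0$. For a rank-one PSD matrix $\mM = \vm\vm^{\top}$ one has $\mM^{L/2} = \normtwosm{\vm}^{L-2}\,\vm\vm^{\top} = \normtwosm{\vm}^{L-2}\mM$, so with $\mA := -\nabla f(\vzero) = \diag(\mu_1,\dots,\mu_d)$ (diagonal in the chosen basis, $\vv_1 = \ve_1$) the ansatz $\mM(t) = \vm(t)\vm(t)^{\top}$ turns $\dot\mM = \mA\mM^{L/2} + \mM^{L/2}\mA$ into $\dot\mM = \normtwosm{\vm}^{L-2}\big((\mA\vm)\vm^{\top} + \vm(\mA\vm)^{\top}\big)$. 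Hence if $\vm(t)$ solves $\dot\vm = \normtwosm{\vm}^{L-2}\mA\vm$ with $\vm(0) = \vm_0$, then $\mM(t) := \vm(t)\vm(t)^{\top}$ is a solution of \eqref{eq:S0_formular}; since this vector field is tangent to the rank-one PSD manifold — on which, in the coordinate $\vm$, it is smooth away from $\vm = \vzero$ — this is the solution the theorem refers to.

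Next I would remove the scalar factor $\normtwosm{\vm}^{L-2}$ via the time change $\tau(t) := \int_0^t \normtwosm{\vm(s)}^{L-2}\,\dd s$, which is strictly increasing with $\tau(0) = 0$; in the new clock $\tfrac{\dd\vm}{\dd\tau} = \mA\vm$, so $\vm(\tau) = e^{\tau\mA}\vm_0 = \sum_{i} c_i e^{\mu_i\tau}\ve_i$ with $c_i := \ve_i^{\top}\vm_0$. Using the standing assumption $\mu_1 > \max\{\mu_2, 0\}$ and $c_1 \neq 0$, the standard power-iteration estimate gives
\[
	\frac{\vm(\tau)}{\normtwosm{\vm(\tau)}}
	= \frac{c_1\ve_1 + \sum_{i \ge 2} c_i e^{(\mu_i - \mu_1)\tau}\ve_i}{\sqrt{\,c_1^2 + \sum_{i \ge 2} c_i^2 e^{2(\mu_i - \mu_1)\tau}\,}}
	\;\longrightarrow\; \sgn(c_1)\,\ve_1 \qquad (\tau \to \infty),
\]
whence $\dfrac{\mM(t)}{\normF{\mM(t)}} = \dfrac{\vm(\tau)\vm(\tau)^{\top}}{\normtwosm{\vm(\tau)}^{2}} \longrightarrow \ve_1\ve_1^{\top} = \vv_1\vv_1^{\top}$. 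It remains to identify the limit ``$t \to \infty$'' in the statement with ``$\tau \to \infty$'': since $\mu_1 > 0$ one has $\normtwosm{\vm(\tau)} \ge |c_1| e^{\mu_1\tau}$, so the map $\tau \mapsto t(\tau) = \int_0^{\tau} \normtwosm{\vm(\sigma)}^{-(L-2)}\,\dd\sigma$ is a strictly increasing bijection from $[0,\infty)$ onto the maximal existence interval of $\mM(t)$, and $\tau \to \infty$ precisely as $t$ approaches its right endpoint (which is $+\infty$ when $L = 2$ and a finite blow-up time when $L \ge 3$).

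The step I expect to need the most care — though none of this is deep — is this last time-bookkeeping: reconciling the phrasing ``$t \to \infty$'' with the fact that the linearized flow \eqref{eq:S0_formular} blows up in finite time when $L \ge 3$ (the same effect that produces the time shift $\tfrac{\alpha^{-(1-1/P)}}{2\mu_1(P-1)}$ in \Cref{thm:main_deep_w}), so that the displayed limit must be read over the maximal interval. Besides that, the things to verify are routine: the identity $\mM^{L/2} = \normtwosm{\vm}^{L-2}\vm\vm^{\top}$, the tangency/well-posedness statement that makes $\mM(t) = \vm(t)\vm(t)^{\top}$ the legitimate solution despite $\mM \mapsto \mM^{L/2}$ not being Lipschitz at the rank-one PSD boundary, and the power-iteration estimate itself, which is immediate from the display above.
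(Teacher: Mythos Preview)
Your proposal is correct and follows essentially the same approach as the paper: factor $\mM(t)=\vm(t)\vm(t)^\top$, use $\mM^{L/2}=\normtwosm{\vm}^{L-2}\vm\vm^\top$ to reduce \eqref{eq:S0_formular} to the vector ODE $\dot\vm=\normtwosm{\vm}^{L-2}(-\nabla f(\vzero))\vm$, and then time-change by $\tau(t)=\int_0^t\normtwosm{\vm(s)}^{L-2}\,\dd s$ to obtain the linear equation $\frac{\dd\vm}{\dd\tau}=-\nabla f(\vzero)\vm$ whose normalized solution converges to $\vv_1$ by power iteration. Your write-up is in fact more careful than the paper's on the point you flagged: the paper simply records the time change and appeals to power iteration without addressing the finite-time blow-up of \eqref{eq:S0_formular} for $L\ge 3$, whereas you correctly interpret ``$t\to\infty$'' as the limit over the maximal existence interval and check that this corresponds to $\tau\to\infty$.
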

\begin{proof}
    Let $\vu(0)$ be the vector such that $\mM(0) = \vu(0)\vu(0)^{\top}$ and $\vu(t)\in\R^d$ be the solution of
    \[
  		\frac{\dd \vu(t)}{\dd t} = \norm{\vu(t)}_2^{L-2}\nabla f(\vzero)\vu(t).
    \]
    It is easy to check that $\mM(t) = \vu(t)\vu(t)^\top$ is the solution of \eqref{eq:S0_formular}, because
    \begin{align*}
    \frac{\dd \mM}{\dd t} = \frac{\dd \vu}{\dd t}\vu^\top + \vu\frac{\dd \vu}{\dd t}^\top
    =& -\nabla f(\vzero)\mM(t)\norm{\vu(t)}_2^{L-2} - \mM(t)\nabla f(\vzero)\norm{\vu(t)}_2^{L-2} \\
    = &-\nabla f (\vzero)\mM^{L/2} - \mM^{L/2}\nabla f (\vzero).
    \end{align*}
	Let $\tau(t) = \int_{0}^{t} \norm{\vu(s)}_2^{L-2} \dd s$. Then
	\[
		\frac{\dd \vu}{\dd \tau} =\frac{\dd \vu}{\dd t}\frac{\dd t}{\dd \tau} = -\frac{1}{\ \ \frac{\dd \tau}{\dd t}\ \ } \normtwosm{\vu}^{L-2} \nabla f (\vzero) \vu = -\nabla f (\vzero)\vu.
	\]
	That is, under time rescaling $t\to \tau(t)$, the trajectory of $\vu(t)$ still follows the power iteration, regardless of the depth $L$.
\end{proof}

\subsection{Counter-example for Escaping Direction} \label{app:const_Q-construction}

Let $\gfz = \diag(2, 0.9, 0.8, \dots, 0.1) \in \R^{10 \times 10}$ be diagonal. Let $\mW(0)$ be also diagonal and $\mW(0)_{i, i} \sim \text{Unif}[0.9, 1.1] \cdot \alpha$ for $i \in [10] \setminus \{2\}$, $\mW(0)_{2, 2} = 16 \alpha$, where $\alpha = 10^{-16}$ is a small constant. Let the depth be 4.

\begin{lemma} \label{lem:wrong-escape-direction}
    With $\gfz$ and $\mW(0)$ constructed above, $\vv_1 \mM(0) \vv_1^\T > 0$ and $\oM \neq \vv_1 \vv_1^\T$.
\end{lemma}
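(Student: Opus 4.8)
The plan is to exploit the diagonal structure of the construction, which collapses the escape ODE \eqref{eq:S0_formular} into a family of decoupled scalar ODEs with explicit finite-time blow-up; the escaping direction is then read off from whichever diagonal entry blows up first. Write $-\nabla f(\vzero) = \diag(\mu_1,\dots,\mu_{10})$ with $\mu_1 = 2$ and $\mu_2 = 0.9 > \mu_3 = 0.8 > \cdots > \mu_{10} = 0.1$, so that the hypotheses of \Cref{thm:main_deep_w} still hold with $\mu_1$ the unique positive top eigenvalue, attained at $\vv_1 = \ve_1$; and let $\mM(0) = \mW(0)^{2/L}$, which is diagonal with strictly positive entries $m_i(0)$ since $\mW(0)$ is, with $m_2(0)$ a fixed factor larger than every other $m_i(0)$ (from $\mW(0)_{2,2} = 16\alpha$ while $\mW(0)_{i,i}\in[0.9,1.1]\alpha$ for $i\ne 2$, and $L=4$, one gets $m_2(0) = 4\alpha^{1/2}$ versus $m_i(0)\in[\sqrt{0.9},\sqrt{1.1}]\,\alpha^{1/2}$). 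The first assertion is then immediate: $\vv_1^{\top}\mM(0)\vv_1 = m_1(0) > 0$.

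For the escaping direction I would first note that the right-hand side of \eqref{eq:S0_formular} maps diagonal matrices to diagonal matrices, since $\nabla f(\vzero)$ is diagonal and hence commutes with every diagonal $\mM^{L/2}$; so by uniqueness of ODE solutions $\mM(t)$ stays diagonal, and its entries obey $\dot m_i = 2\mu_i m_i^{L/2}$ with $m_i(0)>0$. For $L=4$ this integrates to $m_i(t) = m_i(0)/(1 - 2\mu_i m_i(0)\,t)$, which is positive and strictly increasing on $[0, t_i^\ast)$ with $m_i(t)\to+\infty$ as $t\uparrow t_i^\ast := 1/(2\mu_i m_i(0))$, and the formula ceases to be a legitimate solution past $t_i^\ast$. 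Hence \eqref{eq:S0_formular} from $\mM(0)$ exists exactly on $[0,T^\ast)$ with $T^\ast = \min_i t_i^\ast$, attained at $i^\ast = \argmax_i \mu_i m_i(0)$; moreover for every $i\ne i^\ast$ the entry $m_i$ stays bounded on $[0,T^\ast)$, in fact $m_i(t)\to m_i(0)/\big(1 - \mu_i m_i(0)/(\mu_{i^\ast}m_{i^\ast}(0))\big)\in(0,\infty)$.

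Next I would check that $i^\ast = 2$, not $1$. Plugging in $\mu_1 = 2$, $\mu_2 = 0.9$, $\mu_i\le 0.8$ for $i\ge 3$, together with $m_2(0) = 4\alpha^{1/2}$ and $m_i(0)\in[\sqrt{0.9},\sqrt{1.1}]\,\alpha^{1/2}$ for $i\ne 2$, one gets $\mu_2 m_2(0) = 3.6\,\alpha^{1/2}$ while $\mu_1 m_1(0)\le 2.1\,\alpha^{1/2}$ and $\mu_i m_i(0)\le 0.84\,\alpha^{1/2}$ for $i\ge 3$, hence $\mu_2 m_2(0) > \mu_i m_i(0)$ for every $i\ne 2$ and for every realization of the random entries, so $i^\ast = 2$. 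Therefore $m_2(t)\to+\infty$ as $t\uparrow T^\ast$ while all other diagonal entries remain bounded, whence $\mM(t)/\normFsm{\mM(t)} = \diag(m_i(t))/\sqrt{\sum_j m_j(t)^2}\to\ve_2\ve_2^{\top}$ as $t\uparrow T^\ast$. Reading $\oM$ as this limiting escape direction — the only sensible interpretation of the stated ``$t\to\infty$'' limit, since \eqref{eq:S0_formular} blows up in finite time once $L\ge 3$ — gives $\oM = \ve_2\ve_2^{\top}\ne\ve_1\ve_1^{\top} = \vv_1\vv_1^{\top}$, which is the second assertion.

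The main obstacle is mild and essentially bookkeeping: one must argue that the trajectory of \eqref{eq:S0_formular} leaves every neighborhood of $\vzero$ \emph{through} the coordinate-$2$ direction, not merely that coordinate $2$ eventually dominates, which is why it is essential that $t_2^\ast$ be \emph{strictly} smaller than every other $t_i^\ast$ — a strict domination that, as computed above, holds uniformly over the randomness in $\mW(0)$ — and one must take the normalized limit as $t$ approaches the maximal existence time rather than $+\infty$. A secondary routine point is the reduction from $\mW(0)$ to the $\mM$-dynamics via $\mM(0) = \mW(0)^{2/L}$, which is trivial here since $\mW(0)$ is diagonal and positive definite. Finally, it is worth recording the contrast with depth $2$: there the diagonal dynamics is the \emph{linear} system $\dot m_i = 2\mu_i m_i$, so $m_i(t)/m_1(t)\to 0$ for $i\ne 1$ whenever $m_1(0)>0$, and the escape is always along $\vv_1$; the finite-time blow-up caused by the exponent $L/2>1$ is precisely what lets a subdominant eigendirection with a large enough initial mass win the race.
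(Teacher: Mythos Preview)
Your proof is correct and follows essentially the same approach as the paper: both exploit the diagonal structure to decouple \eqref{eq:S0_formular} into scalar ODEs $\dot m_i = 2\mu_i m_i^2$, solve them in closed form, and identify $i^\ast=2$ as the index with the smallest blow-up time, yielding $\oM=\ve_2\ve_2^\top$. If anything, you are more careful than the paper---you make explicit the passage $\mM(0)=\mW(0)^{1/2}$, you carry out the numerical comparison $\mu_2 m_2(0)=3.6\,\alpha^{1/2}>\mu_1 m_1(0)\le 2\sqrt{1.1}\,\alpha^{1/2}$, and you flag that the ``$t\to\infty$'' in the definition of $\oM$ must be read as $t\uparrow T^\ast$ due to finite-time blow-up, a point the paper glosses over.
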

\begin{proof}
    It is easy to check that $\vv_1 = \ve_1$, so $\vv_1 \mM(0) \vv_1^\T > 0$. Now we prove that $\oM(\infty) \neq \vv_1 \vv_1^\T$.

    As both $\mW(0)$ and $\gfz$ are diagonal, $\mW(t)$ is always diagonal and has dynamics
    \[
        \frac{\dd M(t)_{i, i}}{\dd t} = -2 \gfz_{i, i} M(t)_{i, i}^2, \quad \forall i \in [10],
    \]
    therefore we have closed form of $\mM(t)$:
    \[
        M(t)_{i, i}^{-1} = M(0)_{i, i}^{-1} - 2 \gfz_{i, i} t, \quad \forall i \in [10].
    \]
    For $i \in [10]$, the time for $M(t)_{i, i}$ going to infinity is $(2 M(0)_{i, i} \gfz_{i, i})^{-1}$. By simple calculation, $M(t)_{2, 2}$ goes to infinity the fastest, thus $\oM = \ve_2 \ve_2^\T \neq \vv_1 \vv_1^\T$.
\end{proof}

\begin{figure}[!htbp]
   \centering
   \includegraphics[width = 0.5 \textwidth]{./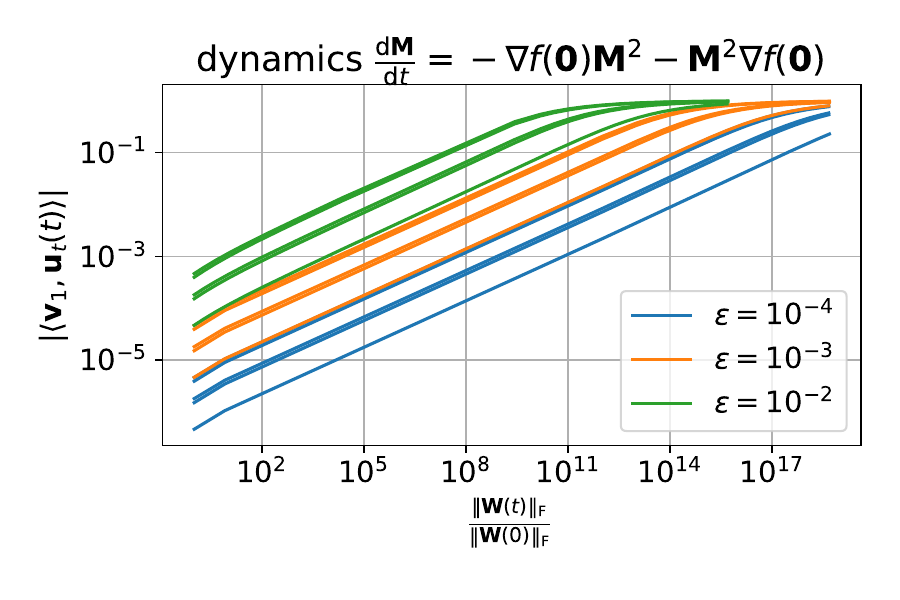}
   \caption{Dynamics of $\frac{\dd \mM}{\dd t} = -\gfz \mM^{L / 2} - \mM^{L / 2} \gfz$ plotted, where $L=4$, $\vu_1(t)$ is the top eigenvector of $\mW(t)$ and $\epsilon$ is the relative magnitude of noise. The initialization we use in this experiment is $\mW_{\text{noise}}(0) = \mW(0) + \frac{\alpha \epsilon}{2} (\mZ + \mZ^\T)$, where $\mW(0)$ is what we construct at \Cref{app:const_Q-construction}, and $\mZ$ is a matrix where entries are i.i.d.~samples from the standard Gaussian distribution $\mathcal{N}(0, 1)$.
   We run 5 fixed random seeds (the noise matrix) for each $\epsilon$.
   The trajectory of $\mW$ is calculated by simulating gradient flow on $\mM$ with small timestep and RMSprop \citep{tieleman2012lecture} for faster convergence.}
   \label{fig:const_Q}
\end{figure}

We remark that the scales of $\mW(0)$ and $\gfz$ do not matter as in gradient flow, as scaling $\nabla f(\vzero)$ is equivalent to scaling time (by \Cref{lem:homogeneous_ode_rescaling} below). And for this reason, the $x$-axis is the chosen as $\frac{\normF{\mW(t)}}{\normF{\mW(0)}}$, the relative growth rate.

\begin{lemma}\label{lem:homogeneous_ode_rescaling}
    Suppose $\vg:\R^d \to \R^d$ is a $P$-homogeneous function, that is, $\vg(\alpha\vtheta) = \lambda^P\vg(\alpha)$ for any $\alpha>0$,  and $\frac{\dd \vtheta'(t)}{\dd t} = \vg(\vtheta'(t))$. Then $\alpha\vtheta'(\alpha^{P-1}t)$ is  the solution of
    \begin{equation}
    \frac{\dd \vtheta(t)}{\dd t} = \vg(\vtheta(t)), \qquad \vtheta(0) = \alpha\vtheta'(0).
    \end{equation}

\end{lemma}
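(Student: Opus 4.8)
The plan is a direct verification by the chain rule, followed by an appeal to uniqueness of solutions of the ODE. Define the candidate curve $\vtheta(t) := \alpha\,\vtheta'(\alpha^{P-1}t)$. First I would check the initial condition: at $t=0$ this gives $\vtheta(0) = \alpha\,\vtheta'(0)$, which is exactly the prescribed initial value, so nothing is lost here.

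Next I would differentiate $\vtheta(t)$ in $t$. Writing $s = \alpha^{P-1}t$, the chain rule gives $\frac{\dd\vtheta(t)}{\dd t} = \alpha\cdot\alpha^{P-1}\,\frac{\dd\vtheta'}{\dd s}\big|_{s=\alpha^{P-1}t} = \alpha^{P}\,\vg\big(\vtheta'(\alpha^{P-1}t)\big)$, where I used that $\vtheta'$ solves $\frac{\dd\vtheta'}{\dd s} = \vg(\vtheta'(s))$. Now I invoke $P$-homogeneity of $\vg$: since $\vg(\lambda\vx) = \lambda^{P}\vg(\vx)$ for all $\lambda>0$, we get $\alpha^{P}\,\vg\big(\vtheta'(\alpha^{P-1}t)\big) = \vg\big(\alpha\,\vtheta'(\alpha^{P-1}t)\big) = \vg(\vtheta(t))$. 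Hence $\vtheta(t)$ satisfies $\frac{\dd\vtheta}{\dd t} = \vg(\vtheta(t))$ with the correct initial condition.

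Finally, since the ODE $\frac{\dd\vtheta}{\dd t} = \vg(\vtheta)$ is assumed throughout the paper to have solutions existing for all time (and $\vg$ is smooth enough here, being built from polynomial and matrix-power expressions, so solutions through a given point are unique), the curve $\alpha\,\vtheta'(\alpha^{P-1}t)$ must coincide with the unique solution $\vtheta(t)$ with $\vtheta(0)=\alpha\,\vtheta'(0)$. This completes the argument. There is essentially no obstacle: the only point requiring any care is making sure the time-rescaling factor is $\alpha^{P-1}$ (not, say, $\alpha^{P}$ or $\alpha^{1-P}$), which the differentiation above pins down, and noting that the lemma is applied in the paper only to the already-well-posed flows \eqref{eq:ode-W} / \eqref{eq:S0_formular}, so invoking existence and uniqueness is legitimate.
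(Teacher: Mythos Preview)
Your proposal is correct and follows essentially the same approach as the paper's proof: both simply plug in $\vtheta(t)=\alpha\vtheta'(\alpha^{P-1}t)$ and verify the ODE via the chain rule and $P$-homogeneity. Your version is slightly more thorough in that you explicitly check the initial condition and invoke uniqueness, whereas the paper leaves these implicit.
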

\begin{proof}
    Simply plug in $\vtheta(t) = \alpha\vtheta'(\alpha^{P-1}t)$, then we have
    \[
    	\frac{\dd \vtheta(t)}{\dd t} = \frac{\dd \alpha\vtheta'(\alpha^{P-1}t)}{\dd t} = \alpha^P \frac{\dd \vtheta'(\alpha^{P-1}t)}{\dd (\alpha^{P-1}t)} = \alpha^P \vg(\vtheta'(\alpha^{P-1}t)) = \vg(\alpha\vtheta'(\alpha^{P-1}t)) =\vg(\vtheta(t)).
    \]
\end{proof}

\section{Proof of Linear Convergence to Minimizer} \label{app:linearconv}

In this section, we will present the theorems that guarantee the linear convergence to a minimizer $\zW$ of $\ffun$ if the dynamics \eqref{eq:symmetric_e2e_repeat} is initialized sufficiently close to $\zW$, i.e., $\normF{\mW(0)-\zW}$ is sufficiently small. In \Cref{app:proof-for-deep-w-close}, we will apply this result to prove \Cref{thm:deep_w_close}.
\begin{equation}\label{eq:symmetric_e2e_repeat}
\frac{\dd \mW}{\dd t} = -\sum_{i=0}^{L-1} \mW^{\frac{2i}{L}}\nabla f(\mW) \mW^{2-\frac{2i+2}{L}} =: \vg(\mW).
\end{equation}

Throughout this section, we assume $\rank(\zW)=k$ and use $m := \lambda_k(\zW)$ to denote the $k$-th smallest non-zero eigenvalue of $\zW$. The tangent space of  manifold of rank-$k$ symmetric matrices at $\zW$ is $\gT = \{\mV\zW^\top + \zW\mV^\top : \mV\in \R^{d\times k}\}$. It can be shown that $\dim(\gT) = k(d-k)+\frac{k(k+1)}{2}= \frac{k(2d-k+1)}{2}$.

Let $\mJ(\mW)$ be the Jacobian of $\vg(\mW)$ in \eqref{eq:symmetric_e2e_repeat}. For depth-2 case, we have shown that $\gT$ is an invariant subspace of $\mJ(\zW)$ in \Cref{thm:eigen-local-min}, property 2. This can be generalize to the deep case where $L \ge 3$. Therefore, we can use $\mJ(\zW)\lvert_\gT:\gT\to \gT$ to denote the linear operator $\mJ(\zW)$ restricted on $\gT$. We also define $\Piasm{\mW}$ as the projection of $\mW\in\R^{d\times d}$ on $\gT$, and $\Pibsm{\mW}:= \mW- \Piasm{\mW}$.

Towards showing the main convergence result in the section, we make the following assumption.
\begin{assumption}\label{ass:local_min_convergence}
	Suppose $\mJ(\zW)\lvert_\gT$ diagonalizable and all eigenvalues are negative real numbers. 
\end{assumption}
$\zW$ is a minimizer, so it is clear that $\mJ(\zW)\lvert_\gT$ has no eigenvalues with positive real parts (otherwise there is a descending direction of $\ffun$ from $\zW$, since the loss $\ffun$ strictly decreases along the trajectory of \eqref{eq:symmetric_e2e_repeat}). If further \Cref{ass:local_min_convergence} holds, then  we know $\mJ(\zW)\lvert_\gT:\gT\to \gT$  can be diagonalized as $\dgwt[\,\cdot\,] = \gV(\Sigma \gV^{-1}(\,\cdot\,))$, where $\mSigma_i = \diag(-\mu_1,\ldots, -\mu_{\dim(\gT)})$, $\gV:\R^{\dim(\gT)} \to \gT$, $\gV(\vx) = \sum_{i=1}^{\dim(\gT)} x_i\mV_i$, and $\mV_i$ is the eigenvector associated with eigenvalue $-\mu_i$.

As shown in \Cref{thm:tail_linear_converge} below, this assumption implies that if $\mW(0)$ is rank-$k$ and is sufficiently close to $\zW$, then $\norm{\mW(t)-\zW}_F\le Ce^{-\mu_1t}$ for some constant $C$. For depth-2 case, the above assumption is equivalent to that $\Loss(\mU_0)$ is ``strongly convex'' at $\mU_0$, except those 0 eigenvalues due to symmetry, by property 2 of \Cref{thm:eigen-local-min}). For the case where $L\ge 3$, because this dynamics is not gradient flow, in general it does not correspond to a loss function and strongly convexity does not make any sense. Nevertheless, in experiments we do observe linear convergence to $\zW$, so this assumption is reasonable. 

\subsection{Rank-$k$ Initialization}

For convenience, we define for all $\mW \in \symm$,
\[
	\normv{\mW}:=\normF{\opvi{\Pia{\mW}}}, \quad \norma{\mW}:=\normF{\Pia{\mW}}, \quad \normb{\mW}:=\normF{\Pib{\mW}}.
\]
The reason for  such definition of norms, as we will see later, is that the norm (or the difference) in the tangent space of the manifold of symmetric rank-$r$ matrices, $\norma{\mW-\mW'}$, dominates that in the orthogonal complement of the tangent space, $\normb{\mW-\mW'}$, when both $\mW,\mW'$ get very close to the $\zW$ (see a more rigorous statement in \Cref{lem:control_pib_use_pia}). WLOG, we can assume
\[
	\frac{\norma{\,\cdot\,}}{K} \le \normv{\,\cdot\,} \le \norma{\,\cdot\,},
\]
for some constant $K$, which may depend on $f$ and $\zW$. This also implies that $\normv{\,\cdot\,}\le \normF{\,\cdot\,}$. Below we also assume for sufficiently small $R$, and any $\mW$ such that $\normF{\mW-\zW}\le R$, we have $ \normtwo{\nabla f(\mW)}\le \rho$ and $\normF{\mJ(\mW)[\mDelta]} \le \beta \normF{\mDelta}$ for any $\mDelta$. In the proof below, we assume such properties hold as long as we can show the boundedness of $\mW(t)-\zW$.

\begin{lemma}\label{lem:control_pib_use_pia}
	Let $\max\{\norma{\mW-\zW},\norma{\mW'-\zW}\} = r$, when $r\le \frac{m}{2}$, we have 
	\[
		\normb{\mW-\mW'}	\le \frac{5r}{m} \norma{\mW-\mW'}.
	\]
	As a special case, we have 
	\[
		\normb{\mW-\zW}	\le \frac{5 \norma{\mW-\mW'}^2}{m}.
	\]
\end{lemma}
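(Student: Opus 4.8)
The plan is to exploit the local geometry of the manifold $\mathcal{M}_k$ of rank-$k$ symmetric matrices near $\zW$: a perturbation that stays on $\mathcal{M}_k$ has a normal component that is \emph{quadratically} small, with the relevant curvature controlled by $m = \lambda_k(\zW)$. Here I use the standing hypothesis, implicit in the way this lemma is applied, that $\mW$ and $\mW'$ are rank-$k$ (by \Cref{lem:rank_balance_end_to_end} the trajectory stays rank-$k$); without it the claim fails, as one sees by taking $\mW' = \zW$ and perturbing only the normal block of $\mW$.

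First I would fix coordinates: work in the eigenbasis of $\zW$, so $\zW = \diag(\mLambda, \vzero)$ with $\mLambda \in \R^{k \times k}$ and $\mLambda \succeq m \mI$. As the text records, in this basis $\gT$ is exactly the space of symmetric matrices whose bottom-right $(d-k) \times (d-k)$ block vanishes, so $\Pia{\cdot}$ zeroes that block and $\Pib{\cdot}$ retains only it. Writing any symmetric matrix in block form $\mW = \left[\begin{smallmatrix} \mA & \mB \\ \mB^\top & \mC \end{smallmatrix}\right]$, $\mW' = \left[\begin{smallmatrix} \mA' & \mB' \\ \mB'^\top & \mC' \end{smallmatrix}\right]$, this gives $\norma{\mW-\zW}^2 = \normF{\mA-\mLambda}^2 + 2\normF{\mB}^2$, $\normb{\mW-\zW} = \normF{\mC}$, and likewise $\norma{\mW-\mW'}^2 = \normF{\mA-\mA'}^2 + 2\normF{\mB-\mB'}^2$, $\normb{\mW-\mW'} = \normF{\mC-\mC'}$.

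The key step is the algebraic identity for the normal block. Since $\norma{\mW-\zW} \le r \le m/2$, we get $\normtwo{\mA - \mLambda} \le \normF{\mA-\mLambda} \le r \le m/2$, so $\mA$ is invertible with $\normtwo{\mA^{-1}} \le 2/m$ (and the same for $\mA'$). Because $\rank(\mW) = k$ and its leading block $\mA$ already has rank $k$, the Schur complement must vanish, i.e. $\mC = \mB^\top \mA^{-1} \mB$ (similarly $\mC' = \mB'^\top \mA'^{-1} \mB'$). The lemma then follows by estimating $\normF{\mC-\mC'}$: expand
\[
	\mC - \mC' = (\mB-\mB')^\top \mA^{-1} \mB + \mB'^\top \mA^{-1}(\mB-\mB') + \mB'^\top (\mA^{-1}-\mA'^{-1}) \mB',
\]
use $\mA^{-1}-\mA'^{-1} = \mA^{-1}(\mA'-\mA)\mA'^{-1}$, and plug in $\normtwo{\mA^{-1}}, \normtwo{\mA'^{-1}} \le 2/m$, $\normF{\mB}, \normF{\mB'} \le r/\sqrt{2}$, $\normF{\mB-\mB'} \le \norma{\mW-\mW'}/\sqrt{2}$, $\normF{\mA-\mA'} \le \norma{\mW-\mW'}$. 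This yields $\normb{\mW-\mW'} \le \bigl(\tfrac{2r}{m} + \tfrac{2r^2}{m^2}\bigr)\norma{\mW-\mW'} = \tfrac{2r}{m}\bigl(1+\tfrac{r}{m}\bigr)\norma{\mW-\mW'} \le \tfrac{3r}{m}\norma{\mW-\mW'} \le \tfrac{5r}{m}\norma{\mW-\mW'}$, using $r/m \le 1/2$. The special case is $\mW' = \zW$: then $\normb{\mW-\zW} = \normF{\mB^\top \mA^{-1} \mB} \le \tfrac{2}{m}\cdot\tfrac12\norma{\mW-\zW}^2 = \tfrac{1}{m}\norma{\mW-\zW}^2 \le \tfrac{5}{m}\norma{\mW-\zW}^2$.

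This argument is essentially routine; there is no genuinely hard part. The only points needing a little care are (i) confirming that the eigenbasis block decomposition coincides with the $\Pia{\cdot}/\Pib{\cdot}$ splitting (which the text already asserts, and which one checks by computing $\gT$ from a local parametrization $\mW = \mU\mS\mU^\top$), and (ii) the mild bootstrapping that a bound on $\norma{\cdot}$ alone suffices to make $\mA$ and $\mA'$ well-conditioned, which is immediate since $\normF{\mA-\mLambda} \le \norma{\mW-\zW}$.
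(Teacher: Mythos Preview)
Your proposal is correct and follows essentially the same approach as the paper: diagonalize $\zW$, use the rank-$k$ constraint to write the normal block as a Schur complement $\mC = \mB^\top \mA^{-1} \mB$, then telescope the difference and bound each piece using $\normtwo{\mA^{-1}} \le 2/m$. The only cosmetic difference is that you keep the factor $\sqrt{2}$ in $\normF{\mB} \le r/\sqrt{2}$, which yields the slightly sharper constant $3r/m$ instead of the paper's $5r/m$.
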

\begin{proof}
WLOG we can assume $\zW$ is only non-zero in the first $k$ dimension, i.e.,  $[\zW]_{ij}=0$, for all $i\ge k+1$, $j\ge k+1$. We further denote $\mW$ and $\mW'$ by
\[
	\mW = \begin{bmatrix}
		\mA & \mB^{\top} \\
		\mB & \mC
	\end{bmatrix} \quad \text{and} \quad \mW' = \begin{bmatrix}
	\mA' & {\mB'}^{\top} \\
	\mB' & \mC'
	\end{bmatrix},
\]
where $\mA,\mA'\in\R^{k\times k}, \mB,\mB'\in\R^{(d-k)\times k}, \mC,\mC'\in\R^{(d-k)\times (d-k)}$. By definition, we have $\norm{\mA-\mA'}_F,\norm{\mB-\mB'}_F\le \norma{\mW-\mW'}$ and $\normb{\mW-\mW'} = 	\norm{\mC-\mC'}_F$. Moreover, we have $\lambda_{\min}(\mA)\ge m - \norm{\mA-\zW}_F\ge m - \norma{\mW-\zW}\ge\frac{m}{2}$. 

Since $\mW,\mW'$ is rank-$k$, we have $\mC = \mB\mA^{-1}\mB^\top, \mC' = \mB'{\mA'}^{-1}{\mB'}^\top$. Thus 
\begin{align*}
&\quad \normb{\mW-\mW'} \\
&= 	\norm{\mC-\mC'}_F \\
&= \norm{ \mB\mA^{-1}\mB^\top- \mB'{\mA'}^{-1}{\mB'}^\top}_F\\
&\le \normFsm{\mB-\mB'} \normFsm{\mA^{-1}\mB^\top} + \normFsm{\mB\mA^{-1}}\normFsm{\mA'-\mA}\normFsm{{\mA'}^{-1}{\mB'}^\top} + \normFsm{\mB'{\mA'}^{-1}}\normFsm{\mB^\top-{\mB'}^\top} \\
&\le \norma{\mW-\mW'}\frac{2r}{m} + \norma{\mW-\mW'}\left(\frac{2r}{m}\right)^2 + \norma{\mW-\mW'}\frac{2r}{m}\\
&\le \norma{\mW-\mW'}\frac{5r}{m}.
\end{align*}
\end{proof}

\begin{theorem}[Linear convergence of rank-$k$ matrices]\label{thm:tail_linear_converge}
	Suppose that $\rank(\mW(0))= \rank(\zW) = k$ and
	\[
		\normv{\mW(0)-\zW}\le R:= \max\left\{\frac{m}{2K}, \frac{\mu_1}{K^2(29\beta + 10\rho/m)} \right\},
	\]
	we have $\normv{\mW(t)-\zW} \le C e^{-\mu_1 t}\normv{\mW(0)-\zW}$ for some constant $C$ depending on $\zW$, where $\mW(t)$ satisfies \eqref{eq:symmetric_e2e_repeat}.
\end{theorem}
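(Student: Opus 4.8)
The plan is to track the displacement $\mE(t) := \mW(t) - \zW$, which by \eqref{eq:symmetric_e2e_repeat} solves $\dot\mE = \vg(\zW+\mE)$ with $\vg(\zW) = \vzero$ (since $\zW$ is a critical point of \eqref{eq:symmetric_e2e_repeat}). By \Cref{lem:rank_balance_end_to_end} the solution stays on the rank-$k$ manifold, and since $\zW$ lies on it too, \Cref{lem:control_pib_use_pia} forces the normal component to be quadratically dominated by the tangential one: $\normb{\mE(t)} \le \tfrac{5}{m}\norma{\mE(t)}^2$ whenever $\norma{\mE(t)} \le m/2$. Hence it suffices to show that the tangential part decays at rate $\mu_1$, i.e. that $\normv{\mE(t)} = \normtwo{\opvi{\Pia{\mE(t)}}}$ decays like $e^{\mu_1 t}$. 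I will run a bootstrap: set $t^\star := \sup\{T : \normv{\mE(t)} \le R \text{ for all } t \le T\}$, prove $\normv{\mE}$ is strictly decreasing on $[0, t^\star)$, conclude $t^\star = +\infty$, and then sharpen the exponent from $\mu_1/2$ to $\mu_1$ by a Grönwall argument.

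First I would project the dynamics onto $\gT$. Differentiating, $\tfrac{\dd}{\dd t}\Pia{\mE} = \Pia{\vg(\zW+\mE)}$. By $\contC^3$-smoothness of $f$ (which makes $\mJ=D\vg$ locally Lipschitz) there is $\beta'$ with $\normF{\vg(\zW+\mE) - \mJ(\zW)[\mE]} \le \beta'\normF{\mE}^2$ on $\normF{\mE}\le R$. Using that $\gT$ is an invariant subspace of $\mJ(\zW)$ (\Cref{thm:eigen-local-min} for $L=2$ and its stated extension to $L\ge 3$), I split $\Pia{\mJ(\zW)[\mE]} = \dgwt[\Pia{\mE}] + \Pia{\mJ(\zW)[\Pib{\mE}]}$, where the last term has Frobenius norm at most $\beta\normb{\mE} \le \tfrac{5\beta}{m}\norma{\mE}^2$. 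Combining, $\tfrac{\dd}{\dd t}\Pia{\mE} = \dgwt[\Pia{\mE}] + \mathcal{E}$ with $\normF{\mathcal{E}} \le C_1\norma{\mE}^2$ for a constant $C_1$ depending on $f$ and $\zW$ (of order $\beta + \rho/m$, using $\normF{\mE}^2 \le 2\norma{\mE}^2$ for small $\norma{\mE}$ and the local bounds $\normtwo{\nabla f}\le\rho$, $\normF{\mJ[\,\cdot\,]}\le\beta$).

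Next I pass to the diagonalizing coordinates of \Cref{ass:local_min_convergence}: set $\vx(t) := \opvi{\Pia{\mE(t)}}$, so $\normv{\mE} = \normtwo{\vx}$, $\Pia{\mE} = \opv{\vx}$, and $\dgwt[\opv{\vx}] = \opv{\mSigma\vx}$ with $\mSigma = \diag(\mu_1,\dots,\mu_{\dim(\gT)})$ ordered so that $\mu_1$ is the largest — hence $\mu_1 < 0$ by \Cref{ass:local_min_convergence}. This coordinate change is the crux: $\dgwt$ need not be self-adjoint in the Frobenius inner product, so an energy estimate in $\norma{\cdot}$ would fail, whereas in the $\gV$-coordinates the generator is genuinely diagonal. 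The transformed ODE is $\dot\vx = \mSigma\vx + \vxi$ with $\normtwo{\vxi} \le C_2\normtwo{\vx}^2$ (here $C_2$ is of order $K^2 C_1$, using $\norma{\cdot}\le K\normv{\cdot}$ and $\normv{\cdot}\le\norma{\cdot}$), so $\tfrac{\dd}{\dd t}\normtwo{\vx} \le (\mu_1 + C_2\normtwo{\vx})\normtwo{\vx}$. By the choice of $R$ in the statement one has $C_2 R \le -\mu_1/2$, so whenever $\normtwo{\vx}\le R$ the right side is $\le \tfrac{\mu_1}{2}\normtwo{\vx}$; this gives (i) $\normtwo{\vx(t)}\le R$ for all $t$, so $t^\star = +\infty$ and $\norma{\mE(t)}\le KR\le m/2$ throughout, validating \Cref{lem:control_pib_use_pia} and the local constants, and (ii) $\normtwo{\vx(t)} \le e^{\mu_1 t/2}\normtwo{\vx(0)}$.

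Finally I bootstrap to the sharp rate: the crude bound gives $\int_0^\infty \normtwo{\vx(s)}\,\dd s \le \tfrac{2}{-\mu_1}\normtwo{\vx(0)}$, so Grönwall applied to $\tfrac{\dd}{\dd t}\normtwo{\vx} \le (\mu_1 + C_2\normtwo{\vx})\normtwo{\vx}$ yields $\normtwo{\vx(t)} \le \normtwo{\vx(0)}\exp\!\big(\mu_1 t + C_2\!\int_0^t\!\normtwo{\vx(s)}\,\dd s\big) \le \normtwo{\vx(0)}\,e^{\mu_1 t}\,\exp(2C_2 R/(-\mu_1))$, and with $C := \exp(2C_2 R/(-\mu_1))$ and $\normv{\mE(t)} = \normtwo{\vx(t)}$ this is exactly $\normv{\mW(t)-\zW} \le C e^{\mu_1 t}\normv{\mW(0)-\zW}$. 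The main obstacle is the projection step — showing the nonlinear remainder in the tangential dynamics is genuinely $O(\norma{\mE}^2)$, not $O(\norma{\mE})$. That rests on two ingredients, the $\mJ(\zW)$-invariance of $\gT$ (so the linear part restricted to $\gT$ is exactly $\dgwt$ up to a $\Pib$-correction) and \Cref{lem:control_pib_use_pia} (so this correction and the manifold curvature enter only at quadratic order), together with the a priori confinement $\normF{\mE(t)}\le R$ that legitimizes the local constants $\rho,\beta$; once these are in place the remaining estimates are routine Grönwall-type arguments.
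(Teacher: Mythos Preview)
Your proposal is correct and follows essentially the same approach as the paper: both arguments pass to the $\gV$-coordinates where $\dgwt$ is diagonal, split $\mW(t)-\zW$ into tangential and normal parts, use \Cref{lem:control_pib_use_pia} together with the $\mJ(\zW)$-invariance of $\gT$ to show the normal contribution and the nonlinear remainder are quadratic in the tangential part, run a bootstrap to confine the trajectory, and then apply Gr\"onwall twice (first for the crude $\mu_1/2$ rate, then to sharpen to $\mu_1$). The only cosmetic difference is that the paper differentiates $\normv{\mwat}^2$ directly whereas you first write the ODE for $\vx=\opvi{\Pia{\mE}}$ and then estimate $\tfrac{\dd}{\dd t}\normtwo{\vx}$, which is the same computation.
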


\begin{proof}
For convenience, we define $\mwat:= \Pia{\mW(t)-\zW},\mwbt:= \Pib{\mW(t)-\zW} = \Pib{\mW(t)}$. We also use $\inpinvv{\cdot}{\cdot} = \dotp{\opvi{\cdot}}{\opvi{\cdot}}$ for short.
	\begin{align*}
		\frac{\dd \normv{\mwat}^2}{\dd t} &= \frac{\dd \normv{\mW(t) - \zW}^2}{\dd t} \\
			&= 2 \inpinvv{\Pia{\frac{\dd \mW(t)}{\dd t}}}{\Pia{\mW(t)-\zW}}\\
			&=2 \inpinvv{\Pia{\vg(\mW(t))}}{\mwat}\\ 	 
			&\le 2 \inpinvv{\Pia{\dgw[\mW(t)-\zW]}}{\mwat} \\
			& \qquad + 2 \normv{\vg(\mW(t)-\zW)-\dgw[\mW(t)-\zW]}\normv{\mW(t)-\zW}\\ 
			&= 2 \inpinvv{\Pia{\dgw[\mwat+\mwbt]}}{\mwat} \\
			& \qquad + 2 \normv{\vg(\mW(t)-\zW)-\dgw[\mW(t)-\zW]}\normv{\mwat}\\ 
			&= 2 \inpinvv{\Pia{\dgw[\mwat]}}{\mwat} + 2 \normv{\dgw[\mwbt]}\normv{\mwat}\\ 
			& \qquad + 2 \normv{\vg(\mW(t)-\zW)-\dgw[\mW(t)-\zW]}\normv{\mwat}.
	\end{align*}
For the first term $ \inpinvv{\Pia{\dgw[\mwat]}}{\mwat}$, we know $\mwat\in\gT$, and $\gT $ is an invariant space of $\dgw$. Recall $\dgwt[\cdot] = \opv{\Sigma\opvi{\cdot}}$, we have
\[
	2\inpinvv{\Pia{\dgw[\mwat]}}{\mwat} = 2\dotp{\Sigma \opvi{\mwat}}{\opvi{\mwat}} \le -2\mu_1\norma{\mwat}.
\]
For the second term $2\beta \normv{\dgw[\mwbt]}\normv{\mwat}$, we have 
\[
2 \normv{\dgw[\mwbt]} \le 2 \normF{\dgw[\mwbt]} \le 2 \normtwo{\dgw} \normF{\mwbt} = 2 \rho \normF{\mwbt}.
\]
For the third term $2 \normv{\vg(\mW(t)-\zW)-\dgw[\mW(t)-\zW]}\normv{\mwat}$, we have 
\begin{align*}
2 \normv{\vg(\mW(t)-\zW)-\dgw[\mW(t)-\zW]}	&\le 2 \beta\normF{\mW(t)-\zW}^2\\
&\le 4 \beta(\norm{\mwat}^2_F+ \normF{\mwbt}^2)\\ 
&\le 4 \beta(K^2\normv{\mwat}^2+ \normF{\mwbt}^2).
\end{align*}
Thus we have shown the following. Note so far we have not used the assumption that $\mW$ is rank-$k$.
\[
	\frac{\dd \normv{\mwat}^2}{\dd t} \le  -2\mu_1 \normv{\mwat}^2 + 2 \normv{\mwat}\left( \rho \normF{\mwbt} + 2\beta K^2\normv{\mwat}^2+ 2\beta\normF{\mwbt}^2\right),
\]
that is,
\begin{equation}\label{eq:local_min_rank_k_int_1}
	\frac{\dd \log \normv{\mwat}^2}{\dd t} \le  -2\mu_1 +   4\beta K^2\normv{\mwat}+ \frac{4\beta\normF{\mwbt}^2 + 2 \rho \normF{\mwbt}}{\normv{\mwat}}.
\end{equation}

Let $T := \sup\{t\ge 0 : \normv{\mwat}\le \frac{m}{2K} \}$. Setting $\mW' = \zW$ in \Cref{lem:control_pib_use_pia}, we have for $t<T$, $r= \norma{\mW(t)-\zW}\le \normF{\mW(t)-\zW}\le K\normv{\mW(t)-\zW}\le \frac{m}{2}$.
Thus,
\[
	\normF{\mwbt} = \normb{\mwbt} \le \frac{5 \norma{\mW(t)-\zW}^2}{m}\le \frac{5K^2 \normv{\mW(t)-\zW}^2}{m} \le \frac{5}{4}m.
\]
Thus, from \eqref{eq:local_min_rank_k_int_1} we can derive that 
\begin{equation}\label{eq:local_min_rank_k_int_2}
	\frac{\dd \log \normv{\mwat}^2}{\dd t} \le  -2\mu_1 + K^2(29 \beta + 10\rho/m))\normv{\mwat}\le -\mu_1.
\end{equation}

Since $\mu_1<0$, $\normv{\mwat}$ decreases for $[0,T)$. Thus $T$ must be $\infty$, otherwise $\normv{\mwa(T)} = \lim_{t\to T^-}\normv{\mwat}< R_1$. Contradiction.

Therefore, for any $t\in[0,\infty)$, we have $\normv{\mwat}\le \normv{\mwa(0)}e^{-\frac{\mu_1}{2}t}$. That is,
\[\int_{0}^\infty \normv{\mwat}\dd t \le \frac{2}{\mu_1}\normv{\mwa(0)} \le \frac{2R}{\mu_1}. \]
Thus from \eqref{eq:local_min_rank_k_int_2}, we have
\begin{align*}
\normv{\mW(t)}	= \normv{\mwat} 
	&\le \normv{\mwa(0)} \exp\left( -\mu_1 t + \frac{K^2}{2}(29 \beta + 10\rho/m) \int_{0}^\infty \normv{\mwat}\dd t\right)\\
	&\le \normv{\mwa(0)} \exp\left( -\mu_1 t + \frac{K^2R}{\mu_1}(29 \beta + 10\rho/m)\right) \\
	&=:  C\normv{\mW(0)}  e^{-\mu_1 t},
\end{align*}
which completes the proof.
\end{proof}

\subsection{Almost Rank-$k$ Initialization}
We use $\mM(t)$ to denote the top-$k$ components of $\mW(t)$ in SVD, and $\mN(t)$ to denote the rest part, i.e., $\mW(t)-\mM(t)$. One can think $\mM(t)$ as the main part and $\mN(t)$ as the negligible part.

Below we show that for deep overparametrized matrix factorization, where $\mW(t)$ satisfies \eqref{eq:symmetric_e2e_repeat}, if the trajectory is initialized at some $\mW(0)$ in a small neighborhood of the $k$-th critical point $\zW$ of deep GLRL, and $\mW(0)$ is approximately rank-$k$, in the sense that $\mN(0)$ is very small, then $\inf_{t\ge 0 } \normv{\mW(t)-\zW}$ is roughly at the same magnitude of $\mN(0)$.

\begin{theorem}[Linear convergence of almost rank-$k$ matrices, deep case]\label{thm:tail_almost_linear_converge}
	Suppose $\zW$ is a critical point of rank $k$ and $\zW$ satisfies \Cref{ass:local_min_convergence}, there exists constants $C_0$ and $r$,   such that if $C_0\normF{\mN(0)}\le \normv{\mwa(0)}\le r$,  then there exists a time $T$ and constants $C,C'$, such that 
	\begin{enumerate}
	\item[(1).] $\normv{\mW(t)-\zW} \le C e^{-\mu_1 t/2}\normv{\mW(0)-\zW}$,\  for $t\le T$.
	\item[(2).] $ \normF{\mW(T)-\zW} \le C'\normF{\mN(0)}$.
	\end{enumerate}
\end{theorem}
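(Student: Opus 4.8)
The plan is to transplant the contraction argument behind \Cref{thm:tail_linear_converge} to the approximately rank-$k$ setting, treating the ``small'' SVD tail $\mN(t)$ as a bounded forcing term. The decisive point, and the only place where $L\ge 3$ is used, is that $\mN(t)$ can be held at scale $\normF{\mN(0)}$ for a window of length polynomial in $1/\normF{\mN(0)}$, which is far longer than the $O(\log(1/\normF{\mN(0)}))$ time needed for the contraction to do its work. (If $\mN(0)=\vzero$ we are exactly in the situation of \Cref{thm:tail_linear_converge}, so assume $\mN(0)\ne\vzero$.)

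First I would bound the tail. Since $\mW(t)\succeq\vzero$ and $\rank(\mW(t))$ is preserved (\Cref{lem:rank_balance_end_to_end}), the nonzero singular values of $\mN(t)$ are exactly $\lambda_j(\mW(t))$ for $j\ge k+1$. As long as $\mW(t)$ stays in a fixed small neighborhood of $\zW$, so that $\normtwo{\nabla f(\mW(t))}\le\rho$, the argument of \Cref{lem:eigenvalue_growth} applied to each $\lambda_j$ (through the $\mW^{2/L}$ reparametrization of \Cref{lem:SR_formula}) gives
\[
\lambda_j(\mW(t))^{-(1-2/L)}\ \ge\ \lambda_j(\mW(0))^{-(1-2/L)}-(L-2)\rho\,t\ \ge\ \normF{\mN(0)}^{-(1-2/L)}-(L-2)\rho\,t .
\]
Hence there is $\bar T\ge c_1\normF{\mN(0)}^{-(1-2/L)}$ with $\normF{\mN(t)}\le C''\normF{\mN(0)}$ for all $t\le\bar T$, for constants $c_1,C''>0$. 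For $L=2$ the exponent $1-2/L$ vanishes and this estimate is empty, which is exactly why the statement is stated only for the deep regime.

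On this window I would run the decomposition $\mW(t)-\zW=(\mM(t)-\zW)+\mN(t)$ with $\mM(t)$ the top-$k$ SVD part. The spectral gap of $\zW$ at index $k$ makes $\mM(t)$ a well-defined rank-$k$ PSD matrix near $\zW$ whenever $\normtwo{\mW(t)-\zW}$ is small, and \Cref{lem:control_pib_use_pia} applied to $\mM(t)$, combined with $\normF{\mN(t)}\le C''\normF{\mN(0)}$, yields for $t\le\bar T$ a bound $\normb{\mW(t)-\zW}\le \tfrac{10K^2}{m}\,\normv{\mW(t)-\zW}^2+2C''\normF{\mN(0)}$. Plugging this into the differential inequality for $\normv{\mW(t)-\zW}^2$ derived in the proof of \Cref{thm:tail_linear_converge} (that proof explicitly notes the inequality is obtained before the rank-$k$ hypothesis is used, and there $\gT$ being an invariant subspace of $\mJ(\zW)$ under \Cref{ass:local_min_convergence} makes the diagonalization $\dgwt$ available), and choosing $r$ small enough that all terms quadratic in $\normv{\cdot}$ are absorbed into $\tfrac{-\mu_1}{4}\normv{\cdot}$, gives $\tfrac{\dd}{\dd t}\normv{\mW(t)-\zW}\le \tfrac{\mu_1}{2}\normv{\mW(t)-\zW}+B\normF{\mN(0)}$, hence by Gr\"{o}nwall $\normv{\mW(t)-\zW}\le e^{\mu_1 t/2}\normv{\mW(0)-\zW}+\tfrac{2B}{-\mu_1}\normF{\mN(0)}$ as long as the neighborhood and $\normv{\cdot}\le 2r$ assumptions hold; a standard continuity bootstrap (with $C_0$ large so $\normF{\mN(0)}$ is genuinely tiny, and $r$ small) closes these assumptions on all of $[0,\bar T]$. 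Finally I would take $T:=\tfrac{2}{-\mu_1}\log\!\big(\normv{\mW(0)-\zW}/\normF{\mN(0)}\big)$; since $T=O(\log(1/\normF{\mN(0)}))$ while $\bar T=\Omega(\normF{\mN(0)}^{-(1-2/L)})$, we get $T\le\bar T$ for $\normF{\mN(0)}$ small; part~(1) then follows because $e^{\mu_1 t/2}\normv{\mW(0)-\zW}\ge\normF{\mN(0)}$ for $t\le T$, and part~(2) follows from $\normv{\mW(T)-\zW}=O(\normF{\mN(0)})$ together with the bound on $\normb{\cdot}$ just established, which makes $\normF{\mW(T)-\zW}=O(\normF{\mN(0)})$.

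The hard part is precisely the tail-control step: one has to show that $\mN(t)$ cannot grow appreciably before the contraction has pulled $\mW(t)$ to within $O(\normF{\mN(0)})$ of $\zW$. This is where depth genuinely helps --- the slow, polynomial-rate inequality $\lambda_j(\mW(t))^{-(1-2/L)}\ge\lambda_j(\mW(0))^{-(1-2/L)}-O(t)$ supplies a time window of length $\Omega(\normF{\mN(0)}^{-(1-2/L)})$, which dominates the logarithmic time the contraction needs, whereas at depth $2$ there is no such separation. A secondary nuisance is keeping track of the interplay between the SVD splitting $\mW=\mM+\mN$ and the tangent-space projections $\Pia{\cdot},\Pib{\cdot}$ near $\zW$, but this is routine once the spectral gap of $\zW$ is in force.
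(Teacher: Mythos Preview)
Your proposal is correct and follows essentially the same route as the paper: both arguments feed the bound $\normb{\mW(t)-\zW}\lesssim \normv{\mW(t)-\zW}^2+\normF{\mN(t)}$ (from \Cref{lem:control_pib_use_pia} applied to the top-$k$ part $\mM(t)$) into the differential inequality \eqref{eq:local_min_rank_k_int_1} from the rank-$k$ proof, and both rely on the deep-case eigenvalue inequality \eqref{eq:eigenvalue_growth_2} to hold $\normF{\mN(t)}$ at scale $\normF{\mN(0)}$ for a time window of order $\normF{\mN(0)}^{-(1-2/L)}$, which dominates the $O(\log(1/\normF{\mN(0)}))$ time the contraction needs.

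The only difference is organizational. The paper works with the multiplicative form of \eqref{eq:local_min_rank_k_int_1} and defines $T$ as the first exit time from the region $\{C_0\normF{\mN(t)}\le \normv{\mwa(t)}\le r\}$; inside this region the forcing terms are absorbed into $-\mu_1$, giving a clean $\normv{\mwa(t)}\le e^{\mu_1 t/2}\normv{\mwa(0)}$, and the exit necessarily happens via the lower barrier, yielding $\normv{\mwa(T)}=C_0\normF{\mN(T)}=O(\normF{\mN(0)})$. You instead keep the $\normF{\mN(0)}$ contribution as an additive forcing, apply Gr\"onwall to get $\normv{\mW(t)-\zW}\le e^{\mu_1 t/2}\normv{\mW(0)-\zW}+O(\normF{\mN(0)})$, and then pick $T$ explicitly so that the two terms balance. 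Both packagings lead to the same polynomial-versus-logarithmic comparison (the paper encodes it as the condition $\frac{\kappa_L x^{2/L-1}}{(L-2)\rho}>-\tfrac{2}{\mu_1}\ln\tfrac{2r}{C_0 x}$ for $x<r$), and both require the same continuity bootstrap to keep $\normtwo{\nabla f(\mW(t))}\le\rho$ on the relevant window. Your version is arguably a bit cleaner to read; the paper's stopping-time version gives part~(1) with $C=1$ rather than $C=1+\tfrac{2B}{-\mu_1}$.
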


\begin{proof}
	When $\normF{\mW(t)-\zW} \le\frac{\lmw}{4}$, $\normF{\mN(t)}\le\frac{\lmw}{4}$, thus we have
	\[
		\norma{\mM(t) - \zW}\le \norma{\mW(t)-\zW} + \norma{\mN(t)} \le \frac{\lmw}{2},
	\]
	thus by \Cref{lem:control_pib_use_pia}, we have 
	\begin{align*}
			\normb{\mwbt} &\le \normb{\mM(t)-\zW} + \normb{\mN(t)} \\
				&\le \frac{5 \norma{\mM(t)-\zW}^2}{\lmw} + \normb{\mN(t)}	\\
				&\le \frac{10 \norma{\mwat}^2+ 10 \normF{\mN(t)}^2}{\lmw} + \normb{\mN(t)} \\
				&\le \frac{10 K^2\normv{\mwat}^2+ 10 \normF{\mN(t)}^2}{\lmw} + \normb{\mN(t)}. 
	\end{align*}
	Thus we can pick constant $C_0$ large enough and $r$ small enough, such that for any $t\ge 0$, if $C_0\normF{\mN(t)}\le \normv{\mwat}\le r$, then it holds that:
	\begin{itemize}
	\item  The ``small terms'' in the RHS of \eqref{eq:local_min_rank_k_int_1} satisfies that 
	\[
		4\beta K^2\normv{\mwat}+ \frac{4\beta\normF{\mwbt}^2 + 2  \rho \normF{\mwbt}}{\normv{\mwat}} \le C_1 \normv{\mwat} +C_2\normF{\mN(t)}\le \mu_1
	\]
	for some $C_1$ and $C_2$ independent of $t$.
	\item The spectral norm $\frac{1}{2}\norm{\nabla f(\mW(t))}_2\le \norm{\nabla f(\zW)}_2 =: \rho$ for all $t\ge 0$.
	\item $\forall x< r$, $ \frac{\kappa_Lx^{\frac{2}{L}-1}}{(L-2)\rho}  > \frac{2}{\mu_1}\ln\frac{2r}{C_0x}$, where $\kappa_L = 1- 0.5^{\frac{L-2}{L}}$.
	\end{itemize}
 Note these conditions can always be satisfied by some $C_0$ and $r$ because we can first find 3 groups $(C_0,r)$ to satisfy each individual condition, and then take the maximal $C_0$ and minimal $r$, it's easy to check these conditions are still verified.
 And we let $T_{C_0,r}$ be the earliest time that such condition, i.e., $C_0\normF{\mN(t)}\le \normv{\mwat}\le r$ fails. Thus by \eqref{eq:local_min_rank_k_int_1}, for $t\in[0,T_{C_0,r})$, we have $\normv{\mW(t)}=\normv{\mwat}\le \normv{\mwa(0)}e^{-\frac{\mu_1 t}{2}}=\normv{\mW(0)}e^{-\frac{\mu_1 t}{2}}$. Thus (1) holds for any $T$ smaller than $T_{C_0,r}$. If $T_{C_0,r} =\infty$, then clearly we can pick a sufficiently large $T$, such that (2) holds.  Therefore, below it suffices to consider the case where $T_{C_0,r}$ is finite.  And we know  the condition that fails must be $C_0\normF{\mN(t)}\le \normv{\mwat}$, i.e. $C_0\normF{\mN(T_{C_0,r})}= \normv{\mwa(T_{C_0,r})}$.
 
 By \eqref{eq:eigenvalue_growth_2} in \Cref{lem:eigenvalue_growth}, we have   
\[ \abs{\norm{\mN(0)}_2^{\frac{2}{L}-1} - \norm{\mN(t)}_2^{\frac{2}{L}-1}} \le (L-2)\rho  t.\] 
Define $T':= \frac{\kappa_L\normtwo{\mN(0)}^{\frac{2}{L}-1}}{(L-2)\rho}$, we know for any $t<T'$, we have $\abs{\norm{\mN(0)}_2^{\frac{2}{L}-1} - \norm{\mN(t)}_2^{\frac{2}{L}-1}} \le \kappa_L\norm{\mN(t)}_2^{\frac{2}{L}-1}$.
That is, 
\[\frac{\norm{\mN(t)}_2^{\frac{2}{L}-1}}{{\norm{\mN(0)}_2^{\frac{2}{L}-1} }}\in \left[1-\kappa_L, \frac{1}{1-\kappa_L}\right] = \left[0.5^{\frac{L-2}{L}},0.5^{-\frac{L-2}{L}} \right]\Longrightarrow \frac{\norm{\mN(t)}_2}{{\norm{\mN(0)}_2}}\in [1/2,2].\]
 Now we claim it must hold that $T' \ge T_{C_0,r}$.  Otherwise, we have
 \[
 	\frac{C_0}{2} \norm{\mN(0)}_2  \le C_0 \normF{\mN(T')}\le\normv{\mwa(T')} \le e^{-\mu_1 T'/2}\normv{\mwa(0)}\le e^{-\mu_1 T'/2}r.
 \]
 Therefore, $\frac{\kappa_L\normtwo{\mN(0)}^{\frac{2}{L}-1}}{(L-2)\rho} = T'\le \frac{2}{\mu_1}\ln\frac{2r}{C_0\norm{\mN(0)}_2}$, which contradicts to the definition of $C_0$ and $r$.
 
  As a result, we have 
 \begin{align*}
 	2C_0\sqrt{d}\norm{\mN(0)}_2\ge 2C_0\normF{\mN(0)} \ge C_0\normF{\mN(T_{c_0,r})} &= \normv{\mwa(T_{C_0,r})} \\
 		&\ge \normv{\mwa(0)} e^{-\mu_1 T_{C_0,r}/2},
 \end{align*}
 and therefore,
 \[T_{C_0,r} \le \frac{2}{\mu_1} \ln\frac{\normv{\mwa(0)}}{2\sqrt{d}C_0\normF{\mN(0)}}.\]
 
Thus by \Cref{lem:control_pib_use_pia}, we know 
\begin{align*}
\normF{\mW(T_{C_0,r})-\zW} &\le \norma{\mW(T_{C_0,r})-\zW} + \normb{\mW(T_{C_0,r})-\zW}\\
& \le K\normv{\mwa(T_{C_0,r})} + \normb{\mM(T_{C_0,r})-\zW} + \normb{\mN(T_{C_0,r})}\\
& \le O(\normF{\mN(0)}) + O(\normF{\mN(0)}^2) + O(\normF{\mN(0)})\\
& = O(\normF{\mN(0)}).
\end{align*}
\end{proof}

\subsection{Proof for \Cref{thm:deep_w_close}} \label{app:proof-for-deep-w-close}
\begin{proof}[Proof for \Cref{thm:deep_w_close}]
	Let $C_0, r$ be the constants predicted by \Cref{thm:tail_almost_linear_converge} w.r.t. to $\oWG(\infty)$.
	We claim that we can pick large enough constant $T$, and $\alpha_0$ sufficiently small, such that for all $\alpha\le \alpha_0 $, the initial condition in \Cref{thm:tail_almost_linear_converge} holds, i.e. $C_0\normF{\mN(0)}\le \normv{\mwa(0)}\le r$, where $\mW(0): = \phi\left({\alpha} \mI, \frac{{\alpha}^{-(P-1)}}{2 \mu_1^{-1} (P-1)} + T\right)$.
	
	This is because we can first ensure $\norm{\oWG(T)-\oWG(\infty)}_2$ is sufficiently small, i.e., smaller than $\frac{r}{2}$. 	By \Cref{thm:main_deep_w}, we know when $\alpha\to 0$, $\normv{\oWG(T) - \mW(0)} \le K\normF{\oWG(T) - \mW(0)}  =o(1)$ and $\normF{\mN(0)} = O(\alpha)$. 
	
	By \Cref{thm:tail_almost_linear_converge}, we know there is a time $T$ (either $T_{C_0,r}$ or some sufficiently large number when $T_{C_0,r}=\infty$), such that $\normF{\mW(T)-\mW_0} = O(\normF{\mN(0)}) = O(\alpha)$.
\end{proof}

\end{document}